%% file: main.tex
\input{setup/preamble.tex}

\newif\ifdraft
\draftfalse %

\title{Putting Transformer Theories to the Test} 

\author[1]{Awni Altabaa\thanks{Part of this work was done while interning at Microsoft Research.}}
\author[2]{Tobias Schnabel}
\author[3]{Kiran Tomlinson}
\affil[1]{Yale University\\\texttt{awni.altabaa@yale.edu}}
\affil[2]{Microsoft Research\\\texttt{tobias.schnabel@microsoft.com}}
\affil[3]{Microsoft Research\\\texttt{kitomlinson@microsoft.com}}

\ifdraft
  \date{\today \ \textcolor{red!75!black}{\small\bfseries \texttt{[DRAFT]}}}
\else
  \date{} %
\fi

\begin{document}

\maketitle

\input{sections/00-abstract.tex}

\input{sections/01-introduction.tex}

\input{sections/02-theories.tex}

\input{sections/06-related-work.tex}

\input{sections/03-methods.tex}

\input{sections/04-results.tex}

\input{sections/05-discussion.tex}

\FloatBarrier
\clearpage %

\input{sections/acknowledgments-ai-use.tex}
\PrintPaperBibliography

\clearpage
\appendix
\input{appendices/task-details/task-details.tex}
\FloatBarrier
\input{appendices/experimental-details/experimental-details.tex}

\FloatBarrier
\input{appendices/additional-results/additional-results.tex}
\clearpage
\input{appendices/theory-background/theory-background.tex}

\clearpage
\input{tables/theory-task-matrix/theory-task-prediction-notes.tex}

\FloatBarrier
\input{appendices/theory-task-results/theory-task-results.tex}

\FloatBarrier
\input{appendices/c-rasp-separation/c-rasp-separation.tex}

\FloatBarrier
\end{document}

%% file: setup/preamble.tex
\documentclass{paperstyle}

\usepackage[nopatch=footnote]{microtype} %

\input{setup/colors.tex} %
\input{setup/biblio.tex} %
\input{setup/packages.tex} %
\input{setup/theorems.tex} %

\input{setup/math.tex} %
\input{setup/comments.tex} %
\input{setup/theory-task-matrix.tex} %

%% file: setup/theorems.tex
\renewcommand{\qedsymbol}{\ensuremath{\blacksquare}}

\newtheorem{theorem}{Theorem}
\newtheorem*{theorem*}{Theorem}
\newtheorem{proposition}{Proposition}

\theoremstyle{definition}
\newtheorem{definition}{Definition}

\theoremstyle{plain}

\crefname{claim}{Claim}{Claims}
\Crefname{claim}{Claim}{Claims}

%% file: setup/math.tex
\input{setup/general_math_macros.tex}

\input{setup/paper_macros.tex}

%% file: setup/general_math_macros.tex
\newcommand{\bbR}{\mathbb{R}}

\newcommand{\reals}{\bbR}

\newcommand{\bmone}{\bm{1}}

%% file: setup/paper_macros.tex
\newcommand{\AC}{\textsf{AC}}
\newcommand{\TC}{\textsf{TC}}
\newcommand{\NC}{\textsf{NC}}
\newcommand{\Lclass}{\textsf{L}}
\newcommand{\NL}{\textsf{NL}}

\newcommand{\ACz}{\ensuremath{\AC^0}\xspace}
\newcommand{\TCz}{\ensuremath{\TC^0}\xspace}
\newcommand{\NCone}{\ensuremath{\NC^1}\xspace}

\newcommand{\FO}{\textsf{FO}}
\newcommand{\FOC}{\textsf{FOC}}
\newcommand{\FOM}{\textsf{FOM}}

\newcommand{\RASP}{\textsf{RASP}\xspace}
\newcommand{\CRASP}{\textsf{C-RASP}\xspace}
\newcommand{\BRASP}{\textsf{B-RASP}\xspace}

\newcommand{\MPC}{\textsf{MPC}\xspace}
\newcommand{\BAPO}{\textsf{BAPO}\xspace}
\newcommand{\Starfree}{\textsf{Star-free}\xspace}
\newcommand{\Sensitivity}{\textsf{Sensitivity}\xspace}

\newcommand{\WQ}{W_Q}
\newcommand{\WK}{W_K}

\newcommand{\Km}{\mathcal{K}_m}

\definecolor{TaskEasyColor}{HTML}{176B63}
\definecolor{TaskHardColor}{HTML}{963D2D}

\newcommand{\Easy}{\textcolor{TaskEasyColor}{\texttt{Easy}}\xspace}

\newcommand{\Hard}{\textcolor{TaskHardColor}{\texttt{Hard}}\xspace}

\newcommand{\taskname}[1]{\textup{\textsc{#1}}}

\newcommand{\TaskALL}{\taskname{All}\xspace}
\newcommand{\TaskANY}{\taskname{Any}\xspace}
\newcommand{\TaskMajority}{\taskname{Majority}\xspace}
\newcommand{\TaskParity}{\taskname{Parity}\xspace}

\newcommand{\TaskIndex}{\taskname{Index}\xspace}
\newcommand{\TaskIndexStar}{\taskname{Index-*}\xspace}
\newcommand{\TaskIndexAndOr}{\taskname{Index-All/Any}\xspace}
\newcommand{\TaskIndexAnd}{\taskname{Index-All}\xspace}
\newcommand{\TaskIndexOr}{\taskname{Index-Any}\xspace}
\newcommand{\TaskIndexXor}{\taskname{Index-Parity}\xspace}

\newcommand{\TaskSparseStar}{\taskname{Sparse-*}\xspace}
\newcommand{\TaskSparseAndOr}{\taskname{Sparse-All/Any}\xspace}
\newcommand{\TaskSparseAnd}{\taskname{Sparse-All}\xspace}
\newcommand{\TaskSparseOr}{\taskname{Sparse-Any}\xspace}
\newcommand{\TaskSparseXor}{\taskname{Sparse-Parity}\xspace}

\newcommand{\TaskBinaryAddition}{\taskname{BinaryAdd}\xspace}

\newcommand{\TaskDyck}{\taskname{Dyck}\xspace}
\newcommand{\TaskDyckOne}{\taskname{Dyck-}\ensuremath{1}\xspace}
\newcommand{\TaskDyckTwo}{\taskname{Dyck-}\ensuremath{2}\xspace}
\newcommand{\TaskKHop}{\taskname{\ensuremath{k}-Hop Induction}\xspace}
\newcommand{\TaskOneHop}{\taskname{\ensuremath{1}-Hop Induction}\xspace}

\newcommand{\TaskEdgeExistence}{\taskname{Edge Existence}\xspace}
\newcommand{\TaskDirectedReachability}{\taskname{Directed Reachability}\xspace}
\newcommand{\TaskUndirectedReachability}{\taskname{Undirected Reachability}\xspace}

\definecolor{relblue}{HTML}{26749C}
\definecolor{relred}{HTML}{BE512E}
\definecolor{relink}{HTML}{263744}
\ifx\RelationsColorArrows\undefined\def\RelationsColorArrows{1}\fi
\ifx\RelationsSlash\undefined\def\RelationsSlash{1}\fi
\newcommand{\RelationsSetColors}{%
  \ifnum\RelationsColorArrows=1
    \colorlet{relyes}{relblue}\colorlet{relno}{relred}%
  \else
    \colorlet{relyes}{relink}\colorlet{relno}{relink}%
  \fi}
\newcommand{\relnoarrowglyph}{\ifnum\RelationsSlash=1 \not\rightarrow\else\rightarrow\fi}
\DeclareRobustCommand{\ContainmentArrow}{\ensuremath{\mathrel{\RelationsSetColors\textcolor{relyes}{\rightarrow}}}}
\DeclareRobustCommand{\NonContainmentArrow}{\ensuremath{\mathrel{\RelationsSetColors\textcolor{relno}{\relnoarrowglyph}}}}
\DeclareRobustCommand{\ContainmentSymbol}{\ensuremath{\mathrel{\RelationsSetColors\textcolor{relyes}{\subseteq}}}}
\DeclareRobustCommand{\NonContainmentSymbol}{\ensuremath{\mathrel{\RelationsSetColors\textcolor{relno}{\nsubseteq}}}}
\DeclareRobustCommand{\LowerBoundArrow}{\ensuremath{\RelationsSetColors\textcolor{relyes}{\downarrow}}}
\DeclareRobustCommand{\UpperBoundArrow}{\ensuremath{\RelationsSetColors\textcolor{relyes}{\uparrow}}}

%% file: setup/comments.tex
\usepackage{fxannotations}

\DeclareAnnotationAuthor{aa}{Awni}{NavyBlue}
\DeclareAnnotationAuthor{kt}{Kiran}{RedViolet}
\DeclareAnnotationAuthor{ts}{Tobias}{RoyalPurple}

\markdownSetup{hybrid=true, fencedCode=true}

%% file: sections/00-abstract.tex
\begin{abstract}
    Several formal theories have been developed to characterize the capabilities and limitations of the transformer architecture, each employing different abstractions that capture different features of the architecture.
    Due to the differences in their underlying assumptions and conceptual frameworks, these theories can disagree in their predictions about what tasks transformers find easy or hard. 
    We map out seven theories of transformer capability (\Starfree, \ACz, \TCz, \CRASP, \MPC, \BAPO, and \Sensitivity) and translate their results into a common matrix of predictions across a suite of tasks covering different computational structures.
    We provide new theoretical results to complete this matrix.
    We then evaluate theoretical predictions through controlled experiments measuring in-distribution sample complexity, length generalization, and pretrained language model performance.
    We find that no single theory explains the observed pattern of successes and failures.
    When a theory does not capture the transformer behaviors relevant to a task, its predictions break down; for instance, we show that the $k$-hop induction task is predicted hard by \CRASP because it does not model content-dependent attention, but we train small transformers on this task that length-generalize. Similarly, theories assuming hard attention like \ACz and \BAPO predict binary majority to be hard, for which we also find success in practice.
    Together, our results point to several specific features that a theory of transformers must capture: soft rather than hard attention, dynamic content-dependent retrieval, bounded computation at each position, an account of learnability and inductive biases, and a combination of upper and lower bounds.
\end{abstract}

%% file: sections/01-introduction.tex
\section{Introduction}
\label{sec:intro}

Since its introduction by \citet{vaswani2017attention}, the transformer architecture has seen enormous practical success, becoming the workhorse of modern artificial intelligence.
In response to this rapid advance in empirical performance, a variety of different theoretical research has emerged to try to explain the capabilities and limitations of the transformer architecture~\citep[among many others]{strobl2024what,hahn2020theoretical,merrill2022saturated,weiss2021thinking,schnabel2025lost,sanford2024parallel}. With this improved understanding, research can be guided towards the most promising modified architectures and learning algorithms~\citep[e.g.,][]{merrill2026olmo}.
A theory is most valuable when it can explain and make predictions about real-world phenomena~\citep{popper1959logic,kuhn1962structure}, for instance, if a theory of transformers accurately predicts which tasks they will solve in practice.
However, given the complexity of real-world neural networks, theoretical analyses of transformers have used different abstractions and assumptions leading to different hardness predictions for different tasks. As a result, it is not always clear which theories or abstractions we should keep in mind when reasoning about the behavior of transformers.

In this work, we seek to shrink the gap between transformer theory and practice by comparing predictions from seven different theoretical frameworks against the empirical performance of small task-specific transformers as well as frontier language models.
For this comparison, we identify a collection of common tasks that are predicted easy or hard for transformers across theories, including Boolean problems like majority and parity, graph problems like reachability, and syntactic problems like Dyck string recognition.
Where existing analyses are lacking, we provide new theoretical results and constructions.

At a high level, our results illustrate that no theory fully captures the behavior of either small task-specific transformers or of large pretrained language models.
Across theories, predictions that a problem is hard tend to be much more accurate than predictions that a problem is easy.
This highlights a weakness of current theoretical practice: ``easy'' predictions tend to be a result of handcrafted constructions, but there is no guarantee that training can identify these constructions.
Moreover, certain theoretical frameworks only provide loose upper bounds, so their easy predictions only say that a task is not ruled out.
Conversely, a correct hard prediction does not imply a correct \emph{explanation} for the failure.
For example, transformers fail to fit even their training data on \TaskParity, which points to an optimization barrier rather than the expressivity limits behind several theories' hardness predictions.
Additionally, positional encodings strongly affect whether a transformer length-generalizes, and our experiments show that different encodings are needed for different tasks; current theory does not account for these inductive biases.
This motivates future theoretical focus on learnability and inductive biases, an area that has seen only limited progress relative to the work on expressivity~\citep{hahn2024sensitive,huang2024formal,golowich2025role}.
Our experiments also show that pretrained LLMs behave differently from small transformers trained from scratch, which suggests that a theory of LLMs must also account for their data and training, not only the architecture.

At a more granular level, each theory captures some features of transformers and omits others, and our experiments show that its predictions tend to fail when performance depends on a feature it omits.
For example, theories that assume hard attention~\citep{hao2022formal,yang2024masked,barcelo2024logical} cannot express global aggregation, so they predict that tasks such as \TaskMajority are hard, yet transformers learn these tasks and even length-generalize.
In contrast, the popular \CRASP abstraction~\citep{yang2024counting,yang2025kneedeep} captures global aggregation by modeling attention as uniform averaging, but cannot express content-dependent retrieval.
We prove that \CRASP requires exponential size to solve a retrieval task that softmax attention transformers solve efficiently, and we observe the same gap in trained transformers.
In the other direction, theories that rely on loose upper bounds from circuit complexity, or that model only communication and allow unbounded local computation, predict that transformers can solve many tasks that they fail to learn in practice. (Put another way, a theoretical analysis providing only upper bounds is an incomplete account of what tasks we should expect transformers to solve.)
Together, these results guide the search for more practical theories of transformers, built on compact abstractions whose primitives capture the essential features that explain what transformers can learn.

%% file: sections/02-theories.tex
\begin{figure}[t]
    \centering
    \input{figures/diagrams/theory-venn-diagram/horizontal/theory-venn-diagram-tikz.tex}
    \caption{
        \textbf{Relationships between theories.}
        Each region represents the tasks that a theory predicts are easy.
        Each dot is a task, drawn inside the regions of theories that predict it is easy and outside the regions of theories that predict it is hard.
        A filled dot means all of these placements are proved, while a hollow dot means at least one depends on a conjecture (e.g., $\TCz \neq \NCone$).
        \Sensitivity is not plotted here because it is a theory of learnability rather than expressivity.
    }
    \label{fig:theory-venn-diagram}
\end{figure}

\section{Theories of Transformer Capabilities \& Limitations}\label{sec:theories}

In this work, we compare and evaluate seven theoretical frameworks that provide different accounts and predictions of the capabilities and limitations of transformers.
For the purposes of this work, a \emph{theory} is a coherent explanatory framework that makes predictions about a broad class of behavior.
Applied to transformers, a theory should predict whether a transformer can solve any given task.
We identify seven theories from existing literature that each provide a general predictive account of transformer capabilities: \Starfree, \ACz, \TCz, \CRASP, \MPC, \BAPO, and \Sensitivity.

\Cref{fig:theory-venn-diagram} highlights how the theories are related and some of the tasks that separate them.
Three of the theories form a chain of strict inclusions, $\Starfree \subsetneq \ACz \subsetneq \TCz$~\citep{mcnaughton1971counter,immerman1999descriptive,furst1984parity}.
\MPC contains this chain, because constant-round \MPC protocols can simulate $\NCone \supseteq \TCz$~\citep{roughgarden2018shuffles,frei2019circuit}, and the inclusion $\TCz \subseteq \MPC$ is strict if $\TCz \neq \NCone$.
\CRASP lies strictly inside \TCz~\citep[Prop.~31 and Remark~32]{huang2024formal}, but it is incomparable with \Starfree and \ACz.
\BAPO cuts across the whole hierarchy.
It contains every regular language, and hence all of \Starfree, but it excludes some tasks in \CRASP and in \ACz (\citealp{schnabel2025lost}; \Cref{res:addition-bapo}).

The seven theories differ both conceptually and formally, and model the key features of the transformer in different ways.
\Cref{tab:theory-features} highlights some of these differences, starting with the two most important: the direction of information flow and the type of attention.
\Starfree, \CRASP, and \BAPO assume causally masked information flow, while \ACz, \TCz, \MPC, and \Sensitivity assume bidirectional information flow.
Attention is modeled as hard in \Starfree, \ACz, and \BAPO, preventing input-wide aggregation (e.g., counting), while \TCz and \Sensitivity capture soft attention, allowing for global aggregation.
\CRASP considers a restricted form of soft attention that only allows uniform averaging, which precludes dynamic, query-dependent attention patterns.
Additionally, theories differ in how they model local computation, with \MPC and \BAPO allowing unbounded local computation (representing the role of the MLP in transformers), relying instead on communication bandwidth to limit expressive power.
Each theoretical abstraction is compared to a different class of transformers, and has a different formal relation (see~\Cref{fig:theory-transformer-relations}).
\Sensitivity is the only learnability theory, while the rest are expressivity theories.

\begin{table}[t]
    \centering
    \caption{
        \textbf{Key features \& assumptions of different theoretical frameworks.}
        Each row summarizes how a theory models, or whether it captures, a particular feature of transformers.
    }
    \label{tab:theory-features}
    \input{figures/diagrams/theory-features/theory-features-table.tex}
\end{table}

These differences lead the theories to make conflicting predictions about whether transformers are capable of performing different tasks.
In the following, we describe the core intuition and formal model behind each theory, and explain how each theory classifies tasks as \emph{easy} or \emph{hard}.

\paragraph{Out-of-scope work.} Notably, our definition of a \emph{theory} excludes task-specific theoretical analysis~\citep[e.g.,][]{sanford2023representational,chen2024theoretical}.
We also restrict our attention to a single forward pass of the transformer.
This excludes theories of chain-of-thought and scratchpad reasoning,\footnote{With intermediate tokens, a transformer can simulate any computation given enough steps~\citep[e.g.,][]{perez2019turing,merrill2024expressive}.} in which the model generates intermediate tokens.
We discuss these lines of work in \cref{sec:related_work}.

\paragraph{\texorpdfstring{\Starfree}{Star-free} languages: regular languages without modular counting.}

A common simplifying assumption in transformer theory is \emph{unique hard attention}.
Under this assumption, each attention head attends to exactly one position in the context~\citep{hahn2020theoretical,hao2022formal}.
This assumption makes the computation discrete, making it amenable to analysis with tools from logic and automata theory.
\citet{yang2024masked} study the expressive power of \emph{masked} (or \emph{causal}) hard-attention transformers, where each position can only attend to tokens in the past (or only future). %
They show that masked hard-attention transformers recognize exactly the \emph{star-free languages}, %
a subclass of the regular languages.
Informally, they can express the order of events in a string but not periodic counting.
The star-free languages can equivalently be characterized in terms of linear temporal logic~\citep{kamp1968tense}, first-order logic with order and counter-free automata~\citep{mcnaughton1971counter}, and aperiodic monoids~\citep{schutzenberger1965finite}.
This theory predicts that a task is \Easy for transformers if its language is \emph{star-free}, and \Hard otherwise.
For example, bounded-depth Dyck recognition is \Easy, while \TaskParity is \Hard.
Of the seven theories, \Starfree is the most restrictive: star-free languages form a strict subclass of $\ACz$, which is described next, and the theory classifies the fewest tasks in our suite as \Easy.

\paragraph{\texorpdfstring{\boldmath\ACz}{AC0}: constant-depth Boolean circuits.}

Another line of work studies unique hard attention without masking, in bidirectional transformers~\citep{hahn2020theoretical,hao2022formal}.
This class of transformers can be related to the $\ACz$ circuit complexity class.
$\ACz$ is the class of problems that can be solved by a family of Boolean circuits of constant depth and size polynomial in the input, using \texttt{AND}/\texttt{OR}/\texttt{NOT} gates with unbounded fan-in.\footnote{An important technical consideration for circuit families is a \emph{uniformity} requirement relating circuits across input sizes. The most recent transformer results for $\TCz$ use DLOGTIME-uniformity~\citep{chiang2025transformers}, while the main $\ACz$ result is nonuniform~\citep{hao2022formal}, but uniform $\ACz$ bounds are possible depending on the choice of positional encoding~\citep{yang2024masked}. Unless otherwise specified, we assume DLOGTIME-uniformity.}
An intuitive correspondence between transformers and Boolean circuits is that they are both models of parallel computation with bounded depth. 
\citet{hao2022formal} show that any bidirectional hard-attention transformer can be simulated by a (nonuniform) $\ACz$ circuit family.
This theory predicts that a task is \Easy for transformers if it is in $\ACz$, and \Hard otherwise.
The classic examples of hard tasks for $\ACz$ are \TaskParity and \TaskMajority~\citep{furst1984parity,ajtai1983sigma}.

\paragraph{\texorpdfstring{\boldmath\TCz}{TC0}: constant-depth threshold circuits.}

The previous two theories both assumed hard attention in transformers, which may overly restrict expressivity. For example, bidirectional hard-attention transformers are at most as expressive as nonuniform $\ACz$, precluding tasks like \TaskMajority.
By going from hard attention to soft attention, we can break this $\ACz$ ceiling.
\citet{merrill2022saturated} show that bidirectional transformers with saturated attention\footnote{\emph{Saturated attention}, also known as \emph{average-hard attention}, computes a uniform average of the joint maximizers of the attention scores, in contrast to unique hard attention, which selects a unique position to attend to, breaking ties by, e.g., choosing the rightmost or leftmost positions.} can compute \TaskMajority.
$\TCz$ is a circuit complexity class that extends $\ACz$ by adding threshold gates that output $1$ if a weighted sum of their inputs exceeds a threshold, while maintaining the constant-depth restriction.
\citet{merrill2022saturated} show that bidirectional saturated-attention transformers can be simulated by $\TCz$ circuits~\citep[see also][]{strobl2023average}, and later work extended this result to general softmax attention~\citep{merrill2023parallelism,merrill2023logic,chiang2025transformers}.
Intuitively, a soft attention layer computes a weighted average over all positions, and threshold circuits can compute the required sums and divisions in constant depth.
As with $\ACz$, this is an upper bound: $\TCz$ circuits can simulate transformers, but the converse fails for fixed-precision transformers, which cannot recognize some languages in $\TCz$~\citep{chiang2023tighter}.
This theory predicts that a task is \Easy for transformers if it is in $\TCz$, and \Hard otherwise.
For example, \TaskMajority and \TaskParity become \Easy under $\TCz$.

\paragraph{\texorpdfstring{\CRASP}{C-RASP}: programs with prefix counting.}

The circuit theories described above provide a ceiling on the computational power of transformers.
In contrast, \CRASP~\citep{yang2024counting,yang2025kneedeep} is a programming language that can be compiled directly into \emph{causal} transformers with softmax attention, providing a \emph{lower bound} on the computational power of transformers.
A \CRASP program has two value types: Boolean and integer counts.
Each position $i$ maintains and updates an intermediate representation of Booleans and counts, $\{B_\ell(i)\}_{\ell} \cup \{C_\ell(i)\}_{\ell}$.
The central operation, and the only mechanism for moving information across positions, is prefix-counting of Boolean values
\begin{equation}
    C(i) \coloneqq \#[j \leq i: B(j)],
\end{equation}
which returns the number of positions up to $i$ whose Boolean value $B$ is true.
These prefix counts are compiled to an attention head with \emph{uniform attention} by setting $\WQ = \WK = 0$ such that the head returns the fraction of positions satisfying $B$, which is enough to compare counts.
A \CRASP program solving the task provides an explicit construction for a causal transformer that solves the task, establishing a lower bound on the computational power of transformers.

\CRASP models information flow in transformers purely as prefix-counting, with the corresponding attention operation fixed to a uniform average over the prefix. 
As a result, \CRASP does not capture \emph{dynamic, content-dependent attention}, where the attention pattern depends on interactions between queries and keys.
We examine the cost of this restriction both theoretically and empirically in \Cref{sec:results,app:c-rasp-separation}.

\paragraph{\texorpdfstring{\MPC}{MPC}: massively parallel computation.}

The theories above compare transformers to circuits, logics, or programs.
\citet{sanford2024parallel,sanford2024graph} take a different view, and model transformers as distributed systems in which positions exchange messages through attention.
The Massively Parallel Computation (\MPC) model~\citep{karloff2010model} was introduced to study large-scale parallel systems such as \texttt{MapReduce}.
In \MPC, the input is split across many machines, each with memory much smaller than the input (typically, sublinear).
In each \emph{round}, every machine performs arbitrary local computation on its local data and then exchanges a bounded amount of data with other machines.
The complexity of a problem is measured in terms of the \emph{number of rounds} required.
\citet{sanford2024parallel} show that an \MPC protocol with $R$ rounds can be implemented by a bidirectional transformer of depth $O(R)$, and that an $L$-layer bidirectional transformer can be simulated by an \MPC protocol with $O(L)$ rounds.
Here, communication rounds correspond to attention operations in transformers, where each position retrieves the messages addressed to it.
This theory predicts that a task is \Easy for transformers if it can be solved by an \MPC protocol with sublinear local memory and a constant number of rounds, and \Hard otherwise.
\MPC is the most expressive of the seven theories we consider, as constant-round \MPC protocols are at least as powerful as $\NCone$~\citep{roughgarden2018shuffles,frei2019circuit}, which contains $\TCz$.

\paragraph{\texorpdfstring{\BAPO}{BAPO}: past-to-future communication complexity.}

\MPC can be interpreted as a theory of communication complexity in all-to-all parallel communication.
In a causal transformer, information can only flow from earlier positions to later ones.
\citet{schnabel2025lost} propose bounded attention prefix oracles (\BAPO) as a model that measures how much information a task requires to flow from prefix to suffix.
A task is said to be $(a,b)$-\BAPO if the input $(x_1, \ldots, x_n)$ can be split at any position $t < n$ and the output can be computed from the suffix $(x_{t+1}, \ldots, x_n)$ using only the following information about the prefix $(x_1, \ldots, x_t)$: (1) an $a$-bit summary, and (2) at most $b$ tokens selected by an attention function which sees the suffix.
A task is classified as \BAPO-easy if constant bandwidth $(a,b)$ suffices for every input length, and \BAPO-hard otherwise.
For example, \TaskIndex is a \BAPO-easy task because the suffix can directly retrieve the queried token.
Every regular language, including \TaskParity, is also \BAPO-easy because the prefix can summarize the state of a finite automaton.
In contrast, \TaskMajority is \BAPO-hard since it requires a count of $\Theta(\log n)$ bits to cross the split.
Unlike the previously mentioned theories, \BAPO has no formal simulation result relating it to transformers.
Instead, it is an abstract model of information flow that is postulated to capture bandwidth limitations of LLMs and is included to represent a different communication-based account of LLMs.

\paragraph{\texorpdfstring{\Sensitivity}{Sensitivity}: learnability via sensitivity and loss sharpness.}

Each theory mentioned so far studies the computational \emph{expressivity} of transformers: whether there exists some transformer that solves a given task.
A separate critical question is \emph{learnability}: whether training on finite data will find such a solution.
Learnability is comparatively less well understood than expressivity.
\citet{hahn2024sensitive} propose an account of one aspect of learnability via a simple property of the target function: how sensitive the output is to small changes in the input.
For a Boolean function $f$, the average sensitivity is defined as the expected number of input bits that would change the output if flipped,
\begin{equation}
    \mathrm{as}_n(f) \coloneqq \mathbb{E}_{x \sim \{\pm 1\}^n} \left[ \left| \{i: f(x) \neq f(x^{\oplus i})\}\right|\right],
\end{equation}
where $x^{\oplus i}$ is $x$ with the $i$-th bit flipped, and $x$ is drawn uniformly.
\citet{hahn2024sensitive} show that sensitivity of the target function in input space entails sensitivity in parameter space, which in turn causes any minima in the loss landscape to be isolated and sharp.
This theory predicts that a task is \Hard for transformers if its average sensitivity grows linearly with input size $n$.
For example, \TaskParity has average sensitivity $\Theta(n)$ and is therefore \Hard; in contrast, \TaskMajority has average sensitivity $\Theta(\sqrt{n})$, and $k$-sparse functions have average sensitivity at most $k$.
We label these tasks as \Easy under the sensitivity theory, although this is just a shorthand to say ``not \Hard.'' 
This theory stretches our definition of a ``general predictive account of transformers,'' since it is only applicable to Boolean functions with uniform distributions; we include it to represent an important learnability theory.

\def\RelationsShowSources{1} %
\def\RelationsKeySmall{1} %
\def\RelationsFooterMerge{1} %
\def\RelationsColorArrows{1} %
\def\RelationsSlash{0} %
\def\RelationsKeyInline{1} %
\begin{figure}[t]
    \centering
    \input{figures/diagrams/theory-transformer-relations/theory-transformer-relation-tikz.tex}
    \caption{
        \textbf{Formal expressivity relations between theories and transformer classes.}
        The top row shows the theories, and the bottom row shows the classes of transformers they are compared to.
        Each edge denotes a containment (blue) or non-containment (red) between classes.
        An blue arrow pointing down ($\LowerBoundArrow$) shows that the transformer class can express every task in the theory's class (a lower bound).
        An blue arrow pointing up ($\UpperBoundArrow$) shows that it cannot express any task outside that class (an upper bound).
        A red arrow indicates that the result in that direction does not hold.
        Some results have caveats or incompatible assumptions, so dashed arrows do not necessarily compose.
        See \Cref{app:theory-background} for discussion and details of each result.
        \BAPO is omitted because it has no known formal relation to transformers, and \Sensitivity because it is a learnability theory.
    }
    \label{fig:theory-transformer-relations}
\end{figure}

\paragraph{Overview of known expressivity results.}
\Cref{fig:theory-transformer-relations} shows a summary of known results relating the theories to different classes of transformers.
Each edge corresponds to a specific result in the literature (\Cref{tab:theory-transformer-edge-guide}).
Each theory is related to a different class of transformers, which differ in the type of attention (hard, average-hard, or softmax) and in the direction of information flow (causal or bidirectional).
The circuit complexity theories, \ACz and \TCz, only establish upper bounds on the expressive power of transformers.
\Starfree, \CRASP, and \MPC have results in both directions, but these come with significant caveats, such as exponential size or unbounded local computation.
We emphasize that there are several technical assumptions that underlie different analyses in the literature, and varying these assumptions can reverse the results they arrive at.
For example, analyses relating soft attention transformers to \TCz typically assume precision logarithmic in the input length, as assuming finite-precision would constrain softmax transformers to \ACz~\citep[Theorem 4.1]{london2025pause}.
Analogously, it is important to impose some constraint on the \emph{width} of the transformer to ensure that constructive results are informative, as the width budget controls the expressive power, and unbounded width can trivialize the result~\citep{sanford2023representational,sanford2024parallel,peng2024limitations}.\footnote{For example, \citet[footnote~2]{sanford2024parallel} observe that allowing width to grow with the input size can allow the transformer to simply collect the entire input at a single position and leave the rest of the computation to the MLP, which negates the advantage of depth.} 
These width and precision considerations are what separate the simulation of \citet[Thm.~7.1]{yang2024counting} (edge p) from our \cref{thm:c-rasp-retrieval-separation} (edge q).
See \cref{tab:theory-transformer-edge-guide} and \cref{app:c-rasp-separation} for more detailed discussion.

%% file: figures/diagrams/theory-venn-diagram/horizontal/theory-venn-diagram-tikz.tex
\begingroup%
\def\VennTheoryColour{2}%
\def\VennFills{0}
\input{figure-generation/diagrams/theory-venn-diagram/horizontal/figure-body.tex}%
\endgroup

%% file: figures/diagrams/theory-features/theory-features-table.tex
\begingroup
\def\FeaturesGlyphs{1}%
\def\FeaturesColor{1}%
\def\FeaturesGrowingDepth{0}%
\def\FeaturesSensitivityMark{0}%
\input{figures/diagrams/theory-features/theory-features-table-style.tex}

\input{figures/diagrams/theory-features/theory-features-table-body.tex}
\endgroup

%% file: figures/diagrams/theory-features/theory-features-table-style.tex
\ifx\FeaturesGlyphs\undefined\def\FeaturesGlyphs{1}\fi%
\ifx\FeaturesColor\undefined\def\FeaturesColor{1}\fi%
\ifx\FeaturesGrowingDepth\undefined\def\FeaturesGrowingDepth{0}\fi%
\ifx\FeaturesSensitivityMark\undefined\def\FeaturesSensitivityMark{0}\fi%

\newcommand{\tfLabelMid}{2pt}%
\newcommand{\tfCellPad}{1.2pt}%

\ifdefined\tfCellBox\else\newsavebox{\tfCellBox}\fi
\ifdefined\tfCellHalf\else\newlength{\tfCellHalf}\fi

\newcommand{\tfUseCenteredCellBox}{%
  \setlength{\tfCellHalf}{\dimexpr\ht\tfCellBox-\tfLabelMid\relax}%
  \ifdim\dimexpr\dp\tfCellBox+\tfLabelMid\relax>\tfCellHalf
    \setlength{\tfCellHalf}{\dimexpr\dp\tfCellBox+\tfLabelMid\relax}%
  \fi
  \ht\tfCellBox=\dimexpr\tfLabelMid+\tfCellHalf+\tfCellPad\relax
  \dp\tfCellBox=\dimexpr\tfCellHalf-\tfLabelMid+\tfCellPad\relax
  \usebox{\tfCellBox}}

\newcommand{\tfCell}[1]{%
  \sbox{\tfCellBox}{\strut#1}%
  \tfUseCenteredCellBox}

\newcommand{\tfCircleRadius}{2.2pt}
\newcommand{\tfYes}{\tfCell{\tikz[baseline=-\tfLabelMid]
  \fill (0,0) circle[radius=\tfCircleRadius];}}
\newcommand{\tfNo}{\tfCell{\tikz[baseline=-\tfLabelMid]
  \draw[line width=0.35pt] (0,0) circle[radius=\tfCircleRadius-0.175pt];}}

\newcommand{\tfFeatureIndent}{1em}
\newcommand{\tfFeature}[1]{%
  \sbox{\tfCellBox}{\hspace*{\tfFeatureIndent}%
    \parbox{\dimexpr\linewidth-\tfFeatureIndent\relax}{\raggedright\strut#1\strut}}%
  \tfUseCenteredCellBox}

\newcommand{\tfGroup}[1]{%
  \addlinespace[\tfGroupLabelSkip]%
  \multicolumn{8}{@{}l@{}}{\sbox{\tfCellBox}{\bfseries\strut#1}\tfUseCenteredCellBox}\\}

\newcommand{\tfSensitivityHeader}{{\Sensitivity}\ifnum\FeaturesSensitivityMark=1\rlap{$^{*}$}\fi}

\definecolor{tfMarkInk}{HTML}{6B7781}%
\ifnum\FeaturesColor=1
  \definecolor{tfSymbols}{HTML}{2F6FB3}\definecolor{tfBits}{HTML}{9B4F96}
  \definecolor{tfCausal}{HTML}{087E9A}\definecolor{tfBidir}{HTML}{9B4F96}
  \definecolor{tfHard}{HTML}{246FB1}\definecolor{tfSoft}{HTML}{A64E73}
  \colorlet{tfUnif}{tfSoft}\definecolor{tfMsg}{HTML}{8A5BA6}
  \definecolor{tfBounded}{HTML}{287B64}\definecolor{tfUnbounded}{HTML}{BE512E}
  \colorlet{tfMachines}{tfBounded}
\else
  \colorlet{tfSymbols}{tfMarkInk}\colorlet{tfBits}{tfMarkInk}\colorlet{tfMachines}{tfMarkInk}
  \colorlet{tfCausal}{tfMarkInk}\colorlet{tfBidir}{tfMarkInk}
  \colorlet{tfHard}{tfMarkInk}\colorlet{tfSoft}{tfMarkInk}\colorlet{tfUnif}{tfMarkInk}\colorlet{tfMsg}{tfMarkInk}
  \colorlet{tfBounded}{tfMarkInk}\colorlet{tfUnbounded}{tfMarkInk}
\fi

\newcommand{\tfGlyphSize}{5.8pt}
\newlength{\tfGlyphLen}\setlength{\tfGlyphLen}{\tfGlyphSize}
\def\tfUnit{\tfGlyphSize}%
\tikzset{
  tf line/.style={line width={#1*\tfUnit}}, tf line/.default=.09,
  tf tip/.style={-{Stealth[length={.32*\tfUnit},width={.34*\tfUnit}]}}}
\newcommand{\tfGlyphFont}[1]{\fontsize{#1\tfGlyphLen}{#1\tfGlyphLen}\selectfont}

\newcommand{\tfValue}[3]{%
  \ifnum\FeaturesGlyphs=1
    \tikz[baseline=0pt,x=\tfGlyphSize,y=\tfGlyphSize,line cap=round,line join=round,color=#1]{#2}\hspace{.28em}#3%
  \else\ifnum\FeaturesColor=1\textcolor{#1}{#3}\else#3\fi\fi}

\newcommand{\tfTokSymbols}{\tfValue{tfSymbols}{%
  \draw[tf line,rounded corners={.18*\tfUnit}] (0,0) rectangle (.95,.95);
  \node[inner sep=0pt,anchor=base,font=\tfGlyphFont{.92}\itshape] at (.475,.28) {a};}{symbols}}
\newcommand{\tfTokBits}{\tfValue{tfBits}{%
  \draw[tf line,rounded corners={.18*\tfUnit}] (0,0) rectangle (1.25,.95);
  \node[inner sep=0pt,anchor=base,font=\tfGlyphFont{.78}\ttfamily] at (.625,.2) {01};}{bits}}
\newcommand{\tfTokMachines}{\tfValue{tfMachines}{%
  \draw[tf line,rounded corners={.12*\tfUnit}] (0,0) rectangle (1.1,.92);
  \draw[tf line] (.18,.62) -- (.7,.62) (.18,.31) -- (.7,.31);
  \fill (.88,.62) circle (.07) (.88,.31) circle (.07);}{machines}}

\newcommand{\tfHarpoonCausal}{\draw[tf line=.17] (1.3,.4) -- (0,.4) -- (.46,.78);}
\newcommand{\tfHarpoonBidir}{\draw[tf line=.17] (1.3,.66) -- (0,.66) -- (.46,1.02);
  \draw[tf line=.17] (0,.22) -- (1.3,.22) -- (.84,-.14);}
\newcommand{\tfFlowCausal}{\tfValue{tfCausal}{\tfHarpoonCausal}{causal}}
\newcommand{\tfFlowBidir}{\tfValue{tfBidir}{\tfHarpoonBidir}{bidirect.}}

\newcommand{\tfHist}[4]{\fill (0,0) rectangle (.24,#1) (.32,0) rectangle (.56,#2)
  (.64,0) rectangle (.88,#3) (.96,0) rectangle (1.2,#4); \draw[tf line=.05] (-.04,0) -- (1.24,0);}
\newcommand{\tfAttnHard}{\tfValue{tfHard}{\tfHist{.06}{1}{.06}{.06}}{hard}}
\newcommand{\tfAttnSoft}{\tfValue{tfSoft}{\tfHist{.25}{1}{.5}{.15}}{softmax}}
\newcommand{\tfAttnUnif}{\tfValue{tfUnif}{\tfHist{.45}{.45}{.45}{.45}}{uniform}}
\newcommand{\tfAttnMsg}{\tfValue{tfMsg}{\fill (.1,.45) circle (.13) (1.15,.45) circle (.13);
  \draw[tf line,tf tip] (.28,.45) -- (.95,.45);}{msg.\ passing}}

\newcommand{\tfBounded}{\tfValue{tfBounded}{\draw[tf line=.11] (.42,.42) circle (.36);}{bounded}}
\newcommand{\tfUnbounded}{\tfValue{tfUnbounded}{\draw[tf line=.11]
  (.62,.42) .. controls (.35,.85) and (0,.65) .. (0,.42) .. controls (0,.19) and (.35,0) .. (.62,.42)
  .. controls (.89,.85) and (1.24,.65) .. (1.24,.42) .. controls (1.24,.19) and (.89,0) .. (.62,.42);}{unbounded}}

\newcommand{\tfSupGlyph}[1]{\mbox{\hspace{.7pt}\def\tfUnit{3.6pt}%
  \tikz[baseline=0pt,x=3.6pt,y=3.6pt,line cap=round,line join=round]{#1}}}
\newcommand{\tfT}[2]{T^{\tfSupGlyph{#1}}_{\mathrm{#2}}}

\newlength{\tfFeatureWidth}
\newlength{\tfWidthStarfree}
\newlength{\tfWidthACz}
\newlength{\tfWidthTCz}
\newlength{\tfWidthCRASP}
\newlength{\tfWidthMPC}
\newlength{\tfWidthBAPO}
\newlength{\tfWidthSensitivity}
\newlength{\tfMeasuredWidth}

\makeatletter
\newcommand{\tfFitWidth}[2]{%
  \setlength{#1}{0pt}%
  \@for\tf@label:=#2\do{%
    \settowidth{\tfMeasuredWidth}{\tf@label}%
    \ifdim\tfMeasuredWidth>#1\setlength{#1}{\tfMeasuredWidth}\fi}%
  \addtolength{#1}{1pt}}
\makeatother

\newcommand{\tfSetColumnWidths}{%
  \settowidth{\tfFeatureWidth}{Communication}%
  \addtolength{\tfFeatureWidth}{\dimexpr12pt+\tfFeatureIndent\relax}%
  \tfFitWidth{\tfWidthStarfree}{\Starfree,\tfTokSymbols,\tfFlowCausal,\tfAttnHard,\tfBounded,\tfRelSF}%
  \tfFitWidth{\tfWidthACz}{\ACz,\tfTokBits,\tfFlowBidir,\tfAttnHard,\tfBounded,\tfRelAC}%
  \tfFitWidth{\tfWidthTCz}{\TCz,\tfTokBits,\tfFlowBidir,\tfAttnSoft,\tfBounded,\tfRelTC}%
  \tfFitWidth{\tfWidthCRASP}{\CRASP,\tfTokSymbols,\tfFlowCausal,\tfAttnUnif,\tfBounded,\tfRelCR}%
  \tfFitWidth{\tfWidthMPC}{\MPC,\tfTokMachines,\tfFlowBidir,\tfAttnMsg,\tfUnbounded,\tfRelMPC}%
  \tfFitWidth{\tfWidthBAPO}{\BAPO,\tfTokSymbols,\tfFlowCausal,\tfAttnHard,\tfUnbounded,none}%
  \tfFitWidth{\tfWidthSensitivity}{\tfSensitivityHeader,\tfTokBits,\tfFlowBidir,\tfAttnSoft,\tfBounded,learnability}}

\newcolumntype{F}{>{\raggedright\arraybackslash}p{\tfFeatureWidth}}%
\newcolumntype{T}[1]{>{\centering\arraybackslash}p{#1}}%

\setlength{\aboverulesep}{0pt}
\setlength{\belowrulesep}{0pt}

\newcommand{\tfGroupSkip}{3pt}
\newcommand{\tfGroupLabelSkip}{1.5pt}
\newcommand{\tfGroupRule}{%
  \addlinespace[\tfGroupSkip]%
  \arrayrulecolor{PredictionFamilyRule}%
  \specialrule{0.25pt}{0pt}{0pt}%
  \arrayrulecolor{black}}

\newenvironment{tfTable}{%
  \fontsize{8}{9.2}\selectfont
  \tfSetColumnWidths
  \setlength{\tabcolsep}{0pt}%
  \renewcommand{\arraystretch}{0}%
  \begin{tabular*}{\textwidth}{@{\extracolsep{\fill}}F T{\tfWidthStarfree} T{\tfWidthACz}
      T{\tfWidthTCz} T{\tfWidthCRASP} T{\tfWidthMPC} T{\tfWidthBAPO}
      T{\tfWidthSensitivity}@{}}%
}{%
  \end{tabular*}}

%% file: figures/diagrams/theory-features/theory-features-table-body.tex
\newcommand{\tfRelSF}{$=\tfT{\tfHarpoonCausal}{hard}$}
\newcommand{\tfRelAC}{$\supset\tfT{\tfHarpoonBidir}{hard}$}
\newcommand{\tfRelTC}{$\supset\tfT{\tfHarpoonBidir}{soft}$}
\newcommand{\tfRelCR}{$\subset\tfT{\tfHarpoonCausal}{soft}$}
\newcommand{\tfRelMPC}{$\supset\tfT{\tfHarpoonBidir}{soft}$}

\begin{tfTable}
\toprule
  & \tfCell{\Starfree}
  & \tfCell{\ACz}
  & \tfCell{\TCz}
  & \tfCell{\CRASP}
  & \tfCell{\MPC}
  & \tfCell{\BAPO}
  & \tfCell{\tfSensitivityHeader} \\
\midrule
\tfGroup{Key Assumptions}
\tfFeature{Token primitives}
  & \tfCell{\tfTokSymbols}
  & \tfCell{\tfTokBits}
  & \tfCell{\tfTokBits}
  & \tfCell{\tfTokSymbols}
  & \tfCell{\tfTokSymbols} %
  & \tfCell{\tfTokSymbols}
  & \tfCell{\tfTokBits} \\
\tfFeature{Information flow}
  & \tfCell{\tfFlowCausal}
  & \tfCell{\tfFlowBidir}
  & \tfCell{\tfFlowBidir}
  & \tfCell{\tfFlowCausal}
  & \tfCell{\tfFlowBidir}
  & \tfCell{\tfFlowCausal}
  & \tfCell{\tfFlowBidir} \\
\tfFeature{Attention type}
  & \tfCell{\tfAttnHard}
  & \tfCell{\tfAttnHard}
  & \tfCell{\tfAttnSoft}
  & \tfCell{\tfAttnUnif}
  & \tfCell{\tfAttnMsg}
  & \tfCell{\tfAttnHard}
  & \tfCell{\tfAttnSoft} \\
\tfFeature{Local computation}
  & \tfCell{\tfBounded}
  & \tfCell{\tfBounded}
  & \tfCell{\tfBounded}
  & \tfCell{\tfBounded}
  & \tfCell{\tfUnbounded}
  & \tfCell{\tfUnbounded}
  & \tfCell{\tfBounded} \\
\tfFeature{Formal relation to transformers}
  & \tfCell{\tfRelSF}
  & \tfCell{\tfRelAC}
  & \tfCell{\tfRelTC}
  & \tfCell{\tfRelCR}
  & \tfCell{\tfRelMPC}
  & \tfCell{none}
  & \tfCell{learnability} \\
\tfGroupRule
\tfGroup{Captured Features}
\tfFeature{Dynamic attention}
  & \tfYes & \tfYes & \tfYes & \tfNo & \tfYes & \tfYes & \tfYes \\
\tfFeature{Depth}
  & \tfYes & \tfYes & \tfYes & \tfYes & \tfYes & \tfNo & \tfYes \\
\ifnum\FeaturesGrowingDepth=1
\tfFeature{Growing depth}
  & \tfNo & \tfNo & \tfNo & \tfNo & \tfYes & \tfNo & \tfNo \\
\fi
\tfFeature{Communication bandwidth}
  & \tfNo & \tfNo & \tfNo & \tfNo & \tfYes & \tfYes & \tfNo \\
\tfFeature{Learnability}
  & \tfNo & \tfNo & \tfNo & \tfNo & \tfNo & \tfNo & \tfYes \\
\bottomrule
\end{tfTable}

%% file: figures/diagrams/theory-transformer-relations/theory-transformer-relation-tikz.tex
\begingroup
\input{figures/diagrams/theory-transformer-relations/theory-transformer-relation-style.tex}
\defcitealias{yang2024masked}{YCA24}
\defcitealias{barcelo2024logical}{B+24}
\defcitealias{hao2022formal}{HAF22}
\defcitealias{strobl2023average}{S23}
\defcitealias{merrill2023logic}{MS23}
\defcitealias{svete2026padded}{SMCS26}
\defcitealias{sanford2024parallel}{SHT24}
\defcitealias{yang2024counting}{YC24}
\renewcommand{\relref}[3]{%
  \if\relax\detokenize{#3}\relax\citetalias{#2}\else\citetalias[#3]{#2}\fi}
\renewcommand{\relownref}{Ours, Th.~\ref{thm:c-rasp-retrieval-separation}}
\resizebox{\linewidth}{!}{%
  \input{figures/diagrams/theory-transformer-relations/theory-transformer-relation-body.tex}%
}%
\endgroup

%% file: sections/06-related-work.tex
\subsection{Additional Related Work}
\label{sec:related_work}

\paragraph{Other expressivity results.} 
Beyond the seven theories we describe above, there have been many other theoretical advances in the study of transformers. 
\citet{strobl2024what} provide a comprehensive survey of this area. Early results include that sufficiently large transformers can approximate arbitrary sequence-to-sequence functions~\citep{Yun2020Are} and analyses of transformer expressivity for problems like \TaskDyck and \TaskParity~\citep{bhattamishra2020ability,hahn2020theoretical}.
The introduction of the \RASP programming language~\citep{weiss2021thinking} provided the basis for the line of theoretical work on \textsf{B-} and \CRASP.
Other notable work includes \citet{liu2023transformers}, who show how shallow transformers can simulate automata using shortcuts. Other work outside our comparison provides task-specific analyses rather than general predictive frameworks~\citep{sanford2023representational,chen2024theoretical,yao2021self}. 

\paragraph{Learnability and length generalization.}
Much of the literature has focused on expressivity rather than learnability, but there are some exceptions.
\citet{edelman2022inductive} use covering number bounds to argue that transformers can sample-efficiently learn sparse functions. Recent work has also started to explore the sample complexity of learning \CRASP programs~\citep{rizvimartel2026sample}.
\citet{abbe2024globality} conjecture that transformers can weakly learn functions if and only if they have constant globality, a distributional property roughly capturing the number of relevant input tokens.
Beyond expressivity, there has also been some theoretical work on length generalization. \citet{huang2024formal} show that an idealized inference procedure, which selects the lowest-complexity transformer that fits all inputs up to a given length, length-generalizes on problems expressible in variants of \CRASP matched to the positional encoding, such as $\CRASP[\varnothing]$ for NoPE.
This provides a formal account of length generalization related to the \RASP{}\textsf{-L} conjecture~\citep{zhou2024what}.
\citet{golowich2025role} show that an abstraction of attention heads can learn to length generalize on sparse functions. 

\paragraph{Chain of thought.}
Another line of research concerns the power of transformers with chain of thought (CoT).
\citet{perez2019turing} show that CoT makes transformers Turing-complete. Subsequent work has provided finer-grained characterizations of how different CoT budgets affect expressive power~\citep{merrill2024expressive,li2024chain}, shown CoT step lower bounds for specific problems~\citep{bavandpour2025lower,tomlinson2026reasoning}, and extended Turing-completeness results to probabilistic Turing machines~\citep{nowak2024representational}.
Some CoT work has also examined the effect it has on learnability~\citep{altabaa2025cot,huang2025multistep,huang2025length,joshi2025theory}.
Related to CoT results, there has also been work on the expressivity of transformers with scaling depth~\citep{merrill2025little} and padding tokens~\citep{merrill2026exact}. 

\paragraph{Empirical work informing theory.}
There is a large existing literature on experiments with synthetic tasks to better understand the capabilities of transformers. 
\citet{bhattamishra2023simplicity} study sensitivity-based inductive biases in transformers and LSTMs, relating differences in these biases to performance on sparse Boolean functions. 
\citet{deletang2023neural} test whether the Chomsky hierarchy predicts length generalization across neural architectures, including transformers and RNNs. \citet{kazemnejad2023impact} show that the choice of positional encoding substantially affects length generalization on reasoning and mathematical tasks. 
While much of this empirical work has focused on small purpose-trained transformers, \citet{jobanputra2025born} explore how LLM pretraining affects transformer capabilities on retrieval and copying tasks. 
Our contribution is to compare competing theoretical predictions on a common set of tasks, evaluating their explanatory power across in-distribution sample complexity, length generalization, and pretrained-model performance.

%% file: sections/03-methods.tex
\section{Methodology}
\label{sec:methodology}

We compare the theories of transformers by their ability to correctly predict empirical outcomes, especially those that were not used to construct them~\citep{popper1959logic}.
As we will see, these seven theories of transformers produce conflicting predictions on our suite of tasks, as they rest on distinct conceptual foundations and assumptions. We rely on empirical experiments with both small task-specific transformers as well as frontier LLMs to reveal which abstractions capture the features of transformers that matter in practice.
Our goal is to use these tests to map out each theory's domain of validity, and to draw a synthesis that guides future theoretical work.

\subsection{Suite of Tasks}
\label{ssec:tasks}

We evaluate the theories on a suite of tasks from seven families, summarized below.
We designed the suite with two goals: (1) to cover a range of algorithmic and computational structures, and (2) to target tasks on which theories disagree.
Each task tests whether transformers are able to learn and represent a particular computational structure.
For example, \TaskMajority tests whether transformers can perform global aggregation, \TaskDyck tests whether transformers can track stack-like nesting structures, and \TaskKHop tests whether transformers can compose content-dependent retrievals. The full suite of tasks was as follows:

\begin{itemize}
    \item \textbf{Boolean functions:} \TaskALL, \TaskANY, \TaskMajority, \TaskParity.
    \item \textbf{Indexed retrieval:} \TaskIndex, \TaskIndexAndOr, \TaskIndexXor. Given a list and one or more indices $i$, retrieve values $\text{list}[i]$ and (in the case of \TaskIndexStar) combine them.
    \item \textbf{Sparse functions:} \TaskSparseAndOr, \TaskSparseXor. Each applies a Boolean function to $k$ fixed positions learned from data.
    \item \textbf{Arithmetic:} \TaskBinaryAddition.
    \item \textbf{Dyck languages:} \TaskDyck, with bounded and unbounded depth.
    \item \textbf{Iterated retrieval:} \TaskKHop for $k \in \{1,2,3\}$, which composes $k$ content-based lookup operations.
    \item \textbf{Graph tasks:} \TaskEdgeExistence, \TaskUndirectedReachability, \TaskDirectedReachability.
\end{itemize}

Details of each task, including formal definitions, input distributions, output formats, and variants, are described in~\cref{app:task_details}.

Across all tasks, the size or complexity of the input instance is parameterized by $n$.
Formally, a task is a family of problems $f = (f_n)_{n \geq 1}$, one for each input size $n$, with a single finite description that applies to every size $n$.
For example, ``$\TaskALL(x_1, \ldots, x_n) = \bmone\{x_i = 1, \, \forall i \in [n]\}$'' is one description that defines the task for every $n$.
The theories we consider make claims about whether a single transformer, whose size is fixed in $n$,\footnote{Some theories allow the numerical precision to grow logarithmically in $n$.} can solve a task for all $n$.
Such claims describe how well the transformer architecture captures a given computational \emph{structure}.
For a finite $n$, a task may be solvable by simply memorizing a lookup table using a large-enough model.
However, if a task is solvable by a single model of finite size \emph{for all input sizes $n$}, then the transformer architecture represents that computational structure natively.
In our experimental design, described in~\cref{ssec:experiment_methodology}, we assess this \emph{structural} alignment by evaluating the performance of transformers on these tasks \emph{as the instance size $n$ grows}.

\subsection{From Theories to Predictions}
\label{ssec:translating}

\input{tables/theory-task-matrix/theory-task-matrix-empirical.tex}

To test the theories on our task suite, we first apply each theory to each task to derive a prediction.
The outcome of this process is a matrix of theory--task predictions, shown in \cref{tab:theory-task-predictions}.
The theories make narrow formal claims of different kinds: expressivity upper or lower bounds, learnability claims, or bandwidth requirements.
For each theory, we classify each task as either \Easy or \Hard according to the theory's abstract model, regardless of the formal claim that links the model to transformers. Our goal is to understand whether the abstract model itself makes predictions that are aligned with the empirical behavior of trained transformers.
We provide an annotated version with references for each cell in \cref{app:prediction-notes}, \cref{tab:theory-task-predictions-annotated}.

We fill each cell in the matrix via one of four possible sources: (i) direct analysis in the theory's own paper(s); (ii) known results about the abstract model; (iii) new results proved in this paper; (iv) conditional results that depend on standard complexity conjectures.
When theoretical analysis is missing for a given theory--task pair, we apply the theory's framework to the task ourselves and prove the resulting prediction: constructions for \Easy predictions and lower bounds or impossibility results for \Hard predictions.
These new results are marked in~\cref{tab:theory-task-predictions} and the proofs are provided in \cref{app:theory-task-results}.

\subsection{Experimental Design}
\label{ssec:experiment_methodology}

In our experimental design, we evaluate three notions of empirical success and test each with its own line of experiments.
The first line of experiments evaluates whether transformers can learn a task and generalize in-distribution as the input size $n$ grows.
The second line of experiments evaluates whether models trained on small instances can generalize \emph{out-of-distribution} to larger instances.
The third line of experiments evaluates whether pretrained large language models can solve each task, without task-specific training.
In each setting, we measure how performance scales with the input size $n$. 
Since different tasks have different \emph{chance-level accuracy} baselines, we measure performance via the \emph{normalized accuracy} metric $(a - b) / (1 - b)$, where $a$ is the accuracy and $b$ is the chance-level baseline for the given task and input size $n$ (see \cref{app:normalized-accuracy}).\footnote{For tasks whose answer is a sequence (\TaskBinaryAddition and \TaskDyck), we report full-sequence accuracy without normalization ($b = 0$).}
This maps chance-level accuracy to $0 \%$ and perfect accuracy to $100 \%$.
We use the normalized accuracy threshold $\tau = 90\%$ to classify whether a task is \emph{solved} by a given model at a given size $n$.
Each experiment then grades a task as \Easy or \Hard based on whether the model solves the task as $n$ grows.

\paragraph{In-distribution learning.}
This experiment evaluates whether a transformer of fixed size can learn the task from examples, and how much data it needs as the input size $n$ grows.
For this experiment, we use a single causal transformer architecture (4 layers, width 256, 4 heads, rotary positional encoding).
For each task, we train a transformer model with this architecture across a range of input sizes $n \in \{8, 16, \ldots, 256\}$ and training set sizes $N \in \{2^8, 2^{10}, \ldots, 2^{16} \}$. 
Every run uses the same number of optimizer steps to match the training compute budget across dataset sizes, so that a smaller $N$ means more passes over fewer distinct training examples.
For each configuration of (task, $n$, $N$), we repeat the experiment with 5 random seeds.
A task is graded as \emph{solved} at size $n$ if, for some training set size $N$, at least one seed reaches the $\tau = 90 \%$ threshold.

\paragraph{Length generalization.}
Many expressivity theories make claims of the form ``there exists a transformer $T_\theta$ that solves $\mathtt{\langle task \rangle}$ for all sizes $n$''.
This suggests length generalization as a testable empirical analogue: can transformers \emph{discover} such a length-generalizing solution?
This stronger form of generalization, to larger out-of-distribution inputs, requires robustly learning the underlying computational structure of the task.
To evaluate this, we train on $n \in \{8, \ldots, 32\}$ and evaluate the same model on $n$ up to $256$, eight times the longest training input.
Length generalization is known to depend strongly on the architecture, and especially on the positional encoding~\citep{anil2022exploring,kazemnejad2023impact,zhou2024what}.
To control and evaluate the effect of the architecture, we train every task with 3 model sizes and 9 positional encodings, including NoPE, RoPE, absolute, and relative schemes.
Details of model sizes and positional encodings are in \cref{app:length-generalization-details}.
We repeat the experiment with 5 random seeds for each configuration.
A task is solved at size $n$ if a single trained model (one seed at a given architecture) exceeds the normalized accuracy threshold $\tau = 90\%$ at every test size up to $n$.

\paragraph{Pretrained large language models.}
The two lines of experiments described above evaluate small transformers trained from scratch on each task.
Large language models (LLMs) are believed to use the same transformer architecture, but are much larger and are trained on broad data with large-scale pretraining and post-training pipelines.
This experiment evaluates whether the predictions of transformer theories apply to such models and their complex training recipes.
We evaluate five pretrained models from several model providers on the same set of tasks, varying input size $n \in \{8, \ldots, 128\}$.
Each prompt describes the task in natural language and gives three solved examples, followed by the test input.
With enough intermediate reasoning steps, a transformer can theoretically simulate any computation~\citep{perez2019turing,merrill2024expressive}.
This falls outside the single-pass setting of the seven theories we evaluate.
Thus, we disable reasoning in our main evaluation; we report results with reasoning enabled in \cref{app:llm-reasoning}.
We omit the \TaskSparseStar tasks, since their hidden positions must be learned at training time.
We grade tasks similarly to the previous experiments: each model is graded on its own based on whether it solves the task at the largest tested size ($n = 128$), and the task receives the best grade reached by any single model.

%% file: tables/theory-task-matrix/theory-task-matrix-empirical.tex
\begin{table*}[t!]
\centering
\caption{\textbf{Theory--task predictions and experimental outcomes.}
Left: for each theory, we apply its hardness criterion, adapting tasks when necessary to fit within the theory's assumptions about inputs.
See \cref{tab:theory-task-predictions-annotated} for an annotated version with explanations and references for each claim.
Right: the outcome of each experiment in \cref{sec:results}: in-distribution learning (ID), length generalization (LG), and pretrained LLMs (LLM), graded as in \cref{ssec:experiment_methodology} on the task settings of \cref{fig:results}.
\predneg{}: \Hard, \predpos{}: \Easy (for predictions, in some cases, not \Hard), \textsuperscript{\prednewproof}: non-trivial result proved here, $^*$: assuming a complexity theory conjecture, ---: inapplicable.
The bottom rows count, for each theory, the predictions that match each experiment, among cells with both a prediction and an outcome.
}
\label{tab:theory-task-predictions}
\def\AgreementBars{1}%
\def\AgreementFadeSensitivity{0}%
\def\AgreementSensitivityDenominator{0}%
\def\AgreementRowDenominator{1}%
\input{tables/theory-task-matrix/theory-task-matrix-empirical-content}
\end{table*}

%% file: tables/theory-task-matrix/theory-task-matrix-empirical-content.tex
\begingroup
\fontsize{8}{10}\selectfont
\setlength{\tabcolsep}{2.25pt}
\renewcommand{\arraystretch}{1.15}
\begin{tabular*}{\textwidth}{@{\extracolsep{\fill}}l c c c c c c c @{\hspace{4pt}{\color{PredictionFamilyRule}\vrule width 0.4pt}\hspace{4pt}} c c c@{}}
\toprule
 & \multicolumn{7}{c}{\textbf{Theory predictions}} & \multicolumn{3}{c}{\textbf{Empirical}} \\
\cmidrule(lr){2-8}\cmidrule(l){9-11}
\textbf{Task} & \textbf{\Starfree} & \textbf{\boldmath\ACz} & \textbf{\boldmath\TCz} & \textbf{\CRASP} & \textbf{\MPC} & \textbf{\BAPO} & \textbf{\Sensitivity} & \textbf{ID} & \textbf{LG} & \textbf{LLM} \\
\midrule
\multicolumn{8}{@{}l}{\textbf{Boolean}} & & & \\[1pt]
\quad \TaskALL/\TaskANY & \predpos{} & \predpos{} & \predpos{} & \predpos{} & \predpos{} & \predpos{} & \predpos{} & \predpos{} & \predpos{} & \predpos{} \\
\quad \TaskMajority & \predneg{} & \predneg{} & \predpos{} & \predpos{} & \predpos{} & \predneg{} & \predpos{} & \predpos{} & \predpos{} & \predneg{} \\
\quad \TaskParity & \predneg{} & \predneg{} & \predpos{} & \predneg{} & \predpos{} & \predpos{} & \predneg{} & \predneg{} & \predneg{} & \predneg{} \\
\predfamilyrule
\multicolumn{8}{@{}l}{\textbf{\TaskIndex}} & & & \\[1pt]
\quad \TaskIndex & \predneg{} & \predpos{} & \predpos{} & \predneg[\prednewproof]{} & \predpos{} & \predpos{} & \predpos{} & \predpos{} & \prednone & \predneg{} \\
\quad \TaskIndexAndOr & \predneg{} & \predpos{} & \predpos{} & \predneg[\prednewproof]{} & \predpos{} & \predpos[\prednewproof]{} & \predpos{} & \predneg{} & \prednone & \predneg{} \\
\quad \TaskIndexXor & \predneg{} & \predpos{} & \predpos{} & \predneg[\prednewproof]{} & \predpos{} & \predpos[\prednewproof]{} & \predpos{} & \predneg{} & \prednone & \predneg{} \\
\predfamilyrule
\multicolumn{8}{@{}l}{\textbf{Sparse}} & & & \\[1pt]
\quad \TaskSparseAndOr & \predpos{} & \predpos{} & \predpos{} & \predpos{} & \predpos{} & \predpos{} & \predpos{} & \predpos{} & \predpos{} & \prednone \\
\quad \TaskSparseXor & \predpos{} & \predpos{} & \predpos{} & \predpos{} & \predpos{} & \predpos{} & \predpos{} & \predpos{} & \predpos{} & \prednone \\
\predfamilyrule
\textbf{\TaskBinaryAddition} & \predneg{} & \predpos{} & \predpos{} & \predneg{} & \predpos{} & \predneg[\prednewproof]{} & \predpos{} & \predneg{} & \predneg{} & \predneg{} \\
\predfamilyrule
\multicolumn{8}{@{}l}{\textbf{\TaskDyck}} & & & \\[1pt]
\quad \TaskDyckOne (bounded) & \predpos{} & \predpos{} & \predpos{} & \predpos{} & \predpos{} & \predpos{} & \prednone & \predpos{} & \predneg{} & \predneg{} \\
\quad \TaskDyckOne (unbounded) & \predneg{} & \predneg{} & \predpos{} & \predpos{} & \predpos{} & \predneg[\prednewproof]{} & \prednone & \predneg{} & \predpos{} & \predneg{} \\
\quad \TaskDyckTwo (bounded) & \predpos{} & \predpos{} & \predpos{} & \predpos[\prednewproof]{} & \predpos{} & \predpos{} & \prednone & \predpos{} & \predneg{} & \predneg{} \\
\quad \TaskDyckTwo (unbounded) & \predneg{} & \predneg{} & \predpos{} & \predneg{} & \predpos{} & \predneg[\prednewproof]{} & \prednone & \predneg{} & \predneg{} & \predneg{} \\
\predfamilyrule
\multicolumn{8}{@{}l}{\textbf{\TaskKHop}} & & & \\[1pt]
\quad $k = 1$ & \predpos[\prednewproof]{} & \predpos[\prednewproof]{} & \predpos[\prednewproof]{} & \predneg[\prednewproof]{} & \predpos{} & \predpos[\prednewproof]{} & \prednone & \predpos{} & \predpos{} & \predneg{} \\
\quad $k = 2$ & \predpos[\prednewproof]{} & \predpos[\prednewproof]{} & \predpos[\prednewproof]{} & \predneg[\prednewproof]{} & \predpos{} & \predpos[\prednewproof]{} & \prednone & \predneg{} & \predneg{} & \predneg{} \\
\quad $k = 3$ & \predpos[\prednewproof]{} & \predpos[\prednewproof]{} & \predpos[\prednewproof]{} & \predneg[\prednewproof]{} & \predpos{} & \predpos[\prednewproof]{} & \prednone & \predneg{} & \predneg{} & \predneg{} \\
\predfamilyrule
\multicolumn{8}{@{}l}{\textbf{Graphs}} & & & \\[1pt]
\quad \TaskEdgeExistence & \predneg{} & \predpos{} & \predpos{} & \predneg[\prednewproof]{} & \predpos{} & \predpos{} & \prednone & \predneg{} & \predneg{} & \predpos{} \\
\quad \TaskDirectedReachability & \predneg{} & \predneg{} & \prednegc{} & \prednegc{} & \prednegc{} & \predneg{} & \prednone & \predneg{} & \predneg{} & \predneg{} \\
\quad \TaskUndirectedReachability & \predneg{} & \predneg{} & \prednegc{} & \prednegc{} & \prednegc{} & \predneg{} & \prednone & \predneg{} & \predneg{} & \predneg{} \\
\midrule
\multicolumn{8}{@{}l}{\textbf{Agreement with experiments}} & & & \\[1pt]
\predagreementlabel{ID}{19} & \predagreementcell{6}{8}{9}{11}{0}{15}{19}{} & \predagreementcell{7}{13}{5}{6}{0}{12}{19}{} & \predagreementcell{8}{17}{2}{2}{0}{10}{19}{} & \predagreementcell{6}{7}{10}{12}{0}{16}{19}{} & \predagreementcell{8}{17}{2}{2}{0}{10}{19}{} & \predagreementcell{7}{13}{5}{6}{0}{12}{19}{} & \predagreementcell{5}{8}{1}{1}{1}{6}{9}{(of 9)} & & & \\[\predagreementrowskip]
\predagreementlabel{LG}{16} & \predagreementcell{4}{8}{6}{8}{0}{15}{19}{} & \predagreementcell{4}{10}{4}{6}{0}{12}{19}{} & \predagreementcell{6}{14}{2}{2}{0}{10}{19}{} & \predagreementcell{5}{7}{8}{9}{0}{16}{19}{} & \predagreementcell{6}{14}{2}{2}{0}{10}{19}{} & \predagreementcell{4}{10}{4}{6}{0}{12}{19}{} & \predagreementcell{4}{5}{1}{1}{1}{6}{9}{(of 9)} & \multicolumn{3}{l@{}}{\predagreementlegend} \\[\predagreementrowskip]
\predagreementlabel{LLM}{17} & \predagreementcell{1}{6}{10}{11}{0}{15}{19}{} & \predagreementcell{2}{11}{6}{6}{0}{12}{19}{} & \predagreementcell{2}{15}{2}{2}{0}{10}{19}{} & \predagreementcell{1}{5}{11}{12}{0}{16}{19}{} & \predagreementcell{2}{15}{2}{2}{0}{10}{19}{} & \predagreementcell{2}{11}{6}{6}{0}{12}{19}{} & \predagreementcell{1}{6}{1}{1}{1}{6}{9}{(of 9)} & & & \\[\predagreementrowskip]
\bottomrule
\end{tabular*}
\endgroup

%% file: sections/04-results.tex
\section{Results}
\label{sec:results}

We now compare the predictions of the seven theories against the outcomes of our three lines of experiments.
We first describe where the theories disagree and provide an overview of our theoretical contributions towards this analysis. We then report the outcomes of our experiments and what they tell us about the validity of different theoretical constructs.

\subsection{Theory}
The task predictions for each theory are shown in \Cref{tab:theory-task-predictions}, along with a coarse summary of the experiment results for easy comparison.

\paragraph{Where the theories agree and disagree.}
On \TaskALL/\TaskANY, \TaskSparseStar, and bounded \TaskDyck, all theories that apply predict these tasks to be \Easy.
On the remaining tasks, the seven theories make conflicting predictions.
For example, \TaskMajority requires global aggregation, which \Starfree, \ACz, and \BAPO lack.
\TaskParity requires modular counting, which \Starfree, \ACz, and \CRASP cannot express.
For retrieval tasks whose range grows with $n$ (\TaskIndexStar and \TaskEdgeExistence), \Starfree and \CRASP predict \Hard.
\Starfree describes only regular languages, which cannot track a position that grows with $n$; \CRASP predicts \Hard because it lacks content-dependent attention.\footnote{When the index range is bounded, \CRASP can simulate content-dependent retrieval by retrieving all possible queries and selecting between them in the positionwise computation (\Cref{res:index-fixed-crasp,res:index-bool-fixed-crasp}). }
This lack also makes \CRASP the only theory to predict \Hard for \TaskKHop (even when $k=1$, a new result we show in \Cref{res:k-hop-crasp}), an important fact given the central role induction heads are thought to perform in practice~\citep{elhage2021mathematical,olsson2022incontextlearninginductionheads}. 
Unbounded \TaskDyck is captured by \TCz and \MPC, but not the other theories, with the exception of \CRASP which can represent \TaskDyckOne but not \TaskDyckTwo.
Among the seven theories, \Starfree is the most restrictive, predicting \Easy on only 8 of 19 tasks, while \TCz and \MPC are the most powerful, predicting \Easy on 17 of 19.

\paragraph{New theoretical analyses.}
A few notable new results we provide to fill in \Cref{tab:theory-task-predictions} include \TaskDyck-$(k, D)$ in \CRASP (\Cref{res:dyck-crasp}), \TaskKHop not in \CRASP (even for $k=1$, \Cref{res:k-hop-crasp}), and \BAPO-hardness for unbounded \TaskDyck-$k$ (\Cref{res:dyck-bapo}) and \TaskBinaryAddition (\Cref{res:addition-bapo}). Additionally, we show a new exponential width separation between \CRASP and softmax attention transformers---despite the depth-for-depth equivalence between \CRASP and softmax transformers shown by~\citet{yang2024counting,yang2025kneedeep} via uniform attention. This is due to the lack of content-dependent attention in \CRASP, which we will see manifests in our experiments. Informally, we show the following result.
\begin{theorem*}[Informal]
    There exists a family of query-key retrieval tasks parameterized by $m$ such that: (i) every \CRASP program that solves it needs program size $\Omega(2^m / m)$, (ii) any uniform-attention transformer with depth $D$ and $p$-bit precision needs width $\Omega(2^m / (D (p +m)))$, and (iii) one softmax attention head solves the task with width $O(m)$.
\end{theorem*}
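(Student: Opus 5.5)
The plan is to use a single dictionary-lookup task. Its causal structure forces any uniform-averaging model to push the entire dictionary through a narrow channel to the last position, while one softmax head can address the queried entry directly.

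\textbf{Task.} For parameter $m$, the input has $2^m+1$ tokens. The first $2^m$ tokens are pairs $(k_j,v_j)$, one for each key $k_j\in\{0,1\}^m$ in a fixed order, each with a value bit $v_j\in\{0,1\}$. The last token is a query $q\in\{0,1\}^m$. The output at the last position is the $v_j$ with $k_j=q$. Since the keys are listed in a fixed order, the prefix is determined by a string $v\in\{0,1\}^{2^m}$. A correct model must therefore be able to recover every bit of $v$ by varying the final token $q$.

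\textbf{Lower bound for \CRASP (part (i)).} The key observation is causality. Every Boolean or count computed at a prefix position $j\le 2^m$ depends only on $x_1,\dots,x_j$, and hence not on $q$. Now consider any count operation $C(i)=\#[j\le i: B(j)]$ evaluated at the last position $i=2^m+1$. It splits as
\[
C(2^m+1) = \#[j\le 2^m : B(j)] + \mathbb{1}\{B(2^m+1)\}.
\]
The first term is a function of $v$ alone, and the second is a single locally computed bit. By induction over the program's operations, every value at the last position is a function of $q$ together with the prefix-only counts $c_1(v),\dots,c_s(v)$, where $s$ is the program size. Each $c_\ell(v)\in\{0,\dots,2^m\}$, so the tuple takes at most $(2^m+1)^s=2^{O(sm)}$ values. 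Correctness requires the map $v\mapsto(c_1(v),\dots,c_s(v))$ to be injective: if two strings $v\ne v'$ gave the same counts, choosing $q$ at a coordinate where they differ would produce the same output for different correct answers. Hence $2^{O(sm)}\ge 2^{2^m}$, which gives $s=\Omega(2^m/m)$.

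\textbf{Lower bound for uniform-attention transformers (part (ii)).} The same argument applies layer by layer. In each layer, the last position's attention output is an average that splits into a prefix sum, which is independent of $q$, plus its own contribution. With width $d$ and $p$-bit entries, a prefix sum of $2^m$ terms in one coordinate takes at most $2^{p+m+1}$ values. So each layer sends at most $d(p+m+1)$ bits of $q$-independent information about $v$ to the last position, and $D$ layers send $O(Dd(p+m))$ bits. The injectivity argument then gives $d=\Omega(2^m/(D(p+m)))$. This is the step I expect to need the most care. I would have to fix the precision model precisely: whether averaging is computed in $p$ bits with rounding, or exactly as a rational with denominator $i$. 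In either case the number of distinct values must be bounded by $2^{O(p+m)}$ per coordinate. I would also need to check that the residual stream and MLPs at the last position add no further channel. They do not, because they act locally on quantities already shown to be functions of $q$ and the transmitted prefix sums.

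\textbf{Upper bound with softmax attention (part (iii)).} Embed a key $k$ as $\hat k\in\{\pm1\}^m$, and place the value bit in an extra coordinate, giving width $m+O(1)$. Set $W_Q$ and $W_K$ so that the attention score is $\beta\langle \hat q,\hat k_j\rangle$. This score equals $\beta m$ exactly at the matching key and is at most $\beta(m-2)$ elsewhere. The last position's own score must also be kept below $\beta m$, which an extra type-flag coordinate achieves. The total softmax weight on the at most $2^m$ non-matching positions is then at most $2^m e^{-2\beta}$. Choosing $\beta=\Theta(m)$ makes this below $1/4$, so the attended value lies within $1/4$ of $v_j$, and thresholding recovers the answer. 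The construction needs only $O(m)$ bits of precision, consistent with the regime in part (ii).
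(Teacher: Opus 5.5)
Your proposal is correct and follows essentially the same route as the paper. Parts (i) and (ii) use the same injectivity-and-counting argument on prefix counts and prefix value-sums. The paper fixes the precision model you flagged by computing each uniform-attention sum exactly and then rounding into $p$-bit values, and by assuming total value dimension at most $d$ per layer. Part (iii) is the same single-head construction using inner products of $\pm1$ keys. The only differences are cosmetic: you bound the non-matching mass by a union bound with $\beta=\Theta(m)$ and suppress the query's self-score with a type flag, whereas the paper lets the query position attend to itself with value $0$ and uses the exact identity $\sum_t \binom{m}{t}e^{-2\beta t}=(1+e^{-2\beta})^m$. That identity lets $\beta=\tfrac12\log(3m)$ suffice, keeping width $m+2$ and scores $O(m\log m)$.
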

The formal result and the proof are provided in~\cref{app:c-rasp-separation}, along with experiments showing that the separation also appears when training softmax and uniform-attention transformers on these tasks.

\subsection{Experiments}
We first focus on the from-scratch transformer experiment results, grouped by task category, then summarize the LLM results. \Cref{fig:results} summarizes the results of the in-distribution, length generalization, and pretrained LLM experiments.

\begin{figure}[h!]
    \centering
    \includegraphics[width=\linewidth]{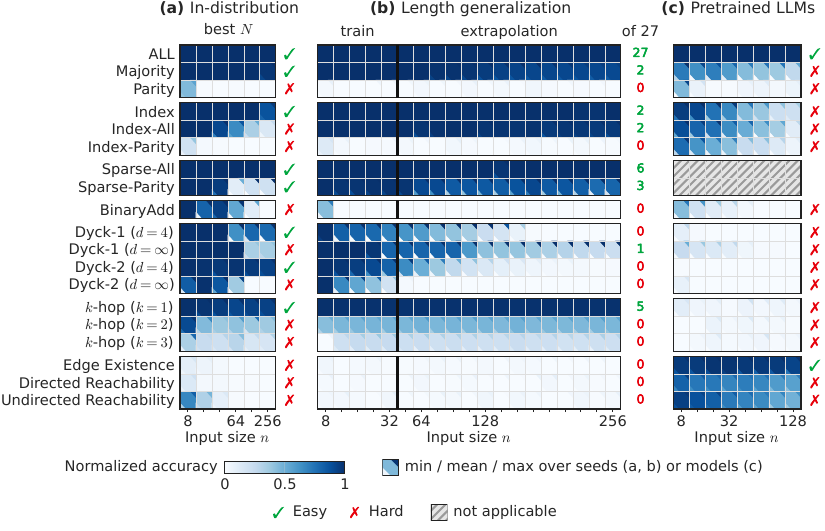}
    \caption{\textbf{Results of the three experiments.}
    Rows are task settings, and heatmap color shows mean normalized accuracy.
    Corner tabs, when present, show the minimum and maximum over five seeds in (a,b) or available models in (c).
    \textbf{(a)} In-distribution learning: each size $n$ shows the training-set size $N$ with the highest-scoring seed; the full $n \times N$ grid is in \cref{fig:app-id-landscape}.
    \textbf{(b)} Length generalization: models train through $n=32$ (left of the black line).
    Each row uses the configuration that generalizes furthest among 27; the adjacent number counts configurations with a seed succeeding at every tested size through $n=256$; \cref{app:results-lg} provides more details.
    \textbf{(c)} Pretrained LLMs: cells average the models available at each size; hatched rows are not applicable.
    In (a,c), \predpos{} marks \Easy and \predneg{} marks \Hard, using the $90\%$ threshold.
    A setting is \Easy if it meets the threshold at the largest tested $n$: in (a), with any seed at any $N$; in (c), with any single model.
    }
    \label{fig:results}
\end{figure}

\paragraph{Boolean functions.}
\TaskALL/\TaskANY, which measure whether transformers can detect the occurrence of a symbol, is predicted \Easy by all theories.
This is supported by the experiments, which find this task to be learnable and easy to length-generalize.
On the other hand, \TaskMajority separates theories by probing their ability to perform global aggregation and thresholding.
We find that transformers successfully learn and length-generalize on this task, which agrees with \TCz, \CRASP, \MPC, and \Sensitivity, and contradicts \Starfree, \ACz, and \BAPO.
This highlights that transformers do indeed possess global aggregation capabilities, which hard-attention theories fail to capture.
Another task that separates theories is \TaskParity, which tests for the ability to learn and perform a simple type of binary state-tracking.
We find that transformers fail to learn this task, even in-distribution at modest input sizes.
This agrees with the \Hard predictions of \Starfree, \ACz, \CRASP, and \Sensitivity, while contradicting \TCz, \MPC, and \BAPO.
However, among these theories, we suspect that \Sensitivity is closest to capturing the right \emph{mechanism} for this failure.
We observe that even \emph{training} accuracy stays low when the training set is large (e.g., at $n=8$, $N=65{,}536$, training-sample accuracy is only $49.7$--$59.1\%$ across the five seeds).
This suggests that the central barrier is learning and optimization, rather than merely expressivity, as previously argued by \citet{hahn2024sensitive}.
While certain more restrictive theories correctly predicted hardness, they do not seem to have captured the real underlying mechanism for this hardness.

\paragraph{Sparse functions.}
The \TaskSparseStar tasks measure the ability of transformers to learn functions of a hidden sparse subset of the inputs.
All theories predict that such sparse functions are \Easy for transformers.
Indeed, we find that \TaskSparseAndOr is efficiently learnable, and moreover length-generalizes for some architectures.
\TaskSparseXor is also \Easy in distribution, as all theories predict.
However, this success is fragile at longer inputs: for $n \geq 64$, at most one of 25 runs succeeds at each size.

\paragraph{Retrieval.}
The \TaskIndexStar tasks assess the ability of transformers to retrieve a sparse, query-dependent set of inputs from a growing context.
\Starfree and \CRASP predict \Hard for these tasks when the index range or key space grows with the input size. 
Other theories predict \Easy, as they provide mechanisms for efficient query-dependent retrieval through dynamic attention.
Empirically, we find that small transformers learn \TaskIndex at every input size.
\TaskIndexAndOr, which requires composing a retrieval with a simple computation (\TaskALL/\TaskANY), is harder: within our budget, it is solved only up to $n = 16$ to $64$, but it appears learnable with increased data.
Interestingly, however, we find \TaskIndexXor to be unlearnable, despite having the same modest sparsity level as \TaskIndexAndOr ($k=2$ or $k=3$) and in-distribution success on the similar \TaskSparseStar task.
In our length generalization experiments, we bound the maximum index range to make the length generalization problem well-defined.
This makes it a different task from the in-distribution one, so we do not compare it with the theory predictions (\Cref{tab:theory-task-predictions}).
Transformers do length-generalize on \TaskIndex and \TaskIndexAnd for certain architectural choices.

\paragraph{Iterated retrieval.}
The \TaskKHop tasks involve composing content-dependent information retrieval operations.
A single hop is the computation of an induction head, a two-layer attention circuit that mechanistic interpretability work has found in pretrained language models of many sizes~\citep{elhage2021mathematical,olsson2022incontextlearninginductionheads}.
Induction heads are thought to play an important role in in-context learning abilities of large language models~\citep{olsson2022incontextlearninginductionheads,crosbie2025induction}.
\TaskKHop composes $k$ such hops, and so tests whether transformers can chain this basic retrieval operation.
Here, unlike the \TaskIndexStar tasks, we take the alphabet to be fixed in $n$, and instead focus on the iterated structure.
All theories except \CRASP predict that \TaskKHop is \Easy at fixed $k$.
Empirically, we find that \TaskOneHop is learnable for small transformers trained from scratch, and even length-generalizes on 5 of 27 architectural configurations.
However, learning compositions is substantially harder.
In the in-distribution experiments, \TaskKHop is only solved at $n=8$ for $k=2$, and is never solved for $k=3$.

\paragraph{Stack-like nesting with bounded and unbounded depth.}
The \TaskDyck tasks assess the ability of transformers to track stack-like states and nesting structure.
We consider both bounded-depth and unbounded-depth \TaskDyck.
When depth is bounded, all theories that apply predict \Easy.
In-distribution, transformers trained from scratch solve bounded-depth ($d=4$) \TaskDyckOne and \TaskDyckTwo at every size.
However, they exhibit limited length generalization.
With unbounded depth, the theories disagree.
The most expressive theories, \TCz and \MPC, predict \Easy for all \TaskDyck-$k$, \CRASP predicts \Easy for \TaskDyckOne only, while \Starfree, \ACz, and \BAPO predict \Hard.
In-distribution, both \TaskDyckOne and \TaskDyckTwo are solved only up to $n = 64$, in line with the \Hard predictions.
Interestingly, however, with the benefit of an architecture search over different positional encodings, one of five seeds on one of 27 architectural configurations (medium size, NoPE with a BOS token) \emph{length-generalizes} on \TaskDyckOne to $n=256$.
This has an interesting connection to \CRASP theory, which provides a simple program that compiles to a uniform attention transformer that solves \TaskDyckOne for all input sizes by counting brackets~\citep{yang2024counting}.

\paragraph{Multi-step computation.}
The \TaskBinaryAddition task assesses the ability of transformers to perform multi-step computations associated with arithmetic.
The two circuit class theories, \ACz and \TCz, together with \MPC predict \Easy for this task, while the rest predict \Hard.
Empirically, we find that transformers succeed in learning the task only for small inputs, suggesting that circuit complexity classes, even the weaker \ACz, underestimate the difficulty of this task for transformers.
At larger sizes, accuracy still improves with more training data and varies widely across seeds, which points to a learning barrier rather than an expressivity limit.
Consistent with this, prior work has found success on addition by augmenting position embeddings~\citep{mcleish2024transformers,cho2024position,cho2025arithmetic}.
Another pair of tasks that involve multi-step computation is \TaskDirectedReachability and \TaskUndirectedReachability, which involve complex graph processing.
These tasks are the most difficult in our task suite, as they are predicted \Hard by all theories.\footnote{Some of these predictions are conditional on standard complexity conjectures.}
The experiments agree with this classification, as small transformers fail to learn the tasks, even for the smallest input sizes ($n=8$-node graphs).
Another graph task that was surprisingly difficult for transformers to learn in our experiments is the simpler \TaskEdgeExistence task.
This task only requires checking if a queried edge exists in the graph.
In our empirical setup, we found that small transformers failed to learn this task.
We provide a deeper analysis of this in the appendix, showing that the task becomes learnable with denser supervision on every edge rather than only on the final answer (\cref{app:edge-existence-supervision}). This suggests learning is once again the culprit.

\paragraph{Pretrained LLM experiments.}
The LLM evaluation experiments test whether the predictions of transformer theories carry over to frontier language models.
On our task suite, LLMs succeed robustly only on \TaskALL and \TaskEdgeExistence, which require only identifying the presence of a single item in a list. They show some success at small input lengths for \TaskMajority and \TaskIndexStar. LLMs struggle with any more complex task as $n$ grows. With reasoning, however, GPT-5 solves most task settings through $n = 128$, with exceptions for \TaskDyck, \TaskBinaryAddition, and \TaskIndexAnd with $k = 3$ (\Cref{app:llm-reasoning}, \Cref{fig:llm-reasoning}; failures on \TaskDyck are likely due to our use of per-position next-token-set prediction, see \Cref{app:task-dyck}).

%% file: sections/05-discussion.tex
\section{Discussion}\label{sec:discussion}

\Cref{sec:results} showed that no single theory explains the pattern of empirical observations on our suite of tasks.
When a prediction fails, it points to a feature of transformers that the theory's abstract model fails to accurately capture.
We discuss the most important of these features and highlight concrete directions for future theoretical work.

\paragraph{Models of attention: soft \emph{vs} hard, and dynamic retrieval.}
The \emph{attention} operation is the defining feature of transformers, and each theory models it differently.
We identify two properties of how attention is modeled that explain many of our results: (i) whether attention is soft or hard, and (ii) whether it supports dynamic, content-dependent retrieval.
Real transformers with soft attention can aggregate information over many positions, while hard attention is limited to selecting a fixed number of positions.
As a result, theories that build on hard attention (\Starfree, \ACz, and \BAPO) cannot perform global aggregation, so they predict the \TaskMajority task to be \Hard.
Yet real transformers can use their soft attention to solve this task robustly across input sizes.
On the other hand, while \CRASP theory uses a soft form of attention that enables global aggregation (in the form of prefix-counting), its restriction to uniform averaging prevents it from capturing the \emph{dynamic}, content-dependent information retrieval capabilities of real transformers, as \cref{sec:results} shows.
Therefore, we argue that a faithful model of attention should be both soft and content-dependent.

\paragraph{Unbounded computation and loose upper bounds.}
\TCz and \MPC theories model both global aggregation and content-dependent retrieval.
Yet, our experiments show that they are overpowered as they predict \Easy for many tasks that transformers fail to learn.
These prediction errors can be partially explained by computation resources that exceed real transformers, as well as loose upper bounds.
For example, while $\TCz$ circuits can simulate soft attention transformers, it is known that transformers do not solve every task in \TCz~\citep{chiang2023tighter}.
\MPC and \BAPO measure the communication complexity of attention, but allow for unbounded local computation.
This leads to false \Easy predictions on tasks which have modest communication requirements but are nonetheless computationally hard.
As an example, \MPC and \BAPO predict that every regular language is \Easy, including the word problem for the permutation group $S_5$, which is $\NCone$-complete~\citep{barrington1989bounded} and therefore lies outside the reach of constant-depth transformers assuming $\TCz \neq \NCone$~\citep{merrill2023parallelism}. 

\paragraph{Failures of learning, not expressivity.}
Most theoretical work addresses questions around expressivity, without addressing learnability.
Among the seven theories we investigate in this work, only \Sensitivity makes claims about what transformers can \emph{learn}.
Several of our negative results reflect failures of learning: transformers fail to fit even their training data on \TaskParity, accuracy on \TaskBinaryAddition varies widely across seeds and improves with more data, and \TaskEdgeExistence becomes learnable with denser supervision.
Expressivity theories cannot explain these failures.
Some correctly predict \TaskParity to be \Hard, but for the wrong reason, since its difficulty comes from optimization rather than the expressivity limits they identify.
However, existing learnability theory, such as \Sensitivity, tends to be narrower in scope than expressivity theory.
Broader theoretical work on learnability, where our understanding lags behind, would be particularly valuable.

\paragraph{Positional encoding \& inductive biases.}
Current theory does not provide an account of the inductive bias effects of positional encoding during learning, and how they lead to success or failure in length generalization.
Our experiments show that the ability to learn a length-generalizing solution depends strongly on the positional encoding, and that no single encoding scheme succeeds across tasks.
For example, sinusoidal encodings generalize on bounded-range \TaskIndex, relative positional encodings succeed on \TaskOneHop, and only NoPE succeeds on unbounded \TaskDyckOne.
\citet{huang2024formal} take a first step toward such an account by relating a notion of length generalization to variants of \CRASP with different types of positional information, but consider an idealized asymptotic analysis.
Our results point to an impactful direction for future theory: an account of the inductive biases that make an encoding generalize on a task could guide the design of new encodings that generalize across many tasks at once.

\paragraph{LLMs are more complex objects.}
Our experiments show that small transformer models trained from scratch exhibit markedly different behavior from pretrained large language models.
For example, LLMs fail on tasks like \TaskMajority and \TaskIndex as $n$ grows, while small transformers trained from scratch successfully learn them.
This suggests that there are different important factors at play, beyond what current expressivity theory captures, that dictate what capabilities LLMs acquire, such as tokenization, training data, the pretraining and post-training recipe, and the dynamics of simultaneously training on different tasks.
Thus, a theory of LLMs must seek to develop abstractions that capture these facets in addition to analysis of the architecture itself. 

\section{Conclusion}\label{sec:conclusion}
In this paper, we compare and contrast seven different theoretical frameworks that are aimed at explaining the capabilities and limitations of transformers.
We do so by curating a diverse suite of tasks studied in the literature, including tasks such as \TaskParity, \TaskIndex, \TaskDyckOne, and \TaskBinaryAddition, on which the theories disagree.
We fill in missing predictions with new theoretical analyses.
Importantly, our experiments show that transformer failures are not just due to one, but multiple mechanisms, including limits of expressivity, failures of learning, and the inductive biases of positional encodings.
For example, in addition to the well-studied \TaskParity task, \TaskKHop for $k \geq 2$ and unbounded \TaskDyckOne appear to be hard to learn, even though transformers can express solutions to them.
We also see that length generalization often depends on architectural choices such as positional encodings.
Finally, we find that pretrained LLMs and small transformers trained from scratch behave differently, with LLMs succeeding on tasks that small transformers fail on, and vice versa.
This suggests that the abilities learned by large models depend jointly on their architecture, data, and training recipe.
No single theory accounts for all of these mechanisms and their interactions, so none explains the full pattern of successes and failures.

\paragraph{Towards a unified theory of transformers.}
Each of the seven theories captures some features of transformers and misses others.
Theories built on hard attention capture content-dependent retrieval, but lack global aggregation; \CRASP captures global aggregation but not content-dependent retrieval.
\TCz and \MPC can represent content-dependent soft attention, but are overpowered, predicting \Easy for many tasks that transformers fail to learn.
\MPC and \BAPO formalize the bandwidth limits of attention, but assume unbounded local computation, so they classify any task with low communication complexity as \Easy, regardless of how much computation each position must perform.
Only \Sensitivity addresses learning, but it gives just one criterion for hardness and is limited to Boolean functions under the uniform distribution.
Our results identify concrete guidelines for future theories of transformers: model attention as both soft and content-dependent, bound the computation at each position, explain which expressible solutions training finds, and account for the added complexities of large language models.
Our task suite and prediction matrix provide a testbed for new theories.

%% file: sections/acknowledgments-ai-use.tex
\section*{Acknowledgments and AI Use Disclosure}

We thank the AI Interaction and Learning (AIIL) team at Microsoft Research for helpful discussions during the development of this project.

We used generative AI tools in several parts of this work.
For the experiments, we used AI coding tools to implement the model components, the training code, and the synthetic task generators.
For the theory, we used AI tools to assist in developing proofs, both by formalizing proof sketches and by suggesting proof ideas.
The final proofs in the paper were written and reviewed by the human authors using the AI-generated proofs as starting points when appropriate.
We also used AI tools to search for and summarize relevant prior work.
In preparing the paper, AI tools were used to edit the text for readability, format the LaTeX source, write the code that produces our results figures, and draw the conceptual diagrams in TikZ.
We take full responsibility for all content in this paper.

%% file: appendices/task-details/task-details.tex
\section{Task Suite Details}
\label{app:task_details}

We separate the mathematical functions and languages that describe tasks (\cref{app:task-definitions}) from the experimental sampling, tokenization, and supervision choices (\cref{app:task-experiments}).

\subsection{Mathematical task definitions}
\label{app:task-definitions}

We write $[n]=\{1,\ldots,n\}$ and use one-based positions in the mathematical definitions.
Unless stated otherwise, $n\geq1$, data are binary, and task parameters are fixed independently of $n$.

\begin{definition}[Boolean tasks]
\label{def:task-boolean}
For $x=(x_1,\ldots,x_n)\in\{0,1\}^n$,
\begin{align*}
    \TaskALL(x)&=\bigwedge_{i=1}^n x_i,
    &\TaskANY(x)&=\bigvee_{i=1}^n x_i,\\
    \TaskMajority(x)&=\bmone\left\{\sum_{i=1}^n x_i\geq n/2\right\},
    &\TaskParity(x)&=\left(\sum_{i=1}^n x_i\right)\bmod 2.
\end{align*}
\end{definition}

\begin{definition}[\TaskIndex and \TaskIndexStar]
\label{def:task-index}
For data \(x\in\{0, 1\}^n\) and query \(i\in[n]\),
\[
    \text{\TaskIndex}(x, i)
    =x_i.
\]
Given \(k\geq1\) and \(F:\{0,1\}^k\to\{0,1\}\), with queries \(i_1, \dots, i_k \in [n]\), 
\[
    \text{\TaskIndexStar}_{F, k}(x, i_1, \dots, i_k)
    =F(x_{i_1},\ldots,x_{i_k}).
\]
Taking \(F\) to be conjunction, disjunction, or parity gives \TaskIndexAnd, \TaskIndexOr, or \TaskIndexXor, respectively.

We occasionally consider \emph{fixed-range} \TaskIndex$_L$, where we fix an index bound $L \ge 1$ and have $i \le L$ for each queried index $i$. When needed for clarity, we refer to the unbounded-index version as \emph{growing-range} index. 
Similarly, \(\text{\TaskIndexStar}_{L,F,k}\) denotes the fixed-range version of \(\text{\TaskIndexStar}_{F,k}\), with every query bounded by \(L\).
\end{definition}

\begin{definition}[\TaskSparseStar]
\label{def:task-sparse}
Fix a tuple of positive indices $I=(i_1,\ldots,i_k)$ and a Boolean function $F:\{0,1\}^k\to\{0,1\}$, with $k\geq1$, independently of the input length $n$. Repeated indices are allowed.
For $x\in\{0,1\}^n$,
\[
    \text{\TaskSparseStar}_{I,F}(x)=
    \begin{cases}
        F(x_{i_1},\ldots,x_{i_k}) & n\geq\max_j i_j,\\
        0 & n<\max_j i_j.
    \end{cases}
\]
Conjunction, disjunction, and parity give \TaskSparseAnd, \TaskSparseOr, and \TaskSparseXor.
\end{definition}

\begin{definition}[\TaskBinaryAddition]
\label{def:task-addition}
For a binary string $x=x_1\cdots x_m$ written most-significant bit first, let
$\operatorname{val}_2(x)=\sum_{i=1}^m x_i2^{m-i}$.
The decision version of \TaskBinaryAddition checks whether two $n$-bit numbers sum to a proposed $(n+1)$-bit result.
For $a,b\in\{0,1\}^n$ and $c\in\{0,1\}^{n+1}$, 
\[
\TaskBinaryAddition(a,b,c)
=\mathbf{1}\{\operatorname{val}_2(a)+\operatorname{val}_2(b)=\operatorname{val}_2(c)\}.
\]
\end{definition}

\begin{definition}[\TaskDyck]
\label{def:task-dyck}
Fix $k\geq1$ bracket types, with alphabet
$\Sigma_k=\{\mathsf{open}_t,\mathsf{close}_t:t\in[k]\}$.
The \TaskDyck-$k$ language is generated by the context-free grammar with start symbol $S$ and productions
\[
    S\ \to\ \epsilon \mid SS \mid \mathsf{open}_t\,S\,\mathsf{close}_t
    \qquad (t\in[k]).
\]
For $w\in\text{\TaskDyck-$k$}$, its nesting depth $\operatorname{depth}(w)$ is the maximum, over all prefixes of $w$, of the number of opening brackets minus the number of closing brackets.
For a fixed depth bound $D\geq1$, define
\[
    \TaskDyck\text{-}(k,D)
    =\{w\in\TaskDyck\text{-}k:
      \operatorname{depth}(w)\leq D\}.
\]
\end{definition}

\begin{definition}[\TaskKHop]
\label{def:task-k-hop}
Fix a finite alphabet $\Sigma$ and $k\geq1$.
For $x=x_1\cdots x_n\in\Sigma^n$, define
\[
    h(i)=1+\max\{j<i:x_j=x_i\},
\]
leaving $h(i)$ undefined when there is no previous occurrence.
The output of \TaskKHop is $x_{h^k(n)}$ if all $k$ hops are defined, and a null symbol $\bot\notin\Sigma$ otherwise.
Each hop moves to the position immediately after the most recent earlier occurrence of the current symbol.
Adjacent repeats are allowed and can give $h(i)=i$.
\end{definition}

\begin{definition}[Graph problems]
\label{def:task-graphs}
An instance consists of a directed graph $G=([n],E)$ and query vertices $s,t\in[n]$.
\TaskEdgeExistence returns $\bmone\{(s,t)\in E\}$.
\TaskDirectedReachability returns $1$ exactly when there is a directed path from $s$ to $t$.
\TaskUndirectedReachability returns $1$ exactly when there is a path from $s$ to $t$ after edge directions are ignored.
\end{definition}

\subsection{Experimental tasks}
\label{app:task-experiments}

This appendix describes how we sample and encode tasks for the experiments.
\Cref{tab:task-settings} lists every task setting and the experiments that use it.
The text format used for pretrained LLMs is described in \cref{app:llm-eval-details}.

Throughout, $n$ is the size of an instance: the number of input bits or symbols, the number of bits per operand for \TaskBinaryAddition, the number of bracket pairs for \TaskDyck, and the number of nodes for the graph tasks.
Positions are numbered from $0$.
All tasks are supervised with a cross-entropy loss at the scored positions only.
Unless stated otherwise, a single position is scored: the last token of the input, where the model predicts the answer.

\begin{table}[h]
\centering
\small
\caption{\textbf{Task settings.} Each row is one or more task settings. ``Tokens'' is the input length for models trained from scratch at size $n$. A check mark means the setting is used in the in-distribution (ID), length-generalization (LG), or pretrained-LLM (LLM) experiment. In total, ID uses 34 settings, LG uses 34, and LLM uses 27.}
\label{tab:task-settings}
\renewcommand{\arraystretch}{1.15}
\setlength{\tabcolsep}{4pt}
\begin{tabularx}{\linewidth}{@{}>{\raggedright\arraybackslash}p{0.36\linewidth} >{\raggedright\arraybackslash}X l c c c@{}}
    \toprule
    \textbf{Task} & \textbf{Settings} & \textbf{Tokens} & \textbf{ID} & \textbf{LG} & \textbf{LLM} \\
    \midrule
    \TaskALL, \TaskANY, \TaskMajority, \TaskParity & one each & $n$ & \checkmark & \checkmark & \checkmark \\
    \TaskIndex & 8 symbols & $n + 1$ & \checkmark & \checkmark & \checkmark \\
    \TaskIndexAnd, \TaskIndexOr, \TaskIndexXor & $k \in \{2, 3\}$ & $n + k + 1$ & \checkmark & \checkmark & \checkmark \\
    \TaskSparseAnd, \TaskSparseOr, \TaskSparseXor & $k \in \{2, 3\}$ & $n$ & \checkmark & \checkmark & -- \\
    \TaskBinaryAddition & blocked or interleaved input; MSB- or LSB-first output & $3n + 1$ & \checkmark & \checkmark & \checkmark \\
    \TaskDyckOne, \TaskDyckTwo & depth bound $D \in \{4, 8, \infty\}$ & $2n + 2$ & \checkmark & \checkmark & \checkmark \\
    \TaskKHop & $k \in \{1, 2, 3\}$ & $n$ & \checkmark & \checkmark & \checkmark \\
    \TaskEdgeExistence & sparse supervision (final answer only) & $3n + 4$ & \checkmark & \checkmark & \checkmark \\
    \TaskEdgeExistence & dense supervision (every edge) & $3n + 4$ & \checkmark & \checkmark & -- \\
    \TaskDirectedReachability, \TaskUndirectedReachability & one each & $3n + 4$ & \checkmark & \checkmark & \checkmark \\
    \bottomrule
\end{tabularx}
\end{table}

\subsubsection{Boolean tasks}
\label{app:task-boolean}

For the experiments,
bits are drawn i.i.d.\ with a probability chosen so that both labels are equally likely.
For \TaskALL, $P(x_i = 1) = 2^{-1/n}$; for \TaskANY, $P(x_i = 1) = 1 - 2^{-1/n}$; for \TaskParity, $P(x_i = 1) = 1/2$.
For \TaskMajority, bits are uniform and the label $1$ is slightly more likely due to ties (\cref{tab:chance-levels}).

\subsubsection{Indexed retrieval}
\label{app:task-index}

\paragraph{\TaskIndex.}
The input is a sequence of $n$ symbols $s_0, \ldots, s_{n-1}$, drawn i.i.d.\ and uniformly from an alphabet of $8$ symbols, followed by one address token $\langle i \rangle$.
The answer is $s_i$.
The address $i$ is uniform over $\{0, \ldots, \min(n, R) - 1\}$, where $R$ is the index range.
Each address is an atomic token, so the vocabulary has one token for every allowed address.

\paragraph{\TaskIndexAnd, \TaskIndexOr, \TaskIndexXor.}
The input is a bit string $x \in \{0,1\}^n$, a separator token, and $k$ address tokens $\langle i_1 \rangle, \ldots, \langle i_k \rangle$.
The addresses are distinct and drawn uniformly without replacement from $\{0, \ldots, \min(n, R) - 1\}$.
The answer is the AND, OR, or XOR of $x_{i_1}, \ldots, x_{i_k}$.
Bits are drawn i.i.d.\ so that both labels are equally likely given $k$: $P(x_i = 1) = 2^{-1/k}$ for AND, $1 - 2^{-1/k}$ for OR, and $1/2$ for XOR.

\paragraph{Index range.}
In the in-distribution experiment the index range grows with the input, $R = n$, so every position can be queried and the address vocabulary grows with $n$.
In the length-generalization experiment the range is fixed at $R = 20$, so test inputs longer than the training inputs introduce no new address tokens (see \cref{app:prediction-conventions}).

\subsubsection{Sparse Boolean functions}
\label{app:task-sparse}

The input is a bit string $x \in \{0,1\}^n$ with no other tokens.
A set of $k$ hidden positions $i_1 < \cdots < i_k$ is fixed for each task setting and never appears in the input.
The answer of \TaskSparseAnd, \TaskSparseOr, and \TaskSparseXor is the AND, OR, or XOR of $x_{i_1}, \ldots, x_{i_k}$, and the model must infer the hidden positions from the training data.
Bits are drawn as for the indexed functions.
In the in-distribution experiment the hidden positions are drawn once, uniformly at random, from $\{0, \ldots, n-1\}$, separately for each $n$ (for example, $(6, 11)$ at $n = 16$ and $(58, 251)$ at $n = 256$ for $k = 2$).
In the length-generalization experiment they are drawn from $\{0, \ldots, 19\}$ and shared by all lengths: $(6, 11)$ for $k = 2$ and $(6, 11, 17)$ for $k = 3$.
The same hidden positions are used for AND, OR, and XOR.

\subsubsection{Binary addition}
\label{app:task-binary-addition}

The input is two $n$-bit unsigned operands $a$ and $b$, drawn independently and uniformly from $\{0, \ldots, 2^n - 1\}$ with leading zeros kept.
The answer is the $(n+1)$-bit sum $a + b$.
Operand bits use role-specific tokens ($\texttt{A0}, \texttt{A1}, \texttt{B0}, \texttt{B1}$) and are written most-significant bit first.
In the \emph{blocked} layout all bits of $a$ precede all bits of $b$; in the \emph{interleaved} layout the bits alternate, $a_{n-1} b_{n-1} a_{n-2} b_{n-2} \cdots$.
An answer-marker token follows the operands.
The model then predicts the $n+1$ result bits autoregressively, either most-significant bit first or least-significant bit first.
During training the result bits are teacher-forced; the input therefore has $3n+1$ tokens and $n+1$ scored positions.
At evaluation the model generates all $n+1$ bits greedily, and an example is correct only if every bit is correct.
The main-text figures show the blocked layout with MSB-first output.

\subsubsection{Dyck languages}
\label{app:task-dyck}

\TaskDyckOne and \TaskDyckTwo use $K = 1$ and $K = 2$ bracket types, and a depth bound $D \in \{4, 8, \infty\}$, where $D = \infty$ is unbounded \TaskDyck.
An instance of size $n$ is a correctly nested string of $n$ bracket pairs, with nesting depth at most $D$, framed by $\texttt{START}$ and $\texttt{END}$ tokens ($2n+2$ tokens).
Strings are sampled uniformly among all such strings, and each pair receives an independent uniform type.
Rather than the theoretical membership problem of \cref{def:task-dyck}, we run experiments on the next-token formulation of \citet{yao2021self}.
Membership as a classification task requires a distribution of negative examples, and it is difficult to design one that rules out shortcuts and truly tests whether the model has learned the nesting structure.
In the next-token formulation, only a positive distribution is needed, and the task is to predict, after $\texttt{START}$ and after every bracket, the set of tokens that can legally come next.
A model that predicts every legal set correctly also decides membership, since a string is in the language exactly when each of its tokens is legal at its position.
The legal set contains every opening bracket if the current depth is below $D$, the closing bracket that matches the most recent unclosed opening bracket if the depth is positive, and $\texttt{END}$ if the depth is zero.
The target at each of the $2n+1$ scored positions is a binary vector over the $2K$ brackets and $\texttt{END}$, trained with binary cross-entropy.
An example is correct only if the predicted set is exactly right at every scored position.

\subsubsection{Iterated retrieval: \texorpdfstring{\TaskKHop}{k-Hop Induction}}
\label{app:task-k-hop}

The input is a sequence $s_1, \ldots, s_n$ over an alphabet of $4$ symbols.
The first symbol is uniform, and each later symbol is uniform over the $3$ symbols that differ from its predecessor, following \citet{sanford2024parallel}.
One hop from position $p$ finds the most recent earlier occurrence of the current symbol, $i = \max\{\, j < p : s_j = s_p \,\}$, and moves to $p = i + 1$.
Starting at $p = n$, \TaskKHop applies $k$ hops and returns $s_p$.
If some hop finds no earlier occurrence, the answer is a special null symbol, so the output has $5$ classes.
We use $k \in \{1, 2, 3\}$ with the alphabet fixed for all $n$; $k = 1$ is \TaskOneHop.

\subsubsection{Graph tasks}
\label{app:task-graphs}

The three graph tasks use the same graph distribution and input format and differ only in the question.
An instance of size $n$ is a simple directed graph with $n$ nodes and $n$ edges, drawn uniformly.
Node names are atomic tokens, drawn uniformly without replacement from a pool of $256$ names, so larger graphs introduce no new tokens.
Each edge $(u, v)$ is written as the three tokens $\texttt{<EDGE>}\ u\ v$.
The edge list is followed by $\texttt{<QUERY>}\ s\ t$ and an $\texttt{<ANSWER>}$ token, which gives $3n+4$ tokens.
The answer is binary:
\TaskEdgeExistence asks whether $(s, t)$ is a listed edge;
\TaskDirectedReachability asks whether there is a directed path from $s$ to $t$;
\TaskUndirectedReachability asks whether $s$ and $t$ are connected when edge directions are ignored.
From each graph we draw one positive and one negative query: positives uniformly among ordered pairs $(s, t)$, $s \neq t$, with a positive answer, and negatives uniformly among pairs with a negative answer.
Graphs that do not admit both a positive and a negative query are rejected and redrawn.
For \TaskEdgeExistence, negative queries include reversed edges, so the model must read edge direction.

\paragraph{Sparse and dense supervision for \TaskEdgeExistence.}
By default only the final answer is labeled; we call this \emph{sparse} supervision.
In the \emph{dense} variant the query is placed before the edge list ($\texttt{<QUERY>}\ s\ t$, then the edges, then $\texttt{<ANSWER>}$), so that a causal model sees the query while reading each edge.
In addition to the final answer, the last token of every edge is labeled with whether that edge equals the query.
We report accuracy on the final answer only, so the metric is the same as for final-answer supervision.
This variant changes both the supervision and the position of the query; \cref{app:edge-existence-supervision} discusses its results.

%% file: appendices/experimental-details/experimental-details.tex
\section{Experimental Details}
\label{app:experimental_details}

This appendix describes the models, training, and evaluation of the three experiments in \cref{ssec:experiment_methodology}.
\Cref{app:normalized-accuracy} defines normalized accuracy and the chance level of each task.

\subsection{Model architecture and training}
\label{app:architecture-training}

All models trained from scratch are decoder-style transformers with pre-layer normalization, softmax attention, and a two-layer feedforward block of width $4d$ with GELU activation, where $d$ is the model width.
Attention and feedforward layers have biases, and there is no dropout.
Input and output embeddings are tied, except for the graph tasks and \TaskBinaryAddition, which use a separate two-class output layer.
All models are trained with AdamW, a constant learning rate of $10^{-3}$ with no warmup, no weight decay, gradient-norm clipping at $1.0$, batch size $256$, and bfloat16 mixed precision.
\Cref{tab:training-hyperparameters} lists the settings that differ between the in-distribution and length-generalization experiments.

\begin{table}[h]
\centering
\small
\caption{\textbf{Training settings of the two from-scratch experiments.} Settings shared by both experiments are given in the text.}
\label{tab:training-hyperparameters}
\renewcommand{\arraystretch}{1.15}
\begin{tabularx}{\linewidth}{@{}l >{\raggedright\arraybackslash}X >{\raggedright\arraybackslash}X@{}}
    \toprule
     & \textbf{In-distribution} & \textbf{Length generalization} \\
    \midrule
    Model sizes (layers, width, heads) & $(4, 256, 4)$ & S $(2, 64, 2)$, M $(4, 128, 4)$, L $(8, 256, 8)$ \\
    Positional encoding & RoPE & 9 encodings (\cref{tab:positional-encodings}), plus uniform attention \\
    Attention mask & causal & causal; also bidirectional where the task allows \\
    Training sizes $n$ & one of $8, 16, 32, 64, 128, 256$ & $8, 16, 20, 24, 32$ jointly \\
    Evaluation sizes $n$ & the training size & $8, 16, 20, 24, 32$, and $48$ to $256$ in steps of $16$ \\
    Training examples $N$ & $2^8, 2^{10}, 2^{12}, 2^{14}, 2^{16}$ & $8{,}192$, split evenly over training sizes \\
    Optimizer steps & $2{,}048$ (epochs $= 2^{19} / N$) & $1{,}120$ (32 epochs) \\
    Evaluation examples & $8{,}192$ & $1{,}024$ per evaluation size \\
    Seeds & 5 & 5 \\
    Runs & $5{,}100$ & $8{,}550$ \\
    \bottomrule
\end{tabularx}
\end{table}

\input{appendices/experimental-details/in-distribution.tex}
\input{appendices/experimental-details/length-generalization.tex}
\input{appendices/experimental-details/llm-eval.tex}
\input{appendices/experimental-details/normalized-accuracy.tex}
\input{appendices/experimental-details/llm-prompts.tex}

%% file: appendices/experimental-details/in-distribution.tex
\subsection{In-distribution learning}
\label{app:in-distribution-details}

\paragraph{Design.}
For each task setting and input size $n \in \{8, 16, 32, 64, 128, 256\}$, we train on examples of size $n$ and evaluate on fresh examples of the same size.
The model is a causal transformer with 4 layers, width 256, 4 heads, and rotary positional encoding (base $10^4$).
We vary the number of distinct training examples $N \in \{2^8, 2^{10}, 2^{12}, 2^{14}, 2^{16}\}$ and repeat each configuration with 5 seeds.
Every run takes $2{,}048$ optimizer steps with batch size 256, that is, $2^{19}$ example presentations.
A run with $N$ training examples therefore makes $2^{19}/N$ passes over its data, from 8 epochs at $N = 2^{16}$ to $2{,}048$ epochs at $N = 2^8$.
This holds the training compute fixed, so differences across $N$ reflect the number of distinct examples rather than the number of gradient steps.
The experiment covers all 34 task settings of \cref{tab:task-settings}.

\paragraph{Data.}
All runs at a given $(\text{setting}, n)$ are evaluated on the same $8{,}192$ held-out examples.
Each seed has its own training set of $2^{16}$ examples, disjoint from those of other seeds, and the smaller training sets are nested subsets of it.
Comparisons across $N$ within a seed are therefore paired, and seeds use independent training data.
The seed also sets the model initialization and the order of training examples.
Held-out examples are distinct draws, but for small $n$ the input space is small, so the same input can appear in both the training and held-out sets (for example, there are only $2^8$ bit strings at $n = 8$).

\paragraph{Evaluation and grading.}
At the end of training we evaluate on the $8{,}192$ held-out examples and on $1{,}024$ of the training examples.
The metric is full-sequence accuracy, normalized by the chance level of the task (\cref{app:normalized-accuracy}).
A task setting is solved at size $n$ if, for some $N$, at least one of the five seeds reaches $90\%$ normalized accuracy on the held-out examples.
The setting is graded \Easy if it is solved at the largest size, $n = 256$, and \Hard otherwise.

%% file: appendices/experimental-details/length-generalization.tex
\subsection{Length generalization}
\label{app:length-generalization-details}

\paragraph{Design.}
Each run trains one model jointly on sizes $n \in \{8, 16, 20, 24, 32\}$ and evaluates it at 19 sizes: the five training sizes and every multiple of 16 from 48 to 256.
The training set has $8{,}192$ examples, split evenly across the five sizes, and each batch contains a single size.
Training takes 32 epochs, or $1{,}120$ optimizer steps.
Each evaluation size uses $1{,}024$ fresh examples.
The seed sets the model initialization.
For a given task setting and seed, every architecture uses the same training data, data order, and evaluation data, so comparisons across architectures are paired.

\paragraph{Architectures.}
We cross three model sizes (\cref{tab:training-hyperparameters}) with the nine positional encodings of \cref{tab:positional-encodings}.
We also train an attention ablation with uniform attention, where queries and keys are set to zero so that every head attends uniformly over its allowed positions, with a BOS token prepended.
This matches the attention model of \CRASP, and we use it only for the comparison in \cref{fig:app-lg-uniform}.
Every task setting is trained with a causal mask.
Task settings with a single answer at the last position are also trained with a bidirectional mask.
\TaskDyck, \TaskBinaryAddition, and \TaskEdgeExistence with dense supervision score positions before the end of the input, so they use the causal mask only.
This gives 57 (task setting, mask) pairs, each trained with three model sizes and ten attention variants (the nine positional encodings and uniform attention).

\begin{table}[h]
\centering
\small
\caption{\textbf{Positional encodings in the length-generalization experiment.}}
\label{tab:positional-encodings}
\renewcommand{\arraystretch}{1.15}
\begin{tabularx}{\linewidth}{@{}l >{\raggedright\arraybackslash}X@{}}
    \toprule
    \textbf{Encoding} & \textbf{Position information} \\
    \midrule
    NoPE & None~\citep{kazemnejad2023impact} \\
    NoPE + BOS & None; a learned beginning-of-sequence token is prepended \\
    Sinusoidal & Fixed sinusoidal encoding added to the input embeddings~\citep{vaswani2017attention} \\
    RoPE & Rotary encoding of queries and keys, base $10^4$~\citep{su2024roformer} \\
    YaRN & RoPE with YaRN dynamic scaling; the training length is the longest training input~\citep{peng2024yarn} \\
    T5 bias & Learned relative attention bias, 32 buckets, maximum distance 128~\citep{raffel2020exploring} \\
    Shaw & Learned relative position embeddings, distances clipped at 16~\citep{shaw2018self} \\
    DeBERTa & Disentangled relative attention (content-to-position and position-to-content), 256 buckets~\citep{he2021deberta} \\
    ALiBi & Fixed linear attention bias~\citep{press2022train} \\
    \bottomrule
\end{tabularx}
\end{table}

\paragraph{Index range and hidden positions.}
For the indexed tasks, query addresses are restricted to positions $0$ to $19$, and for the sparse tasks the hidden positions lie in the same range (\cref{app:task-index,app:task-sparse}).
All address tokens and hidden positions are therefore seen during training.
Longer test inputs add unqueried positions but no new addresses, so the experiment tests generalization to longer inputs rather than to unseen address tokens.
Because the index range does not grow with $n$, the indexed settings are a different task from the in-distribution ones, and we do not compare them with theory predictions in \cref{tab:theory-task-predictions}.

\paragraph{Grading.}
The metric is normalized full-sequence accuracy, except for \TaskEdgeExistence, where we score the final answer only (\cref{app:task-graphs}).
A seed succeeds at a size if it reaches $90\%$ normalized accuracy there.
A seed is credited only if it succeeds at all five training sizes.
Its horizon is then the largest evaluation size up to which it succeeds at every size, starting from 48.
A configuration (model size, positional encoding, mask) takes the largest horizon among its five seeds, so the credited horizon is always reached by one trained model.
The main search covers the 27 causal configurations with the nine positional encodings.
A task setting is solved at size $n$ if some configuration in the main search has horizon at least $n$, and it is graded \Easy if it is solved at $n = 256$.
For each task, panel (b) of \cref{fig:results} shows the configuration with the largest horizon, with ties broken by the median out-of-distribution accuracy across seeds, then the mean, then in-distribution accuracy.

%% file: appendices/experimental-details/llm-eval.tex
\subsection{Pretrained language models}
\label{app:llm-eval-details}

\paragraph{Models.}
\Cref{tab:llm-models} lists the evaluated models.
We disable reasoning in the main evaluation, using each provider's control for this.
GPT-4o mini does not reason.
For GPT-5, we additionally evaluate every task setting with medium reasoning effort (\cref{app:llm-reasoning}).
We use each provider's default sampling settings and request a structured JSON answer.

\begin{table}[h]
\centering
\small
\caption{\textbf{Evaluated language models.} Reasoning is disabled for all models in the main evaluation.}
\label{tab:llm-models}
\renewcommand{\arraystretch}{1.15}
\begin{tabular}{@{}l l l@{}}
    \toprule
    \textbf{Model} & \textbf{Model ID} & \textbf{Reasoning control} \\
    \midrule
    GPT-5 & \texttt{gpt-5-2025-08-07} & effort \texttt{none}; also \texttt{medium} \\
    GPT-5.4 mini & \texttt{gpt-5.4-mini-2026-03-17} & effort \texttt{none} \\
    GPT-4o mini & \texttt{gpt-4o-mini-2024-07-18} & not applicable \\
    DeepSeek-V3.2 & \texttt{DeepSeek-V3.2} & effort \texttt{none} \\
    Gemini 2.5 Flash & \texttt{gemini-2.5-flash} & thinking budget 0 \\
    \bottomrule
\end{tabular}
\end{table}

\paragraph{Tasks and sizes.}
We evaluate the 27 task settings of \cref{tab:task-settings} marked for LLMs.
The sparse tasks are excluded because the hidden positions must be inferred from many labeled examples, which does not fit a few-shot prompt.
Each setting is evaluated at $n \in \{8, 16, 24, 32, 48, 64, 96, 128\}$ with 128 examples per size.
All models see the same prompts.
For \TaskBinaryAddition, the model returns all $n+1$ result bits at once, in the requested order.
For the graph tasks, the edge list precedes the query.

\paragraph{Prompts.}
The system message is:
``You solve precisely specified formal tasks over symbol sequences. Follow the task definition exactly and return only the requested structured answer.''
The user message contains a natural-language definition of the task, a sentence describing the answer format, three solved examples of size 12, and the test input.
The solved examples are the same for every test size.
Inputs are written as text: bits and symbols separated by spaces, symbols as letters, Dyck brackets as \texttt{(\_1} and \texttt{)\_1}, graph nodes as \texttt{v}$i$ with one edge per line, and binary operands as bit strings.
\Cref{app:llm-prompts} shows one prompt for every task.

\paragraph{Scoring and failures.}
An answer is correct only if it matches the target exactly: the single label for most tasks, all $n+1$ bits for \TaskBinaryAddition, and the legal-next-token set at every prefix for \TaskDyck.
Refusals and responses that are not valid JSON of the requested form are counted as incorrect.
Responses that exhaust the completion-token budget are counted as incorrect.
The completion-token budget is at least $4{,}096$ tokens and several times the length of a correct answer, with additional tokens when reasoning is enabled.
Without reasoning, truncation occurs only on long answers (\TaskDyck and \TaskBinaryAddition) at sizes where accuracy is already near zero.
With reasoning, occurs mainly on \TaskBinaryAddition (\cref{app:llm-reasoning}).
Whether truncations are treated as incorrect predictions or filtered out does not affect any \Easy/\Hard grade.

Accuracy is normalized with the chance levels of \cref{tab:chance-levels}.

\paragraph{Grading.}
Each model is graded on its own.
A model solves a task setting at size $n$ if its normalized accuracy is at least $90\%$, and a setting is graded \Easy for that model if it is solved at the largest evaluated size, $n = 128$.
A setting receives the best grade reached by any single model.
Gemini 2.5 Flash is not evaluated on \TaskBinaryAddition or on bounded \TaskDyckTwo ($D = 4$ and $D = 8$).
For the two of these settings shown in panel (c) of \cref{fig:results}, the panel averages the other four models; all other rows average five models.

%% file: appendices/experimental-details/normalized-accuracy.tex
\subsection{Normalized accuracy}
\label{app:normalized-accuracy}

Tasks differ in how often a constant answer is correct, so raw accuracies are not comparable across tasks.
We therefore report normalized accuracy $\tilde{a} = (a - b)/(1 - b)$, where $a$ is accuracy and $b$ is the chance level of the task, so that chance maps to $0$ and perfect accuracy maps to $1$.
The chance level $b(\text{task}, n)$ is the accuracy of the best constant answer at input size $n$: the probability of the most likely answer under the input distribution of the task.
A model that ignores its input therefore scores at most $0$.
Here $a$ is full-sequence accuracy: an example counts as correct only if every scored output is correct, which for single-answer tasks is the accuracy of that answer.
\Cref{tab:chance-levels} lists the chance level of each task.
For most tasks it is a constant, because the input distribution makes all answers equally likely.
For \TaskMajority and \TaskKHop it depends on $n$.
A setting counts as solved at a given input size when its normalized accuracy reaches $90\%$, and each experiment combines this threshold with its own rules over seeds and lengths to assign a grade.
For \TaskKHop, the input distribution does not change when the symbols are relabeled, so all symbol answers are equally likely, and the best constant answer is either the null answer or any single symbol.
We estimate the probability of the null answer from $200{,}000$ samples per $(k, n)$.
For tasks whose answer is a sequence, \TaskDyck and \TaskBinaryAddition, we report full-sequence accuracy without normalization ($b = 0$).
Here the best constant answer is a single fixed answer sequence, and models that fail generally score near $0$ rather than near its accuracy.
One exception is unbounded \TaskDyckOne in the in-distribution experiment: the best constant answer (the depth returns to $0$ only at the end of the string) has accuracy $\tfrac{n+1}{2(2n-1)} \approx 0.25$, and models that fail at $n \geq 128$ score $0.2$ to $0.33$, close to it.

\begin{table}[t]
\centering
\small
\caption{Chance level $b(\text{task}, n)$: the accuracy of the best constant answer at input size $n$. Tasks with sequence answers use full-sequence accuracy without normalization. The same values are used in all three experiments.}
\label{tab:chance-levels}
\renewcommand{\arraystretch}{1.2}
\setlength{\tabcolsep}{4pt}
\begin{tabularx}{\linewidth}{@{}>{\raggedright\arraybackslash}p{0.27\linewidth} >{\raggedright\arraybackslash}p{0.22\linewidth} >{\raggedright\arraybackslash}X@{}}
    \toprule
    \textbf{Tasks} & \textbf{Chance level $b$} & \textbf{Reason} \\
    \midrule
    \TaskALL, \TaskANY, \TaskParity, \TaskIndexAndOr, \TaskIndexXor, \TaskSparseAndOr, \TaskSparseXor & $1/2$ & Bits are drawn with a per-bit probability that makes both labels equally likely. \\
    \TaskEdgeExistence, \TaskDirectedReachability, \TaskUndirectedReachability & $1/2$ & Each graph yields one positive and one negative query. With dense supervision, the all-negative answer is correct exactly on negative queries. \\
    \TaskMajority & $\tfrac{1}{2} + \binom{n}{n/2} 2^{-(n+1)}$ ($n$ even) & Bits are uniform and ties are labeled $1$, so label $1$ is more likely: $0.64$ at $n = 8$, $0.52$ at $n = 256$. \\
    \TaskIndex & $1/8$ & The queried symbol is uniform over $8$ values. \\
    \TaskKHop & $\max\bigl(p_{\varnothing}, \tfrac{1 - p_{\varnothing}}{4}\bigr)$ & $4$ equally likely symbols plus the null answer, which has probability $p_{\varnothing}(k, n)$. For $k = 3$, $b = 0.68$ at $n = 8$; for $n \geq 32$, $b \approx 1/4$ for all $k$. \\
    \TaskDyckOne, \TaskDyckTwo, \TaskBinaryAddition & $0$ (full sequence) & Sequence answers: the legal next-token set after each prefix, or all $n{+}1$ bits of the sum. We do not normalize. \\
    \bottomrule
\end{tabularx}
\end{table}

%% file: appendices/experimental-details/llm-prompts.tex
\clearpage
\subsection{LLM prompts}
\label{app:llm-prompts}

This section shows the user message of one evaluated prompt for each task, followed by the expected answer; the system message is given in \cref{app:llm-eval-details}.
Each prompt is the first test input of size $n = 8$ in the evaluation.
The evaluated prompts contain three solved examples of size 12; to save space we show only the first.
For tasks with several settings we show one; the other settings change only the parameters in the task definition (the number of queried positions $k$, the number of bracket types, the depth bound, the number of hops, or the order of the output bits).
Long lines are wrapped for display.

\input{appendices/experimental-details/generated/llm-prompts-body.tex}

%% file: appendices/experimental-details/generated/llm-prompts-body.tex
\paragraph{\TaskALL.}
{\scriptsize
\begin{verbatim}
You are given a string of bits. Each bit is 0 or 1, and the bits are separated by spaces.
Compute the AND of every bit: the answer is 1 if every bit in the string is 1, and 0
otherwise.

Respond with a JSON object holding a single field "answer", and nothing else.

Example 1:
Bits: 1 1 0 1 0 1 1 1 1 1 1 1
Answer: {"answer": 0}

Now solve this instance.
Bits: 1 1 1 1 1 1 1 1
\end{verbatim}
}
\noindent Expected answer:
{\scriptsize
\begin{verbatim}
{"answer": 1}
\end{verbatim}
}

\paragraph{\TaskANY.}
{\scriptsize
\begin{verbatim}
You are given a string of bits. Each bit is 0 or 1, and the bits are separated by spaces.
Compute the OR of every bit: the answer is 1 if at least one bit in the string is 1, and 0
if every bit is 0.

Respond with a JSON object holding a single field "answer", and nothing else.

Example 1:
Bits: 1 0 0 0 0 0 0 0 0 0 0 0
Answer: {"answer": 1}

Now solve this instance.
Bits: 0 0 0 0 0 0 0 0
\end{verbatim}
}
\noindent Expected answer:
{\scriptsize
\begin{verbatim}
{"answer": 0}
\end{verbatim}
}

\paragraph{\TaskMajority.}
{\scriptsize
\begin{verbatim}
You are given a string of bits. Each bit is 0 or 1, and the bits are separated by spaces.
Compute the majority bit: the answer is 1 if at least half of the bits in the string are 1,
and 0 otherwise. An exact tie counts as 1.

Respond with a JSON object holding a single field "answer", and nothing else.

Example 1:
Bits: 1 1 0 0 0 1 0 0 1 1 1 1
Answer: {"answer": 1}

Now solve this instance.
Bits: 1 0 1 1 1 0 1 0
\end{verbatim}
}
\noindent Expected answer:
{\scriptsize
\begin{verbatim}
{"answer": 1}
\end{verbatim}
}

\paragraph{\TaskParity.}
{\scriptsize
\begin{verbatim}
You are given a string of bits. Each bit is 0 or 1, and the bits are separated by spaces.
Compute the parity of the string: the XOR of every bit. The parity is 1 if the number of 1s
in the string is odd, and 0 if it is even. Report that parity bit.

Respond with a JSON object holding a single field "answer", and nothing else.

Example 1:
Bits: 1 1 0 0 0 1 0 0 1 1 1 1
Answer: {"answer": 1}

Now solve this instance.
Bits: 1 0 1 1 1 0 1 0
\end{verbatim}
}
\noindent Expected answer:
{\scriptsize
\begin{verbatim}
{"answer": 1}
\end{verbatim}
}

\paragraph{\TaskIndex.}
{\scriptsize
\begin{verbatim}
You are given a sequence of symbols separated by spaces, drawn from the alphabet a, b, c, d,
e, f, g, h. Positions are numbered from 0, so the first symbol is at position 0, the second
at position 1, and so on.
You are also given a query position. Report the symbol that sits at that position in the
sequence.

Respond with a JSON object holding a single field "answer", and nothing else.

Example 1:
Sequence: c d g f a c e f g h d d
Query position: 8
Answer: {"answer": "g"}

Now solve this instance.
Sequence: e h f a d d d h
Query position: 4
\end{verbatim}
}
\noindent Expected answer:
{\scriptsize
\begin{verbatim}
{"answer": "d"}
\end{verbatim}
}

\paragraph{\TaskIndexAnd ($k = 2$).}
{\scriptsize
\begin{verbatim}
You are given a string of bits separated by spaces. Each bit is 0 or 1. Positions are
numbered from 0, so the first bit is at position 0, the second at position 1, and so on.
You are also given 2 distinct query positions.
Compute the AND of the bits at those query positions: the answer is 1 if every one of them
is 1, and 0 otherwise.

Respond with a JSON object holding a single field "answer", and nothing else.

Example 1:
Bits: 1 1 0 1 0 1 0 1 1 1 1 1
Query positions: 9 5
Answer: {"answer": 1}

Now solve this instance.
Bits: 1 0 1 1 1 1 1 0
Query positions: 3 2
\end{verbatim}
}
\noindent Expected answer:
{\scriptsize
\begin{verbatim}
{"answer": 1}
\end{verbatim}
}

\paragraph{\TaskIndexOr ($k = 2$).}
{\scriptsize
\begin{verbatim}
You are given a string of bits separated by spaces. Each bit is 0 or 1. Positions are
numbered from 0, so the first bit is at position 0, the second at position 1, and so on.
You are also given 2 distinct query positions.
Compute the OR of the bits at those query positions: the answer is 1 if at least one of them
is 1, and 0 otherwise.

Respond with a JSON object holding a single field "answer", and nothing else.

Example 1:
Bits: 1 0 0 0 0 1 0 0 1 1 0 1
Query positions: 9 5
Answer: {"answer": 1}

Now solve this instance.
Bits: 0 0 1 1 0 0 0 0
Query positions: 3 2
\end{verbatim}
}
\noindent Expected answer:
{\scriptsize
\begin{verbatim}
{"answer": 1}
\end{verbatim}
}

\paragraph{\TaskIndexXor ($k = 2$).}
{\scriptsize
\begin{verbatim}
You are given a string of bits separated by spaces. Each bit is 0 or 1. Positions are
numbered from 0, so the first bit is at position 0, the second at position 1, and so on.
You are also given 2 distinct query positions.
Compute the XOR of the bits at those query positions: the answer is 1 if an odd number of
them are 1, and 0 otherwise.

Respond with a JSON object holding a single field "answer", and nothing else.

Example 1:
Bits: 1 1 0 0 0 1 0 0 1 1 1 1
Query positions: 9 5
Answer: {"answer": 0}

Now solve this instance.
Bits: 1 0 1 1 1 0 1 0
Query positions: 3 2
\end{verbatim}
}
\noindent Expected answer:
{\scriptsize
\begin{verbatim}
{"answer": 0}
\end{verbatim}
}

\paragraph{\TaskBinaryAddition (blocked, MSB-first output).}
{\scriptsize
\begin{verbatim}
Add two unsigned binary operands of the same width. Leading zeros are significant for the
required width. Return the sum using exactly one more bit than either operand, in msb first
order.

Respond with only a JSON object of the form {"answer": "<bits>"}. The answer must be a
string so leading zeros are preserved.

Example 1:
Operand A (msb first): 111010001010
Operand B (msb first): 011010011100
Answer: {"answer": "1010100100110"}

Now solve this instance.
Operand A (msb first): 11110110
Operand B (msb first): 01100010
\end{verbatim}
}
\noindent Expected answer:
{\scriptsize
\begin{verbatim}
{"answer": "101011000"}
\end{verbatim}
}

\paragraph{\TaskDyckTwo ($D = \infty$).}
{\scriptsize
\begin{verbatim}
This is next-token prediction in the Dyck language of balanced bracket strings, with 2
bracket types. Type 1 opens with (_1 and closes with )_1. Type 2 opens with (_2 and closes
with )_2.
A bracket string is balanced when each closing bracket matches the most recent opening
bracket that is not yet closed and is of the same type as it, and no opening bracket is left
unclosed at the end.
You are given a balanced string, written as brackets separated by spaces. For each prefix of
the string, list every token that can legally come next.
A token can legally come next when the prefix followed by that token is itself the prefix of
some balanced string. Judge each prefix on its own: use only the brackets in it, ignore the
rest of the given string, and allow the balanced string it extends to be any length. The
token "END" can come next when the prefix is already balanced, meaning the string may stop
there.

Respond with a JSON object holding a single field "answer", and nothing else. "answer" holds
one entry per prefix of the string. Entry 0 is for the empty prefix, before any bracket.
Entry i is for the prefix made of the first i brackets. The last entry is for the whole
string. So a string of m brackets has exactly m + 1 entries, and m is given with each
string. Each entry lists the tokens that can come next after that prefix, in any order.

Example 1:
String: (_1 )_1 (_2 (_2 (_1 )_1 )_2 )_2 (_1 (_1 )_1 (_1 (_2 (_1 )_1 (_1 (_2 )_2 (_2 )_2 )_1
)_2 )_1 )_1
Number of brackets: 24
Answer: {"answer": [["(_1", "(_2", "END"], ["(_1", "(_2", ")_1"], ["(_1", "(_2", "END"],
["(_1", "(_2", ")_2"], ["(_1", "(_2", ")_2"], ["(_1", "(_2", ")_1"], ["(_1", "(_2", ")_2"],
["(_1", "(_2", ")_2"], ["(_1", "(_2", "END"], ["(_1", "(_2", ")_1"], ["(_1", "(_2", ")_1"],
["(_1", "(_2", ")_1"], ["(_1", "(_2", ")_1"], ["(_1", "(_2", ")_2"], ["(_1", "(_2", ")_1"],
["(_1", "(_2", ")_2"], ["(_1", "(_2", ")_1"], ["(_1", "(_2", ")_2"], ["(_1", "(_2", ")_1"],
["(_1", "(_2", ")_2"], ["(_1", "(_2", ")_1"], ["(_1", "(_2", ")_2"], ["(_1", "(_2", ")_1"],
["(_1", "(_2", ")_1"], ["(_1", "(_2", "END"]]}

Now solve this instance.
String: (_2 (_2 (_2 )_2 )_2 (_1 (_1 )_1 )_1 (_2 )_2 )_2 (_2 (_2 )_2 )_2
Number of brackets: 16
\end{verbatim}
}
\noindent Expected answer:
{\scriptsize
\begin{verbatim}
{"answer": [["(_1", "(_2", "END"], ["(_1", "(_2", ")_2"], ["(_1", "(_2", ")_2"], ["(_1",
"(_2", ")_2"], ["(_1", "(_2", ")_2"], ["(_1", "(_2", ")_2"], ["(_1", "(_2", ")_1"], ["(_1",
"(_2", ")_1"], ["(_1", "(_2", ")_1"], ["(_1", "(_2", ")_2"], ["(_1", "(_2", ")_2"], ["(_1",
"(_2", ")_2"], ["(_1", "(_2", "END"], ["(_1", "(_2", ")_2"], ["(_1", "(_2", ")_2"], ["(_1",
"(_2", ")_2"], ["(_1", "(_2", "END"]]}
\end{verbatim}
}

\paragraph{\TaskKHop ($k = 2$).}
{\scriptsize
\begin{verbatim}
Consider the k-hop induction head task with k = 2.
You are given a string s_1 s_2 ... s_N of symbols separated by spaces, drawn from the
alphabet a, b, c, d.
Start at the last position, so p = N. Then do the following 2 times:
  1. Let x = s_p, the symbol at the current position p.
  2. Let i be the largest index with i < p and s_i = x. This is the most recent occurrence
of x strictly before the current position. If there is no such i, stop and return "none".
  3. Set p = i + 1.
After the last repetition, return s_p.
In words: each step looks back for where the current symbol last occurred, then moves to the
symbol immediately after that occurrence.

Respond with a JSON object holding a single field "answer", and nothing else.

Example 1:
Sequence: c d c d b d c d b a c a
Answer: {"answer": "d"}

Now solve this instance.
Sequence: a c a c a c a d
\end{verbatim}
}
\noindent Expected answer:
{\scriptsize
\begin{verbatim}
{"answer": "none"}
\end{verbatim}
}

\paragraph{\TaskEdgeExistence.}
{\scriptsize
\begin{verbatim}
You are given a finite directed graph and an edge query. Edges are written "u -> v", meaning
that there is a directed edge from u to v. Set the Boolean "answer" to true if the queried
edge is one of the listed edges, and false otherwise. Direction matters: "u -> v" being
listed does not mean "v -> u" is listed.

Respond with a JSON object holding the Boolean field "answer".

Example 1:
Directed edges:
v22 -> v105
v56 -> v105
v31 -> v120
v10 -> v105
v56 -> v37
v56 -> v22
v37 -> v55
v98 -> v10
v31 -> v10
v56 -> v120
v58 -> v126
v22 -> v10

Query:
Source: v56
Target: v31
Answer: {"answer": false}

Now solve this instance.
Directed edges:
v48 -> v22
v79 -> v55
v49 -> v22
v76 -> v96
v76 -> v78
v49 -> v76
v96 -> v55
v48 -> v49

Query:
Source: v48
Target: v22
\end{verbatim}
}
\noindent Expected answer:
{\scriptsize
\begin{verbatim}
{"answer": true}
\end{verbatim}
}

\paragraph{\TaskDirectedReachability.}
{\scriptsize
\begin{verbatim}
You are given a finite directed graph and a reachability query. Edges are written "u -> v",
meaning that there is a directed edge from u to v. Set the Boolean "answer" to true if there
is a directed path from the source to the target, and false otherwise.

Respond with a JSON object holding the Boolean field "answer".

Example 1:
Directed edges:
v22 -> v105
v56 -> v105
v31 -> v120
v10 -> v105
v56 -> v37
v56 -> v22
v37 -> v55
v98 -> v10
v31 -> v10
v56 -> v120
v58 -> v126
v22 -> v10

Query:
Source: v94
Target: v56
Answer: {"answer": false}

Now solve this instance.
Directed edges:
v48 -> v22
v79 -> v55
v49 -> v22
v76 -> v96
v76 -> v78
v49 -> v76
v96 -> v55
v48 -> v49

Query:
Source: v76
Target: v96
\end{verbatim}
}
\noindent Expected answer:
{\scriptsize
\begin{verbatim}
{"answer": true}
\end{verbatim}
}

\paragraph{\TaskUndirectedReachability.}
{\scriptsize
\begin{verbatim}
You are given a finite undirected graph and a connectivity query. Edges are written "u --
v", meaning that u and v are joined by an edge. The same edge may be listed more than once;
a repeat carries no extra meaning. Set the Boolean "answer" to true if some path of edges
joins the two query nodes, and false otherwise.

Respond with a JSON object holding the Boolean field "answer".

Example 1:
Undirected edges:
v22 -- v105
v56 -- v105
v31 -- v120
v10 -- v105
v56 -- v37
v56 -- v22
v37 -- v55
v98 -- v10
v31 -- v10
v56 -- v120
v58 -- v126
v22 -- v10

Query:
Node A: v94
Node B: v56
Answer: {"answer": false}

Now solve this instance.
Undirected edges:
v33 -- v73
v33 -- v7
v78 -- v73
v73 -- v60
v5 -- v29
v60 -- v5
v60 -- v73
v78 -- v7

Query:
Node A: v86
Node B: v5
\end{verbatim}
}
\noindent Expected answer:
{\scriptsize
\begin{verbatim}
{"answer": false}
\end{verbatim}
}

%% file: appendices/additional-results/additional-results.tex
\section{Additional Experimental Results}
\label{app:additional-results}

This appendix reports the full results of the three experiments, including task settings that the main-text figures omit, and two analyses referenced in \cref{sec:results}: pretrained LLMs with reasoning enabled (\cref{app:llm-reasoning}) and \TaskEdgeExistence with dense supervision (\cref{app:edge-existence-supervision}).

\subsection{\TaskEdgeExistence with dense supervision}
\label{app:edge-existence-supervision}
Small transformers fail to learn \TaskEdgeExistence when only the final answer is supervised, so we also train it with dense supervision (\Cref{app:task-graphs}).
With dense supervision, the last token of every edge is labeled with whether that edge equals the query, and the query is placed before the edge list so that a causal model can compare each edge with it.
Both settings are scored on the final answer only.
We find that dense supervision makes the task learnable.
In distribution, it is solved at every size except $n = 128$ (best seed $0.87$ normalized accuracy), with $N = 2^{12}$ training examples for $n \leq 32$ and $N = 2^{14}$ for $n = 64$ and $n = 256$ (\cref{fig:app-id-landscape}).
In the length-generalization experiment, one configuration (medium size, Shaw relative encoding) generalizes up to $n = 256$ (\cref{fig:app-lg-landscape}).
A small transformer can therefore represent and learn the lookup that \TaskEdgeExistence requires; the failure under sparse supervision appears to be a failure of learning.

\subsection{In-distribution learning}
\label{app:results-id}

The full $n \times N$ grid behind panel (a) of \cref{fig:results} appears in \cref{fig:app-id-landscape}, covering all task settings. The settings included in \cref{fig:app-id-landscape} that are omitted from the main figure are: \TaskANY; \TaskIndexOr ($k = 2, 3$); \TaskIndexAnd and \TaskIndexXor with $k = 3$; \TaskSparseOr ($k = 2, 3$); \TaskSparseAnd and \TaskSparseXor with $k = 3$; the other three \TaskBinaryAddition formats; \TaskDyck with $D = 8$; and \TaskEdgeExistence with dense supervision.

\begin{figure}[h]
    \centering
    \includegraphics[width=\linewidth]{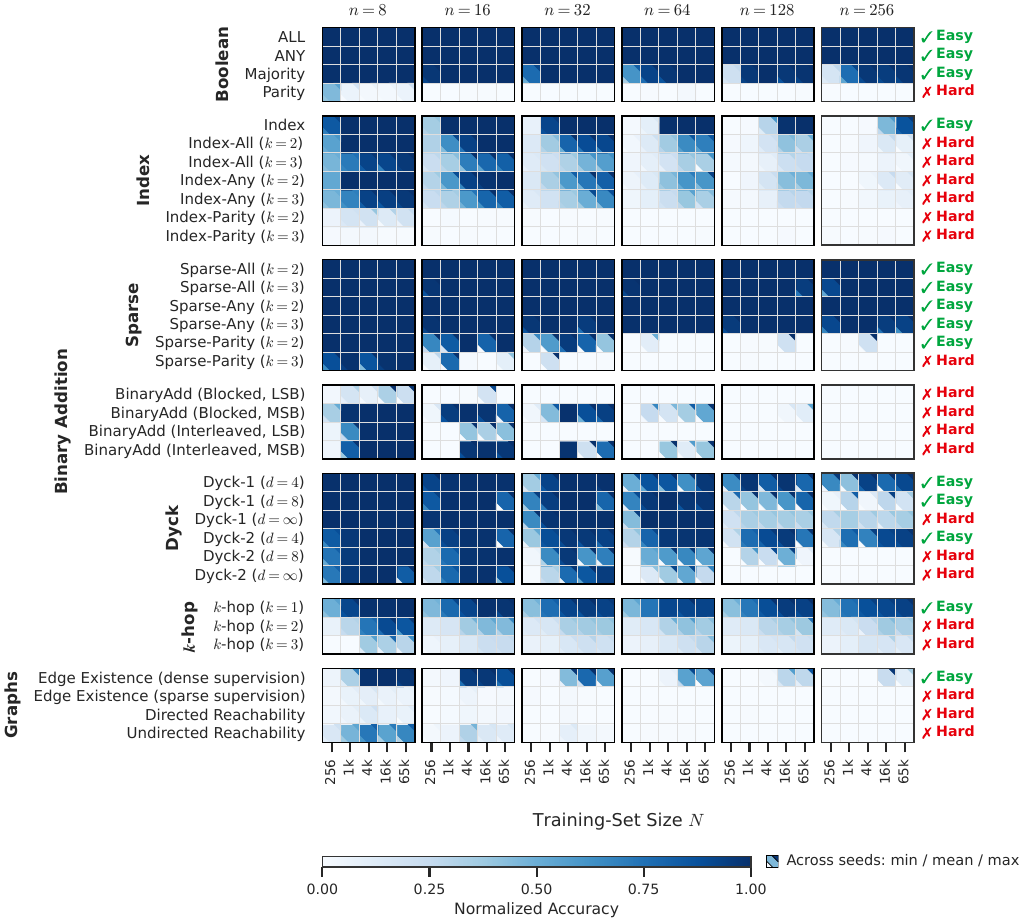}
    \caption{\textbf{In-distribution learning, all task settings.}
    Each cell shows normalized accuracy for one task setting at input size $n$ (column groups) and training-set size $N$ (columns within a group).
    Each cell depicts the minimum, mean, and maximum over five seeds.
    \Easy / \Hard labeling criteria are described in \cref{app:in-distribution-details}.}
    \label{fig:app-id-landscape}
\end{figure}

\clearpage
\subsection{Length generalization}
\label{app:results-lg}

\Cref{fig:app-lg-landscape} shows all task settings for the length generalization experiments, highlighting only the best positional encoding and model size configuration.
\Cref{fig:app-lg-positional} shows how length generalization depends on the positional encoding used.
\Cref{fig:app-lg-masks} compares causal and bidirectional attention.
\Cref{fig:app-lg-uniform} compares uniform attention with the best configuration shown in \Cref{fig:app-lg-landscape}.

\begin{figure}[h]
    \centering
    \includegraphics[width=\linewidth]{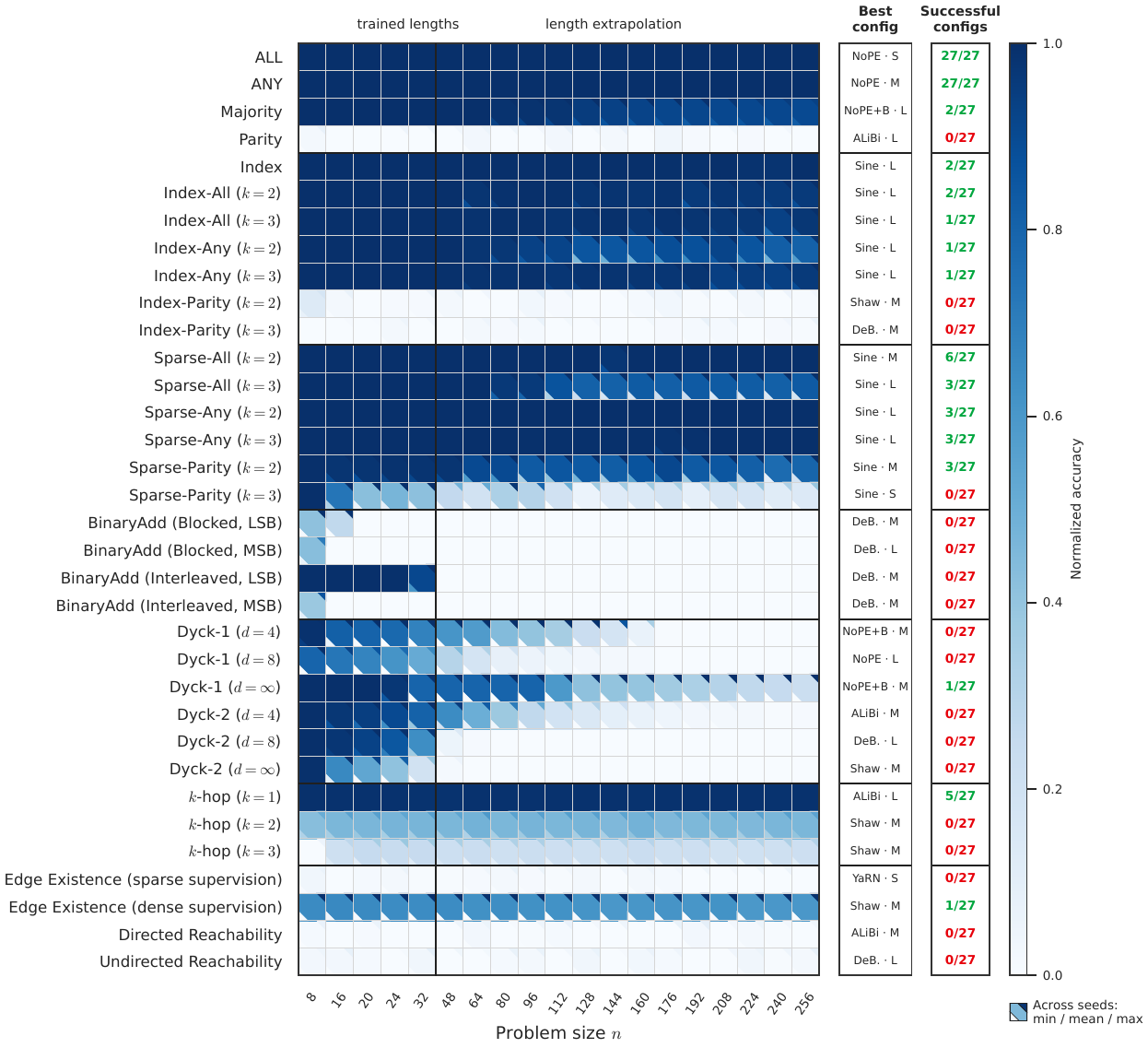}
    \caption{\textbf{Length generalization, all task settings.}
    Same conventions as panel (b) of \cref{fig:results}: for each task setting, we plot the best positional encoding and model size configuration, noted on the right.}
    \label{fig:app-lg-landscape}
\end{figure}

\begin{figure}[h]
    \centering
    \includegraphics[width=0.8\linewidth]{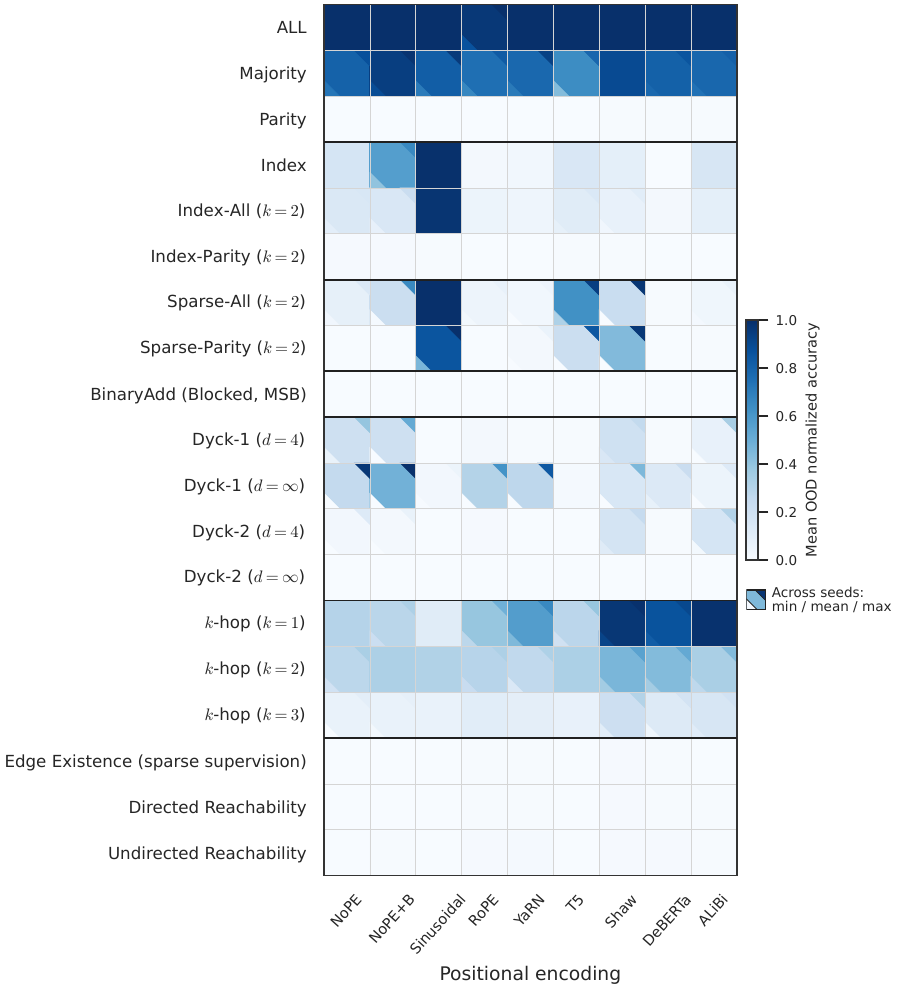}
    \caption{\textbf{Dependence of length generalization on positional encoding.}
    Mean out-of-distribution normalized accuracy ($n > 32$) for each task and positional encoding, using causal attention and the best model size for each pair.
    Corner tabs show the minimum and maximum over seeds when they differ by more than $0.05$.
    Some tasks like \TaskSparseAnd, \TaskSparseXor, \TaskOneHop, and \TaskDyckOne succeed only for certain positional encodings.
    }
    \label{fig:app-lg-positional}
\end{figure}

\begin{figure}[h]
    \centering
    \includegraphics[width=\linewidth]{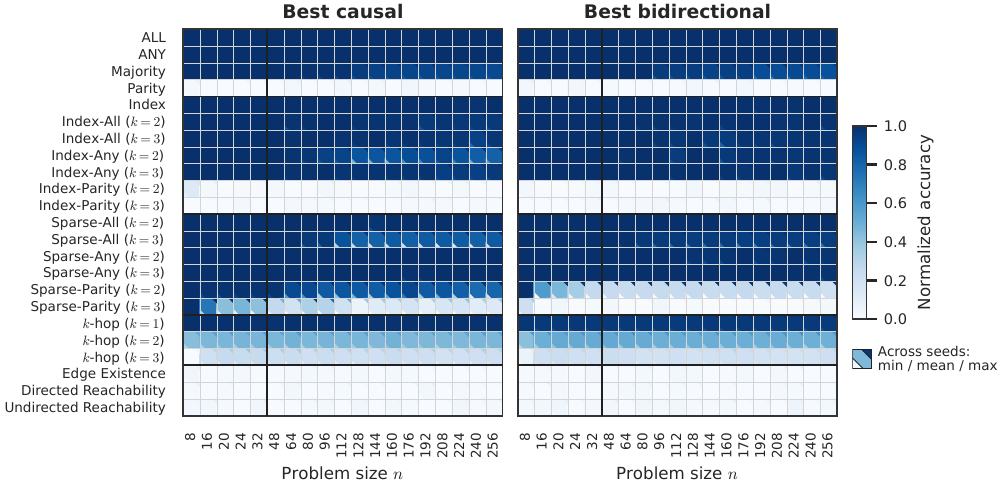}
    \caption{\textbf{Length generalization with causal versus bidirectional attention.}
    Here, we compare length generalization performance of the best configurations with causal and bidirectional attention.
    Cells show mean normalized accuracy across seeds, with a shared color scale.
    Corner tabs show the minimum and maximum over seeds when they differ by more than $0.05$.
    Performance is fairly consistent between causal and bidirectional attention, although mean performance differs for some sparse tasks.
    }
    \label{fig:app-lg-masks}
\end{figure}

\begin{figure}[h]
    \centering
    \includegraphics[width=\linewidth]{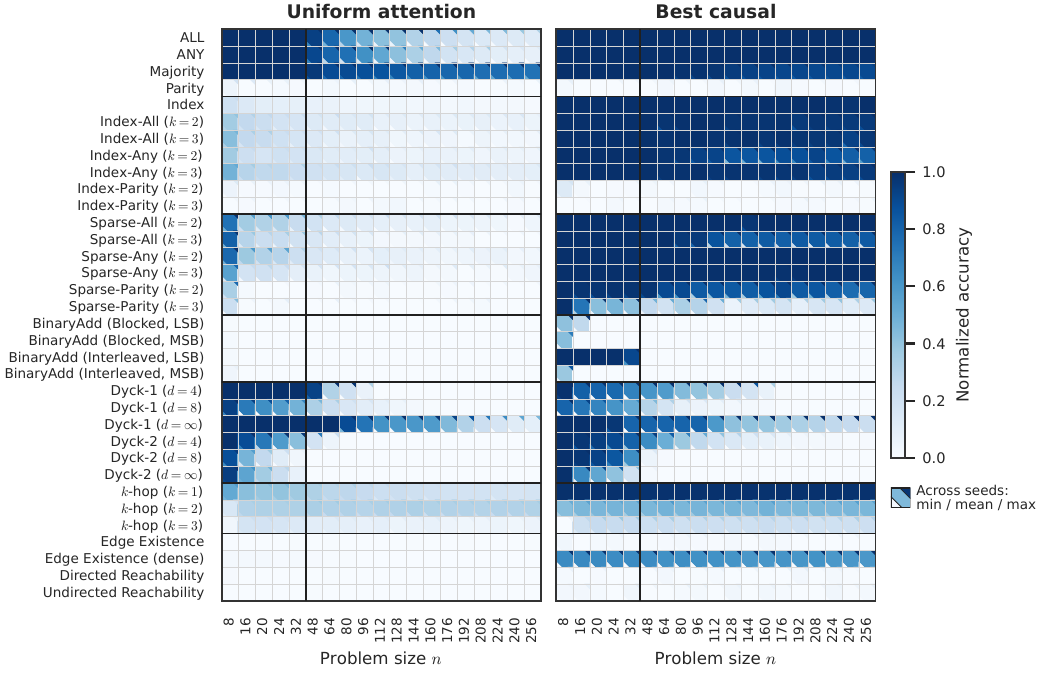}
    \caption{\textbf{Length generalization with uniform attention versus the best configuration.}
    This plot shows length generalization performance of uniform attention ($\WQ = \WK = 0$, with a BOS token) at its best model size with causal attention, compared with the best configuration of the main search in \Cref{fig:app-lg-landscape}.
    Uniform attention performs substantially worse across many problems.
    }
    \label{fig:app-lg-uniform}
\end{figure}

\FloatBarrier
\subsection{Pretrained language models}
\label{app:results-llm}

\subsubsection{LLMs with reasoning }
\label{app:llm-reasoning}

Our main LLM evaluation disables reasoning to align with theory predictions about single-step generation.
To measure how much intermediate reasoning changes the results, we evaluate GPT-5 with medium reasoning effort on the same prompts.
\Cref{fig:llm-reasoning} compares reasoning and non-reasoning performance.
With medium reasoning, many \TaskBinaryAddition responses exhaust the completion-token budget, and we count these as incorrect, as in the main evaluation (\cref{app:llm-eval-details}).
Depending on the input format, $3\%$ to $25\%$ of these responses are truncated.
Truncation is most frequent at intermediate sizes, so accuracy is not monotone in $n$ for three of the four formats.
Excluding truncated responses instead would raise accuracy at intermediate sizes by up to $0.58$, but would not change the grade the model receives for any task.

\begin{figure}[h]
\centering
\includegraphics[width=\linewidth]{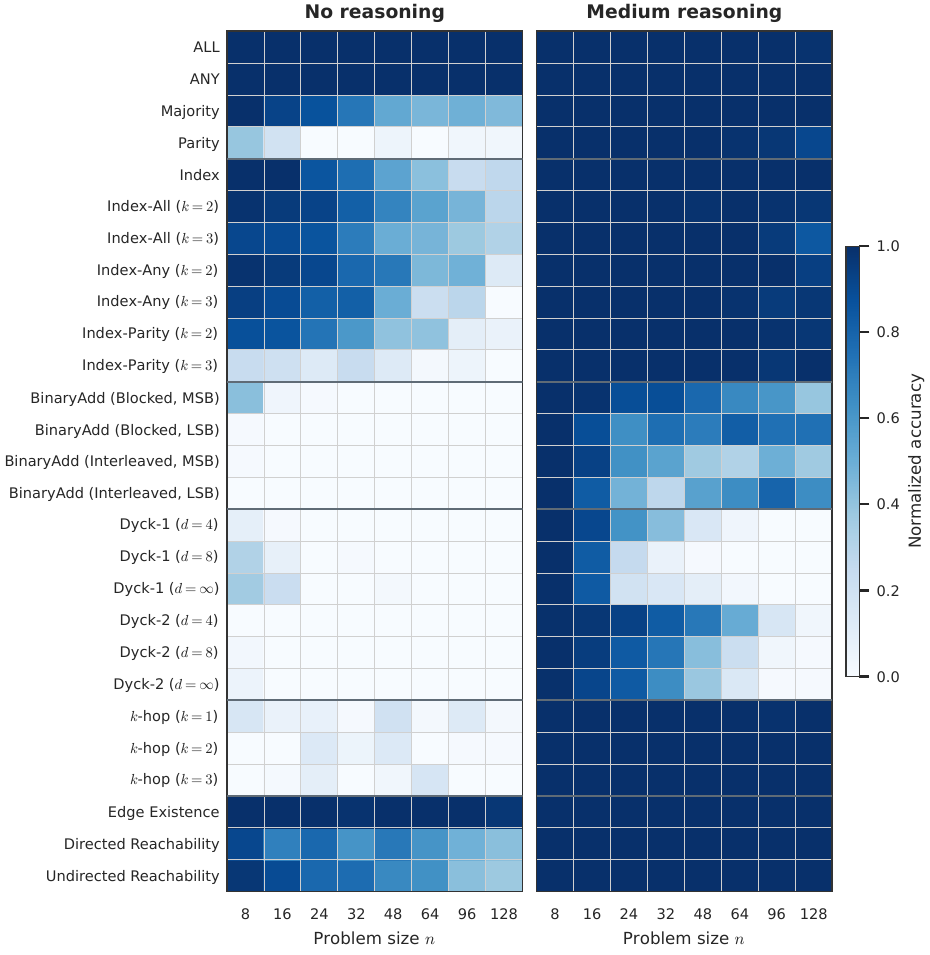}
\caption{\textbf{GPT-5 with and without reasoning.}
Normalized accuracy for the same task settings and prompts, with reasoning disabled (left) or medium reasoning effort (right).}
\label{fig:llm-reasoning}
\end{figure}

%% file: appendices/theory-background/theory-background.tex
\section{Background on the Theories}
\label{app:theory-background}

\Cref{tab:theory-transformer-edge-guide} below discusses the source and assumptions of each edge in \cref{fig:theory-transformer-relations} describing a relationship between an abstract theory and a class of transformers.
The figure is reproduced below for ease of reference.
A containment arrow $A \ContainmentArrow B$ means that every task in $A$ is also in $B$, and a non-containment arrow $A \NonContainmentArrow B$ means that there exist some tasks in $A$ that are not in $B$.
A dashed arrow indicates a caveat.
Since various technical assumptions (e.g., precision, width, etc.) can affect the theoretical containment relations, the figure compares against the base transformer class associated with each line of work.
Where there are deviations, we mark the relation with a caveat, and discuss the details in the table below.
Notably, positional encoding assumptions can differ across different base transformer classes.
In particular, the base transformer classes associated with \Starfree and \CRASP assume no positional encoding, with positional information encoded by the causal mask.
In contrast, the bidirectional classes (\ACz, \TCz, and \MPC) allow arbitrary positional encodings.

\begin{figure}[h!]
    \centering
    \input{figures/diagrams/theory-transformer-relations/theory-transformer-relation-tikz.tex}
    \caption{
    \textbf{Formal expressivity relations between theories and transformer classes (reproduced from \cref{fig:theory-transformer-relations}).}
    }
    \label{fig:theory-transformer-relations-keyed}
\end{figure}

\input{tables/theory-transformer-relations/theory-transformer-edge-guide.tex}

%% file: tables/theory-transformer-relations/theory-transformer-edge-guide.tex
\begingroup
\setlength{\tabcolsep}{5pt}
\renewcommand{\arraystretch}{1.12}
\setlength{\LTcapwidth}{\textwidth}
\newcommand{\reltag}[1]{\newline{\small\itshape #1}}
\newcommand{\relhairline}[2]{\arrayrulecolor{black!40}\specialrule{.4pt}{#1}{#2}\arrayrulecolor{black}}
\newcommand{\relgroup}[1]{\addlinespace[10pt]\relhairline{0pt}{3pt}%
  \multicolumn{3}{@{}l}{\textbf{#1}} \\ \relhairline{2pt}{5pt}}
\newcommand{\relgroupfirst}[1]{\addlinespace[6pt]%
  \multicolumn{3}{@{}l}{\textbf{#1}} \\ \relhairline{2pt}{5pt}}
\newcommand{\rowsep}{\arrayrulecolor{black!14}\specialrule{.3pt}{3pt}{3pt}\arrayrulecolor{black}}
\begin{longtable}{@{}>{\raggedright\arraybackslash}p{1.1em}>{\raggedright\arraybackslash}p{0.325\textwidth}>{\raggedright\arraybackslash}p{\dimexpr0.675\textwidth-1.1em-4\tabcolsep\relax}@{}}
\caption{Guide to the edges in \cref{fig:theory-transformer-relations,fig:theory-transformer-relations-keyed}.}
\label{tab:theory-transformer-edge-guide}\\
\toprule
& \textbf{Relation} & \textbf{Result} \\
\midrule
\endfirsthead
\toprule
& \textbf{Relation} & \textbf{Result (continued)} \\
\midrule
\endhead
\bottomrule
\endfoot
\relgroupfirst{\Starfree}
\leavevmode\hypertarget{rel-edge-a}{\textbf{a}} & \Starfree~$\ContainmentSymbol$ Causal Hard Attn.\reltag{with caveats} &
\citet[Theorem~5]{yang2024masked} show that strictly future-masked, rightmost-hard-attention transformers without positional encodings recognize every star-free language.
The construction goes through the \BRASP programming language, which allows both strictly future-masked and strictly past-masked attention.
\BRASP is equivalent to linear temporal logic with both \emph{since} and \emph{until}, which defines exactly the star-free languages.
Every LTL formula is equivalent to one that uses only \emph{since} when evaluated at the last position \citep{kamp1968tense,gabbay1980temporal}, so every \BRASP program with both kinds of masking has an equivalent program that uses only causal future masking.
However, this rewriting has a cost, as the past-only formula can be exponentially larger than an equivalent formula that uses both directions \citep{artale2023pastification}.
\\ \rowsep

\leavevmode\hypertarget{rel-edge-b}{\textbf{b}} & Causal Hard Attn.~$\ContainmentSymbol$ \Starfree &
By the same theorems \citep[Theorems~2 and~4]{yang2024masked}, showing equivalence between masked hard-attention transformers, \BRASP, and LTL, every language recognized by masked hard-attention transformers is star-free; this direction preserves depth.
This result describes masked hard-attention transformers \emph{without} positional encoding, with positional information implicitly encoded in the masking.
With rational sinusoidal encoding, the class of languages recognized by masked hard-attention transformers (with both past and future attention) grows to the regular languages in \ACz \citep[Corollary~8]{yang2024masked}.
\\ \rowsep

\leavevmode\hypertarget{rel-edge-c}{\textbf{c}} & \Starfree~$\ContainmentSymbol$ Bidir. Hard Attn. &
\citet[Theorem~1]{barcelo2024logical} show that \emph{bidirectional} unique-hard-attention transformers with positional encodings recognize every language definable in $\FO[\mathrm{Mon}]$, which contains $\FO[<]$ and hence every star-free language.
\\ \rowsep

\leavevmode\hypertarget{rel-edge-d}{\textbf{d}} & Bidir. Hard Attn.~$\NonContainmentSymbol$ \Starfree &
Bidirectional hard-attention transformers with positional encodings recognize every regular language in \ACz \citep[Corollary~1]{barcelo2024logical}. This includes $(aa)^*$, for example, which is regular and in \ACz but not star-free \citep{schutzenberger1965finite}.
\\

\relgroup{\ACz}
\leavevmode\hypertarget{rel-edge-e}{\textbf{e}} & \ACz~$\NonContainmentSymbol$ Bidir. Hard Attn. &
No bidirectional unique-hard-attention transformer recognizes approximate majority, a language in \ACz that contains every bitstring with at least two-thirds ones and no bitstring with at most one-third ones \citep[Proposition~1]{barcelo2024logical}. \\ \rowsep

\leavevmode\hypertarget{rel-edge-f}{\textbf{f}} & Bidir. Hard Attn.~$\ContainmentSymbol$ \ACz &
\citet[Theorem~1]{hao2022formal} show that every language recognized by a bidirectional unique-hard-attention transformer is in \ACz, even for their generalized model with unrestricted position-wise functions and positional encodings.
The circuit family is non-uniform, unlike the uniform \ACz used elsewhere in this table. \\ \rowsep

\leavevmode\hypertarget{rel-edge-g}{\textbf{g}} & Bidir. Avg.-Hard Attn.~$\NonContainmentSymbol$ \ACz &
\citet[Theorem~5]{chiang2023tighter} build a bidirectional transformer, using only uniform attention (a special case of average-hard attention) and a CLS token, that recognizes every language definable in $\FOC[+;\mathrm{MOD}]$, including \TaskMajority \citep[Proposition~11]{chiang2023tighter}.
\TaskMajority is not in \ACz \citep{furst1984parity,ajtai1983sigma}.
\citet[Corollary~2]{barcelo2024logical} also show that \TaskMajority and \TaskParity can be recognized by average-hard attention with future positional masking.
\\

\relgroup{\TCz}
\leavevmode\hypertarget{rel-edge-h}{\textbf{h}} & \TCz~$\NonContainmentSymbol$ Bidir. Avg.-Hard Attn.\reltag{conjectured} &
It is conjectured that \TCz contains languages that are not expressible by average-hard (or softmax) attention transformers.
The \TCz upper bounds for average-hard and softmax attention (edges~i and~k) are one-directional.
\citet{merrill2023logic} state that they believe these bounds are not tight.
\citet[Conjecture~6.1]{svete2026padded} state the conjecture formally for constant-width, $\Theta(\log n)$-precision softmax and average-hard-attention transformers, conditional on a uniform \TCz size hierarchy.
\\ \rowsep

\leavevmode\hypertarget{rel-edge-i}{\textbf{i}} & Bidir. Avg.-Hard Attn.~$\ContainmentSymbol$ \TCz &
\citet[Theorem~2]{strobl2023average} shows that every language recognized by a bidirectional average-hard-attention transformer with $O(\log n)$-bit precision is in log-space-uniform \TCz. \\ \rowsep

\leavevmode\hypertarget{rel-edge-j}{\textbf{j}} & \TCz~$\NonContainmentSymbol$ Bidir. Softmax Attn.\reltag{conjectured} &
Conjectured, as for edge~h.
Narrower results exist for restricted models.
Fixed-precision bidirectional transformers with rational-frequency sinusoidal positional encodings recognize only languages definable in $\FOC[+;\mathrm{MOD}]$, which is strictly smaller than uniform \TCz \citep[Theorem~2 and Proposition~4]{chiang2023tighter}.
With constant precision and every arithmetic operation rounded, transformers are contained in \ACz \citep[Theorem~4.1]{london2025pause}.
These results do not cover soft-attention transformers with $O(\log n)$-bit precision. 
\\ \rowsep

\leavevmode\hypertarget{rel-edge-k}{\textbf{k}} & Bidir. Softmax Attn.~$\ContainmentSymbol$ \TCz &
\citet[Theorem~2]{merrill2023logic} show that every transformer with $O(\log n)$-bit precision and positional encodings computable in $O(\log n)$ time can be expressed in $\FOM[\mathsf{BIT}]$, which equals DLOGTIME-uniform \TCz. \\

\relgroup{\MPC}
\leavevmode\hypertarget{rel-edge-m}{\textbf{m}} & \MPC~$\ContainmentSymbol$ Bidir. Softmax Attn.\reltag{with caveats} &
\citet[Theorem~3.1]{sanford2024parallel} show that every $R$-round \MPC protocol with local memory $O(n^\delta)$ is simulated by a transformer of depth $R+1$, width $O(n^{4\delta}\log n)$ and $O(\log\log n)$ heads.
However, this assumes unbounded pre-computation in the MLP and the query, key, and value maps, and that this computation may differ from position to position.
Moreover, the width grows with the input length.
\\ \rowsep

\leavevmode\hypertarget{rel-edge-n}{\textbf{n}} & Bidir. Softmax Attn.~$\ContainmentSymbol$ \MPC &
\citet[Theorem~3.4]{sanford2024parallel} show that every transformer with width times heads $O(n^\delta)$, which includes every fixed-width transformer, is simulated by an $O(L)$-round \MPC protocol with $O(n^2)$ machines.
The protocol's local computation evaluates the transformer's position-wise maps.
Note that this uses a different memory regime compared to edge~m, with $O(n^2)$ machines whereas edge~m uses fewer machines than tokens. \\

\relgroup{\CRASP}
\leavevmode\hypertarget{rel-edge-o}{\textbf{o}} & \CRASP~$\ContainmentSymbol$ Causal Softmax Attn. &
\citet[Theorems~4.3 and~5.7]{yang2024counting} compile every \CRASP program into a future-masked softmax transformer without positional encodings, with uniform attention and one layer per level of count nesting. 
This construction uses infinite precision and layer normalization; \citet[Proposition~B.4]{yang2025kneedeep} give a fixed-precision version with the same depth, for a model that rounds every value except the sums inside attention.
\\ \rowsep

\leavevmode\hypertarget{rel-edge-p}{\textbf{p}} & Causal Softmax Attn.~$\ContainmentSymbol$ \CRASP\reltag{with caveats} &
\citet[Theorems~4.3 and~7.1]{yang2024counting} show that every fixed-precision, future-masked softmax transformer without positional encodings can be simulated by a \CRASP program.
The construction enumerates the finitely many possible activation values, which makes the program exponentially larger than the transformer.
Edge q shows that this exponential increase is, in general, necessary.
\\ \rowsep

\leavevmode\hypertarget{rel-edge-q}{\textbf{q}} & Causal Softmax Attn.~$\NonContainmentSymbol$ \CRASP\reltag{with caveats} &
Holds at comparable size (\cref{thm:c-rasp-retrieval-separation}).
On query-dependent retrieval over $L = 2^m$ key-value items, one causal softmax attention head solves the task with residual width $m+2$, while every \CRASP program for the task needs $\Omega(2^m/m)$ prefix-count nodes. \\

\relgroup{Relations between theories (unlettered edges)}
& \Starfree $\subsetneq$ \ACz &
Star-free languages are exactly the languages definable in $\FO[<]$ \citep{mcnaughton1971counter}, and $\FO[<] \subseteq \FO[<,\mathrm{BIT}]$, which equals DLOGTIME-uniform \ACz \citep{immerman1999descriptive}.
The containment is strict: $(aa)^*$ is in \ACz but is not star-free \citep{schutzenberger1965finite}. \\ \rowsep

& \ACz $\subsetneq$ \TCz &
Every \ACz circuit is a \TCz circuit; \TaskMajority is in \TCz but not in \ACz \citep{furst1984parity}. \\ \rowsep

& \CRASP $\subsetneq$ \TCz &
Every \CRASP program translates into a uniform \TCz circuit family with a linear number of gates \citep[Proposition~31 and Remark~32]{huang2024formal}.
The containment is strict: \TaskParity is in \TCz but not expressible in \CRASP \citep[Lemma~11]{huang2024formal}. \\ \rowsep

& \TCz $\subseteq$ \MPC &
\citet[proof of Theorem~6.1]{roughgarden2018shuffles} simulate every logspace-uniform \NCone circuit family, and hence \TCz, in $O(1/\delta)$ rounds with polynomially many machines that each read $O(n^\delta)$ words per round.
\citet[Theorem~9]{frei2019circuit} give a constant-round simulation in a MapReduce model with local memory of at least $\sqrt{n}$.
\\ \rowsep

& \CRASP $\subseteq$ \MPC &
Follows from \CRASP $\subseteq$ \TCz $\subseteq$ \MPC. \\

\relgroup{Not shown}
& \BAPO, \Sensitivity &
Omitted because they have no formal containment relation to transformer classes.
\\

\end{longtable}
\endgroup

%% file: tables/theory-task-matrix/theory-task-prediction-notes.tex
\section{Notes on the Theory--Task Predictions}
\label{app:prediction-notes}

\Cref{tab:theory-task-predictions-annotated} annotates each task prediction with a link to a row in \Cref{tab:theory-task-predictions-explanations}, which explains the source of each prediction.

\input{tables/theory-task-matrix/theory-task-matrix-annotated}
\input{tables/theory-task-matrix/theory-task-matrix-explanations}

%% file: tables/theory-task-matrix/theory-task-matrix-annotated.tex
\begin{table*}[h!]
\centering
\caption{\textbf{Theory--task prediction matrix.} Annotated version of the theory-task predictions in \Cref{tab:theory-task-predictions}. For each theory, we apply its hardness criterion, adapting tasks when necessary to fit within the theory's assumptions about inputs. Superscripts link to explanations and references in \Cref{tab:theory-task-predictions-explanations}. \predneg{}: \Hard, \predpos{}: \Easy (in some cases, not \Hard), \textsuperscript{\prednewproof}: non-trivial result proved here, $^*$: assuming a complexity theory conjecture.}
\label{tab:theory-task-predictions-annotated}
\renewcommand{\predglyphsize}{\fontsize{15}{18}\selectfont}
\input{tables/theory-task-matrix/theory-task-matrix-annotated-content}
\end{table*}

%% file: tables/theory-task-matrix/theory-task-matrix-annotated-content.tex
\begingroup
\setlength{\tabcolsep}{2.25pt}
\renewcommand{\arraystretch}{1.15}
\begin{tabular*}{\textwidth}{@{\extracolsep{\fill}}l c c c c c c c@{}}
\toprule
\textbf{Task} & \textbf{\Starfree} & \textbf{\boldmath\ACz} & \textbf{\boldmath\TCz} & \textbf{\CRASP} & \textbf{\MPC} & \textbf{\BAPO} & \textbf{\Sensitivity} \\
\midrule
\multicolumn{8}{@{}l}{\textbf{Boolean}} \\[1pt]
\quad \TaskALL/\TaskANY & \predpos{\hyperlink{pred-all-any-star-free}{\textnormal{1}}} & \predpos{\hyperlink{pred-all-any-circuits}{\textnormal{2}}} & \predpos{\hyperlink{pred-all-any-circuits}{\textnormal{2}}} & \predpos{\hyperlink{pred-all-any-crasp}{\textnormal{3}}} & \predpos{\hyperlink{pred-regular-mpc}{\textnormal{4}}} & \predpos{\hyperlink{pred-regular-bapo}{\textnormal{5}}} & \predpos{\hyperlink{pred-all-any-sensitivity}{\textnormal{6}}} \\
\quad \TaskMajority & \predneg{\hyperlink{pred-majority-parity-ac0-hard}{\textnormal{7}}} & \predneg{\hyperlink{pred-majority-parity-ac0-hard}{\textnormal{7}}} & \predpos{\hyperlink{pred-majority-parity-tc0}{\textnormal{8}}} & \predpos{\hyperlink{pred-majority-crasp}{\textnormal{9}}} & \predpos{\hyperlink{pred-majority-mpc}{\textnormal{10}}} & \predneg{\hyperlink{pred-majority-bapo}{\textnormal{11}}} & \predpos{\hyperlink{pred-majority-sensitivity}{\textnormal{12}}} \\
\quad \TaskParity & \predneg{\hyperlink{pred-majority-parity-ac0-hard}{\textnormal{7}}} & \predneg{\hyperlink{pred-majority-parity-ac0-hard}{\textnormal{7}}} & \predpos{\hyperlink{pred-majority-parity-tc0}{\textnormal{8}}} & \predneg{\hyperlink{pred-parity-crasp}{\textnormal{13}}} & \predpos{\hyperlink{pred-regular-mpc}{\textnormal{4}}} & \predpos{\hyperlink{pred-regular-bapo}{\textnormal{5}}} & \predneg{\hyperlink{pred-parity-sensitivity}{\textnormal{14}}} \\
\predfamilyrule
\multicolumn{8}{@{}l}{\textbf{\TaskIndex}} \\[1pt]
\quad \TaskIndex & \predneg{\hyperlink{pred-index-nonregular}{\textnormal{15}}} & \predpos{\hyperlink{pred-index-circuits}{\textnormal{16}}} & \predpos{\hyperlink{pred-index-circuits}{\textnormal{16}}} & \predneg[\prednewproof]{\hyperlink{pred-index-growing-range}{\textnormal{17}}} & \predpos{\hyperlink{pred-index-mpc}{\textnormal{18}}} & \predpos{\hyperlink{pred-index-bapo}{\textnormal{19}}} & \predpos{\hyperlink{pred-index-sensitivity}{\textnormal{20}}} \\
\quad \TaskIndexAndOr & \predneg{\hyperlink{pred-index-nonregular}{\textnormal{15}}} & \predpos{\hyperlink{pred-index-circuits}{\textnormal{16}}} & \predpos{\hyperlink{pred-index-circuits}{\textnormal{16}}} & \predneg[\prednewproof]{\hyperlink{pred-index-growing-range}{\textnormal{17}}} & \predpos{\hyperlink{pred-index-mpc}{\textnormal{18}}} & \predpos[\prednewproof]{\hyperlink{pred-index-star-bapo}{\textnormal{21}}} & \predpos{\hyperlink{pred-index-sensitivity}{\textnormal{20}}} \\
\quad \TaskIndexXor & \predneg{\hyperlink{pred-index-nonregular}{\textnormal{15}}} & \predpos{\hyperlink{pred-index-circuits}{\textnormal{16}}} & \predpos{\hyperlink{pred-index-circuits}{\textnormal{16}}} & \predneg[\prednewproof]{\hyperlink{pred-index-growing-range}{\textnormal{17}}} & \predpos{\hyperlink{pred-index-mpc}{\textnormal{18}}} & \predpos[\prednewproof]{\hyperlink{pred-index-star-bapo}{\textnormal{21}}} & \predpos{\hyperlink{pred-index-sensitivity}{\textnormal{20}}} \\
\predfamilyrule
\multicolumn{8}{@{}l}{\textbf{Sparse}} \\[1pt]
\quad \TaskSparseAndOr & \predpos{\hyperlink{pred-sparse-logic}{\textnormal{22}}} & \predpos{\hyperlink{pred-sparse-logic}{\textnormal{22}}} & \predpos{\hyperlink{pred-sparse-logic}{\textnormal{22}}} & \predpos{\hyperlink{pred-sparse-crasp}{\textnormal{23}}} & \predpos{\hyperlink{pred-sparse-mpc}{\textnormal{24}}} & \predpos{\hyperlink{pred-sparse-bapo}{\textnormal{25}}} & \predpos{\hyperlink{pred-sparse-sensitivity}{\textnormal{26}}} \\
\quad \TaskSparseXor & \predpos{\hyperlink{pred-sparse-logic}{\textnormal{22}}} & \predpos{\hyperlink{pred-sparse-logic}{\textnormal{22}}} & \predpos{\hyperlink{pred-sparse-logic}{\textnormal{22}}} & \predpos{\hyperlink{pred-sparse-crasp}{\textnormal{23}}} & \predpos{\hyperlink{pred-sparse-mpc}{\textnormal{24}}} & \predpos{\hyperlink{pred-sparse-bapo}{\textnormal{25}}} & \predpos{\hyperlink{pred-sparse-sensitivity}{\textnormal{26}}} \\
\predfamilyrule
\textbf{\TaskBinaryAddition} & \predneg{\hyperlink{pred-addition-hardness}{\textnormal{27}}} & \predpos{\hyperlink{pred-addition-circuits}{\textnormal{28}}} & \predpos{\hyperlink{pred-addition-circuits}{\textnormal{28}}} & \predneg{\hyperlink{pred-addition-hardness}{\textnormal{27}}} & \predpos{\hyperlink{pred-addition-circuits}{\textnormal{28}}} & \predneg[\prednewproof]{\hyperlink{pred-addition-bapo}{\textnormal{29}}} & \predpos{\hyperlink{pred-addition-sensitivity}{\textnormal{30}}} \\
\predfamilyrule
\multicolumn{8}{@{}l}{\textbf{\TaskDyck}} \\[1pt]
\quad \TaskDyckOne (bounded) & \predpos{\hyperlink{pred-bounded-dyck-logic}{\textnormal{31}}} & \predpos{\hyperlink{pred-bounded-dyck-logic}{\textnormal{31}}} & \predpos{\hyperlink{pred-bounded-dyck-logic}{\textnormal{31}}} & \predpos{\hyperlink{pred-dyck-one-crasp}{\textnormal{32}}} & \predpos{\hyperlink{pred-regular-mpc}{\textnormal{4}}} & \predpos{\hyperlink{pred-regular-bapo}{\textnormal{5}}} & \prednone \\
\quad \TaskDyckOne (unbounded) & \predneg{\hyperlink{pred-unbounded-dyck-circuits}{\textnormal{33}}} & \predneg{\hyperlink{pred-unbounded-dyck-circuits}{\textnormal{33}}} & \predpos{\hyperlink{pred-unbounded-dyck-circuits}{\textnormal{33}}} & \predpos{\hyperlink{pred-dyck-one-crasp}{\textnormal{32}}} & \predpos{\hyperlink{pred-unbounded-dyck-mpc}{\textnormal{34}}} & \predneg[\prednewproof]{\hyperlink{pred-unbounded-dyck-bapo}{\textnormal{35}}} & \prednone \\
\quad \TaskDyckTwo (bounded) & \predpos{\hyperlink{pred-bounded-dyck-logic}{\textnormal{31}}} & \predpos{\hyperlink{pred-bounded-dyck-logic}{\textnormal{31}}} & \predpos{\hyperlink{pred-bounded-dyck-logic}{\textnormal{31}}} & \predpos[\prednewproof]{\hyperlink{pred-bounded-dyck-crasp}{\textnormal{36}}} & \predpos{\hyperlink{pred-regular-mpc}{\textnormal{4}}} & \predpos{\hyperlink{pred-regular-bapo}{\textnormal{5}}} & \prednone \\
\quad \TaskDyckTwo (unbounded) & \predneg{\hyperlink{pred-unbounded-dyck-circuits}{\textnormal{33}}} & \predneg{\hyperlink{pred-unbounded-dyck-circuits}{\textnormal{33}}} & \predpos{\hyperlink{pred-unbounded-dyck-circuits}{\textnormal{33}}} & \predneg{\hyperlink{pred-unbounded-dyck-two-crasp}{\textnormal{37}}} & \predpos{\hyperlink{pred-unbounded-dyck-mpc}{\textnormal{34}}} & \predneg[\prednewproof]{\hyperlink{pred-unbounded-dyck-bapo}{\textnormal{35}}} & \prednone \\
\predfamilyrule
\textbf{\TaskKHop} & \predpos[\prednewproof]{\hyperlink{pred-fixed-hop-logic}{\textnormal{38}}} & \predpos[\prednewproof]{\hyperlink{pred-fixed-hop-logic}{\textnormal{38}}} & \predpos[\prednewproof]{\hyperlink{pred-fixed-hop-logic}{\textnormal{38}}} & \predneg[\prednewproof]{\hyperlink{pred-fixed-hop-crasp}{\textnormal{39}}} & \predpos{\hyperlink{pred-fixed-hop-mpc}{\textnormal{40}}} & \predpos[\prednewproof]{\hyperlink{pred-regular-bapo}{\textnormal{5}},\hyperlink{pred-fixed-hop-logic}{\textnormal{38}}} & \prednone \\
\predfamilyrule
\multicolumn{8}{@{}l}{\textbf{Graphs}} \\[1pt]
\quad \TaskEdgeExistence & \predneg{\hyperlink{pred-edge-nonregular}{\textnormal{41}}} & \predpos{\hyperlink{pred-edge-circuits}{\textnormal{42}}} & \predpos{\hyperlink{pred-edge-circuits}{\textnormal{42}}} & \predneg[\prednewproof]{\hyperlink{pred-edge-crasp}{\textnormal{43}}} & \predpos{\hyperlink{pred-edge-mpc}{\textnormal{44}}} & \predpos{\hyperlink{pred-edge-bapo}{\textnormal{45}}} & \prednone \\
\quad \TaskDirectedReachability & \predneg{\hyperlink{pred-reachability-ac0}{\textnormal{46}}} & \predneg{\hyperlink{pred-reachability-ac0}{\textnormal{46}}} & \prednegc{\hyperlink{pred-directed-reachability-tc0}{\textnormal{47}}} & \prednegc{\hyperlink{pred-directed-reachability-tc0}{\textnormal{47}}} & \prednegc{\hyperlink{pred-reachability-mpc}{\textnormal{48}}} & \predneg{\hyperlink{pred-reachability-bapo}{\textnormal{49}}} & \prednone \\
\quad \TaskUndirectedReachability & \predneg{\hyperlink{pred-reachability-ac0}{\textnormal{46}}} & \predneg{\hyperlink{pred-reachability-ac0}{\textnormal{46}}} & \prednegc{\hyperlink{pred-undirected-reachability-tc0}{\textnormal{50}}} & \prednegc{\hyperlink{pred-undirected-reachability-tc0}{\textnormal{50}}} & \prednegc{\hyperlink{pred-reachability-mpc}{\textnormal{48}}} & \predneg{\hyperlink{pred-reachability-bapo}{\textnormal{49}}} & \prednone \\
\bottomrule
\end{tabular*}
\endgroup

%% file: tables/theory-task-matrix/theory-task-matrix-explanations.tex
\begingroup
\setlength{\tabcolsep}{4pt}
\renewcommand{\arraystretch}{1.15}
\begin{longtable}{@{}m{0.07\textwidth}p{\dimexpr0.93\textwidth-8pt\relax}@{}}
\caption{\textbf{Explanations for the theory--task predictions.} IDs correspond to the numbered links in \Cref{tab:theory-task-predictions-annotated}.}
\label{tab:theory-task-predictions-explanations}\\
\toprule \textbf{ID} & \textbf{Explanation} \\ \midrule
\endfirsthead
\toprule \textbf{ID} & \textbf{Explanation (continued)} \\ \midrule
\endhead
\bottomrule\endfoot
\input{tables/theory-task-matrix/theory-task-matrix-explanations-content}
\end{longtable}
\endgroup

%% file: tables/theory-task-matrix/theory-task-matrix-explanations-content.tex
\hypertarget{pred-all-any-star-free}{\textbf{1}} & \textbf{\TaskALL/\TaskANY are star-free.} They are defined in $\FO[<]$ by $\forall i\,Q_1(i)$
and $\exists i\,Q_1(i)$, respectively, where $Q_1(i)$
holds when position $i$ contains a one.
\\
\hypertarget{pred-all-any-circuits}{\textbf{2}} & \textbf{\TaskALL/\TaskANY have depth-one circuits.} \TaskALL and \TaskANY are respectively a single unbounded-fan-in AND and OR gate, so they lie in both $\ACz$ and $\TCz$.
\\
\hypertarget{pred-all-any-crasp}{\textbf{3}} & \textbf{\TaskALL/\TaskANY have small \CRASP programs.} \TaskALL tests whether the prefix count of zeros is zero; \TaskANY tests whether the prefix count of ones is positive.
\\
\hypertarget{pred-regular-mpc}{\textbf{4}} & \textbf{Regular languages have constant-round, sublinear-memory \MPC algorithms.} \citet[Theorem~3.2]{gilbert2023brief} give an $O(1/\delta)$-round \MPC algorithm with $O(n^\delta)$ local memory for any fixed regular language. This covers \TaskANY/\TaskALL, \TaskParity, and constant-depth \TaskDyck.
\\
\hypertarget{pred-regular-bapo}{\textbf{5}} & \textbf{Regular languages are \BAPO-easy.} \citet[Theorem~2]{schnabel2025lost} show that every regular language has a constant-bandwidth \BAPO; this includes \TaskALL/\TaskANY, \TaskParity, and fixed-depth \TaskDyck. For fixed alphabet $\Sigma$ and hop count $k$, each output language of \TaskKHop is star-free and hence regular by \Cref{res:k-hop-logic}. Concatenating the finite-state summaries for the $|\Sigma|+1$ possible outputs gives a constant-bandwidth \BAPO for the full output.
\\
\hypertarget{pred-all-any-sensitivity}{\textbf{6}} & \textbf{\TaskALL/\TaskANY have low average sensitivity.} Flipping a bit changes \TaskALL (\TaskANY) exactly when all other bits are one (zero). Under uniform Boolean inputs, both therefore have average sensitivity $n/2^{n-1}=o(n)$, giving an \Easy prediction. This prediction uses the uniform distribution rather than the class-balanced distribution we use in experiments.
\\
\hypertarget{pred-majority-parity-ac0-hard}{\textbf{7}} & \textbf{\TaskMajority and \TaskParity are not in $\ACz$~\citep{strobl2024what}}. This also means they are not \Starfree due to containment in $\ACz$.
\\
\hypertarget{pred-majority-parity-tc0}{\textbf{8}} & \textbf{\TaskMajority and \TaskParity are in $\TCz$~\citep{strobl2024what}.}
\\
\hypertarget{pred-majority-crasp}{\textbf{9}} & \textbf{\TaskMajority is in \CRASP~\citep{huang2024formal}.}
\\
\hypertarget{pred-majority-mpc}{\textbf{10}} & \textbf{\TaskMajority has a constant-round, sublinear-memory \MPC algorithm.} The bounded-memory prefix-sum algorithm of \citet{goodrich2011sorting} computes the total number of ones in $O(\log_M n)$ rounds. Taking $M=\Theta(n^\delta)$ gives $O(1/\delta)$ rounds with $O(n^\delta)$ local memory; a final threshold comparison computes \TaskMajority.
\\
\hypertarget{pred-majority-bapo}{\textbf{11}} & \textbf{\TaskMajority is \BAPO-hard~\citep{schnabel2025lost}.}
\\
\hypertarget{pred-majority-sensitivity}{\textbf{12}} & \textbf{\TaskMajority has low average sensitivity~\citep{hahn2024sensitive}.} \TaskMajority has average sensitivity $\Theta(\sqrt n)$ under uniform Boolean inputs, so it is not predicted \Hard by sensitivity results~\citep[Theorem 4, Theorem 6]{hahn2024sensitive}.
\\
\hypertarget{pred-parity-crasp}{\textbf{13}} & \textbf{\TaskParity is not expressible in \CRASP~\citep[Lemma 11]{huang2024formal}.} \citet{huang2024formal} show the stronger result that \TaskParity is not expressible in \CRASP{}[periodic, local], which adds additional positional operators to base \CRASP.
\\
\hypertarget{pred-parity-sensitivity}{\textbf{14}} & \textbf{\TaskParity is maximally sensitive.} Flipping any input bit flips the answer to \TaskParity, giving it average sensitivity $n$~\citep{hahn2024sensitive}.
\\
\hypertarget{pred-index-nonregular}{\textbf{15}} & \textbf{Growing-range \TaskIndex is not star-free.} \Cref{res:index-nonregular} shows that \TaskIndex and fixed-arity \TaskIndexAnd, \TaskIndexOr, and \TaskIndexXor are not finite-state computable, even when correctness is required only on valid inputs. In particular, their valid-input behavior has no star-free realization. In contrast, fixed-range \TaskIndex and \TaskIndexStar are star-free by \Cref{res:index-fixed-logic,res:index-bool-fixed-logic}.
\\
\hypertarget{pred-index-circuits}{\textbf{16}} & \textbf{Binary-index \TaskIndex is in \ACz.} \Cref{res:index-fo-bit} shows growing-range \TaskIndex and \TaskIndexStar are in DLOGTIME-uniform $\ACz$ and $\TCz$, encoding indices in binary.
\\
\hypertarget{pred-index-growing-range}{\textbf{17}\textsuperscript{\prednewproof}} & \textbf{Growing-range \TaskIndex requires growing \CRASP size.} \Cref{res:index-crasp-growing} rules out a fixed-size base \CRASP program for \TaskIndex and fixed-arity \TaskIndexAnd, \TaskIndexOr, and \TaskIndexXor. The result applies to binary and atomic query encodings. The displayed \Hard predictions concern a growing query range, as in the ID and LLM experiments. In contrast, length-generalization experiments fix the query range; fixed-range \TaskIndex and \TaskIndexStar admit fixed-size \CRASP programs by \Cref{res:index-fixed-crasp,res:index-bool-fixed-crasp}.
\\
\hypertarget{pred-index-mpc}{\textbf{18}} & \textbf{Growing-range \TaskIndex is \MPC-easy.} \Cref{res:index-mpc} gives a three-round protocol for \TaskIndex and \TaskIndexStar with fixed arity and Boolean function, using $O(n^\delta)$ words of local memory and polynomial total memory.
\\
\hypertarget{pred-index-bapo}{\textbf{19}} & \textbf{Atomic-index \TaskIndex is \BAPO-easy.} \TaskIndex has a $(0,1)$-\BAPO~\citep{schnabel2025lost}: the attention oracle selects the queried data token. This bound uses an atomic address token, not a binary-encoded index.
\\
\hypertarget{pred-index-sensitivity}{\textbf{20}} & \textbf{\TaskIndex and fixed-arity \TaskIndexStar have low sensitivity.} Fix a layout of $n$ data bits followed by $k$ binary addresses of width $\ell=\lceil\log_2(n+1)\rceil$. Extend the task to all bitstrings of this layout by returning zero whenever any address is outside $[n]$. At any input, at most $k$ queried data bits and $k\ell$ address bits can be sensitive. Thus pointwise sensitivity, and hence average sensitivity, is at most $k+k\ell=O(k\log(n+1))$, sublinear in the total input length for fixed $k$ and below the linear hardness threshold of \citet[Theorem~6]{hahn2024sensitive}.
\\
\hypertarget{pred-index-star-bapo}{\textbf{21}\textsuperscript{\prednewproof}} & \textbf{Atomic-index \TaskIndexStar is \BAPO-easy.} \Cref{res:index-bapo-growing} gives a $(k-1,k)$-\BAPO for \TaskIndexStar with fixed arity $k$ and Boolean function. This extends the existing \TaskIndex result to Boolean functions of multiple retrieved bits. The bound uses atomic address tokens; it does not establish the same bound for binary-encoded indices.
\\
\hypertarget{pred-sparse-logic}{\textbf{22}} & \textbf{Fixed-support Boolean tasks are star-free.} \Cref{res:sparse-logic} gives an $\FO[<]$ definition of \TaskSparseStar for fixed selected positions and Boolean function, hence membership in \Starfree and DLOGTIME-uniform $\ACz$ and $\TCz$.
\\
\hypertarget{pred-sparse-crasp}{\textbf{23}} & \textbf{Fixed-support Boolean tasks are in \CRASP.} \Cref{res:sparse-crasp} gives a base \CRASP program of counting depth at most two for \TaskSparseStar with fixed selected positions and Boolean function.
\\
\hypertarget{pred-sparse-mpc}{\textbf{24}} & \textbf{Fixed-support Boolean tasks are \MPC-easy.} These are special cases of \Cref{res:index-mpc} in which the selected indices are known in advance. The coordinator therefore skips query collection, requests the selected bits, and applies the fixed Boolean function to the replies. The algorithm uses constant rounds and $O(n^\delta)$ local memory.
\\
\hypertarget{pred-sparse-bapo}{\textbf{25}} & \textbf{Fixed-support Boolean tasks are \BAPO-easy.} \Cref{res:sparse-bapo} gives a $(0,k)$-\BAPO for \TaskSparseStar with fixed arity $k$, selected positions, and Boolean function.
\\
\hypertarget{pred-sparse-sensitivity}{\textbf{26}} & \textbf{Sparse tasks have constant sensitivity.} Only the $k$ selected bits can affect the output, so sensitivity is at most $k$, which is fixed independently of input length.
\\
\hypertarget{pred-addition-hardness}{\textbf{27}} & \textbf{Checking binary addition is \Starfree- and \CRASP-hard.} Consider the decision version of addition: given $x\#y\#z$, where the three numbers are written in binary, decide whether $x+y=z$. A simple pumping lemma argument shows this is nonregular and hence not star-free. Across the equal-length split $x\#0\mid 0\#z$, equality requires $\Omega(n)$ bits of deterministic one-way communication, whereas a fixed \CRASP{}[periodic, local] program gives an $O(\log n)$-bit protocol~\citep[Theorems~9 and~12]{huang2024formal}.
\\
\hypertarget{pred-addition-circuits}{\textbf{28}} & \textbf{Binary addition is in \ACz{}~\citep[Proposition~1.9]{immerman1999descriptive}.} This also makes it \Easy for \TCz{} by containment and for \MPC since constant-round \MPC with $O(n^\delta)$ local memory contains $\NCone$~\citep{roughgarden2018shuffles,frei2019circuit}.
\\
\hypertarget{pred-addition-bapo}{\textbf{29}\textsuperscript{\prednewproof}} & \textbf{Blocked \TaskBinaryAddition verification is \BAPO-hard.} For input $x\#y\#z$ with $n$-bit operands and an $(n+1)$-bit proposed sum, all MSB-first, \Cref{res:addition-bapo} proves that every $(a,b)$-\BAPO requires $a+4b\log_2(4n+1)\geq n-2$.
\\
\hypertarget{pred-addition-sensitivity}{\textbf{30}} & \textbf{\TaskBinaryAddition verification has low uniform average sensitivity.} \Cref{res:addition-sensitivity} gives average sensitivity $\Theta(n/2^n)$ for $n$-bit operands and an $(n+1)$-bit proposed sum. Each valid sum is sensitive to all $3n+1$ bit flips, but valid sums constitute only a $2^{-(n+1)}$ fraction of uniform inputs.
\\
\hypertarget{pred-bounded-dyck-logic}{\textbf{31}} & \textbf{Bounded \TaskDyck is star-free.} For fixed bracket-type count $k$ and depth bound $D$, \TaskDyck{}$(k,D)$ is known to be star-free~\citep[e.g.,][]{strobl2024what}, with classical proofs for $k=1$~\citep{brzozowski1978dot,thomas1984application}, hence also in DLOGTIME-uniform $\ACz$ and $\TCz$.
\\
\hypertarget{pred-dyck-one-crasp}{\textbf{32}} & \textbf{Bounded and unbounded \TaskDyckOne are in \CRASP.} \citet{yang2024counting} provide a \CRASP program for \TaskDyckOne that compares prefix counts of opening and closing brackets, requiring nonnegative height at every prefix and zero final height. This program can be easily extended to require every prefix height to be at most a fixed bound $D$.
\\
\hypertarget{pred-unbounded-dyck-circuits}{\textbf{33}} & \textbf{Unbounded \TaskDyckOne and \TaskDyckTwo are outside \ACz{} but in \TCz.} Both languages are nonregular, hence not star-free, and neither is in $\ACz$~\citep{hao2022formal,yang2024masked}. \TaskDyck with any fixed number of bracket types is in \TCz~\citep{barrington1989relative}.
\\
\hypertarget{pred-unbounded-dyck-mpc}{\textbf{34}} & \textbf{Unbounded \TaskDyck has constant-round \MPC algorithms.} \citet[Theorem~3.3]{gilbert2023brief} decide \TaskDyck languages with a fixed number of bracket types in $O(1/\delta)$ \MPC rounds with $O(n^\delta)$ local memory.
\\
\hypertarget{pred-unbounded-dyck-bapo}{\textbf{35}\textsuperscript{\prednewproof}} & \textbf{Unbounded \TaskDyckOne and \TaskDyckTwo are \BAPO-hard.} \Cref{res:dyck-bapo} proves that recognizing unbounded \TaskDyck with any fixed number of bracket types on inputs of length at most $N$ requires $a+\log_2(b+1)\geq\log_2N-O(1)$.
\\
\hypertarget{pred-bounded-dyck-crasp}{\textbf{36}\textsuperscript{\prednewproof}} & \textbf{Bounded multi-type \TaskDyck is in \CRASP.} \Cref{res:dyck-crasp} gives a base \CRASP program for \TaskDyck{}$(k,D)$ with $O(kD)$ channels and counting depth at most three. For fixed $k,D$, this is a fixed-size program; in particular, it handles bounded \TaskDyckTwo.
\\
\hypertarget{pred-unbounded-dyck-two-crasp}{\textbf{37}} & \textbf{Unbounded \TaskDyckTwo is not in \CRASP~\citep[Remark 21]{huang2024formal}.} Encode two bitstrings as opening-bracket types and closing-bracket types in reverse order. Their concatenation is in \TaskDyckTwo exactly when the strings are equal. This requires linear deterministic one-way communication, whereas a fixed \CRASP{}[periodic, local] program provides an $O(\log n)$-bit protocol by \citet[Theorems~9 and~12]{huang2024formal}.
\\
\hypertarget{pred-fixed-hop-logic}{\textbf{38}\textsuperscript{\prednewproof}} & \textbf{Fixed-hop induction is star-free.} For fixed alphabet $\Sigma$ and hop count $k$, \Cref{res:k-hop-logic} defines each final-position output language, including the undefined output $\bot$, in $\FO[<]$. Thus each is star-free and in DLOGTIME-uniform $\ACz$ and $\TCz$. Their regularity also yields the \BAPO upper bound via \hyperlink{pred-regular-bapo}{\textnormal{5}}.
\\
\hypertarget{pred-fixed-hop-crasp}{\textbf{39}\textsuperscript{\prednewproof}} & \textbf{Fixed-hop induction is \CRASP-hard.} For every fixed $k\geq1$, \Cref{res:k-hop-crasp} shows that a base \CRASP program for \TaskKHop would imply a \CRASP program for $\{a,b,e\}^*be^*b\{a,b,e\}^*$, contradicting \citet[Proposition~31 and Lemma~35]{huang2024formal}.
\\
\hypertarget{pred-fixed-hop-mpc}{\textbf{40}} & \textbf{Fixed-hop induction is \MPC-easy.} \citet[Theorem~4.2]{sanford2024parallel} construct a constant-width transformer for \TaskKHop with depth $\lfloor\log_2 k\rfloor+2$, which is constant for fixed $k$. Their transformer-to-\MPC simulation~\citep[Theorem~3.4]{sanford2024parallel} therefore gives a constant-round protocol with $O(n^\delta)$ local memory for fixed $0<\delta<1$ and polynomial total memory.
\\
\hypertarget{pred-edge-nonregular}{\textbf{41}} & \textbf{\TaskEdgeExistence is not star-free.} Encode node IDs in fixed-width binary over a finite alphabet, with the edge list before the query and an unbounded node range. Any two different edge sets can be distinguished by appending a query for an edge present in only one of them. Thus arbitrarily many graph prefixes require distinct automaton states, so the task is nonregular.
\\
\hypertarget{pred-edge-circuits}{\textbf{42}} & \textbf{Binary-encoded \TaskEdgeExistence is in \ACz.} With fixed-width binary node IDs and fixed edge-block boundaries, compare the queried endpoints with the endpoints of every listed edge and OR the matches. These comparisons give DLOGTIME-uniform, constant-depth, polynomial-size Boolean circuits, hence membership in $\ACz$ and $\TCz$.
\\
\hypertarget{pred-edge-crasp}{\textbf{43}\textsuperscript{\prednewproof}} & \textbf{\TaskEdgeExistence requires growing base \CRASP size.} Use the encoding and input order in \hyperlink{pred-edge-nonregular}{\textnormal{41}}. As in the proof of \Cref{res:index-crasp-growing}, a program with $s$ primitive counts has at most $(L+1)^s$ distinct count vectors after a graph prefix of length $L$. Choosing half of $M$ possible edges gives $2^{\Omega(M)}$ pairwise distinguishable graph prefixes of the same length $L=O(M\log M)$. Their count vectors must be distinct, which is impossible for fixed $s$.
\\
\hypertarget{pred-edge-mpc}{\textbf{44}} & \textbf{\TaskEdgeExistence is \MPC-easy.} Distribute the queried pair to the machines holding the edge list. Each machine checks whether any of its edges matches the query, and an OR reduction combines the local answers. For input size $n$, broadcast and reduction trees use $O(1/\delta)$ rounds and $O(n^\delta)$ local memory.
\\
\hypertarget{pred-edge-bapo}{\textbf{45}} & \textbf{\TaskEdgeExistence is \BAPO-easy with atomic edge tokens.} Encode each directed edge $(u,v)$ as one token and append the queried edge $(s,t)$ as the final atomic token. A $(0,1)$-\BAPO suffices: the attention function selects precisely the prefix edges matching the query, and the suffix oracle also checks the listed edges in the suffix. We conjecture that other encodings (one token per node or binary node encodings) may be \BAPO-hard, but not because of the core lookup mechanism of the task.
\\
\hypertarget{pred-reachability-ac0}{\textbf{46}} & \textbf{Reachability is outside \ACz.} Both directed and undirected reachability are outside \ACz~\citep{wigderson1992complexity} and therefore neither is star-free.
\\
\hypertarget{pred-directed-reachability-tc0}{\textbf{47}} & \textbf{Directed reachability is conditionally hard for \TCz{} and \CRASP.} \TaskDirectedReachability is $\NL$-complete under first-order reductions~\citep[Theorem~3.16]{immerman1999descriptive}. Since DLOGTIME-uniform $\TCz\subseteq\Lclass$, it is outside this circuit class assuming $\Lclass\ne\NL$. \CRASP is contained in DLOGTIME-uniform $\TCz$~\citep[Proposition~31 and Remark~32]{huang2024formal}, so the same assumption gives its \Hard prediction.
\\
\hypertarget{pred-reachability-mpc}{\textbf{48}} & \textbf{Reachability is conditionally \MPC-hard.} Assuming the one-vs-two-cycles conjecture for \MPC~\citep{ghaffari2019conditional}, \TaskUndirectedReachability requires $\Omega( \log n )$ \MPC rounds with sublinear local memory~\citep{sanford2024parallel}. \TaskDirectedReachability is similarly \Hard under the same conjecture, as well as under a weaker conjecture that $\NL \nsubseteq \MPC(o(\log n))$~\citep{nanongkai2022equivalence}.
\\
\hypertarget{pred-reachability-bapo}{\textbf{49}} & \textbf{Reachability is \BAPO-hard~\citep[Theorem~3]{schnabel2025lost}.} The theorem is stated for \TaskDirectedReachability, but the proof directly transfers to \TaskUndirectedReachability as it only uses graphs that are unions of directed paths (which have the same connectivity for queried nodes if they are undirected).
\\
\hypertarget{pred-undirected-reachability-tc0}{\textbf{50}} & \textbf{\TaskUndirectedReachability is conditionally hard for $\TCz$ and \CRASP.} \TaskUndirectedReachability is outside DLOGTIME-uniform $\TCz$ given the conjecture that $\Lclass$-uniform $\NCone \subset \Lclass$~\citep[Section~5.4.2]{strobl2024what}. Since \CRASP is contained in DLOGTIME-uniform $\TCz$, the same assumption gives its \Hard prediction~\citep[Proposition~31 and Remark~32]{huang2024formal}.

%% file: appendices/theory-task-results/theory-task-results.tex
\section{Supporting Theory--Task Results}
\label{app:theory-task-results}
\input{appendices/theory-task-results/macros.tex}

This section gives the new results behind the theory--task predictions in \cref{tab:theory-task-predictions}, including constructions showing that a task is \Easy under a theory, and lower bounds showing that it is \Hard.
\Cref{tab:theory-task-predictions-annotated} lists the result behind each prediction.
The results below are grouped by task family.

\subsection{Definitions and conventions}
\label{app:prediction-conventions}

\subsubsection{Circuits and logic}
\input{appendices/theory-task-results/definitions/circuits.tex}
\subsubsection{\CRASP}
\input{appendices/theory-task-results/definitions/c-rasp.tex}
\subsubsection{\BAPO}

\input{appendices/theory-task-results/definitions/bapo.tex}

\subsection{Fixed-range Index}
\label{app:index-results}
\input{appendices/theory-task-results/index/fixed-task.tex}

\input{appendices/theory-task-results/index/fixed-boolean-task.tex}
\input{appendices/theory-task-results/index/fixed-logic.tex}
\input{appendices/theory-task-results/index/fixed-boolean-logic.tex}
\input{appendices/theory-task-results/index/fixed-c-rasp.tex}
\input{appendices/theory-task-results/index/fixed-boolean-c-rasp.tex}
\input{appendices/theory-task-results/index/fixed-bapo.tex}

\subsection{Growing-range Index}
\label{app:index-growing-range}
\input{appendices/theory-task-results/index/growing-range.tex}
\input{appendices/theory-task-results/index/c-rasp-lower-bound.tex}

\subsection{Sparse Boolean functions}
\label{app:sparse-results}
\input{appendices/theory-task-results/sparse/task.tex}
\input{appendices/theory-task-results/sparse/logic.tex}
\input{appendices/theory-task-results/sparse/c-rasp.tex}
\input{appendices/theory-task-results/sparse/bapo.tex}

\subsection{Binary addition}
\label{app:addition-results}
\input{appendices/theory-task-results/addition/bapo.tex}
\input{appendices/theory-task-results/addition/sensitivity.tex}

\subsection{Dyck languages}
\label{app:dyck-results}
\input{appendices/theory-task-results/dyck/c-rasp.tex}
\input{appendices/theory-task-results/dyck/bapo-lower-bound.tex}
\input{appendices/theory-task-results/k-hop/logic.tex}

\input{appendices/theory-task-results/k-hop/c-rasp.tex}

%% file: appendices/theory-task-results/macros.tex
\providecommand{\bits}{\{0,1\}}
\providecommand{\indicator}{\mathbf{1}}
\providecommand{\BAPO}{\mathsf{BAPO}}
\providecommand{\FObit}{\FO[<,\mathrm{BIT}]}

%% file: appendices/theory-task-results/definitions/circuits.tex
A language is in \(\ACz\) if it is decided by a family of Boolean circuits $\{C_n\}_n$ with polynomial size and constant depth (independent of $n$), using unbounded-fan-in AND and OR gates and fan-in-one NOT gates.
A language is in \(\TCz\) if the circuits may additionally use unbounded-fan-in majority gates.
For a fixed finite alphabet $\Sigma$, we encode each symbol with $\lceil \log_2 |\Sigma| \rceil$ bits.
This changes circuit size and depth by at most a constant factor, so it does not affect membership in \ACz or \TCz.
A \emph{uniformity} condition imposes that the same algorithm can generate the circuit \(C_n\) for each input length \(n\).
We consider DLOGTIME-uniform \(\ACz\), where a deterministic random-access Turing machine can determine a gate's type and whether two gates are connected in time \(O(\log n)\).

Fix a finite alphabet \(\Sigma\).
A word \(w=w_0\cdots w_{N-1}\) is represented by a structure with domain
\[
[N]_0:=\{0,\ldots,N-1\}.
\]
Variables range over positions in \([N]_0\).
The atomic predicates are
\[
Q_\sigma(p),\qquad p=q,\qquad p<q,\qquad \mathrm{BIT}(p,b),
\]
where \(Q_\sigma(p)\) means that position \(p\) contains the symbol \(\sigma\), and \(\mathrm{BIT}(p,b)\) means that bit \(b\) of the binary representation of \(p\) is \(1\).
Because the structure represents a word, exactly one predicate \(Q_\sigma\) holds at each position.
\emph{Formulas} are built from these predicates using Boolean connectives and the quantifiers \(\exists\) and \(\forall\).
A \emph{sentence} is a formula that has no free variables and defines the words on which it is true.

From a descriptive complexity perspective, DLOGTIME-uniform \(\ACz\) can be characterized as the class of languages definable by first-order logic with the BIT predicate~\citep{immerman1999descriptive}:
\[
\FObit
=
\text{DLOGTIME-uniform }\ACz.
\]
Thus, a language is in DLOGTIME-uniform \(\ACz\) if and only if it can be defined by a first-order sentence using the predicates \(Q_\sigma(p)\), \(p=q\), \(p<q\), and \(\mathrm{BIT}(p,b)\). This gives a way to reason about DLOGTIME-uniform \(\ACz\) without explicitly constructing circuits.

The fragment \(\FO[<]\), which omits \(\mathrm{BIT}\), defines exactly the star-free languages~\citep{immerman1999descriptive}.
We will use the inclusions
\begin{equation}\label{eq:fo-inclusions}
    \FO[<]
=
\Starfree
\subset
\text{DLOGTIME-uniform }\ACz
\subset
\text{DLOGTIME-uniform }\TCz.
\end{equation}

Thus, an \(\FO[<]\) representation immediately shows a problem is in all three classes.

%% file: appendices/theory-task-results/definitions/c-rasp.tex
Following \citet{yang2024counting,yang2025kneedeep}, a \CRASP program is a straight-line sequence of named Boolean streams \(P(i)\) and integer-valued count streams \(C(i)\).
The primitives in a \CRASP program are:
\[
\begin{array}{ll}
\text{Boolean:}
&Q_\sigma(i),\quad 1,\quad \neg P(i),\quad
P_1(i)\land P_2(i),\quad C_1(i)\leq C_2(i),\\[2mm]
\text{Count:}
&1,\quad \#[j\leq i:P(j)],\quad C_1(i)+C_2(i),\quad
C_1(i)-C_2(i).
\end{array}
\]

The definition above omits the conditional and max/min operators included in the \CRASP definition by \citet{yang2024counting}, since they can be constructed from these primitives.
Max/min can be replaced by conditionals, e.g., $\max(C_1(i), C_2(i))$ is equivalent to $C_1(i) \geq C_2(i)\;?\;C_1(i):C_2(i)$.
Conditionals can then be replaced by Boolean operators, by rewriting each comparison that contains a conditional as a Boolean combination of comparisons, without increasing depth~\citep[Lemma A.3]{yang2025kneedeep}.
Other operations can also be derived from these base primitives, e.g., disjunction from negation and conjunction, and equality from $\leq$ and $\land$.
A program accepts by reading its final Boolean stream at the last input position.
The ``counting depth'' is the largest number of counting operations along a dependency chain in the program.

In the notation of \citet{huang2024formal}, this base version of \CRASP is also called \CRASP{}[$\varnothing$], to distinguish it from enhanced versions such as \CRASP{}[periodic, local].

%% file: appendices/theory-task-results/definitions/bapo.tex
Following \citet{schnabel2025lost}, an \((a,b)\text{-}\BAPO\) consists of three functions
\[
\begin{aligned}
f&:\Sigma^*\to\bits^a,\\
g&:\Sigma^*\times\mathbb N\times\Sigma\times\mathbb N\to\bits,\\
h&:\bits^a\times
\bigcup_{q=0}^b(\Sigma\times\mathbb N)^q
\times\Sigma^*\times\mathbb N\to\bits.
\end{aligned}
\]
The function $f$ is called the \emph{prefix oracle}, $g$ the \emph{attention function}, and $h$ the \emph{suffix oracle}.
Intuitively, $f$ summarizes the prefix in $a$ bits, $g$ selects which prefix tokens the suffix attends to, and $h$ computes the answer from the summary, at most $b$ attended prefix tokens, and the suffix.
For an input \(s=s_1\cdots s_N\) and split index \(1\leq t<N\), let
\[
\mathbb G_t
=
\{(s_j,j):1\leq j\leq t,\,
g(s_{t+1:N},t,s_j,j)=1\}.
\]
The \BAPO computes \(p\) when, for every split and every attended set \(G\subseteq\mathbb G_t\) with \(\lvert G\rvert=\min\{b,\lvert\mathbb G_t\rvert\}\),
\[
h(f(s_{1:t}),G,s_{t+1:N},t)=p(s).
\]

See \citet{schnabel2025lost} for a full introduction to the model.

%% file: appendices/theory-task-results/index/fixed-task.tex
We use the fixed-range tasks \(\text{\TaskIndex}_L\) and \(\text{\TaskIndexStar}_{L,F,k}\) from \Cref{def:task-index}. Fix an index bound \(L\geq 1\) and distinct index symbols \(\overline{1},\ldots,\overline{L}\), disjoint from the data alphabet \(\bits\).

For \(\text{\TaskIndex}_L\), valid inputs have the form
\[
x_1\cdots x_n\overline r,
\qquad
x_j\in\bits,
\qquad
1\leq r\leq\min\{n,L\}.
\]

%% file: appendices/theory-task-results/index/fixed-boolean-task.tex
For \(\text{\TaskIndexStar}_{L,F,k}\), valid inputs have the form
\[
x_1\cdots x_n\overline{i_1}\cdots\overline{i_k},
\qquad
x_j\in\bits,
\qquad
i_r\in[\min\{n,L\}].
\]
The bit and index alphabets are disjoint, so no separator is needed.
Indices may repeat.

%% file: appendices/theory-task-results/index/fixed-logic.tex
\begin{proposition}[Fixed-range \(\text{\TaskIndex}_L\) in \Starfree]
\label{res:index-fixed-logic}
For every fixed index bound \(L\), \(\text{\TaskIndex}_L\) is representable in \(\FO[<]\) on valid inputs, and is therefore star-free and in DLOGTIME-uniform \(\ACz\) and \(\TCz\).
\end{proposition}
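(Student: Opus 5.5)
We write an explicit \(\FO[<]\) sentence that, on every valid input \(x_1\cdots x_n\overline r\), evaluates to \(x_r\); the remaining claims then follow from \eqref{eq:fo-inclusions}. The key observation is that since \(L\) is a fixed constant, the finitely many possible index symbols \(\overline{1},\ldots,\overline{L}\) can be handled by a finite disjunction, and the position \(r\) (which is at most \(L\)) can be located using only order. For each \(r\le L\) we define, by induction on \(r\), an \(\FO[<]\) formula \(\mathrm{pos}_r(p)\) stating that \(p\) is the \(r\)-th position (\(1\)-based, i.e.\ position \(r-1\) in the \([N]_0\) domain):
\[
\mathrm{pos}_1(p) := \neg\exists q\,(q<p),\qquad
\mathrm{pos}_{r+1}(p) := \exists q\,\bigl(\mathrm{pos}_r(q)\land q<p\land \neg\exists u\,(q<u\land u<p)\bigr).
\]
Each \(\mathrm{pos}_r\) has quantifier depth \(O(r)\), hence bounded by \(O(L)\), a constant.

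Next we define the last position by \(\mathrm{last}(p):=\neg\exists q\,(p<q)\), and take the sentence
\[
\varphi_L := \bigvee_{r=1}^{L}\Bigl(\exists p\,\bigl(\mathrm{last}(p)\land Q_{\overline r}(p)\bigr)\land \exists p\,\bigl(\mathrm{pos}_r(p)\land Q_1(p)\bigr)\Bigr).
\]
On a valid input, exactly one disjunct has its first conjunct true, namely the one for the actual query \(r\). Validity guarantees \(r\le n\), so the \(r\)-th position is a data position, not the index token, and \(\varphi_L\) holds iff \(x_r=1\). We will treat the task's output as a Boolean (or, for the symbol-valued variant, use one such sentence per output symbol \(\sigma\), replacing \(Q_1\) by \(Q_\sigma\)), so that each output language is \(\FO[<]\)-definable. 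The statement only requires correctness on valid inputs, so no constraints on invalid strings are needed. If one prefers a language that also rejects invalid inputs, one can conjoin an \(\FO[<]\) validity check: all non-last positions hold bits, and the last position holds some \(\overline r\) with \(\mathrm{pos}_r\) realized at a non-last position.

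Finally, we invoke McNaughton–Papert (\(\FO[<]=\Starfree\)) and the inclusions \eqref{eq:fo-inclusions} to conclude star-freeness and membership in DLOGTIME-uniform \(\ACz\) and \(\TCz\). The main (mild) obstacle is purely definitional: making precise what ``representable on valid inputs'' means for a non-Boolean output, and checking the off-by-one between the 1-based index \(r\) and the 0-based domain \([N]_0\). We will handle both explicitly via the per-symbol sentences and the inductive \(\mathrm{pos}_r\) definition.
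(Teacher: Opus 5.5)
Your proposal is correct and is essentially the paper's proof. The paper also names each fixed position by iterating a successor formula from the minimal position (its \(\mathrm{Eq}_\ell\), your \(\mathrm{pos}_r\)), takes a finite disjunction over the \(L\) possible index symbols at the maximal position conjoined with \(Q_1\) at the named position, and concludes via the \(\FO[<]\)--\Starfree correspondence and \eqref{eq:fo-inclusions}. The only differences are cosmetic: the paper relabels to 0-based indices and nests the disjunction inside \(\exists p\,\mathrm{max}(p)\), whereas you keep 1-based indices and pull the disjunction outside.
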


\begin{proof}
\medskip We prove this by constructing an \(\FO[<]\) sentence that, on every valid input, is true exactly when \(\text{\TaskIndex}_L\) outputs \(1\).

\medskip
To slightly simplify the presentation of the proof, let's relabel the data bits and index symbols to use \(0\)-based indexing:
\[
x_0\cdots x_{n-1}\overline r,
\qquad
0\leq r<\min\{n,L\}.
\]
The output is \(x_r\).
This is more aligned with the convention for positions in \(\FO[<]\), which are \(0\)-based.
Now \(x_\ell\) occurs at position \(\ell\), and the final index token occurs at position \(n\).

\medskip
\noindent\textbf{Position predicates.}
The main step is to define, for each fixed \(\ell\in\{0,\ldots,L-1\}\), a first-order formula that names position \(\ell\).
We achieve this by first defining some basic predicates that allow us to manipulate positions in \([N]_0\):
\[
\begin{aligned}
\mathrm{min}(p)
&:=\neg\exists q\,(q<p),\\
\mathrm{max}(p)
&:=\neg\exists q\,(p<q),\\
\mathrm{SUCC}(p,q)
&:=p<q\land\neg\exists s\,(p<s\land s<q).
\end{aligned}
\]
In words, the successor predicate \(\mathrm{SUCC}(p,q)\) holds when \(p < q\) and no other \(s\) is between them.
For each fixed nonnegative integer \(m\), define \(\mathrm{Eq}_m(q)\) as the predicate meaning \(q=m\), by
\[
\mathrm{Eq}_0(q):=\mathrm{min}(q),
\qquad
\mathrm{Eq}_{m+1}(q):=
\exists p\,\bigl(\mathrm{Eq}_m(p)\land\mathrm{SUCC}(p,q)\bigr).
\]
Each \(\mathrm{Eq}_m\) is shorthand for a separate finite first-order formula, obtained by expanding the recursive definition \(m\) times.
For example, \(\mathrm{Eq}_2(q)\) says that \(q\) is two successor steps after the first position.
Since \(L\) is fixed, the construction needs only the fixed finite list \(\mathrm{Eq}_0,\ldots,\mathrm{Eq}_{L-1}\), independent of the input length \(n\).

\medskip
\noindent\textbf{Sentence.}
On valid inputs, the positive instances of \(\text{\TaskIndex}_L\) are defined by
\[
\varphi_L
:=
\exists p\left(
\mathrm{max}(p)
\land
\bigvee_{\ell=0}^{L-1}
\left[
Q_{\overline\ell}(p)
\land
\exists q\left(
\mathrm{Eq}_{\ell}(q)
\land Q_1(q)
\right)
\right]
\right)
\]

\medskip
The idea is the following: the sentence is an OR over the allowed indices \(\ell=0,\ldots,L-1\).
For each \(\ell\), \(Q_{\overline\ell}(p)\) checks that the final token (which is the index) is \(\overline\ell\), the \(\mathrm{Eq}_{\ell}(q)\) predicate checks that \(q\) is at position \(\ell\), and \(Q_1(q)\) checks that the data bit there is \(1\). The formula \(\exists q\left(
\mathrm{Eq}_{\ell}(q)
\land Q_1(q)
\right)\) checks whether there is a position \(q\) matching \(\ell\) that contains the bit \(1\).

More formally: fix a valid input \(x_0\cdots x_{n-1}\overline r\).
The condition \(\mathrm{max}(p)\) requires \(p=n\), the final position.
Since that position contains \(\overline r\), only the disjunct \(\ell=r\) can hold.
Within this disjunct, \(\mathrm{Eq}_{r}(q)\) requires \(q=r\), the position containing \(x_r\).
The remaining condition \(Q_1(q)\) holds exactly when \(x_r=1\).
Therefore, on valid inputs, \(\varphi_L\) is true exactly when \(\text{\TaskIndex}_L\) outputs \(1\).

Since \(L\) is fixed, the displayed disjunction has fixed size and \(\varphi_L\) is one fixed \(\FO[<]\) sentence.
The \(\FO[<]\)--\Starfree correspondence and the inclusions in \Cref{eq:fo-inclusions} prove the claim.
\end{proof}

%% file: appendices/theory-task-results/index/fixed-boolean-logic.tex
\begin{proposition}[Fixed-range \(\text{\TaskIndexStar}_{L,F,k}\) in \Starfree]
\label{res:index-bool-fixed-logic}
For every fixed \(L\), \(k\), and \(F\), \(\text{\TaskIndexStar}_{L,F,k}\) is representable in \(\FO[<]\) on valid inputs, and is therefore star-free and in DLOGTIME-uniform \(\ACz\) and \(\TCz\).
\end{proposition}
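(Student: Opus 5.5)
The plan is to extend the sentence $\varphi_L$ from the proof of \Cref{res:index-fixed-logic}, reusing its position predicates $\mathrm{min}$, $\mathrm{max}$, $\mathrm{SUCC}$, and $\mathrm{Eq}_m$. Since $L$, $k$, and $F$ are fixed, we can enumerate every query tuple $(\ell_1,\ldots,\ell_k)\in\{0,\ldots,L-1\}^k$ and every input pattern $(b_1,\ldots,b_k)\in\bits^k$. Both lists are finite, and their sizes depend only on $L$ and $k$. As before, we switch to $0$-based indexing, so valid inputs have the form $x_0\cdots x_{n-1}\overline{i_1}\cdots\overline{i_k}$. On valid inputs, the last $k$ positions are exactly $n,\ldots,n+k-1$.

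\textbf{Naming the trailing positions.} Define predicates that name the $r$-th-from-last position, dual to $\mathrm{Eq}_m$:
\[
\mathrm{Last}_0(p):=\mathrm{max}(p),\qquad
\mathrm{Last}_{m+1}(p):=\exists q\,\bigl(\mathrm{Last}_m(q)\land\mathrm{SUCC}(p,q)\bigr).
\]
Query $r\in\{1,\ldots,k\}$ then sits at the unique position satisfying $\mathrm{Last}_{k-r}$.

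\textbf{The sentence.} Take
\[
\psi_{L,F,k}:=\bigvee_{(\ell_1,\ldots,\ell_k)}\ \bigvee_{\substack{b\in\bits^k\\ F(b)=1}}\ \bigwedge_{r=1}^k\Bigl[\exists p\bigl(\mathrm{Last}_{k-r}(p)\land Q_{\overline{\ell_r}}(p)\bigr)\land\exists q\bigl(\mathrm{Eq}_{\ell_r}(q)\land Q_{b_r}(q)\bigr)\Bigr].
\]
The two disjunctions range over a finite, fixed set, so $\psi_{L,F,k}$ is a single $\FO[<]$ sentence.

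\textbf{Correctness on valid inputs.} Fix a valid input with queries $i_1,\ldots,i_k$. The first conjunct in each bracket forces $\ell_r=i_r$ for every $r$, so exactly one query tuple survives. Each query satisfies $i_r<\min\{n,L\}$, so position $i_r$ is a data position. The second conjunct then forces $b_r=x_{i_r}$. Hence exactly one pattern $b=(x_{i_1},\ldots,x_{i_k})$ can satisfy the conjunction, and the sentence is true iff $F(x_{i_1},\ldots,x_{i_k})=1$.

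Repeated indices need no special handling: the pattern $b$ is then forced to have equal coordinates, and $\psi$ still evaluates correctly. Finally, the $\FO[<]$--\Starfree correspondence and \Cref{eq:fo-inclusions} give membership in \Starfree and in DLOGTIME-uniform $\ACz$ and $\TCz$.

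\textbf{Main obstacle.} The only real subtlety is making sure each index token is read from the correct trailing position, and that $\mathrm{Eq}_{\ell_r}$ lands on a data bit rather than on an index token. The first is handled by the $\mathrm{Last}_m$ predicates. The second follows from validity, since $\ell_r<n$; and because the data and index alphabets are disjoint, $Q_{b_r}$ cannot accidentally match an index token.
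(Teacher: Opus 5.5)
Your proof is correct and takes essentially the paper's route: your $\mathrm{Last}_{k-r}$ is the paper's $\mathrm{Slot}_r$, and you read and resolve each query through $Q_{\overline{\ell}}$ and $\mathrm{Eq}_{\ell}$ exactly as the paper does. The only difference is that you expand the query tuple and $F$ jointly into a DNF, whereas the paper builds one retrieval formula $Y_j$ per slot (with truth value $x_{i_j}$) and substitutes these into a fixed Boolean formula $\widehat F$; this avoids your $L^k$ blowup, but that is immaterial since $L$, $k$, and $F$ are fixed.
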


\begin{proof}
Let \(\widehat F(z_1,\ldots,z_k)\) be a fixed Boolean formula computing \(F\).
As in \Cref{res:index-fixed-logic}, relabel the data bits and index symbols to use \(0\)-based indexing.
A valid input then has the form
\[
x_0\cdots x_{n-1}\overline{r_1}\cdots\overline{r_k},
\qquad
0\leq r_j<\min\{n,L\},
\]
and the output is \(F(x_{r_1},\ldots,x_{r_k})\).
The total input length is \(N=n+k\).
The data bit \(x_\ell\) occurs at position \(\ell\), and the index token \(\overline{r_j}\) occurs at position \(n+j-1\).
We construct one \(\FO[<]\) formula \(Y_j\) for each index token.
The truth value of \(Y_j\) will be the selected bit \(x_{r_j}\).
We then apply \(\widehat F\) to these \(k\) formulas.

\medskip
\noindent\textbf{Position predicates.}
We first identify the \(j\)-th index token by its fixed distance from the end of the input.
Using \(\mathrm{max}\) and \(\mathrm{SUCC}\) from the proof of \Cref{res:index-fixed-logic}, define
\[
\begin{aligned}
\mathrm{Slot}_k(p)
&:=\mathrm{max}(p),\\
\mathrm{Slot}_j(p)
&:=
\exists u\left(
\mathrm{SUCC}(p,u)
\land \mathrm{Slot}_{j+1}(u)
\right),
\qquad 1\leq j<k.
\end{aligned}
\]
Each \(\mathrm{Slot}_j\) is shorthand for a separate finite first-order formula, obtained by expanding the definition from \(\mathrm{Slot}_k\).
The predicate \(\mathrm{Slot}_k\) identifies the final position \(n+k-1\), and each preceding predicate moves one position to the left.
Therefore \(\mathrm{Slot}_j(p)\) holds exactly when \(p=n+j-1\), the position containing \(\overline{r_j}\).

For each \(j\in[k]\), define
\[
Y_j
:=
\bigvee_{\ell=0}^{L-1}
\left[
\exists p\left(
\mathrm{Slot}_j(p)
\land Q_{\overline\ell}(p)
\right)
\land
\exists q\left(
\mathrm{Eq}_{\ell}(q)
\land Q_1(q)
\right)
\right].
\]
This is applying the \(\text{\TaskIndex}_L\) construction from the proof of \Cref{res:index-fixed-logic} for the \(j\)-th index token, to extract the bit \(x_{r_j}\) selected by that token.

\medskip
\noindent\textbf{Sentence.}
On valid inputs, the positive instances of \(\text{\TaskIndexStar}_{L,F,k}\) are defined by
\[
\chi_{L,k,F}
:=
\widehat F(Y_1,\ldots,Y_k)
\]
where each input variable \(z_j\) in \(\widehat F\) is replaced by the first-order formula \(Y_j\).

To see why this sentence correctly defines the positive instances of \(\text{\TaskIndexStar}_{L,F,k}\), fix a valid input
\[
x_0\cdots x_{n-1}\overline{r_1}\cdots\overline{r_k}.
\]
For each \(j\), \(\mathrm{Slot}_j(p)\) requires \(p=n+j-1\), the position containing \(\overline{r_j}\).
Because the input is valid, \(r_j<L\), so the disjunct \(\ell=r_j\) appears in \(Y_j\).
No other disjunct can hold because the index-token position contains exactly one symbol.
Validity also gives \(r_j<n\), so position \(r_j\) exists and contains \(x_{r_j}\).
Within the surviving disjunct, \(\mathrm{Eq}_{r_j}(q)\) requires \(q=r_j\), and \(Q_1(q)\) is true exactly when \(x_{r_j}=1\).
Thus \(Y_j\) has truth value \(x_{r_j}\).
It follows that \(\chi_{L,k,F}\) has truth value \(F(x_{r_1},\ldots,x_{r_k})\), including when indices repeat.
Therefore, on valid inputs, \(\chi_{L,k,F}\) is true exactly when \(\text{\TaskIndexStar}_{L,F,k}\) outputs \(1\).

Since \(L\), \(k\), and \(F\) are fixed, all displayed disjunctions and formulas have fixed size.
Thus \(\chi_{L,k,F}\) is one fixed \(\FO[<]\) sentence.
The \(\FO[<]\)--\Starfree correspondence and the inclusions from \Cref{eq:fo-inclusions} prove the claim.
\end{proof}

\noindent\textbf{Remark (malformed inputs).}
These proofs assume the valid input formats defined at the start of this subsection.
If malformed inputs are admitted, first define an \(\FO[<]\) sentence \(\mathrm{Valid}\) that checks the corresponding format, then conjoin \(\mathrm{Valid}\) with the task sentence above.
Because $L$ and $k$ are fixed, these format checks are expressible in \(\FO[<]\).

%% file: appendices/theory-task-results/index/fixed-c-rasp.tex
\begin{proposition}[Fixed-range \(\text{\TaskIndex}_L\) in \CRASP]
\label{res:index-fixed-crasp}
For every fixed index bound \(L\), \(\text{\TaskIndex}_L\) has a \(\CRASP\) program of counting depth at most two.
\end{proposition}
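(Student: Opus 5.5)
The plan is to mirror the \(\FO[<]\) construction of \Cref{res:index-fixed-logic}, replacing each first-order position predicate with a prefix count. Work with valid inputs \(x_1\cdots x_n\overline r\) with \(1\le r\le\min\{n,L\}\) (one-based, as in the task definition). The program accepts at the final position \(N=n+1\) exactly when \(x_r=1\).

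\textbf{Depth one: naming positions.} Define the count stream
\[
C_{\mathrm{pos}}(i):=\#[j\le i:1],
\]
which equals \(i\) at position \(i\). For each fixed \(\ell\in[L]\), form the constant count \(\ell\) as a sum of \(\ell\) copies of the count primitive \(1\). Then define the Boolean stream
\[
E_\ell(i):=\bigl(C_{\mathrm{pos}}(i)\le \ell\bigr)\land\bigl(\ell\le C_{\mathrm{pos}}(i)\bigr),
\]
which is true exactly at position \(\ell\). This plays the role of \(\mathrm{Eq}_\ell\). Each \(E_\ell\) has counting depth one, and there are only \(L\) of them, so the program has fixed size.

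\textbf{Depth two: retrieving the bit.} For each \(\ell\in[L]\), define
\[
B_\ell(i):=\bigl(1\le \#[j\le i: E_\ell(j)\land Q_1(j)]\bigr).
\]
At the final position \(N\), the inner count is \(1\) if \(x_\ell=1\) and \(0\) otherwise, since \(\ell\le n<N\) means position \(\ell\) lies in the prefix. So \(B_\ell(N)\) holds if and only if \(x_\ell=1\). The output stream is
\[
O(i):=\bigvee_{\ell=1}^{L}\bigl(Q_{\overline\ell}(i)\land B_\ell(i)\bigr),
\]
with disjunction obtained from negation and conjunction. On a valid input, the final token is \(\overline r\), so only the \(\ell=r\) disjunct can fire, and it evaluates to \(x_r\). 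The counting depth along any dependency chain is two: \(C_{\mathrm{pos}}\) feeds \(E_\ell\), which feeds the second count.

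The argument is routine. The only point needing care is the bookkeeping: constants must be built from the base count primitives, and validity (\(r\le n\)) must be used so that position \(r\) is a data bit preceding the final position. If the paper's \CRASP variant does not allow the count primitive \(1\) as a constant, I would fall back on \(\#[j\le i:\neg Q_\sigma(j)\lor Q_\sigma(j)]\) as the position counter and compare against the count of the first position.
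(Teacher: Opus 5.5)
Your proposal is correct and follows essentially the same construction as the paper: a depth-one count that names position $\ell$, a depth-two count that retrieves $x_\ell$, and a final disjunction selected by the index token. The only difference is cosmetic: the paper counts payload bits, $\#[j\le i:Q_0(j)\lor Q_1(j)]$, rather than all positions, which makes no difference on valid inputs since every position $\ell\le n$ holds a data bit.
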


\begin{proof}
First identify payload bits and count how many have appeared:
\[
\begin{aligned}
\operatorname{Bit}(i)
&:=Q_0(i)\lor Q_1(i),\\
N(i)
&:=\#[j\leq i:\operatorname{Bit}(j)].
\end{aligned}
\]
At each payload position \(i\), \(N(i)=i\).

For each \(\ell\in[L]\), use \(N\) to mark payload position \(\ell\):
\[
A_\ell(i)
:=\operatorname{Bit}(i)\land(N(i)=\ell).
\]
Thus \(A_\ell\) is true exactly at position \(\ell\), when that position exists.

The second count retrieves the bit at \(\ell\), and \(B_\ell\) converts the count to a Boolean:
\[
\begin{aligned}
C_\ell(i)
&:=\#[j\leq i:Q_1(j)\land A_\ell(j)],\\
B_\ell(i)
&:=(1\leq C_\ell(i)).
\end{aligned}
\]
At the final index token, \(B_\ell=x_\ell\) for every payload position \(\ell\leq n\).

Finally, use the index token to select the requested bit:
\[
D(i)
:=
\bigvee_{\ell=1}^{L}
\left(Q_{\overline\ell}(i)\land B_\ell(i)\right).
\]
If the input ends in \(\overline r\), exactly \(Q_{\overline r}\) is true, so \(D(n)=x_r\).
The stream \(N\) has depth one, each \(C_\ell\) has depth two, and all remaining operations are Boolean.
\end{proof}

%% file: appendices/theory-task-results/index/fixed-boolean-c-rasp.tex
\begin{proposition}[Fixed-range \(\text{\TaskIndexStar}_{L,F,k}\) in \CRASP]
\label{res:index-bool-fixed-crasp}
For every fixed \(L\), \(k\), and \(F\), \(\text{\TaskIndexStar}_{L,F,k}\) has a \(\CRASP\) program of counting depth at most two.
\end{proposition}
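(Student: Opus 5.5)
The plan is to reuse the two-level counting construction from the proof of \Cref{res:index-fixed-crasp} unchanged for the data part, and then change only the final selection step. The valid input has the form $x_1\cdots x_n\overline{i_1}\cdots\overline{i_k}$ with $i_r\le\min\{n,L\}$.

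\textbf{Step 1 (retrieve every bit in range).} First define $\operatorname{Bit}(i):=Q_0(i)\lor Q_1(i)$ and $N(i):=\#[j\le i:\operatorname{Bit}(j)]$ (depth one). Next set $A_\ell(i):=\operatorname{Bit}(i)\land(N(i)=\ell)$ for each $\ell\in[L]$. Then take $C_\ell(i):=\#[j\le i:Q_1(j)\land A_\ell(j)]$ and $B_\ell(i):=(1\le C_\ell(i))$ (depth two). As in \Cref{res:index-fixed-crasp}, at every position after the payload, in particular at all $k$ index tokens and at the final position, $B_\ell=x_\ell$ for every $\ell\le n$.

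\textbf{Step 2 (recover the queried indices at the last position).} This is the only new ingredient. At the final position we need to know, for each slot $r\in[k]$, which symbol $\overline{\ell}$ occupies slot $r$. The order of the index tokens matters because $F$ need not be symmetric. We identify slots by counting index tokens. Let $\operatorname{Idx}(i):=\bigvee_{\ell=1}^L Q_{\overline\ell}(i)$ and $M(i):=\#[j\le i:\operatorname{Idx}(j)]$, which has depth one. Position $j$ holds slot $r$ exactly when $\operatorname{Idx}(j)\land M(j)=r$.

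To carry this information to the last position, define for each pair $(r,\ell)\in[k]\times[L]$ the count
\[
S_{r,\ell}(i):=\#[j\le i:Q_{\overline\ell}(j)\land M(j)=r],
\]
which has depth two, and set $E_{r,\ell}(i):=(1\le S_{r,\ell}(i))$. At the final position, $E_{r,\ell}$ holds iff $i_r=\ell$.

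\textbf{Step 3 (combine).} Set $Y_r(i):=\bigvee_{\ell=1}^{L}\bigl(E_{r,\ell}(i)\land B_\ell(i)\bigr)$ and output $\widehat F(Y_1,\ldots,Y_k)$ for a fixed Boolean formula $\widehat F$ computing $F$.

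\textbf{Verification.} Fix a valid input and its last position. For each slot $r$, exactly one $E_{r,\ell}$ is true, namely for $\ell=i_r$. Validity gives $i_r\le n$, so $B_{i_r}=x_{i_r}$. Hence $Y_r=x_{i_r}$, and the output equals $F(x_{i_1},\ldots,x_{i_k})$; repeated indices cause no problem.

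\textbf{Depth and size.} The counts $N$ and $M$ have depth one. The counts $C_\ell$ and $S_{r,\ell}$ each sit over a single depth-one count, so they have depth two. Everything else is Boolean combination or comparison with a constant. Since $L,k,F$ are fixed, the program has fixed size $O(kL)$ and counting depth two.

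I expect Step 2 to be the main point needing care. It requires that comparisons with the constants $r$ and $\ell$ are available; they are, since constant counts are built from $1$ and addition and equality comes from $\le$ and $\land$. It also requires that $M$ counts only index tokens, so that slot numbering is unaffected by $n$, which holds because the bit and index alphabets are disjoint.
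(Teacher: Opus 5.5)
Your proposal is correct and is essentially the paper's proof. The paper likewise reuses $\operatorname{Bit},N,A_\ell,C_\ell,B_\ell$, counts index tokens with a depth-one stream, and identifies slot contents with the depth-two selectors $\#[j\le i:Q_{\overline\ell}(j)\land(R(j)=r)]\geq 1$; it then combines them via $Y_r=\bigvee_\ell(\cdot\land B_\ell)$ and $\widehat F$, with the same depth accounting as yours.
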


\begin{proof}
Let \(\widehat F\) be a fixed Boolean formula computing \(F\). Define \(\operatorname{Bit}(i),N(i),A_\ell(i),C_\ell(i),\) and \(B_\ell(i)\) as in the proof of \Cref{res:index-fixed-crasp}. Additionally, identify and count index tokens:
\[
\begin{aligned}
\operatorname{Index}(i)
&:=\bigvee_{\ell=1}^{L}Q_{\overline\ell}(i),\\
R(i)
&:=\#[j\leq i:\operatorname{Index}(j)].
\end{aligned}
\]

For every query slot \(r\in[k]\) and index \(\ell\in[L]\), use \(R\) to test whether slot \(r\) contains \(\overline\ell\):
\[
J_{r,\ell}(i)
:=
\left(
1\leq
\#[j\leq i:
Q_{\overline\ell}(j)\land(R(j)=r)]
\right).
\]
At the final position, exactly \(J_{r,i_r}\) is true for each slot \(r\).

Use these selectors to retrieve each requested bit, then apply \(F\):
\[
\begin{aligned}
Y_r(i)
&:=
\bigvee_{\ell=1}^{L}
\left(J_{r,\ell}(i)\land B_\ell(i)\right),
\qquad 1\leq r\leq k,\\
D(i)
&:=\widehat F(Y_1(i),\ldots,Y_k(i)).
\end{aligned}
\]
Thus, at the final position, \(Y_r=x_{i_r}\) and \(D=F(x_{i_1},\ldots,x_{i_k})\).
The streams \(N\) and \(R\) have depth one, each \(C_\ell\) and \(J_{r,\ell}\) has depth two, and all remaining operations are Boolean.
\end{proof}

%% file: appendices/theory-task-results/index/fixed-bapo.tex
\begin{proposition}[Fixed-range \(\text{\TaskIndexStar}_{L,F,k}\) is \BAPO-easy]
\label{res:bapo-index}
For every fixed \(L\), \(k\), and \(F\), \(\text{\TaskIndexStar}_{L,F,k}\) has a \((k-1,k)\text{-}\BAPO\).
\end{proposition}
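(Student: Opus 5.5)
The plan is to give explicit prefix, attention, and suffix oracles \(f,g,h\) and verify correctness by splitting on where the split index \(t\) falls relative to the \(k\) index tokens. Write a valid input as \(s=x_1\cdots x_n\overline{i_1}\cdots\overline{i_k}\) with total length \(N=n+k\). The suffix oracle \(h\) is given \(t\) and the suffix \(s_{t+1:N}\), so it knows \(N=t+\lvert s_{t+1:N}\rvert\), hence \(n=N-k\), and hence \(m:=\max\{0,t-n\}\), the number of index tokens lying in the prefix. The attention function \(g\) also receives the suffix and \(t\), so it can compute \(N\), \(n\), and \(m\) in the same way. Since \(1\le t<N\), we have \(0\le m\le k-1\).

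\textbf{Case \(m=0\) (split inside the data, \(t\le n\)).} All \(k\) index tokens lie in the suffix, so the queried indices \(i_1,\ldots,i_k\) are known to \(g\). Let \(g(s_{t+1:N},t,\sigma,j)=1\) exactly when \(\sigma\in\bits\) and \(j\in\{i_1,\ldots,i_k\}\). Then \(\mathbb G_t\) consists of the data tokens at queried positions \(j\le t\), so \(\lvert\mathbb G_t\rvert\le k\). Because \(b=k\), every admissible \(G\) equals \(\mathbb G_t\). The suffix oracle reads each \(x_{i_r}\) either from \(G\) (if \(i_r\le t\)) or directly from the suffix (if \(i_r>t\)), and outputs \(F(x_{i_1},\ldots,x_{i_k})\). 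The summary \(f\) is not needed here.

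\textbf{Case \(1\le m\le k-1\) (split inside the index block).} The prefix contains all data and the first \(m\) index tokens. Define \(f\) so that, whenever the prefix has this form, it outputs the bits \(x_{i_1},\ldots,x_{i_m}\) padded to \(k-1\) bits. This is a fixed function of the prefix alone, since the prefix determines \(m\) by counting its index symbols, which are disjoint from \(\bits\); on other prefixes \(f\) is arbitrary. The attention function selects data positions \(j\in\{i_{m+1},\ldots,i_k\}\), which it reads from the suffix. Hence \(\lvert\mathbb G_t\rvert\le k-m\le k\), and again \(G=\mathbb G_t\). The suffix oracle \(h\) decodes the first \(m\) bits of the summary, reads the remaining bits from \(G\), and applies \(F\). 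Repeated indices cause no problem: they only shrink \(\mathbb G_t\), and \(h\) looks up each \(x_{i_r}\) by position.

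The main point requiring care is showing that the oracles are well defined as fixed functions of their allowed arguments, uniformly over \(n\). First, \(h\) and \(g\) must infer the case and \(n\) purely from \(t\) and the suffix; this works because the suffix length together with \(t\) gives \(N\). Second, the \(\BAPO\) condition quantifies over every attended set of size \(\min\{b,\lvert\mathbb G_t\rvert\}\), so we must ensure \(\lvert\mathbb G_t\rvert\le b=k\) in both cases, ensuring all selected tokens are delivered. Third, the summary must never need more than \(k-1\) bits, which holds because \(m\le k-1\). Fixedness of \(L\) is not essential to the argument beyond guaranteeing valid index symbols. The bandwidth \((k-1,k)\) is independent of \(n\), which gives the claim.
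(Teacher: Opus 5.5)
Your proposal is correct and follows the paper's proof: the summary stores the bits requested by the index tokens already in the prefix, padded to $k-1$ bits; the attention function selects the data positions named by the index tokens in the suffix, so $|\mathbb G_t|\le k$ forces $G=\mathbb G_t$; and the suffix oracle assembles the $k$ bits from the summary, $G$, and the suffix before applying $F$. The only cosmetic difference is that you recover the number of prefix index tokens from $t$ and the suffix length, whereas the paper counts the index tokens in the suffix, and the two are equivalent on valid inputs.
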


\begin{proof}
Consider a valid input of total length \(N=n+k\):
\[
s=x_1\cdots x_n\overline{i_1}\cdots\overline{i_k}.
\]
Let \(t\) be the split index and let \(r\) be the number of index tokens in \(s_{1:t}\).
If \(t\leq n\), then \(r=0\) and \(s_{1:t}=x_1\cdots x_t\).
If \(t>n\), then \(r=t-n\) and \(s_{1:t}=x_1\cdots x_n\overline{i_1}\cdots\overline{i_r}\).
Since \(t<N=n+k\), we have \(r\leq k-1\).
Define the prefix oracle \(f\) to send the bits that are selected by the index tokens in the prefix, padded with zeros to length \(k-1\):
\[
f(s_{1:t})
=
\begin{cases}
0^{k-1}
& \text{if the prefix contains no index token},\\
(x_{i_1},\ldots,x_{i_r},0^{k-1-r})
& \text{otherwise}.
\end{cases}
\]
For each prefix token \(s_j\), define the attention function \(g\) to select it if its index occurs in the suffix:
\[
g(s_{t+1:N},t,s_j,j)
=
\indicator[
s_j\in\bits,\,
j\leq L,\,
\overline j\text{ occurs in }s_{t+1:N}
].
\]
Thus \(\mathbb G_t\) contains the (distinct) prefix positions requested by \(\overline{i_{r+1}},\ldots,\overline{i_k}\), so \(\lvert\mathbb G_t\rvert\leq k-r\leq k=b\).
Therefore we always have \(G=\mathbb G_t\).

The suffix oracle $h$ counts the index tokens in $s_{t+1:N}$ to recover $r$. It reads the first $r$ requested values from $f(s_{1:t})$. For each remaining index $i_\ell$, it reads $x_{i_\ell}$ from $G$ when $i_\ell\leq t$ and from suffix position $i_\ell-t$ when $i_\ell>t$. It then outputs $F(x_{i_1},\ldots,x_{i_k})$. If an index is repeated, $h$ reuses the same bit. For any valid input and any split index, this BAPO can thus compute \(\text{\TaskIndexStar}_{L,F,k}\).
\end{proof}

%% file: appendices/theory-task-results/index/growing-range.tex
The results in \Cref{app:index-results} use a fixed index range. We now consider \TaskIndex and \TaskIndexStar from \Cref{def:task-index} without that restriction. We use binary-encoded indices to keep the alphabet finite as the range grows with the data length.
As before, the data are binary and indices are \(1\)-based.
Instead of one symbol per index, use two query-bit symbols \(\overline{0},\overline{1}\), distinct from the data symbols \(0,1\).
For \(n\geq1\), let \(\ell=\lceil\log_2(n+1)\rceil\), and let \(\operatorname{bin}_{\ell}(r)\) be the \(\ell\)-bit, least-significant-bit-first binary representation of \(r\), padded with zeros and written using query bits.
Valid inputs for \TaskIndex have the form
\[
 x_1\cdots x_n\operatorname{bin}_{\ell}(r),
 \qquad x_j\in\bits,
 \qquad 1\leq r\leq n.
\]
For fixed \(k\geq1\) and \(F:\bits^k\to\bits\), valid inputs for \(\text{\TaskIndexStar}_{F,k}\) have the form
\[
 x_1\cdots x_n
 \operatorname{bin}_{\ell}(i_1)\cdots\operatorname{bin}_{\ell}(i_k),
 \qquad x_j\in\bits,
 \qquad i_r\in[n].
\]
Indices may repeat.
The disjoint data and query alphabets identify the start of the query suffix, and its \(k\) equal-width blocks identify the individual indices, so no separators are needed.
The alphabet \(\{0,1,\overline{0},\overline{1}\}\) is fixed and independent of \(n\).
Unlike the native atomic-address encoding, this encoding exposes the bits of each index; the circuit result below is for this binary encoding.

\begin{proposition}[Growing-range \TaskIndex and \TaskIndexStar in uniform \(\ACz\)]
\label{res:index-fo-bit}
For every fixed \(k\geq1\) and \(F:\bits^k\to\bits\), growing-range \(\text{\TaskIndexStar}_{F,k}\) is representable in \(\FO[<,\operatorname{BIT}]\) on valid inputs.
In particular, this holds for \TaskIndex, \TaskIndexAnd, \TaskIndexOr, and \TaskIndexXor.
Consequently, these tasks have DLOGTIME-uniform \(\ACz\) and \(\TCz\) circuits on valid inputs.
\end{proposition}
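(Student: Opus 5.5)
We construct a single $\FO[<,\mathrm{BIT}]$ sentence whose truth on every valid input equals $\text{\TaskIndexStar}_{F,k}$, and then invoke $\FO[<,\mathrm{BIT}]=$ DLOGTIME-uniform $\ACz\subseteq\TCz$ from \Cref{eq:fo-inclusions}. As in \Cref{res:index-bool-fixed-logic}, fix a Boolean formula $\widehat F$ for $F$. We build formulas $Y_1,\ldots,Y_k$ whose truth value on a valid input is $x_{i_r}$, and take $\widehat F(Y_1,\ldots,Y_k)$. \TaskIndex is the case $k=1$, $F=\mathrm{id}$, and \TaskIndexAnd, \TaskIndexOr, \TaskIndexXor are the other listed instances.

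\textbf{Locating the blocks.} Work with $0$-based positions. The data occupy $0,\ldots,n-1$, and the query suffix begins at the unique position $s$ satisfying
\[
\mathrm{Start}(s):=(Q_{\overline0}(s)\lor Q_{\overline1}(s))\land\forall p<s\,(Q_0(p)\lor Q_1(p)).
\]
On valid inputs $s=n$.

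The main difficulty is that the block width $\ell=\lceil\log_2(n+1)\rceil$ and the block boundaries $n+(r-1)\ell$ depend on $n$. We therefore need to define addition of positions, since the fixed-offset trick via $\mathrm{SUCC}$ no longer works. For this we use the standard fact (Immerman) that $\FO[<,\mathrm{BIT}]$ defines addition $\mathrm{PLUS}(a,b,c)$, with carry-lookahead expressed through BIT, and hence also multiplication by a fixed constant $r$. Since the total length is $N=n+k\ell$ and the suffix length is $N-n=k\ell$, the width $\ell$ is definable as the unique $w$ with $\mathrm{PLUS}(s,kw,N-1+1)$, where ``$N-1$'' is the $\max$ position. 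We can avoid needing $N$ itself by writing $\mathrm{PLUS}(s, kw-1, m)$ with $\mathrm{max}(m)$. The start of block $r$ is then the position $b_r$ with $\mathrm{PLUS}(s,(r-1)w,b_r)$, and its $t$-th bit lies at $b_r+t$ for $0\le t<w$.

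\textbf{Decoding the index.} Let $\mathrm{Val}_r(q)$ express that position $q$ equals $i_r-1$, the data position holding $x_{i_r}$. Since indices are $1$-based, we compare $q+1$ with $i_r$ via PLUS. The formula says
\[
\exists u\,\bigl(\mathrm{PLUS}(q,1,u)\land\forall t<w\,[\,\mathrm{BIT}(u,t)\leftrightarrow\exists p\,(\mathrm{PLUS}(b_r,t,p)\land Q_{\overline1}(p))\,]\bigr).
\]
Here $t<w$ ranges over positions, and LSB-first ordering matches $\mathrm{BIT}$ directly. Since $u\le n<2^\ell$, higher bits of $u$ vanish, so agreement on the first $w$ bits pins down $u=i_r$.

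Then set $Y_r:=\exists q\,(\mathrm{Val}_r(q)\land Q_1(q))$. Validity gives $1\le i_r\le n$, so this $q$ exists, is unique, and lies in the data region, which gives $Y_r\equiv x_{i_r}$. Repeated indices need no special handling.

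\textbf{Correctness and remaining issues.} Correctness on valid inputs is then a direct unwinding of these definitions, as in \Cref{res:index-bool-fixed-logic}. Because $k$ and $F$ are fixed, the sentence has fixed size. The only real obstacle is the definability of addition, and of multiplication by the constant $r$ as iterated addition, in $\FO[<,\mathrm{BIT}]$. We cite this from \citet{immerman1999descriptive} rather than reprove it. We also check that constants such as $1$ and $k$ are definable via $\mathrm{Eq}_m$ from \Cref{res:index-fixed-logic}.
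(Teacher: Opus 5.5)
Your proposal is correct and follows essentially the paper's route. You build an $\FO[<,\mathrm{BIT}]$ sentence that uses Immerman's definable $\mathrm{PLUS}$ to cut the query suffix into $k$ equal-width blocks, matches $\mathrm{BIT}$ of the address against the $Q_{\overline{1}}$ positions of each block, and applies $\widehat F$ to the retrieved bits. The only structural difference is that you pin down the width via the constant multiple $kw$, while the paper existentially quantifies block endpoints sharing a common offset and chains them with $\mathrm{SUCC}$.

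One small imprecision: $\mathrm{Val}_r$ on its own can also hold at query-region positions $q \equiv i_r - 1 \pmod{2^\ell}$, so your uniqueness claim is overstated. The conjunct $Q_1(q)$ confines $q$ to data positions, where $u \le n < 2^\ell$, so $Y_r \equiv x_{i_r}$ still holds. The paper avoids the issue by quantifying over all bit indices and forcing bits beyond the block to be zero.
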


\begin{proof}
The universe is the set of \(0\)-based input positions, and \(\operatorname{BIT}(u,b)\) holds when bit \(b\) of \(u\) is one.
Use \(\mathrm{max}\) and \(\mathrm{SUCC}\) from \Cref{res:index-fixed-logic}, and let \(\operatorname{PLUS}(u,v,w)\) express the equality \(u+v=w\), which is definable in \(\FO[<,\operatorname{BIT}]\)~\citep{immerman1999descriptive}.

\medskip
\noindent\textbf{Query blocks.}
Let \(\operatorname{Query}(p):=Q_{\overline{0}}(p)\lor Q_{\overline{1}}(p)\).
Define predicates that identify the first query bit and partition the query suffix into \(k\) consecutive, equal-length blocks with start positions \(\mathbf s=(s_1,\ldots,s_k)\) and end positions \(\mathbf e=(e_1,\ldots,e_k)\):
\[
\begin{aligned}
\operatorname{FirstQuery}(s)
&:=\operatorname{Query}(s)\land
\neg\exists z\,(z<s\land\operatorname{Query}(z)),\\
\operatorname{Blocks}_k(\mathbf s,\mathbf e)
&:=\operatorname{FirstQuery}(s_1)\land\mathrm{max}(e_k)\\
&\quad{}\land\exists d\,\bigwedge_{j=1}^k
\operatorname{PLUS}(s_j,d,e_j)
\land\bigwedge_{j=1}^{k-1}\mathrm{SUCC}(e_j,s_{j+1}).
\end{aligned}
\]
On a valid input, $\operatorname{Blocks}_k(\mathbf s,\mathbf e)$ holds when \(\mathbf s\) and \(\mathbf e\) partition the query suffix into \(k\) consecutive blocks of equal length \(d+1\), where positions \(s_j\) through \(e_j\), inclusive, contain the encoding of the \(j\)th query index \(i_j\).
Since the suffix has length \(k\ell\), necessarily \(d+1=\ell\); thus the endpoints are uniquely determined.
The conjunctions have fixed size because \(k\) is fixed.

\medskip
\noindent\textbf{Address matching.}
The following formula says that the block from \(s\) through \(e\) encodes the integer \(r\):
\[
\begin{aligned}
\operatorname{Addr}(r,s,e)
:=\forall b\,\bigl[
&\operatorname{BIT}(r,b)\leftrightarrow\\
&\exists z\,(\operatorname{PLUS}(s,b,z)\land z\leq e
\land Q_{\overline{1}}(z))\bigr].
\end{aligned}
\]
This matches bit \(b\) of \(r\) with the query bit at position \(z=s+b\), for every $b$.
When \(b>e-s\), the right side is false, requiring all higher bits of \(r\) to be zero.
We can then use this to retrieve the data bit at a queried position. The data bit \(x_r\) occurs at position \(r-1\), so the retrieved bit is defined by
\[
Y_j(s_j,e_j)
:=\exists p\,\exists r\,
\bigl(Q_1(p)\land\mathrm{SUCC}(p,r)
\land\operatorname{Addr}(r,s_j,e_j)\bigr).
\]
The integer \(r\leq n\) is in the universe even when \(r=n\), since at least one query digit follows the data.
Thus, for the uniquely determined block endpoints, \(Y_j\) holds exactly when \(x_{i_j}=1\).

\medskip
\noindent\textbf{Sentence.}
Let \(\widehat F\) be a fixed Boolean formula computing \(F\).
The desired sentence is
\[
\varphi_{k,F}
:=\exists s_1\,\exists e_1\cdots\exists s_k\,\exists e_k\,
\bigl(\operatorname{Blocks}_k(\mathbf s,\mathbf e)
\land\widehat F(Y_1(s_1,e_1),\ldots,Y_k(s_k,e_k))\bigr).
\]
For \TaskIndex, take \(k=1\) and \(F(b)=b\).
For \TaskIndexAnd, \TaskIndexOr, and \TaskIndexXor, use conjunction, disjunction, and parity, respectively.
The sentence has fixed size, so the standard \(\FO[<,\operatorname{BIT}]\) characterization gives DLOGTIME-uniform \(\ACz\), and hence \(\TCz\).
As in the fixed-range results, only agreement on valid inputs is required; the sentence defines a total language with unrestricted behavior on malformed inputs.
\end{proof}

\begin{proposition}[Growing-range \TaskIndex is not regular]
\label{res:index-nonregular}
Growing-range \TaskIndex is not computable by any finite-state machine, even when correctness is required only on valid inputs.
For every fixed \(k\geq1\), growing-range \TaskIndexAnd, \TaskIndexOr, and \TaskIndexXor are likewise not finite-state computable.
In particular, their valid-input behavior has no star-free realization.
\end{proposition}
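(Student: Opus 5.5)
We argue by a Myhill--Nerode style fooling-set argument adapted to promise problems (correctness only on valid inputs). Suppose a DFA \(\mathcal A\) with \(S\) states agrees with \TaskIndex on every valid input. Fix a data length \(n\) with \(2^n>S\), and consider the \(2^n\) data prefixes \(x\in\bits^n\). By pigeonhole, two distinct prefixes \(x\neq x'\) drive \(\mathcal A\) from its start state to the same state. Pick a position \(r\in[n]\) with \(x_r\neq x'_r\). Appending the suffix \(\operatorname{bin}_\ell(r)\) with \(\ell=\lceil\log_2(n+1)\rceil\) gives two valid inputs, since the data lengths agree and hence so does the query width. Both inputs end in the same state, yet their correct outputs \(x_r\) and \(x'_r\) differ. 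This contradiction shows that no finite-state machine computes \TaskIndex, even under the promise. The argument never reads the query encoding beyond its validity, so it applies equally to the atomic-address encoding used in the experiments, with the suffix \(\overline r\) in place of \(\operatorname{bin}_\ell(r)\).

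For \(\text{\TaskIndexStar}_{F,k}\) with \(F\) conjunction, disjunction, or parity, we reduce to the \TaskIndex argument by choosing the \(k\) query indices so that the output depends only on \(x_r\). Indices may repeat, so we take \(i_1=\cdots=i_k=r\). Then AND and OR return \(x_r\), and the valid suffix \(\operatorname{bin}_\ell(r)^k\) separates \(x\) from \(x'\) exactly as before. For parity with \(k\) even, repeating \(r\) would cancel, so we instead choose \(i_1=r\) and \(i_2=\cdots=i_k=r_0\) for a fixed position \(r_0\). We need \(x_{r_0}=x'_{r_0}\), which we arrange by restricting the pigeonhole step to prefixes with \(x_{r_0}=0\), say \(r_0=n\). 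That leaves \(2^{n-1}>S\) prefixes, and the \(k-1\) copies of \(r_0\) contribute the same value to both outputs. The outputs therefore still differ exactly by \(x_r\oplus x'_r=1\). The same padding trick works for AND (pad with a position fixed to 1) and for OR (pad with a position fixed to 0), should distinct indices be required.

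The final claim follows because star-free languages are regular. Any star-free language \(L\) agreeing with the task on valid inputs would give a DFA that is correct on valid inputs, which the argument above rules out. The main point of care is not the pigeonhole step itself but the promise setting. We must make sure the distinguishing suffix is the same string for both prefixes, and that appending it yields valid inputs. This is why we fix a common data length \(n\), which fixes the width \(\ell\), and why we choose the index tuple, including its parity padding, independently of which pair \(x,x'\) collides, except for the single distinguishing position \(r\).
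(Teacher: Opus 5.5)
Your proof is correct and is essentially the paper's argument: a single valid query suffix appended to two colliding equal-length data prefixes forces different outputs from the same automaton state, so no finite-state machine is correct even on valid inputs. The only difference is in the $k$-ary case. You use repeated indices, which the paper's growing-range input format permits, while the paper queries $k-1$ distinct reserved positions fixed to the operation's neutral element, and this also covers a distinct-index convention. For that distinct-index variant, your closing remark would need $k-1$ distinct padding positions rather than one repeated position.
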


\begin{proof}
Fix \(n\).
Any two distinct data strings \(x,y\in\bits^n\) differ at some position \(r\in[n]\).
Appending the same valid suffix \(\operatorname{bin}_{\ell}(r)\) makes \TaskIndex output \(x_r\) on one string and \(y_r\) on the other.
Therefore a machine correct on all valid inputs must reach distinct states after all \(2^n\) length-\(n\) data prefixes.
No fixed finite-state machine can do so for every \(n\).

For a fixed \(k\), reserve \(k-1\) data positions and set them to the neutral element of the operation: one for AND and zero for OR or XOR. Let all remaining positions vary.
Given two different strings in this restricted family, choose a differing, nonreserved position \(r\), query \(r\) together with the \(k-1\) reserved positions, and use each address once.
The two outputs differ.
Thus there are at least \(2^{n-(k-1)}\) distinguishable length-\(n\) data prefixes, again ruling out a finite-state machine.
The argument uses only that every address \(r\in[n]\) has some valid suffix representation; it does not depend on the particular binary syntax.
\end{proof}

\begin{proposition}[\BAPO bounds for growing-range \TaskIndex with atomic addresses]
\label{res:index-bapo-growing}
In the native encoding with one atomic token \(\overline r\) for every available address \(r\), growing-range \TaskIndex has a \((0,1)\)-\BAPO. For every fixed \(k\) and \(F:\bits^k\to\bits\), growing-range \(\text{\TaskIndexStar}_{F,k}\) has a \((k-1,k)\)-\BAPO.
\end{proposition}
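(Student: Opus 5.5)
The plan is to reuse the construction behind \Cref{res:bapo-index} almost verbatim. That proof never uses the bound \(L\) beyond its definition of the attention function, so dropping the bound should cost nothing. For \TaskIndex the construction is the existing \((0,1)\)-\BAPO of \citet{schnabel2025lost}, and I would recheck it in the growing-range encoding. Fix an input \(s=x_1\cdots x_n\overline r\) and a split \(t<N=n+1\). Then \(t\leq n\), so the suffix always contains the address token \(\overline r\). The attention function is \(g(s_{t+1:N},t,s_j,j)=\indicator[s_j\in\bits,\ \overline j\text{ occurs in }s_{t+1:N}]\). It selects at most the single prefix position \(r\), so \(\lvert\mathbb G_t\rvert\leq 1=b\) and \(G=\mathbb G_t\). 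The suffix oracle \(h\) reads \(r\) from the final token. If \(r\leq t\), it outputs the bit carried by the unique element of \(G\); otherwise it reads \(x_r\) directly from suffix position \(r-t\).

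For \(\text{\TaskIndexStar}_{F,k}\), I would take \(f\), \(g\), \(h\) exactly as in the proof of \Cref{res:bapo-index}, deleting the condition \(j\leq L\) from \(g\). Then I would rerun that proof's case analysis on the number \(r\) of index tokens in the prefix. Because the input ends in \(k\) index tokens and \(t<N=n+k\), we still have \(r\leq k-1\). So \(f\) can send the \(r\) bits already selected by the prefix's index tokens, padded to \(k-1\) bits. The attention set \(\mathbb G_t\) consists of the distinct data positions \(j\leq t\) whose address \(\overline j\) appears in the suffix. There are at most \(k-r\leq k\) of these, so again \(G=\mathbb G_t\). The suffix oracle recovers \(r\) from \(t\) and the count of index tokens in the suffix. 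Since \(n=N-k\) and \(N=t+\lvert s_{t+1:N}\rvert\) are known, it can locate every data bit. It then assembles each \(x_{i_\ell}\) from \(f\), \(G\), or the suffix, and outputs \(F\). Repeated indices simply reuse a bit.

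The only point needing care is that \(f\) must not depend on the growing range. When \(t>n\) the prefix contains the whole data string and the first \(r\) indices, so \(f\) has a well-defined value determined by the prefix alone. I would also check that \(h\) never needs the positions of earlier index tokens beyond knowing \(r\). I expect no real obstacle. The main thing to verify is that the atomic-token alphabet may grow with \(n\), since \(\Sigma\) in the \BAPO definition is arbitrary. Given that, the constants \((0,1)\) and \((k-1,k)\) are independent of \(n\), which gives \BAPO-easiness.
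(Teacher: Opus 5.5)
Your proposal is correct and matches the paper's proof. The paper likewise reuses the construction of \Cref{res:bapo-index}, noting that the bandwidth depends only on the number of query tokens and not on the size of the atomic address vocabulary. Your explicit recheck (dropping $j\leq L$ from $g$, $r\leq k-1$, $\lvert\mathbb G_t\rvert\leq k-r\leq b$ so $G=\mathbb G_t$, and $h$ recovering $r$ from the suffix) spells out details the paper leaves implicit.
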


\begin{proof}
Use the constructions of \Cref{res:bapo-index}. The bandwidth depends on the number of query tokens, not on the size of the atomic address vocabulary. The proof of \Cref{res:bapo-index} therefore applies for every growing address range and give bandwidths \((0,1)\) and \((k-1,k)\) respectively. Note that this result uses atomic address tokens. For the binary-block encoding above, a \BAPO split can fall inside an address and separate its bits.
\end{proof}

\begin{proposition}[Three-round \MPC protocol for growing-range \TaskIndex]
\label{res:index-mpc}
Fix \(0<\delta<1\).
Under the indexed block input convention of \citet{sanford2024parallel}, growing-range \TaskIndex and \(\text{\TaskIndexStar}_{F,k}\) with fixed \(k,F\) have a three-round \MPC protocol with \(O(n^\delta)\) words of local memory per machine and polynomial total memory.
\end{proposition}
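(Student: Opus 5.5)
We will design an explicit three-round protocol in the \MPC model with $M=\Theta(n^\delta)$ words of local memory per machine and $\Theta(n^{1-\delta})$ data machines. Under the indexed block input convention of \citet{sanford2024parallel}, each machine initially holds a contiguous block of $M$ tokens together with their positions. Hence machine $\mu_t$ holds positions $(t-1)M+1,\ldots,tM$. We designate one machine $\mu^\star$ as the coordinator, for instance the machine holding the final input token. In the binary-encoded variant we additionally allow an $O(1)$-round preprocessing step that assembles the $\ell=O(\log n)$ query bits of each index at one machine. This step fits easily in memory, since $k\ell=O(\log n)\ll n^\delta$. If an address block straddles two machines, the lower machine forwards its fragment; we fold this forwarding into round one.

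The protocol runs as follows. In round~1, every machine holding a query token or query block decodes the addresses it holds and sends each pair $(j,i_j)$ to $\mu^\star$. There are at most $k$ such messages, so $\mu^\star$ receives $O(k)=O(1)$ words. In round~2, $\mu^\star$ computes for each $j$ the owner $\lceil i_j/M\rceil$ of position $i_j$. This is pure local arithmetic, which works because the block convention makes ownership a known function of the address. It then sends the request $(j,i_j)$ to that owner, so each machine receives at most $k$ requests. In round~3, each owner replies to $\mu^\star$ with $(j,x_{i_j})$. Finally, $\mu^\star$ locally evaluates $F(x_{i_1},\ldots,x_{i_k})$, or $x_{i_1}$ in the \TaskIndex case, and outputs the result. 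Repeated indices simply generate repeated requests, which are harmless.

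Next, we verify the resource bounds. In every round, each machine sends and receives $O(k\log n)$ bits, which is $O(1)$ words, on top of its $M$ stored tokens, so local memory stays $O(n^\delta)$. The number of machines is $O(n^{1-\delta})$, so total memory is $O(n)$, which is certainly polynomial. The round count is three, independent of $n$, $k$, and $F$.

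The main obstacle is reconciling the protocol with the exact conventions of \citet{sanford2024parallel}. These concern how the output is designated, and whether a machine may address another machine by an index it computes. The latter matters because round~2 relies on the coordinator computing $\lceil i_j/M\rceil$. If outputs must be produced at a fixed machine, we choose $\mu^\star$ to be that machine. If the model requires messages to be routed only to known machine identifiers, the block convention already supplies identifiers, so the computed address is legitimate. We expect these adjustments to require care in the write-up but no new ideas.
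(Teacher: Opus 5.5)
Your protocol has the same skeleton as the paper's: gather the query at one coordinator, compute each address's owner from the block layout, send requests, have the owners reply, and evaluate $F$ locally. The gap is in round~1, for the encoding this proposition actually concerns.

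\textbf{The round count fails for binary addresses.} In the growing-range section, the paper fixes binary-encoded addresses written with the query-bit symbols $\overline{0},\overline{1}$, so that the alphabet stays finite. This is the main case, not an optional variant. Your round~1 asks each machine to decode the addresses it holds. That forces you to add an $O(1)$-round preprocessing step, which already breaks the three-round claim. Your repair, folding the forwarding of straddling fragments into round~1, does not work. The machine receiving a forwarded fragment learns it only at the end of round~1. It can therefore send the decoded pair $(j,i_j)$ to $\mu^\star$ no earlier than round~2, which pushes the requests and replies to rounds~3 and~4. Local decoding also silently assumes every machine knows $n$ and $\ell$, so that it can tell which block $j$ and which bit offset each query symbol belongs to.

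\textbf{The fix.} The paper's round~1 avoids local decoding entirely. Every machine sends its raw query bits, tagged with their positions, directly to the coordinator. There are only $k\ell=O(\log n)$ of them, so the coordinator's memory stays $O(n^\delta)$. Note that this is $O(\log n)$ words received, not $O(1)$ as in your accounting. The coordinator then does the following:
\begin{enumerate}
\item sorts the bits by position;
\item obtains $\ell$ as the number of query bits divided by $k$;
\item splits them into $k$ equal-width blocks and decodes $i_1,\ldots,i_k$.
\end{enumerate}
Rounds~2 and~3 then proceed exactly as you wrote, for three rounds in total. With atomic address tokens, your protocol already works as stated.
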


\begin{proof}
Let \(N=n+k\ell=\Theta(n)\) be the total input length and \(B=\lceil N^\delta\rceil\) the block size.
As in \citet{sanford2024parallel}, each input symbol carries its position, and the symbol at \(0\)-based position \(p\) is initially on machine \(1+\lfloor p/B\rfloor\).
Each word has \(O(\log N)\) bits.

In the first round, every machine sends its query bits, identified by the symbols \(\overline{0},\overline{1}\), together with their positions, to machine \(1\), retaining the data bits locally.
Machine \(1\) orders the query bits by position and decodes the \(k\) equal-width indices.
In the second round, it sends a request tagged by \(j\) for each index \(i_j\) to machine \(1+\lfloor(i_j-1)/B\rfloor\).
In the third round, each owner returns the requested bits with their query tags.
Machine \(1\) then locally outputs the retrieved bit, or applies \(F\) to the returned bits in query order.

The first round gathers \(k\ell=O(\log(n+1))\) words for fixed \(k\).
The next two rounds each communicate \(O(k)\) constant-size messages.
Together with the \(O(B)\) words of retained data per machine, these satisfy \(O(n^\delta)\) local memory and per-round communication bounds.
The input blocks use \(O(n)\) total words; any additional machines allowed by the model can remain idle.
\end{proof}

\textbf{Remark.} If the indexed input records are instead initially distributed arbitrarily, subject to the same local memory bound, one preliminary round can route each record at position \(p\) to machine \(1+\lfloor p/B\rfloor\).
Each sender holds \(O(B)\) words and each destination receives at most \(B\) records, so this establishes the required layout within the same asymptotic bounds.
Thus unknown initial owners cost at most one additional round, provided the records retain their input positions.

%% file: appendices/theory-task-results/index/c-rasp-lower-bound.tex
\begin{proposition}[Growing-range \TaskIndex requires growing \CRASP size]
\label{res:index-crasp-growing}
Fix \(n\geq1\), and let \(s\) be the number of primitive prefix-count nodes in a base \CRASP program.
If the program answers every growing-range \TaskIndex query \(r\in[n]\) on every binary data prefix \(x_1\cdots x_n\), then
\[
    s\geq \frac{n}{\log_2(n+1)}.
\]
For fixed \(k\geq1\) and \(n\geq k\), any such program computing \TaskIndexAnd, \TaskIndexOr, or \TaskIndexXor with \(k\) distinct query indices satisfies
\[
    s\geq \frac{n-k+1}{\log_2(n+1)}.
\]
Consequently, none of these growing-range tasks has a fixed-size base \CRASP program valid for all \(n\).
\end{proposition}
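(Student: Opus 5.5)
The proof is a counting (fooling-set) argument over the state a \CRASP program carries at the end of the data prefix. It mirrors the one-way communication intuition behind \Cref{res:index-nonregular}. The key observation is that in base \CRASP, every stream value at a position $i$ is a fixed function of the tuple of primitive prefix counts at $i$ and the symbol $Q_\sigma(i)$. By induction on the program, Boolean streams and count-arithmetic nodes are positionwise functions of their arguments. Each primitive count node $\#[j\leq i:P(j)]$ at a suffix position $i>n$ splits as
\[
\#[j\leq n:P(j)] + \#[n<j\leq i:P(j)].
\]

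I would show by induction over the topological order of the program that every stream value at suffix positions depends on the data prefix $x_1\cdots x_n$ only through the vector
\[
v(x)=\bigl(c_1(n),\ldots,c_s(n)\bigr)
\]
of the $s$ primitive counts evaluated at position $n$. Here I also note that $P(j)$ for $j\leq n$ is fully determined by the prefix. The inductive step is the split above. The first summand is a coordinate of $v(x)$, and the second depends, by the inductive hypothesis, on $v(x)$ and the suffix only. Hence if two prefixes $x\neq y$ of length $n$ give $v(x)=v(y)$, the program outputs the same value on $x\,\mathrm{bin}_\ell(r)$ and $y\,\mathrm{bin}_\ell(r)$ for every valid suffix. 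This argument does not depend on the query encoding, whether atomic or binary.

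Next comes the counting step. Each $c_t(n)\in\{0,\ldots,n\}$, so $v$ takes at most $(n+1)^s$ values. For \TaskIndex, all $2^n$ prefixes must have distinct vectors: if $x\neq y$ differ at position $r$, querying $r$ separates them. This gives $(n+1)^s\geq 2^n$, that is, $s\geq n/\log_2(n+1)$. For the $k$-ary variants, I would reuse the restricted family from \Cref{res:index-nonregular}. Fix $k-1$ reserved positions to the neutral element, and let the other $n-k+1$ positions vary. Any two distinct members are then separated by querying a differing free position together with the reserved ones, using $k$ distinct indices. This gives $(n+1)^s\geq 2^{n-k+1}$. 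The final consequence follows because these bounds grow unboundedly with $n$.

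The main obstacle I expect is making the inductive invariant airtight for nested counts. A depth-two count $\#[j\leq i:P(j)]$, where $P$ itself depends on earlier counts, must still be captured by its own coordinate at $n$. This works because $s$ counts every primitive count node, nested ones included, and the prefix part of each is frozen at $n$. I would state the invariant carefully as ``values at positions $>n$ are functions of $(v(x),\text{suffix},i)$'' to avoid confusion between count values and positions.
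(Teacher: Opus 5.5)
Your proposal is correct and follows essentially the same route as the paper: record the vector of the $s$ primitive prefix counts at position $n$ and show by induction that it determines the computation on any query suffix. Both arguments rest on the recurrence $C_t(i)=C_t(i-1)+\mathbf{1}[P_t(i)]$, which is equivalent to your prefix/suffix split. The bound then follows by comparing $(n+1)^s$ with $2^n$, or with $2^{n-k+1}$ for the reserved-neutral-position family. Your induction over nodes in topological order, with the invariant stated in terms of $(v(x),\text{suffix},i)$, handles nested counts just as cleanly as the paper's position-by-position evaluation.
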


\begin{proof}
Fix \(n\), and write the primitive prefix counts as \(C_1,\ldots,C_s\).
After reading a data prefix \(x=x_1\cdots x_n\in\bits^n\), record
\[
    \tau(x):=(C_1(n),\ldots,C_s(n))\in\{0,\ldots,n\}^s.
\]
For this fixed \(n\), the vector \(\tau(x)\) determines the computation on any given query suffix.
To see this, process the suffix symbols in order and the program nodes in topological order at each position.
All non-counting operations are pointwise, so their values are determined by the current symbol and earlier nodes at that position.
If \(C_t\) counts predicate \(P_t\), then
\[
    C_t(i)=C_t(i-1)+\mathbf{1}[P_t(i)].
\]
The first term is its saved count, and \(P_t(i)\) depends only on earlier program nodes at position \(i\).
Thus all values on the suffix are determined inductively, even when counts depend on other counts.
There is no need to save Boolean streams or arithmetic combinations of counts separately.

If \(x,y\in\bits^n\) differ, choose \(r\in[n]\) with \(x_r\neq y_r\) and append the same query suffix \(\operatorname{bin}_{\ell}(r)\) to both prefixes.
Since the prefixes have the same length, this suffix is valid for both.
Correctness requires different outputs, so \(\tau(x)\neq\tau(y)\).
There are \(2^n\) possible data prefixes and at most \((n+1)^s\) count vectors.
Hence \((n+1)^s\geq2^n\), proving the first bound.

For \TaskIndexAnd, \TaskIndexOr, or \TaskIndexXor, reserve \(k-1\) fixed data positions and set their bits to the operation's neutral value: \(1\) for AND and \(0\) for OR or XOR.
Let the remaining \(n-k+1\) data bits vary freely.
Given two different prefixes in this family, append the same query suffix selecting one differing position and all reserved positions.
These \(k\) query indices are distinct, and the output is the differing data bit.
Thus the \(2^{n-k+1}\) prefixes have distinct count vectors, giving \((n+1)^s\geq2^{n-k+1}\) and the second bound.

The argument also applies to atomic query indices: it uses only that the same query suffix can be appended to any two data prefixes of the same length.
\end{proof}

%% file: appendices/theory-task-results/sparse/task.tex
We use \(\text{\TaskSparseStar}_{I,F}\) from \Cref{def:task-sparse}, with fixed \(I=(i_1,\ldots,i_k)\) and \(F\).
Write \(m=\max_r i_r\).
Inputs are nonempty binary strings \(x=x_1\cdots x_n\), with one token per bit.

%% file: appendices/theory-task-results/sparse/logic.tex
\begin{proposition}[\TaskSparseStar in \Starfree]
\label{res:sparse-logic}
For every fixed \(k\), \(I\), and \(F\), the task \(\text{\TaskSparseStar}_{I,F}\) is representable in \(\FO[<]\), and is therefore star-free and computable by DLOGTIME-uniform \(\ACz\) and \(\TCz\) circuits.
\end{proposition}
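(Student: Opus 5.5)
The plan is to write down one fixed \(\FO[<]\) sentence that defines exactly the strings on which \(\text{\TaskSparseStar}_{I,F}\) outputs \(1\). We reuse the position predicates \(\mathrm{min}\), \(\mathrm{SUCC}\), and \(\mathrm{Eq}_m\) from the proof of \Cref{res:index-fixed-logic}. Star-freeness and the circuit bounds then follow from the \(\FO[<]\)--\Starfree correspondence and the inclusions in \Cref{eq:fo-inclusions}. The one difference from the fixed-range \TaskIndex setting is that there is no query token: the selected positions \(i_1,\ldots,i_k\) are constants built into the formula. We also convert to the \(0\)-based positions of the logic, so bit \(x_{i_r}\) sits at position \(i_r-1\).

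\textbf{Step 1: reading the selected bits.} For each \(r\in[k]\), define
\[
Y_r:=\exists q\,\bigl(\mathrm{Eq}_{i_r-1}(q)\land Q_1(q)\bigr).
\]
Whenever \(n\geq m\), position \(i_r-1\) exists, so \(Y_r\) is true exactly when \(x_{i_r}=1\). Repeated indices need no special handling, since they simply produce identical formulas.

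\textbf{Step 2: handling short inputs.} The definition of \TaskSparseStar sets the output to \(0\) when \(n<m\). To capture this case split, define the length guard
\[
\mathrm{Long}:=\exists q\,\mathrm{Eq}_{m-1}(q),
\]
which holds exactly when \(n\geq m\). The sentence is then
\[
\psi_{I,F}:=\mathrm{Long}\land\widehat F(Y_1,\ldots,Y_k),
\]
where \(\widehat F\) is a fixed Boolean formula for \(F\), with each variable \(z_r\) replaced by \(Y_r\).

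\textbf{Step 3: correctness.} If \(n<m\), then \(\mathrm{Long}\) is false, so \(\psi_{I,F}\) is false, matching output \(0\). If \(n\geq m\), then every \(Y_r\) has truth value \(x_{i_r}\), so \(\psi_{I,F}\) has truth value \(F(x_{i_1},\ldots,x_{i_k})\). Because \(k\), \(I\), and \(F\) are fixed, each \(\mathrm{Eq}_{i_r-1}\) expands to a formula of fixed size, so \(\psi_{I,F}\) is a single \(\FO[<]\) sentence that works for every \(n\).

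No step here is a real obstacle. The only place that needs care is Step 2: the guard must reproduce the output-\(0\) convention for short inputs, and the shift between \(1\)-based task indices and \(0\)-based logical positions must be applied consistently in both \(Y_r\) and \(\mathrm{Long}\).
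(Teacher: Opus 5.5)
Your proposal is correct and matches the paper's proof: the paper builds the same selected-bit sentences (called \(B_r\) there) from \(\mathrm{Eq}_{i_r-1}\), conjoins the same length guard \(\exists p\,\mathrm{Eq}_{m-1}(p)\) with \(\widehat F(B_1,\ldots,B_k)\), and concludes via the \(\FO[<]\)--\Starfree correspondence and \Cref{eq:fo-inclusions}.
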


\begin{proof}
The selected indices \(I=(i_1,\ldots,i_k)\) are fixed, so we can identify each selected bit with a fixed position predicate and apply a Boolean formula computing \(F\).
We retain the one-based indices of \Cref{def:task-sparse}; in the zero-based logical structure, \(x_{i_r}\) occurs at position \(i_r-1\).

\medskip
\noindent\textbf{Position predicates.}
Using \(\mathrm{Eq}_j(q)\) from \Cref{res:index-fixed-logic}, which holds exactly when \(q=j\), define
\[
B_r
:=
\exists q\left(
\mathrm{Eq}_{i_r-1}(q)
\land Q_1(q)
\right).
\]
For each \(r\in[k]\), \(B_r\) is true exactly when \(n\geq i_r\) and \(x_{i_r}=1\).

\medskip
\noindent\textbf{Sentence.}
Let \(\widehat F(z_1,\ldots,z_k)\) be a fixed Boolean formula computing \(F\).
The positive instances of \(\text{\TaskSparseStar}_{I,F}\) are defined by
\[
\psi_{I,F}
:=
\left(
\exists p\,\mathrm{Eq}_{m-1}(p)
\right)
\land
\widehat F(B_1,\ldots,B_k).
\]
Here each variable \(z_r\) in \(\widehat F\) is replaced by the sentence \(B_r\).

The first conjunct holds exactly when \(n\geq m\), ensuring that all selected positions exist.
In that case, each \(B_r\) has truth value \(x_{i_r}\), so \(\psi_{I,F}\) has truth value \(F(x_{i_1},\ldots,x_{i_k})\).
If \(n<m\), the first conjunct forces the output to \(0\), as required, regardless of \(F\).
Thus \(\psi_{I,F}\) defines \(\text{\TaskSparseStar}_{I,F}\), including when selected indices repeat.

Since \(k\), \(I\), and \(F\) are fixed, \(\psi_{I,F}\) is one fixed \(\FO[<]\) sentence.
The \(\FO[<]\)--\Starfree correspondence and the inclusions in \Cref{eq:fo-inclusions} prove the claim.
\end{proof}

%% file: appendices/theory-task-results/sparse/c-rasp.tex
\begin{proposition}[\TaskSparseStar in \CRASP]
\label{res:sparse-crasp}
For every fixed \(k\), \(I\), and \(F\), the task \(\text{\TaskSparseStar}_{I,F}\) has a base \(\CRASP\) program of counting depth at most two.
\end{proposition}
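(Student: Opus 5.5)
The plan is to reuse the skeleton of the fixed-range construction in \Cref{res:index-fixed-crasp}. The only change is that the positions to be read are now the fixed indices \(i_1,\ldots,i_k\) rather than a query-dependent index, so no selection step is needed. Throughout, write \(m=\max_r i_r\). The program first counts positions with a depth-one count, \(N(i):=\#[j\leq i:1]\), so that \(N(i)=i\) at every position, because every token is a data bit. For each \(r\in[k]\) it then marks the selected position with the pointwise Boolean stream \(A_r(i):=(N(i)=i_r)\). Equality is available as a conjunction of two \(\leq\) comparisons against the constant count \(i_r\), which is a sum of \(i_r\) copies of the constant count \(1\) and has fixed size because \(i_r\) is fixed.

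Next, the program retrieves each selected bit with a second count, \(C_r(i):=\#[j\leq i:Q_1(j)\land A_r(j)]\), and converts it to a Boolean \(B_r(i):=(1\leq C_r(i))\). At the last position \(n\), \(C_r(n)\) is \(1\) exactly when \(n\geq i_r\) and \(x_{i_r}=1\), and \(0\) otherwise. A single length guard handles the case distinction in \Cref{def:task-sparse}: \(G(i):=(m\leq N(i))\), a comparison of the depth-one count \(N\) with a constant. The output stream is \(D(i):=G(i)\land\widehat F(B_1(i),\ldots,B_k(i))\), where \(\widehat F\) is a fixed Boolean formula for \(F\).

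For correctness, consider the final position. When \(n\geq m\), every \(B_r(n)\) equals \(x_{i_r}\), so \(D(n)=F(x_{i_1},\ldots,x_{i_k})\); repeated indices cause no problem because each \(B_r\) is computed independently. When \(n<m\), the guard \(G(n)\) is false, so \(D(n)=0\) as required. For depth, \(N\) has counting depth one, each \(C_r\) has depth two since it counts a predicate built from \(N\), and everything else is pointwise. The program size depends only on \(k\), \(I\) and \(F\).

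The only step needing care is making sure the constant \(i_r\) and the equality test are legitimate base \CRASP primitives. The constant is a fixed sum of the count primitive \(1\), and the equality test is a conjunction of \(C_1\leq C_2\) comparisons, so both are expressible. Apart from that, the argument is a direct specialization of \Cref{res:index-fixed-crasp}.
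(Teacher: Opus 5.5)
Your proposal is correct and follows essentially the same route as the paper. Both reuse the counting-depth-two skeleton of \Cref{res:index-fixed-crasp} (a depth-one position count $N$, then a depth-two count of ones at each marked position converted to a Boolean) and apply $\widehat F$ under the length guard $(m\leq N(i))$; the only differences are cosmetic, since the paper instantiates these streams for every $\ell\in[m]$ and reads off $B_{i_r}$, whereas you build only the $k$ streams for the selected indices and explicitly justify the constant $i_r$ and the equality test as base \CRASP primitives.
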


\begin{proof}
Let \(\widehat F\) be a fixed Boolean formula computing \(F\).
Reuse \(\operatorname{Bit},N,A_\ell,C_\ell,\) and \(B_\ell\) from the proof of \Cref{res:index-fixed-crasp}, taking \(L=m\).
Here every input symbol is a data bit, so \(N(i)=i\).
For each selected index \(i_r\), the stream \(B_{i_r}(i)\) has truth value \(x_{i_r}\) whenever \(i\geq i_r\).

Finally, apply \(F\) (and guard against short inputs):
\[
D(i)
:=
(m\leq N(i))
\land
\widehat F(B_{i_1}(i),\ldots,B_{i_k}(i)).
\]
At the final position, \(D(n)=0\) if \(n<m\), and \(D(n)=F(x_{i_1},\ldots,x_{i_k})\) otherwise.
As in the fixed-range construction, \(N\) has counting depth one and each \(C_\ell\) has counting depth two.
The length check and Boolean formula add no counting depth.
\end{proof}

%% file: appendices/theory-task-results/sparse/bapo.tex
\begin{proposition}[\TaskSparseStar is \BAPO-easy]
\label{res:sparse-bapo}
For every fixed \(k\), \(I\), and \(F\), the task \(\text{\TaskSparseStar}_{I,F}\) has a \((0,k)\text{-}\BAPO\).
\end{proposition}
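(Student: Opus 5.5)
We build the \BAPO directly from the attention oracle, in the same spirit as the construction for \Cref{res:bapo-index}, but with no prefix summary at all ($a=0$). Here the selected positions $I=(i_1,\ldots,i_k)$ are fixed and known to all three functions in advance, so no query tokens need to be read or carried across the split. The only information about the prefix that the suffix oracle needs is the values of those selected bits that fall in the prefix.

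\textbf{Construction.} Fix an input $s=x_1\cdots x_n$ and a split index $1\leq t<n$.

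1. Let the prefix oracle $f$ output the empty string, so that $a=0$.
2. Define the attention function by
\[
g(s_{t+1:n},t,s_j,j)=\indicator\bigl[j\in\{i_1,\ldots,i_k\}\bigr],
\]
which selects exactly the prefix positions that are among the fixed selected indices.
3. Let the suffix oracle $h$ compute the total length as $n=t+\lvert s_{t+1:n}\rvert$. If $n<m=\max_r i_r$, it outputs $0$. Otherwise, for each $r\in[k]$ it recovers $x_{i_r}$:
   - if $i_r\leq t$, it reads the bit from the attended pair $(s_{i_r},i_r)\in G$;
   - if $i_r>t$, it reads the bit at suffix position $i_r-t$.

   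It then outputs $F(x_{i_1},\ldots,x_{i_k})$, reusing the same bit whenever indices repeat.

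\textbf{Correctness.} The set $\mathbb G_t$ consists of the distinct selected positions that are at most $t$. It therefore has at most $k$ elements, so $\lvert\mathbb G_t\rvert\leq k=b$. The definition then forces $G=\mathbb G_t$, and $h$ sees every selected prefix bit.

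Each attended token carries its own position $j$, so $h$ can match attended bits to the indices $i_r$ without ambiguity. The length check handles short inputs exactly as in \Cref{def:task-sparse}: if $n<m$, the output is $0$ regardless of $F$.

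\textbf{Main obstacle.} There is no substantive obstacle. The only care needed is in the bookkeeping:
- repeated indices in $I$ do not inflate $\lvert\mathbb G_t\rvert$ beyond $k$;
- the case $t<m\leq n$, where some selected bits lie in the suffix and others in the prefix, is covered by the two-branch rule in $h$.
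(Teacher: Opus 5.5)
Your proposal is correct and is essentially the paper's proof. It uses the same attention function $g$, which selects exactly the fixed indices, and the same observation that $\lvert\mathbb G_t\rvert\leq k=b$ forces $G=\mathbb G_t$. As in the paper, the prefix oracle is unused, and the suffix oracle outputs $0$ when $n<m$ and otherwise reads each selected bit from $G$ or from the suffix before applying $F$. Your explicit note that $h$ recovers $n$ as $t+\lvert s_{t+1:n}\rvert$ is a small detail the paper leaves implicit.
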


\begin{proof}
For a split after position \(t\), define
\[
g(x_{t+1:n},t,x_j,j)
=
\indicator[j\in\{i_1,\ldots,i_k\}].
\]
The candidate set is
\[
\mathbb G_t
=
\{(x_j,j):j\in\{i_1,\ldots,i_k\},\,j\leq t\}.
\]
It satisfies \(\lvert\mathbb G_t\rvert\leq k=b\).
Hence the required subset is necessarily \(G=\mathbb G_t\), and \(h\) receives every selected prefix bit.

The suffix oracle \(h\) returns \(0\) when \(n<m\) (the input is too short to contain all indices).
Otherwise, for each \(i_r\), it reads \(x_{i_r}\) from \(G\) if \(i_r\leq t\) and from suffix position \(i_r-t\) if \(i_r>t\).
It then computes \(F(x_{i_1},\ldots,x_{i_k})\). The prefix oracle $f$ is not needed.
\end{proof}

%% file: appendices/theory-task-results/addition/bapo.tex
\begin{proposition}[Blocked \TaskBinaryAddition verification is \BAPO-hard]
\label{res:addition-bapo}
Encode an instance of \TaskBinaryAddition with $n$-bit operands (\Cref{def:task-addition}) as $x\#y\#z$. For $n\geq2$, every $(a,b)$-\BAPO solving this task satisfies
\[
    a+4b\log_2(4n+1)\geq n-2.
\]
In particular, this encoding of \TaskBinaryAddition is \BAPO-hard.
\end{proposition}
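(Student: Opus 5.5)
The plan is to reduce to one-way deterministic communication of \emph{equality} at a single well-chosen split, then run a counting (fooling-set) argument that absorbs the $b$ attended tokens into the effective message size. Fix $y=0^n$. The instance $x\#0^n\#z$ is then positive exactly when $z=0x$, so any correct \BAPO must, at the split $t$ placed right after the second $\#$, distinguish the prefix $x\#0^n\#$ from every other prefix $x'\#0^n\#$ once it is shown the suffix $0x$. (To match the ``$-2$'' in the bound, I may pin one or two bits of $x$, e.g.\ to rule out degenerate boundary cases; then a family of $2^{n-2}$ prefixes remains.) Every prefix token is a pair $(\sigma,j)$ with $\sigma$ from a constant-size alphabet and $j\le t\le 2n+2$. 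A single token therefore takes at most $4n+1$ values, up to constants that I will check when the positions are counted precisely.

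\textbf{Step 1 (signatures).} For each $x$ in the family, define its signature as $f(x\#0^n\#)$ together with a canonical admissible attended set $S_x\subseteq\mathbb G_t$ for the matching suffix $0x$. I will also record auxiliary data about the candidate set $\mathbb G_t$ under the suffix $0x$: essentially $\min\{b,|\mathbb G_t|\}$ and the tokens of $S_x$. The number of possible signatures is at most $2^a(4n+1)^{4b}$; the factor $4b$ leaves room for encoding $S_x$ (at most $b$ tokens) plus the cardinality and tie-breaking information. If $2^{n-2}>2^a(4n+1)^{4b}$, pigeonhole gives $x\neq x'$ with identical signatures. Taking logarithms of the negation yields exactly $a+4b\log_2(4n+1)\ge n-2$.

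\textbf{Step 2 (fooling).} Take the colliding pair and feed the prefix $x'\#0^n\#$ together with the suffix $0x$; this is a negative instance. Since the tokens of $S_x=S_{x'}$ literally occur in the prefix of $x'$, and $g(0x,\cdot)$ selects them (they were selected for $x$ and have the same symbol and position), $S_x\subseteq\mathbb G_t(x',0x)$. If the cardinality condition also holds, $S_x$ is an admissible attended set here, and correctness would force
\[
h\bigl(f(x'),S_x,0x,t\bigr)=0 .
\]
But $f(x')=f(x)$, so the left-hand side equals $h(f(x),S_x,0x,t)=1$ from the positive instance. This contradiction proves the bound. The final sentence of the statement follows because the right-hand side grows linearly while the left-hand side is $O(\log n)$ for constant $(a,b)$.

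\textbf{Main obstacle.} The delicate point is the cardinality condition in the \BAPO semantics. The attended set must have size exactly $\min\{b,|\mathbb G_t|\}$, and $|\mathbb G_t(x',0x)|$ may differ from $|\mathbb G_t(x,0x)|$, because $g(0x,\cdot)$ can select different positions of $x'$ (those where $x'$ and $x$ differ). I would first try to handle this by exploiting the adversarial ``for every $G$'' quantifier.

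If $|\mathbb G_t(x,0x)|\ge b$, I would choose $S_x$ to be any $b$ selected tokens. Then $S_x$ is also admissible for $x'$, because its candidate set contains $S_x$ and so has size at least $b$.

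If $|\mathbb G_t(x,0x)|<b$, then $S_x$ is the entire candidate set. Here I would add to the signature enough information to force $\mathbb G_t(x',0x)=S_x$. A first attempt is to record the set of positions $P_x$ selected by $g(0x,\cdot)$ together with the bits of $x$ on $P_x$. The problem is that $P$ depends on the suffix $0x$, and for the prefix of $x'$ the selection is still made with suffix $0x$, so bits of $x'$ outside $P_x$ could newly be selected. To handle this, I would restrict to a subfamily on which the selection pattern is stable. This costs at most another $(4n+1)^{O(b)}$ in the pigeonhole, which is why I budget $4b$ rather than $b$ in the exponent. If this bookkeeping fails to close, the fallback is to prove a weaker bound of the same form with a larger constant in front of $b\log n$, which still gives \BAPO-hardness.
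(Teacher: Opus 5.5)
There is a genuine gap. It is exactly the obstacle you flagged, but it is fatal rather than a matter of bookkeeping.

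Fixing $y=0^n$ turns the task at your split into \emph{equality} between the prefix block $x$ and the suffix $z$. Equality is easy for a \BAPO because the attention function sees the suffix. Concretely, at $t=2n+2$ take:
\begin{itemize}
\item $f$ constant;
\item $g(z,t,\sigma,j)=1$ iff $j\le n$ and $\sigma\neq z_{j+1}$;
\item $h$ accepts iff the attended set is empty and $z_1=0$.
\end{itemize}
Then $\mathbb G_t$ is the set of positions where $x$ disagrees with $z_2\cdots z_{n+1}$. With $b=1$ the attended set is empty iff $\mathbb G_t$ is empty, so $h$ accepts iff $z=0x$. This $(0,1)$ triple is therefore correct on every input $x\#0^n\#z$ at that split.

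Your argument only ever uses that split and that family, so it cannot exclude constant $(a,b)$. In particular, no ``stable'' subfamily exists: for this $g$, $\mathbb G_t(x',0x)\neq\emptyset=S_x$ whenever $x'\neq x$. The weaker-constant fallback fails for the same reason. More generally, any fooling family in which each suffix has a unique accepted prefix is $(0,1)$-easy at that split, since $g$ can flag deviations from that prefix. Your case $|\mathbb G_t|\ge b$ is fine; the case $|\mathbb G_t|<b$ is where all the difficulty lies.

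The missing idea is to fool across suffixes that each have \emph{many} accepted prefixes whose individual bits are unpredictable. The paper does this as follows.
\begin{itemize}
\item It leaves both operands free and indexes the family by the sum $c\in\{3\cdot2^{n-2},\ldots,2^n-1\}$, with suffix $z_c$. This range has size $2^{n-2}$, which is the source of $n-2$.
\item The range guarantees that for a uniform completion $(X,c-X)$, with $X$ uniform on $\{0,\ldots,c\}$, every operand bit equals $0$ and $1$ each with probability at least $1/4$.
\item Call an operand position \emph{active} if $g(z_c,\cdot)$ selects at least one of its two values, let $m_c$ be the number of active positions, and let $M_c$ be the maximum number of selected positions over completions. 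Averaging gives $m_c\le 4M_c$; this is where $4b$ comes from.
\item The record for $c$ is $f(p_c)$ for a maximizing completion $p_c$, together with $b$ of its selected pairs if $M_c\ge b$, or all fewer than $4b$ active bits of $p_c$ if $M_c<b$.
\item In the case $M_c<b$, those bits pin down the entire candidate set for any agreeing prefix, because inactive positions are never selected.
\item Any prefix matching the record gives $h$ the same arguments as $p_c$ under suffix $z_c$. It is therefore accepted and must have operand sum $c$.
\end{itemize}
Hence records are injective in $c$. Counting at most $2^a(4n+1)^{4b}$ records against $2^{n-2}$ sums yields the bound.
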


\begin{proof}
Fix the split $x\#y\#\mid z$, so that both operands lie in the prefix.
Write the \BAPO functions as $f,g,h$.
All prefixes have length $2n+2$, and their only variable positions are the $2n$ operand bits.
For each integer
\[
    c\in\mathcal C
    :=\{3\cdot2^{n-2},\ldots,2^n-1\},
\]
let $z_c$ be its $(n+1)$-bit representation.
Call a prefix $x\#y\#$ a \emph{completion for $c$} if its operands sum to $c$.
There are $2^{n-2}$ candidate sums in $\mathcal C$.
This range gives exponentially many candidate sums while ensuring that, among operand pairs with any one of these sums, neither value of any operand bit is too rare.

We will associate each candidate sum $c$ with a \emph{record} $r_c$ consisting of the $a$-bit prefix summary output by $f$ and at most $4b$ operand bits, together with their positions.
We will ensure any prefix matching the record $r_c$ has operand sum $c$.
Counting the possible records will then give the lower bound.

\medskip
\noindent\textbf{A small record that fixes the sum $c$.}
Fix a candidate sum $c$ and its suffix $z_c$.
Choose a completion $p_c$ maximizing the number of prefix positions selected by $g$, and call this number $M_c$.
This count includes any selected separators.
Call an operand position \emph{active} if $g$ selects at least one of its two possible bit values with suffix $z_c$.
Let $m_c$ be the number of active positions.

We first show that $m_c\leq4M_c$.
Choose a completion for $c$ uniformly by choosing an integer $X$ uniformly from $\{0,\ldots,c\}$ and setting $Y=c-X$.
Both $X$ and $Y$ are uniform on $\{0,\ldots,c\}$.
For either operand, each bit is $0$ with probability at least $1/4$ and $1$ with probability at least $1/4$.
To see this for any bit other than the most significant one, the interval $\{0,\ldots,c\}$ contains $\{0,\ldots,2^{n-1}-1\}$, on which that bit takes each value $2^{n-2}$ times.
For the most significant bit, at least $2^{n-2}$ values lie on each side of $2^{n-1}$ because $c\geq3\cdot2^{n-2}$.
In both cases, there are at most $2^n$ possible operand values for $X$ and $Y$, giving the $1/4 = 2^{n-2} / 2^n$ lower bounds.

For each active position $j$, let $I_j$ be $1$ if its bit is selected by $g$ and $0$ otherwise.
At least one bit value is selected at that position, and that value occurs with probability at least $1/4$, so $\mathbb E[I_j]\geq1/4$.
Let $S$ be the number of selected operand positions.
Inactive positions are never selected, so $S = \sum_{j\text{ active}}I_j$.
Thus, by linearity of expectation,
\[
    \mathbb E [S] = \sum_{j\text{ active}}\mathbb E [I_j] \ge \sum_{j\text{ active}} 1/4 = m_c / 4.
\]
Also, $S\leq M_c$ for every completion by the definition of $M_c$.
Combining these facts yields $m_c/4\leq\mathbb E[S]\leq M_c$, so $m_c\leq4M_c$.

We form $r_c$ by recording $f(p_c)$ and the values and positions of a subset of operand bits, according to the following cases:
\begin{itemize}
\item Case 1: If $M_c\geq b$, choose any $b$ selected token-position pairs as $G_c$ and record their operand bits, at most $b$ bits.
\item Case 2: If $M_c<b$, let $G_c$ be the full selected set and record all active operand bits, at most $m_c\leq4M_c<4b$ bits.
\end{itemize}
The separator tokens need not be recorded because their positions and values are fixed.
The first alternative also covers $b=0$, with $G_c$ empty.

For any prefix $p'$ matching the record (i.e., with $f(p') = f(p_c)$ and with matching recorded bits), $G_c$ remains a legal attended set.
In Case 1, its $b$ pairs remain selected because $g$ sees the same suffix, split, token, and position.
In Case 2, the full selected set is unchanged: all active positions have their recorded values, inactive positions are never selected by $g$, and separators are fixed.
Since the recorded prefix summary also agrees, the suffix oracle receives identical arguments:
\[
    h(f(p'),G_c,z_c,2n+2)
    =h(f(p_c),G_c,z_c,2n+2)=1.
\]
Correctness therefore forces the operands in $p'$ to sum to $c$.

\medskip
\noindent\textbf{Counting the records.}
Since any prefix matching $r_c$ must have operand sum $c$, different candidate sums must have different records.
Otherwise, if $r_c=r_{c'}$ for $c\ne c'$, the prefix $p_{c'}$ would match $r_c$.
With the suffix fixed at $z_c$, the preceding argument would force acceptance of $p_{c'}z_c$, even though its operands sum to $c'\ne c$.

There are at most $2^a$ distinct outputs of $f$.
Each recorded pair has $2n$ possible positions and two possible bit values.
Sorting the pairs by position and padding to $4b$ slots with a blank symbol therefore gives at most $(4n+1)^{4b}$ possible lists (each of the $4b$ slots contains one of $2\cdot 2n$ position-bit pairs or a blank).
Since records for each $c \in \mathcal C$ must be distinct,
\[
    2^{n-2}=|\mathcal C|
    \leq 2^a(4n+1)^{4b}.
\]
Taking logarithms proves the bound.
In particular, constant $a,b$ cannot suffice.
\end{proof}

%% file: appendices/theory-task-results/addition/sensitivity.tex
\begin{proposition}[Average sensitivity of \TaskBinaryAddition verification]
\label{res:addition-sensitivity}
Let $f_n(x,y,z)$ test whether the $n$-bit operands $x,y$ sum to the $(n+1)$-bit proposed result $z$.
Under the uniform distribution on the $m=3n+1$ input bits, its average sensitivity is
\[
    \mathrm{as}_m(f_n)=\frac{3n+1}{2^n}=\Theta(n/2^n).
\]
\end{proposition}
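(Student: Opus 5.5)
We compute the average sensitivity by splitting the uniform cube $\{0,1\}^{3n+1}$ into the inputs where $f_n=1$ and those where $f_n=0$, and counting sensitive coordinates in each. Write $S(w)$ for the number of coordinates $i$ with $f_n(w)\neq f_n(w^{\oplus i})$. Then
\[
    \mathrm{as}_m(f_n)=2^{-m}\sum_{w}S(w).
\]
Every sensitive edge of the cube joins an accepting input to a rejecting one. So we count edges from the accepting side, where the structure is explicit.

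\textbf{Accepting inputs.} The accepting inputs are exactly the triples $(x,y,z)$ with $z=\operatorname{val}_2(x)+\operatorname{val}_2(y)$. Since $x,y<2^n$, the sum is at most $2^{n+1}-2$, so it always has an $(n+1)$-bit representation. There are therefore exactly $2^{2n}$ accepting inputs. At such an input, we check that every one of the $3n+1$ single-bit flips produces a rejecting input:
\begin{itemize}
\item Flipping bit $j$ of $z$ changes $\operatorname{val}_2(z)$ by $\pm2^{n-j+1}\neq0$, while the operands are unchanged.
\item Flipping a bit of $x$ or $y$ changes the operand sum by a nonzero power of two, while $z$ is unchanged.
\end{itemize}
In both cases the equality fails. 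Hence $S(w)=3n+1$ at every accepting input.

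\textbf{Rejecting inputs.} The sum of $S(w)$ over rejecting inputs equals the number of sensitive edges, since each such edge has exactly one rejecting endpoint. By the accepting-side count, this number is $2^{2n}(3n+1)$.

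\textbf{Conclusion.} Combining both sides gives $\sum_w S(w)=2\cdot2^{2n}(3n+1)$, and dividing by $2^{3n+1}$ yields
\[
    \mathrm{as}_m(f_n)=\frac{3n+1}{2^n}.
\]
This double counting is the one subtle point. A naive computation that counts only the accepting side would give $(3n+1)2^{2n}/2^{3n+1}$, which is off by a factor of two. Rejecting inputs adjacent to a valid sum are also sensitive, and they must be included.
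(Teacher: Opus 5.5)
Your proposal is correct and follows essentially the same argument as the paper: count the $2^{2n}$ accepting inputs, show each has full sensitivity $3n+1$, and double count the $(3n+1)2^{2n}$ accepting--rejecting hypercube edges to get $\mathrm{as}_m(f_n)=2(3n+1)2^{2n}/2^{3n+1}=(3n+1)/2^n$. Your extra checks only spell out details the paper states in passing: that the sum always fits in $n+1$ bits, and that flipping a bit of $z$ changes its value by $\pm2^{n-j+1}$.
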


\begin{proof}
The sensitivity at an input is the number of single-bit flips that change the answer.
There are exactly $2^{2n}$ accepting inputs, since each operand pair has a unique $(n+1)$-bit sum.
At every accepting input, flipping any one of the $m$ bits makes the equality false: flipping an operand bit changes the left-hand side by a nonzero power of two, and flipping a result bit changes the right-hand side.
Thus each accepting input has sensitivity $m$.

View the inputs as vertices of the Boolean hypercube, with an edge for each single-bit flip.
There are exactly $m2^{2n}$ edges joining an accepting input to a rejecting one.
Each such edge contributes once to the sensitivity at each endpoint, and all other edges contribute zero.
Averaging over all $2^{3n+1}$ inputs gives
\[
    \mathrm{as}_m(f_n)
    =\frac{2m2^{2n}}{2^{3n+1}}
    =\frac{3n+1}{2^n}.
\]
\end{proof}

%% file: appendices/theory-task-results/dyck/c-rasp.tex
We use the bounded-depth language \(\TaskDyck\text{-}(k,D)\) and bracket alphabet \(\Sigma_k\) from \Cref{def:task-dyck}.

\begin{proposition}[\TaskDyck-$(k,D)$ in \CRASP]
\label{res:dyck-crasp}
For every fixed \(k\geq 1\) and \(D\geq 1\), membership in \(\TaskDyck\text{-}(k,D)\) on nonempty inputs is computed by a base \(\CRASP[\varnothing]\) program with \(O(kD)\) channels and counting depth at most three.
\end{proposition}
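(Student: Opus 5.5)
The plan is to build the program in three counting layers. The first layer computes the running height, the second records per-level, per-type bracket counts, and the third aggregates local violation flags into a single acceptance test at the last position.

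\textbf{Layer 1: height.} Let $\mathrm{Open}_t(i):=Q_{\mathsf{open}_t}(i)$ and $\mathrm{Close}_t(i):=Q_{\mathsf{close}_t}(i)$. Let $\mathrm{Open}$ and $\mathrm{Close}$ be their disjunctions over $t\in[k]$. Define the height
\[
H(i):=\#[j\leq i:\mathrm{Open}(j)]-\#[j\leq i:\mathrm{Close}(j)],
\]
which has counting depth one. The input is in $\TaskDyck\text{-}(k,D)$ exactly when three conditions hold:
\begin{itemize}
\item[(a)] $0\leq H(i)\leq D$ at every position $i$;
\item[(b)] $H=0$ at the last position;
\item[(c)] every closing bracket has the type of its matching opener.
\end{itemize}
For (a), define the Boolean $\mathrm{Bad}(i):=\neg(0\leq H(i))\lor\neg(H(i)\leq D)$ and its count $\#[j\leq i:\mathrm{Bad}(j)]$, which has depth two. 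We then require this count to equal $0$ at the end.

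\textbf{Layer 2: matching by per-level type counts.} The key observation concerns a prefix whose heights all lie in $[0,D]$. An opener at position $j$ is at level $d$ if $H(j)=d$. A closer at position $i$ is at level $d$ if $H(i)=d-1$. A closer at level $d$ matches exactly the most recent opener at level $d$. Moreover, the events at a fixed level $d$ alternate: open, close, open, close, and so on.

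Counts cannot express ``most recent'' directly, so I will replace matching by a per-type balance check. For each $d\in[D]$ and $t\in[k]$, define the depth-two counts
\[
O_{d,t}(i):=\#[j\leq i:\mathrm{Open}_t(j)\land H(j)=d],\qquad
K_{d,t}(i):=\#[j\leq i:\mathrm{Close}_t(j)\land H(j)=d-1].
\]
This gives $2kD$ count channels, so the program has $O(kD)$ channels overall. The violation flag is
\[
V(i):=\bigvee_{d,t}\bigl(\mathrm{Close}_t(i)\land H(i)=d-1\land\neg(O_{d,t}(i)=K_{d,t}(i))\bigr).
\]

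\textbf{Layer 3 and correctness.} The final count $\#[j\leq i:V(j)]$ has counting depth three. The program accepts at the last position iff three things hold: the $\mathrm{Bad}$ count is $0$, $H=0$, and the $V$ count is $0$.

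The main step is showing that (c) is equivalent to ``no $V$'' when (a) holds. I will argue by induction along the alternating sequence of level-$d$ events. Suppose all earlier level-$d$ pairs are correctly typed. Then for every $t$ we have $O_{d,t}=K_{d,t}$ just before the current pair. Let the current opener have type $s$. At the next level-$d$ closer, of type $t$, the equality $O_{d,t}=K_{d,t}$ then holds iff $s=t$.

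So if every closer passes the check, all pairs are matched by induction. Conversely, the first mismatched pair raises $V$ at its closer.

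When (a) fails, the values of $V$ are irrelevant, because the $\mathrm{Bad}$ test already rejects. This decoupling is the subtle point I expect to need the most care: the flags $V$ only need to be sound on prefixes with valid heights. It also handles nonempty inputs directly, since the last position always exists.
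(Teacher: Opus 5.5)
Your proposal is correct and takes essentially the same approach as the paper: a depth-one height stream, depth-two per-level, per-type opening and closing counts, and a depth-three count of violations, justified by the alternation of level-$d$ events and a first-mismatch argument. The only cosmetic differences are that you test $O_{d,t}=K_{d,t}$ only at level-$d$ closers, whereas the paper flags $O_{d,t}<K_{d,t}$ at every position, and that you count height violations separately at depth two rather than folding them into the single depth-three violation count.
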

\begin{proof}
Let the input have length \(n\geq1\).
First, we identify whether each token is an open bracket or a close bracket, regardless of its type.
\[
\begin{aligned}
\mathsf{Open}(i)
&:=\bigvee_{t=1}^{k}Q_{\mathsf{open}_t}(i),\\
\mathsf{Close}(i)
&:=\bigvee_{t=1}^{k}Q_{\mathsf{close}_t}(i).
\end{aligned}
\]
Then, we compute the open-minus-close prefix height at each position (inclusive of the current position):
\[
\begin{aligned}
O(i)
&:=\#[j\leq i:\mathsf{Open}(j)],\\
C(i)
&:=\#[j\leq i:\mathsf{Close}(j)],\\
H(i)
&:=O(i)-C(i).
\end{aligned}
\]
Then we define Boolean streams that identify whether the current prefix is at a given depth $d \in \{0,\ldots,D\}$, or has underflowed or overflowed:
\[
\begin{aligned}
\mathsf{AtDepth}_d(i)
&:=(H(i)=d),
\qquad 0\leq d\leq D,\\
\mathsf{Underflow}(i)
&:=(H(i)<0),\\
\mathsf{Overflow}(i)
&:=(D<H(i)).
\end{aligned}
\]

For every depth \(d\in[D]\) and type \(t\in[k]\), mark type-\(t\) openings and closings at depth \(d\):
\[
\begin{aligned}
\mathsf{OpenAtDepth}_{d,t}(i)
&:=Q_{\mathsf{open}_t}(i)\land\mathsf{AtDepth}_d(i),\\
\mathsf{CloseAtDepth}_{d,t}(i)
&:=Q_{\mathsf{close}_t}(i)\land\mathsf{AtDepth}_{d-1}(i).
\end{aligned}
\]
An opening at depth \(d\) crosses from height $d - 1$ to height $d$, while a closing at depth \(d\) crosses from height \(d\) to height \(d-1\).

We then count the openings and closings, at each depth \(d\) and for each type \(t\), over each prefix:
\[
\begin{aligned}
\mathsf{NumOpensAtDepth}_{d,t}(i)
&:=\#[j\leq i:\mathsf{OpenAtDepth}_{d,t}(j)],\\
\mathsf{NumClosesAtDepth}_{d,t}(i)
&:=\#[j\leq i:\mathsf{CloseAtDepth}_{d,t}(j)].
\end{aligned}
\]
A type mismatch is detected when a prefix contains more type-\(t\) closings than openings at the same depth:
\[
\mathsf{TypeBad}_{d,t}(i)
:=
\bigl(
\mathsf{NumOpensAtDepth}_{d,t}(i)
<
\mathsf{NumClosesAtDepth}_{d,t}(i)
\bigr).
\]

A violation occurs at a position \(i\) if the prefix up to \(i\) has underflowed, overflowed, or has a type mismatch at any depth and type.
\[
\begin{aligned}
\mathsf{Bad}(i)
&:=
\mathsf{Underflow}(i)
\lor
\mathsf{Overflow}(i)
\lor
\bigvee_{d=1}^{D}\bigvee_{t=1}^{k}
\mathsf{TypeBad}_{d,t}(i).
\end{aligned}
\]
We accept the string if no violation occurs at any position, and the final height is zero:
\[
V(i) :=\#[j\leq i:\mathsf{Bad}(j)], \quad
\mathsf{Accept}(i)
:=
(V(i)=0)\land(H(i)=0).
\]
The program accepts from the last position through \(\mathsf{Accept}(n)\).
The streams \(O\) and \(C\) have counting depth one, the depth-specific opening and closing counts have counting depth two, and \(V\) has counting depth three.

\textbf{Correctness.} If the word is in \(\TaskDyck\text{-}(k,D)\), every prefix height lies in \(\{0,\ldots,D\}\) and the final height is zero, so the height checks pass.
Every closing also matches an earlier opening of the same type at the same depth, so no closing count exceeds its corresponding opening count.
Thus every nonempty word in \(\TaskDyck\text{-}(k,D)\) is accepted.
Conversely, the height checks reject underflow, overflow, and a nonzero final height.
If these checks pass, openings and closings at each depth \(d\) alternate, so the \(j\)th closing at depth \(d\) pairs with the \(j\)th opening at depth \(d\).
If a type mismatch occurs, consider the first mismatched closing, of type \(t\) at depth \(d\).
Previously completed pairs at that depth contribute equally to the opening and closing counts, while the unmatched opening has a different type.
This closing therefore makes \(\mathsf{NumClosesAtDepth}_{d,t}\) exceed \(\mathsf{NumOpensAtDepth}_{d,t}\), triggering \(\mathsf{TypeBad}_{d,t}\).
Checking every prefix ensures that a later symbol cannot erase the violation.
\end{proof}

%% file: appendices/theory-task-results/dyck/bapo-lower-bound.tex
\begin{proposition}[Unbounded \TaskDyck is \BAPO-hard]
\label{res:dyck-bapo}
Fix a bracket-type count \(k\geq1\).
Every \((a,b)\)-\BAPO recognizing unbounded \TaskDyck with \(k\) bracket types on all inputs of length at most \(N\) satisfies
\[
a+\log_2(b+1)\geq\log_2N-O(1).
\]
In particular, unbounded \TaskDyck is \BAPO-hard for every fixed \(k\geq1\).
\end{proposition}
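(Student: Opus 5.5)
The plan is a fooling-set argument over prefix \emph{heights}, in the spirit of the record argument of \Cref{res:addition-bapo}. It uses only one bracket type, so it covers every \(k\geq1\). Every string in the construction is a valid \(\TaskDyck\)-\(1\) word, hence lies in \(\TaskDyck\)-\(k\).

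\textbf{Step 1: the family.} Fix \(N\) and a split index \(t=\Theta(N)\). For each height \(h\) in a set \(\mathcal H\subseteq\{0,\ldots,t\}\) with \(h\equiv t \pmod 2\) and \(|\mathcal H|=\Theta(N)\), define the monotone prefix
\[
p_h=\mathsf{open}_1^{(t+h)/2}\,\mathsf{close}_1^{(t-h)/2}.
\]
It has height \(h\) and never underflows. Define the suffix \(z_h=\mathsf{close}_1^{h}\), so that \(p_hz_{h'}\) has length at most \(N\) and lies in \(\TaskDyck\) exactly when \(h=h'\). Every prefix has the same length \(t\), so the split index \(t\) is common to all instances. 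The useful feature is that \(p_h\) and \(p_{h'}\) agree at every position except the window strictly between their two thresholds \((t+h)/2\) and \((t+h')/2\).

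\textbf{Step 2: records and distinctness.} Fix \(h\), so \(g\) sees the fixed suffix \(z_h\). I record \(f(p_h)\) together with a legal attended set \(G_h\) of at most \(b\) pairs. The key point is that if \(p_{h'}\) has the same summary and \(G_h\) is also a legal attended set for \(p_{h'}\) under suffix \(z_h\), then \(h\) receives identical arguments on \(p_hz_h\) and \(p_{h'}z_h\). Correctness then forces \(h'=h\). The case split is the same as in \Cref{res:addition-bapo}. If \(g\) selects more than \(b\) positions of \(p_h\), we may choose any \(b\) of them. If it selects at most \(b\), the selected set is forced, but it has at most \(b\) elements.

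\textbf{Step 3: counting.} Here I expect the main obstacle, because the naive count gives only \(a+b\log_2 N\), one \(\log N\) per recorded pair. To reach \(a+\log_2(b+1)\), the record must carry only \(O(\log(b+1))\) bits beyond \(f\). My plan is to exploit that, under a fixed suffix \(z_h\), the selected set of \(p_{h'}\) depends on \(h'\) only through the threshold position \((t+h')/2\). For each \(h\), I would choose the recorded pairs inside the region where \(p_h\) agrees with its neighbours in \(\mathcal H\), namely the early \(\mathsf{open}_1\) tokens or the late \(\mathsf{close}_1\) tokens.
\begin{itemize}
\item If at most \(b\) pairs are ever selected, I use monotonicity in \(h'\) of the selected set within a window of heights. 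The selected set then changes at most \(b\) times across the window. So the window splits into at most \(O(b+1)\) blocks of heights on which the forced attended set is identical.
\item If more than \(b\) pairs are selected, I pick \(b\) of them from the common region, so that \(G_h\) is legal for every prefix in the block.
\end{itemize}
Within one block, heights with equal \(f\)-values would be confused by Step 2. Hence each block contains at most \(2^a\) heights, and there are \(O(b+1)\) blocks, which gives
\[
|\mathcal H|\le 2^a\cdot O(b+1).
\]
Taking logarithms with \(|\mathcal H|=\Theta(N)\) yields the stated bound.

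If the block decomposition does not go through cleanly, because \(g\) can depend arbitrarily on \(z_h\) and different suffixes give unrelated selections, my fallback is to fix a single common suffix shape. I would append \(z_h\) followed by a length-padding tail, and apply a pigeonhole over \(h\) on the pair (value of \(f\), number of selected positions below the threshold capped at \(b+1\)). This pair takes at most \(2^a(b+2)\) values.
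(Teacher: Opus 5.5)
The gap is in Step 3, and it is fatal for the monotone family, not just a missing detail. Step 2 yields a contradiction for $h\neq h'$ only under a suffix that accepts one of the two prefixes. So $h'$ must lie in the same block as $h$ in the decomposition induced by $z_h$. Those decompositions differ from one $h$ to the next, because $g$ reads the whole suffix. In particular, $g$ can make $\{h\}$ a singleton block of its own decomposition. Your count of ``$O(b+1)$ blocks with at most $2^a$ heights each'' silently treats all of them as one decomposition.

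Concretely, take $a=0$ and $b=2$. Under suffix $\mathsf{close}_1^{h}$ (with $h<t$), let $g$ select exactly $(\mathsf{open}_1,(t+h)/2)$ and $(\mathsf{close}_1,(t+h)/2+1)$. Both pairs lie in $\mathbb G_t$ iff the prefix's threshold is $(t+h)/2$, i.e.\ iff the prefix is $p_h$. Since $|\mathbb G_t|\le 2=b$, the attended set is forced. The suffix oracle therefore decides every instance $p_{h'}z_h$ correctly, for arbitrarily large $|\mathcal H|$. So $|\mathcal H|\le 2^a\cdot O(b+1)$ is false for your family, and no single-split fooling argument on it can succeed.

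Your fallback has the same defect. Any suffix that distinguishes heights reveals $h$ to $g$. Then $g$ can probe the single threshold where $p_h$ differs from its neighbours. The weakness is the rigid structure of monotone prefixes: two of them differ only in a window whose location is computable from the suffix.

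The missing idea is to commit the positions $g$ could select before the prefixes are chosen, so that $g$ cannot target where the prefixes differ. The paper does this in four steps.
\begin{enumerate}
\item It fixes all $m=2^a+1$ suffixes $q_r$ in advance. Each has length $2V$ with $V=m(2b+1)$ and net closing $D_r=2(bm+r)$.
\item Every prefix starts with $V$ opening brackets, so no completion underflows.
\item It processes the suffixes in turn. While fewer than $b$ already-assigned positions are selected under $q_r$, it assigns some unassigned position a bracket that $g$ selects. This fixes at most $bm$ positions. It yields, for each $q_r$, an attended set $G_r$ legal for \emph{every} completion: either $b$ committed selected pairs, or the full selected set, which no later assignment can enlarge.
\item The remaining $\ge bm+m$ free positions are filled to give final balance $D_r$ for each $r$. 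Then $p_sq_r\in\TaskDyck$ iff $s=r$.
\end{enumerate}
Equal $f$-values would now give the suffix oracle identical arguments on an accepted and a rejected input. Hence $2^a\ge m$ unless $N<4V$, which is the stated bound.
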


\begin{proof}
We use only bracket type \(1\), with opening bracket \(\mathsf{open}_1\) and closing bracket \(\mathsf{close}_1\).
Suppose an \((a,b)\)-\BAPO recognizes the language on all inputs of length at most \(N\).
Write its oracles as \(f,g,h\), and set
\[
m=2^a+1,
\qquad
V=m(2b+1).
\]
Assume for contradiction that \(4V\leq N\).
We will construct \(m\) prefixes that require distinct summaries from \(f\), although \(f\) has only \(2^a\) possible outputs.
All complete inputs in the construction have length \(4V\).

\medskip
\noindent\textbf{Suffixes that distinguish different balances.}
All prefixes will have length \(2V\), start with \(V\) opening brackets, and differ only in their last \(V\) positions.
For \(r=0,\ldots,m-1\), define $m$ distinct suffixes $q_r$:
\[
D_r=2(bm+r),
\qquad
q_r=\mathsf{close}_1^{D_r}\,\bigl(\mathsf{open}_1\mathsf{close}_1\bigr)^{V-bm-r}.
\]
The exponent is positive because \(V-bm-r=bm+m-r\geq1\).
Each suffix has length \(2V\) and closes a net total of \(D_r\) brackets.
A prefix that never underflows and ends at balance \(D_s\) therefore forms a valid string with \(q_r\) exactly when \(s=r\).

\medskip
\noindent\textbf{Fixing a common attended set for each suffix.}
Initially fix the first \(V\) prefix positions to \(\mathsf{open}_1\) and leave the last \(V\) positions unassigned.
Process the suffixes \(q_0,\ldots,q_{m-1}\) in order, keeping every assignment made in earlier steps.
For each \(q_r\), count the already assigned positions selected by \(g\).
While fewer than \(b\) are selected, look for an unassigned position \(i\) and a bracket \(z\in\{\mathsf{open}_1,\mathsf{close}_1\}\) such that
\[
g(q_r,2V,z,i)=1.
\]
If such a pair exists, assign \(z\) to position \(i\); otherwise stop processing \(q_r\).
Each assignment increases the selected count by one.
Thus each suffix fixes at most \(b\) new positions, and all suffixes together fix at most \(bm\) positions in the last \(V\) positions.

For any fixed \(q_r\), there is now a legal attended set \(G_r\) shared by every completion of the unassigned positions:
\begin{itemize}
\item If at least \(b\) assigned positions were selected when processing \(q_r\) stopped, choose any \(b\) of their token-position pairs.
These pairs remain selected in every completion, so they form a legal attended set.
\item Otherwise, \(g\) rejected both possible brackets at every position still unassigned at that point.
Later assignments cannot add any selected positions.
The full set of selected token-position pairs is therefore the same in every completion and has size less than \(b\); take this set as \(G_r\).
\end{itemize}
This argument also covers \(b=0\), when \(G_r\) is empty.
The suffixes must be fixed before this construction because \(g\) can depend on the entire suffix.

\medskip
\noindent\textbf{Completing the prefixes.}
Let \(A\) be the number of opening brackets assigned among the last \(V\) positions.
For each \(r\), construct a prefix \(p_r\) by assigning exactly \(bm+r-A\) of the remaining positions to \(\mathsf{open}_1\) and all others to \(\mathsf{close}_1\).
This is possible because \(0\leq A\leq bm\), at least
\[
V-bm=bm+m
\]
positions remain unassigned, and \(0\leq bm+r-A\leq bm+m-1\).
The leading \(V\) opening brackets prevent underflow anywhere in \(p_r\), regardless of the order of the last \(V\) brackets.
Its final balance is
\[
V+(bm+r)-(V-bm-r)=2(bm + r)=D_r.
\]
It follows that
\[
p_s q_r\in\TaskDyckOne
\quad\Longleftrightarrow\quad
s=r.
\]

\medskip
\noindent\textbf{Forcing distinct prefix summaries.}
Suppose \(f(p_r)=f(p_s)\) for distinct \(r,s\).
With suffix \(q_r\), the same attended set \(G_r\) is legal for both prefixes, so the suffix oracle can receive identical arguments:
\[
h(f(p_r),G_r,q_r,2V)=h(f(p_s),G_r,q_r,2V).
\]
But \(p_rq_r\) is valid and \(p_sq_r\) is not.
Correctness for every legal attended set therefore requires all \(m\) prefix summaries to be distinct.
However, \(f\) has only \(2^a<m\) possible outputs, so this is impossible.
Our assumption \(4V\leq N\) must therefore be false, giving
\[
N<4(2^a+1)(2b+1)\leq16\cdot2^a(b+1).
\]
The second inequality uses \(2^a+1\leq2\cdot2^a\) and \(2b+1\leq2(b+1)\).
Taking logarithms gives \(a+\log_2(b+1)>\log_2N-4\), proving the claimed bound.
For any fixed \(0<\epsilon<1\), if \(b=o(N^{1-\epsilon})\), the bound forces \(a=\Omega(\log N)\), ruling out \(a=o(\log N)\).
Because every constructed string uses only one bracket type, the result holds for every \(k\geq1\).
\end{proof}

%% file: appendices/theory-task-results/k-hop/logic.tex
\subsection{Fixed-hop induction}
\label{app:k-hop-results}

We use \TaskKHop and its hop map $h$ from \Cref{def:task-k-hop}, with fixed finite alphabet $\Sigma$ and $k\geq1$.
Since its answer is a symbol in \(\Sigma\cup\{\bot\}\), we view \TaskKHop as the \(|\Sigma|+1\) languages \(L_a\) of inputs whose answer is \(a\), and a theory predicts \Easy if each \(L_a\) is easy under it.

\begin{proposition}[\TaskKHop in \Starfree]
\label{res:k-hop-logic}
For fixed \(\Sigma\) and \(k\), each language \(L_a\), \(a\in\Sigma\cup\{\bot\}\), is definable by an \(\FO[<]\) sentence.
Thus each \(L_a\) is star-free and has DLOGTIME-uniform \(\ACz\) and \(\TCz\) circuits.
\end{proposition}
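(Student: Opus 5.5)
The plan is to write, for each $j\in\{0,\dots,k\}$, an $\FO[<]$ formula $\mathrm{Pos}_j(p)$ with one free variable that holds exactly when $h^j(N-1)$ is defined and equals $p$. Here we use $0$-based positions and $N$ is the input length. The sentences for the $L_a$ are then immediate:
\[
L_a:\ \exists p\,\bigl(\mathrm{Pos}_k(p)\land Q_a(p)\bigr)\ (a\in\Sigma),\qquad
L_\bot:\ \neg\exists p\,\mathrm{Pos}_k(p).
\]
The key point is that $k$ is fixed, so the hop can be composed syntactically $k$ times with no counting or arithmetic. This is why $\FO[<]$ suffices even though the positions involved are unbounded.

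\textbf{Step 1: one hop.} Define
\[
\mathrm{Hop}(p,q):=\bigvee_{\sigma\in\Sigma}\Bigl[Q_\sigma(p)\land\exists r\,\bigl(r<p\land Q_\sigma(r)\land\neg\exists u\,(r<u\land u<p\land Q_\sigma(u))\land\mathrm{SUCC}(r,q)\bigr)\Bigr],
\]
using $\mathrm{SUCC}$ from the proof of \Cref{res:index-fixed-logic}. This says that $r$ is the most recent earlier occurrence of the symbol at $p$, and that $q=r+1$. Since $r<p$, the successor $q\le p$ always exists, which covers the adjacent-repeat case $h(p)=p$. The formula is functional: for each $p$ there is at most one such $q$, and one exists iff $h(p)$ is defined. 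The disjunction over $\sigma$ is finite because $\Sigma$ is fixed.

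\textbf{Step 2: iterate.} Set $\mathrm{Pos}_0(p):=\mathrm{max}(p)$ and
\[
\mathrm{Pos}_{j+1}(q):=\exists p\,\bigl(\mathrm{Pos}_j(p)\land\mathrm{Hop}(p,q)\bigr).
\]
Unrolling this recursion gives $k+1$ fixed-size formulas. An induction on $j$, using the functionality of $\mathrm{Hop}$, shows that $\mathrm{Pos}_j(q)$ holds iff all of the first $j$ hops are defined and $q=h^j(N-1)$. The sentences displayed above then define exactly $L_a$ for $a\in\Sigma$ and $L_\bot$.

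\textbf{Conclusion and main obstacle.} The conclusion follows from the $\FO[<]$--\Starfree correspondence and \Cref{eq:fo-inclusions}. There is no real technical obstacle here. The step needing the most care is correctness of $\mathrm{Hop}$: matching "most recent earlier occurrence" and "$+1$" to the definition of $h$ in \Cref{def:task-k-hop}, including $h(i)=i$ and the undefined case. Both of these are handled by the existence and uniqueness of $r$.
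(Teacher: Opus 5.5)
Your proof is correct and follows the paper's argument. The paper defines the same one-hop relation $\mathrm{Hop}$, using a $\mathrm{Same}(p,i)$ abbreviation in place of your disjunction over $\sigma$. It then composes this relation $k$ times syntactically and reads off $Q_a$ at the endpoint, or the absence of an endpoint for $\bot$. The only difference is cosmetic: you iterate a unary predicate $\mathrm{Pos}_j$ anchored at the last position, while the paper composes a binary relation $\mathrm{Hop}_r(i,j)$ and imposes $\mathrm{max}(i)$ only in the final sentences.
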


\begin{proof}
Our construction defines one hop as a relation between input positions, then composes it \(k\) times.
Use \(\mathrm{SUCC}\) and \(\mathrm{max}\) from \Cref{res:index-fixed-logic}.

\medskip
\noindent\textbf{One hop.}
Define
\[
\mathrm{Same}(p,i)
:=\bigvee_{a\in\Sigma}\bigl(Q_a(p)\land Q_a(i)\bigr),
\]
which holds exactly when \(x_p=x_i\).
The hop relation is
\[
\begin{aligned}
\mathrm{Hop}(i,j):=\exists p\,\bigl(
&p<i\land\mathrm{Same}(p,i)\land\mathrm{SUCC}(p,j)\\
&\land\neg\exists q\,(p<q\land q<i\land\mathrm{Same}(q,i))
\bigr).
\end{aligned}
\]
Here \(p\) is the latest previous occurrence of \(x_i\), and \(j=p+1\).
Thus \(\mathrm{Hop}(i,j)\) holds exactly when \(h(i)\) is defined and equals \(j\).
This also handles adjacent repeats: if \(x_{i-1}=x_i\), then \(j=i\).

\medskip
\noindent\textbf{Fixed-hop composition.}
Define
\[
\begin{aligned}
\mathrm{Hop}_0(i,j)&:=(i=j),\\
\mathrm{Hop}_{r+1}(i,j)
&:=\exists u\,\bigl(\mathrm{Hop}(i,u)\land\mathrm{Hop}_r(u,j)\bigr).
\end{aligned}
\]
For \(r=0,\ldots,k\), these abbreviate finite first-order formulas obtained by expanding the definition.
Induction on \(r\) shows that \(\mathrm{Hop}_r(i,j)\) holds exactly when all \(r\) hops are defined and \(h^r(i)=j\).

\medskip
\noindent\textbf{Output sentences.}
For each \(a\in\Sigma\), define
\[
\varphi_a
:=\exists i\,\exists j\,
\bigl(\mathrm{max}(i)\land\mathrm{Hop}_k(i,j)\land Q_a(j)\bigr).
\]
This holds exactly when the final-position output is \(a\).
The output is \(\bot\) exactly when no \(k\)-hop endpoint exists:
\[
\varphi_\bot
:=\exists i\,
\bigl(\mathrm{max}(i)\land\neg\exists j\,\mathrm{Hop}_k(i,j)\bigr).
\]
Since \(\Sigma\) and \(k\) are fixed, each sentence has fixed size.
The \(\FO[<]\)--\Starfree correspondence and the inclusions in \Cref{eq:fo-inclusions} give the result.
\end{proof}

%% file: appendices/theory-task-results/k-hop/c-rasp.tex
\begin{proposition}[\TaskKHop not in \CRASP]
\label{res:k-hop-crasp}
For every fixed $k\geq1$ and fixed alphabet $\Sigma$ with at least $k+3$ symbols, the final-position output of \TaskKHop is not computable in base \CRASP.
\end{proposition}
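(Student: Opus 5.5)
The plan is a reduction. I will assume a base \CRASP program $P_k$ for the final-position output of \TaskKHop over an alphabet $\Sigma$ with $|\Sigma|\ge k+3$. From it I will build a base \CRASP program for $\mathcal L=\{a,b,e\}^*\,b\,e^*\,b\,\{a,b,e\}^*$, which contradicts \citet[Proposition~31 and Lemma~35]{huang2024formal}.

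Two closure properties of base \CRASP carry the reduction.

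First, \emph{prefix semantics}. Every stream of $P_k$ evaluated at position $i$ equals the value $P_k$ would output on the prefix $x_1\cdots x_i$. So $P_k$ automatically gives a Boolean stream $\mathrm{Out}_\sigma(i)$ that says whether the $k$-hop answer on the prefix ending at $i$ is $\sigma$.

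Second, \emph{substitution}. If every atomic predicate $Q_\sigma(i)$ of $P_k$ is replaced by a Boolean stream $R_\sigma(i)$, and these streams partition the positions, the result is still a base \CRASP program. Its depth grows only by the counting depth of the $R_\sigma$. Consequently, $P_k$ can be run on any relabeling $\rho(x)$ of the input, provided the label $\rho(x)_i$ is a \CRASP-definable function of the prefix $x_{1..i}$. The relabeling may depend on the position and may use earlier counts; the formal check is a routine induction over the program.

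I will verify both properties first, by induction on the structure of the program, mirroring the topological-order argument in the proof of \Cref{res:index-crasp-growing}.

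The core step is to design the relabeling $\rho$. It should satisfy: for every position $i$ with $x_i=b$, the $k$-hop answer on $\rho(x)_{1..i}$ belongs to a designated set $S\subseteq\Sigma\cup\{\bot\}$ exactly when the last non-$e$ symbol before $i$ is $b$. Given such a $\rho$, $\mathcal L$ is accepted by
\[
1\le \#\bigl[j\le i: Q_b(j)\wedge \textstyle\bigvee_{\sigma\in S}\mathrm{Out}_\sigma(j)\bigr]
\]
evaluated at the last position. For $k=1$ I would aim for the following labeling. Keep $e$ as a filler symbol, let $a$ keep its own symbol, and give each $b$ a label whose symbol \emph{after} its predecessor records what that predecessor's left neighbourhood looked like. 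The difficulty is that a hop lands at $p+1$, not at $p$. So the label of position $p+1$ must already encode whether $x_p$ was the last non-$e$ symbol and whether it was $b$. This information is \CRASP-computable locally: it amounts to comparing $\#[\text{non-}e]$ counts. For $k>1$, the remaining $k-1$ fresh symbols will be used to make the extra hops deterministic. Each intermediate hop lands on a dedicated symbol whose previous occurrence is forced, in the spirit of a chain $d_1,\dots,d_{k-1}$. The composed $k$-hop then realizes the same test as the single hop, which accounts for the requirement of $k+3$ symbols.

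The main obstacle is this construction of $\rho$. The label of a position is fixed once and is read both when that position acts as the query and when it is the landing site of later queries. In addition, \CRASP cannot detect ``$x_{i-1}$ is $b$'' directly. It can only compare counts, and local predecessor tests are exactly what base \CRASP lacks. My first attempt will therefore avoid local tests and place all ``predecessor'' information in the hop itself, using counts only for global conditions. If a single hop cannot transport the needed bit, I will instead use two hops: one to jump past the $e$-block, and a second, through a fresh symbol, that lands on a label distinguishing $b$ from $a$. The remaining hops will be padded with the chain symbols.
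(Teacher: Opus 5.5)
Your outer frame matches the paper's. You reduce from $L_{bb}=\{a,b,e\}^*be^*b\{a,b,e\}^*$ using \citet[Proposition~31 and Lemma~35]{huang2024formal}, you use that a causal program's output stream is correct on every prefix, and you accept by $1\le\#[j\le i:Q_b(j)\wedge\cdots]$. The gap is the encoding itself. You leave it open, and the tool you restrict yourself to, a length-preserving relabeling with one label per source position, does not seem able to provide it.

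A single hop from $i$ returns $\rho(x)_{p+1}$, where $p$ is the last earlier position carrying the label $\rho(x)_i$. So the answer is always the label of some position, and that label is computed from its own prefix only. The natural choices of $p$ all fail:
\begin{itemize}
\item If $p$ is the previous $b$, the label at $p+1$ would have to know whether an $a$ occurs later in $(p+1,i)$, which it has not yet read.
\item If $p$ is the previous non-$e$ symbol, the label at $p+1$ (or at $i$ itself, when $p+1=i$) would have to report its predecessor. That is the local test you yourself note base \CRASP lacks.
\end{itemize}
Indeed, if any \CRASP stream could report whether the last non-$e$ symbol is $b$, then $L_{bb}$ would be \CRASP-definable with no hop at all. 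So the hop has to do the essential work, and a single label per position gives it nowhere to do it.

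Your fallback of spending a second hop is unavailable for $k=1$, because the number of hops is fixed by the task. For larger $k$, a chain of dedicated symbols in front of every data symbol needs more positions than the source word has, since consecutive source symbols need not be separated by $e$'s.

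The missing idea is to change the length of the input. The paper replaces each source symbol by a block of fixed length $d=k(k+1)$:
\[
\mu(a)=(q_k\cdots q_1a)^k,\qquad \mu(b)=(q_k\cdots q_1b)^k,\qquad \mu(e)=e^d.
\]
It then uses closure of \CRASP under such uniform-length substitutions \citep[Lemma~42]{huang2024formal}. That construction introduces no positional operators, so it applies to base \CRASP. A program run on $\mu(w)$ is simulated on $w$ by a program with one stream per block offset.

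This separates the two roles you found to be in tension. The query position is the marker $q_k$ at the start of a block, while the landing sites are different positions. A hop from $q_r$ lands on $q_{r-1}$ in the preceding group, and a hop from $q_1$ lands on a data symbol. Hence $k$ hops from the first position of an $a$- or $b$-block go back exactly $k$ groups, that is, one non-$e$ block, and return the previous non-$e$ source symbol. The $e$-blocks contain no markers, so they are skipped automatically. This is also where the $k+3$ symbols $a,b,e,q_1,\dots,q_k$ come from.

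Your prefix-semantics and substitution observations are fine, but they only give length-preserving transductions, and this step needs the length-multiplying version. With this encoding, your final acceptance test goes through verbatim using the stream for block offset~$1$.
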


\begin{proof}
We use the language
\[
    L_{bb}=\{a,b,e\}^*be^*b\{a,b,e\}^*.
\]
In words, $L_{bb}$ is the language of strings such that deleting all $e$s leaves a string containing two consecutive $b$s.
By \citet[Proposition~31 and Lemma~35]{huang2024formal}, $L_{bb}$ is not definable even in \CRASP{}[periodic, local], and hence not in base \CRASP.
We show that a \CRASP program for \TaskKHop would let us recognize $L_{bb}$.

To do this, we use a construction underlying a closure property of \CRASP~\citep[Lemma~42]{huang2024formal}.
If $\mu$ replaces each symbol by a string of the same fixed length $d$ (what we call a \emph{block}), a \CRASP program operating on $\mu(w)$ can be simulated by a new \CRASP program operating on $w$, with a separate stream for each position within a block.
\citet[Lemma~42]{huang2024formal} show this closure property for \CRASP{}[periodic, local], but their construction uses no positional operators when the simulated program uses none, so it also applies to base \CRASP.

\medskip
\noindent\textbf{Encoding.}
Use distinct symbols $a,b,e,q_1,\ldots,q_k\in\Sigma$.
Replace each source symbol $w_i \in \{a, b, e\}$ by a \emph{block} of length $d=k(k+1)$ according to the mapping $\mu:\{a, b, e\} \rightarrow \Sigma^d$:
\[
    \mu(a)=(q_k\cdots q_1a)^k,\qquad
    \mu(b)=(q_k\cdots q_1b)^k,\qquad
    \mu(e)=e^d.
\]
Call each copy of $q_k\cdots q_1x$, for $x\in\{a,b\}$, a \emph{group}.
An $a$- or $b$-block contains $k$ groups.

\medskip
\noindent\textbf{Recovering the previous non-neutral symbol.}
Consider the first position of the block for a source symbol $w_i\in\{a,b\}$.
Its token is $q_k$.
For $r>1$, a hop from $q_r$ goes to the $q_{r-1}$ immediately following the previous occurrence of $q_r$, which lies in the preceding group.
A hop from $q_1$ similarly goes to the data symbol at the end of the preceding group.
Thus $k$ hops move back exactly $k$ groups and return that group's data symbol.
Any intervening $e$-blocks are skipped because they contain no $q_i$.

Since every original $a$ and $b$ contributes exactly $k$ groups, the $k$-hop output at position $d(i-1)+1$ of $\mu(w)$ (the first position of the block for $w_i$) is the last non-$e$ symbol in $w_1\cdots w_{i-1}$.
If this source prefix contains no non-$e$ symbol, the first hop is undefined.
In particular, when $w_i=b$, the output at that encoded position is $b$ exactly when $w_i$ completes an occurrence of $be^*b$ in the source word.
For example, with $k=3$, the groups around two successive non-neutral symbols $x,y$ are
\[
    \begin{tikzpicture}[x=4mm, y=1cm]
        \foreach \pos/\name/\token in {
            0/xqthree/q_3, 1/xqtwo/q_2, 2/xqone/q_1, 3/hopend/x,
            4.5/xqthreeagain/q_3, 5.5/xqtwoagain/q_2, 6.5/hoptwo/q_1, 7.5/xagain/x,
            9/xqthreelast/q_3, 10/hopone/q_2, 11/xqonelast/q_1, 12/xlast/x,
            14/neutral/e^{*},
            16/hopstart/q_3, 17/yqtwo/q_2, 18/yqone/q_1, 19/yfirst/y,
            20.5/yqthreeagain/q_3, 21.5/yqtwoagain/q_2, 22.5/yqoneagain/q_1, 23.5/yagain/y,
            25/yqthreelast/q_3, 26/yqtwolast/q_2, 27/yqonelast/q_1, 28/ylast/y
        } {
            \node[inner sep=1pt] (\name) at (\pos,0) {$\token$};
        }
        \node at (6,-0.4) {$\underbrace{\hspace{5.1cm}}_{\mu(x)}$};
        \node at (22,-0.4) {$\underbrace{\hspace{5.1cm}}_{\mu(y)}$};
        \draw[-{Stealth[length=1.4mm]}, semithick]
            (hopstart.north) to[out=155,in=25]
            node[above, font=\scriptsize] {hop 1} (hopone.north);
        \draw[-{Stealth[length=1.4mm]}, semithick]
            (hopone.north) to[out=155,in=25]
            node[above, font=\scriptsize] {hop 2} (hoptwo.north);
        \draw[-{Stealth[length=1.4mm]}, semithick]
            (hoptwo.north) to[out=155,in=25]
            node[above, font=\scriptsize] {hop 3} (hopend.north);
    \end{tikzpicture}
\]

\medskip
\noindent\textbf{Simulating the encoded input in \CRASP.}
Suppose for a contradiction that a base \CRASP program $P$ has final Boolean stream $F$ testing whether the \TaskKHop output is $b$ on every input over $\Sigma$.
We run $P$ on the encoded string $\mu(w)$.
Because the program is causal, $F$ gives the correct answer at every position: its value there is also its final value on the prefix ending at that position.

The simulation of \citet[Lemma~42]{huang2024formal} converts $P$, running on $\mu(w)$, into a new program $\widehat P$ running on $w$.
For each block offset $r\in\{1,\ldots,d\}$, the new program has a stream $F_r(i)$ equal to the value of $F$ at position $d(i-1)+r$ in the original run on $\mu(w)$. In particular, as we showed above, if $w_i=b$, then $F_1(i)$ is true exactly when the previous non-$e$ source symbol is also $b$, i.e., when $w_i$ completes an occurrence of $be^*b$.
We therefore recognize $L_{bb}$ by accepting when
\[
    \#[j\leq i:Q_b(j)\land F_1(j)]>0
\]
at the end of the source word.
This is a base \CRASP test, contradicting the lower bound for $L_{bb}$.
\end{proof}

%% file: appendices/c-rasp-separation/c-rasp-separation.tex
\section{An Exponential Width Separation between \CRASP and Softmax Transformers on Content-Dependent Retrieval}
\label{app:c-rasp-separation}

\input{appendices/c-rasp-separation/sections/intro.tex}

\input{appendices/c-rasp-separation/sections/query-dependent-retrieval-task.tex}

\input{appendices/c-rasp-separation/sections/main-result.tex}

\input{appendices/c-rasp-separation/sections/c-rasp-bounds.tex}

\input{appendices/c-rasp-separation/sections/attention-upper-bound.tex}

\input{appendices/c-rasp-separation/sections/experiment.tex}

%% file: appendices/c-rasp-separation/sections/intro.tex
Under \CRASP \citep{yang2024counting}, cross-token communication in causal transformers is modeled as an act of temporal \emph{counting}.
Under this framework, the only cross-position primitive is unary prefix counting, $\#[j \leq i: P(j)]$, which computes how many tokens in the history satisfy a position-wise predicate $P$.
\CRASP programs compile to causally-masked softmax transformers by zeroing the query and key projection weights ($W_Q = W_K = 0$). This forces attention weights to be uniform over the causal prefix ($1/i$ at position $i$), turning each attention head into a simple prefix-averaging accumulator.

Restricting attention to uniform prefix-averaging appears at first glance to be a severe architectural constraint, as it strips away the defining operational mechanism of the transformer architecture: dynamic, content-dependent information routing based on query-key interactions ($q_i^\intercal k_j$).
Yet, \citet{yang2024counting} and \citet{yang2025kneedeep} prove a formal \emph{equivalence} between fixed-precision transformers and \CRASP. Specifically, \citet{yang2025kneedeep} claim a depth-for-depth equivalence:
\begin{equation}
\operatorname{Lang}(\mathcal{T}_k^{\mathrm{round}}) = \operatorname{Lang}(\mathrm{C\text{-}RASP}_k),
\end{equation}
implying that \CRASP and (fixed-precision) transformers are expressively equivalent, and moreover that the equivalence preserves depth.
Because \CRASP compiles to uniform attention, this result in turn implies that uniform-attention transformers can simulate general transformers with content-dependent attention.
Taken at face value, this equivalence directly challenges the centrality of content-dependent attention, implying that uniform prefix-averaging suffices to capture the full computational power of causal transformers.

This coarse equivalence claim, however, masks a severe resource blowup.
In the simulation construction of~\citet{yang2024counting,yang2025kneedeep}, simulating content-dependent attention with \CRASP (and, in turn, uniform attention) requires a brute-force exhaustive enumeration of all possible query vectors.
For a residual stream of dimension $d$ and precision $p$, the state space of possible query vectors has size $|\mathbb{F}_{p}^d| = 2^{pd}$, where $\mathbb{F}_p$ is the set of $p$-bit numbers (\Cref{eq:fixed-point}).
Because uniform attention cannot dynamically condition on a query at test time, the simulation must instantiate separate prefix-count accumulators for every possible query $\vec{q} \in \mathbb{F}_{p}^d$, yielding an exponential blowup of roughly $O(d \cdot p \cdot 2^{pd})$ count terms per layer.
When compiled back into a uniform-attention transformer, each counting term requires dedicated residual dimensions, yielding a uniform-attention model width of $O(d \cdot p \cdot 2^{pd})$.

This leaves open whether the exponential width blowup in this result is an artifact of the construction, or an unavoidable separation between counting and content-dependent retrieval.
To understand whether \CRASP is a useful conceptual and theoretical model of transformers, it is important to understand its relationship not only to uniform-attention transformers, but also to general content-dependent attention transformers.
In this section, we show that this exponential gap is fundamental.

We establish an explicit resource separation between \CRASP (and uniform-attention transformers) and standard content-dependent attention on a simple information-routing task: query-dependent retrieval.
In this task, a sequence of $L = 2^m$ key-value items is followed by a query key, and the model must retrieve the binary bit value associated with the matching key.

We prove a three-part separation result:
\begin{enumerate}
    \item A single causal, content-dependent softmax attention head with linear residual width $m+2$ solves this task.
    \item In contrast, every \CRASP program requires at least $\Omega(2^m / m)$ distinct prefix-count nodes to solve the task.
    \item Consequently, any depth-$D$, fixed-precision transformer restricted to uniform attention ($W_Q = W_K = 0$) requires exponential channel width $d = \Omega(2^m / [D(p + m)])$.
\end{enumerate}

This result exposes a central limitation of \CRASP as a theoretical model of transformer capabilities, and a fundamental separation between a transformer's content-dependent attention and temporal counting.
In standard transformers, the query at position $i$ dynamically selects which prior positions $j$ to attend to by matching query and key representations ($q_i^\intercal k_j$).
Uniform attention disables this dynamic selection and forces position $i$ to receive only an unweighted average of all preceding activations.
Because position $i$ cannot selectively retrieve items from the history based on its query, a model without content-dependent attention is forced to store the entire history in its residual channels so that positionwise computation can later select the relevant information based on the query.
Our separation demonstrates that the width-asymptotic \CRASP expressive equivalence results of~\citet{yang2024counting,yang2025kneedeep} fail to capture the dynamic information routing power of transformers. 
In short, \emph{transformers are more than counting machines}.

Below, we formally define the query-dependent retrieval task, state the main separation theorem, and prove each part of the result in turn.

%% file: appendices/c-rasp-separation/sections/query-dependent-retrieval-task.tex
\subsection{The Query-Dependent Retrieval Task}

We consider a query-dependent key-value retrieval task.
Each input sequence consists of a database of $L = 2^m$ key-value items followed by a single query token, yielding a total sequence length of $n = L + 1$, where $m \geq 1$ is the key dimension. 
We study the complexity of this task as a function of the key dimension $m$, showing an exponential separation between counting and content-dependent retrieval.

The key space is the set of binary vectors $\Km = \{-1, +1\}^m$, containing $L = 2^m$ distinct keys.
The input is a sequence of $L+1$ tokens: the first $L$ tokens provide the key-value pairs of the database and the last token is the query.
For every key $k \in \Km$ and value $v \in \{-1, +1\}$, define the vector representation of the item as the concatenation $x(k, v) = (k, v) \in \reals^{m+1}$.
The first $m$ coordinates store the key $k$ and the $(m+1)$-th coordinate stores the value $v$.
Define the query token for a key $k$ by $q(k) = (k, 0) \in \reals^{m+1}$.

Fix an enumeration $k_1, \ldots, k_L$ of all $L=2^m$ keys in $\Km$. For a value assignment $\bm{v} = (v_1, \ldots, v_L) \in \{-1, +1\}^L$ and query index $r \in [L]$, define the input
\begin{equation}
    X(\bm{v}, r) = [x(k_1, v_1), \ldots, x(k_L, v_L), q(k_r)] \in \reals^{(L+1) \times (m+1)}.
\end{equation}
The value assignment $\bm{v}$ and query index $r$ define the input instance, whereas the key enumeration and the vector representation mappings $x(\cdot), q(\cdot)$ are taken to be fixed.
The task is to output the value $v_r$ specified by the query $q(k_r)$ at the final position.

%% file: appendices/c-rasp-separation/sections/main-result.tex
\subsection{Main Separation Result}
\label{subsec:c-rasp-separation-main-result}

The query-dependent retrieval task separates \CRASP and uniform attention from standard content-dependent attention by an exponential factor in the key dimension.

\begin{theorem}[Query-dependent retrieval separation]
\label{thm:c-rasp-retrieval-separation}
Fix $m \geq 1$ and let $L=2^m$. Consider the query-dependent retrieval task defined above.
Then:
\begin{enumerate}
    \item A single causal, content-dependent softmax attention head solves the task with residual width $m+2$.
    \item Every \CRASP program that solves the task requires at least
    \begin{equation}
        \frac{L}{\log_2(L+1)}
        =
        \Omega\left(\frac{2^m}{m}\right)
    \end{equation}
    distinct prefix-count nodes.
    \item Every depth-$D$, $p$-bit causal transformer restricted to uniform attention requires width
    \begin{equation}
        d
        =
        \Omega\left(
            \frac{2^m}{D(p+m)}
        \right).
    \end{equation}
\end{enumerate}
\end{theorem}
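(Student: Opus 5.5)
We prove the three parts separately. Part 2 uses the counting argument of \Cref{res:index-crasp-growing}, Part 3 turns that argument into a bound on the bits carried by residual channels, and Part 1 is a direct construction.

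\textbf{Part 1 (softmax upper bound).} We use residual width $m+2$: the $m+1$ input coordinates plus one flag coordinate that equals $1$ on item tokens and $0$ on the query token. This requires either a learned embedding with a flag or a positional indicator of the query; we fix the embedding so that item tokens carry the flag. We set $W_K$ to read the key coordinates $k$ and $W_Q$ to read the query's key coordinates scaled by $\beta$, so the score is $\beta\langle k_r,k_j\rangle$. This score equals $\beta m$ when $k_j=k_r$ and is at most $\beta(m-2)$ otherwise.

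The query position attends to itself under the causal mask. To keep it from interfering, we subtract a large multiple of $(1-\text{flag})$ via the key. The simpler alternative is to let the value map read only $v$, which is $0$ at the query, so the self term contributes no value. The value map reads coordinate $m+1$. Softmax then outputs $v_r\,w_r+\sum_{j\neq r} w_j v_j$, where the weight satisfies $w_r\ge 1/(1+L e^{-2\beta})$. Choosing $\beta> \tfrac12\ln(2L)$ makes $w_r>1/2$, so the sign of the output equals $v_r$. A final sign readout, or the MLP, gives the answer.

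\textbf{Part 2 (\CRASP lower bound).} We reuse the simulation argument of \Cref{res:index-crasp-growing} verbatim. After the $L$ database tokens, the vector of the $s$ primitive prefix counts lies in $\{0,\dots,L\}^s$. It determines the program's value at the final query position, since every non-count node is pointwise and every count updates as $C(i)=C(i-1)+\mathbf 1[P(i)]$.

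The key enumeration is fixed, so all databases have the same layout, and for each $r$ the query suffix $q(k_r)$ is valid after every database. Take two value assignments $\bm v\neq\bm v'$ and pick $r$ with $v_r\ne v'_r$. Correctness then forces the two databases to have different count vectors. Hence $(L+1)^s\ge 2^L$, which gives $s\ge L/\log_2(L+1)$.

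\textbf{Part 3 (uniform attention width).} This is the step needing the most care. Uniform causal attention at position $L+1$ outputs $\frac{1}{L+1}\sum_{j\le L+1} V x_j$, which is $\frac{L}{L+1}$ times the prefix average plus the query's own contribution. We argue layer by layer that the entire final-position computation depends on the database only through the $D$ prefix sums $S_\ell=\sum_{j\le L} V_\ell h^{(\ell)}_j$, one per layer $\ell$, together with the query token. Each $S_\ell$ is a $d$-dimensional vector.

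The delicate point is precision. Each coordinate of $S_\ell$ is a sum of $L$ values of $p$ bits, so it has at most $2^{p}(L+1)$ distinct values, i.e.\ $p+m+1$ bits. Under a rounding model where the average itself is rounded to $p$ bits, the bound is only better. So the information crossing from the database to the query is at most $Dd(p+m+1)$ bits, uniformly over the $L$ possible queries.

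We then repeat the distinguishing argument from Part 2. There are $2^L$ databases, and each pair is separated by some query, so the summaries must be distinct. This gives $2^{Dd(p+m+1)}\ge 2^L$, i.e.\ $d=\Omega(2^m/(D(p+m)))$.

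The main obstacle is justifying that the later-layer prefix sums are determined by the database alone and do not depend on the query. This holds because database positions precede the query and the mask is causal, so all database-position activations are query-independent. The query position's own state is a function of the query and $S_1,\dots,S_{\ell-1}$, which we verify by induction on the layer.
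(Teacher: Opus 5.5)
Your Parts 2 and 3 follow the paper's proof. The prefix-count vector (for \CRASP) or the per-layer sums of value vectors over the $L$ database positions (for uniform attention) determines the final-position computation, by induction over program nodes or layers. It must be injective in $\bm{v}$, because any two assignments are separated by some query. Counting then gives $(L+1)^s\ge 2^L$ and $2^{Dd(p+m+1)}\ge 2^L$.

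Part 1 uses the paper's construction: $W_K$ reads the key coordinates, $W_Q$ reads them scaled by $\beta$, and $W_V$ reads the value coordinate, which is $0$ at the query. However, your softmax weight analysis is wrong for the route you actually take.

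\textbf{The self-score problem.} If you rely on $\nu(h_{L+1})=0$ rather than suppressing the self-score, then the query's key coordinates are $k_r$. Its self-score is therefore $\beta k_r^\top k_r=\beta m$, exactly the score of the matching item. This gives $w_{L+1}=w_r$, so $w_r\le 1/2$ for every $\beta$. Both $w_r\ge 1/(1+Le^{-2\beta})$ and $w_r>1/2$ are false.

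Your bound is the one you would get for a suppressed self-score, but the flag cannot achieve that with bias-free $W_Q$. The map $W_Qh_{L+1}$ is linear in $k_r$. So the score gap between the matching item and the query position, $(W_Qh_{L+1})^\top W_K(h_r-h_{L+1})$, flips sign under $k_r\mapsto -k_r$ for a fixed value $v_r$.

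\textbf{The fix.} The correct criterion, which is the paper's, is that $w_r$ exceed the total weight on \emph{mismatched database items}, since the self term carries zero value. Let $S=\sum_{j\le L,\,j\ne r}e^{-2\beta t_j}$, where $t_j\ge 1$ is the Hamming distance to $k_r$. Then $w_r=w_{L+1}=1/(2+S)$ and $o\,v_r\ge (1-S)/(2+S)$. Your choice $e^{-2\beta}<1/(2L)$ gives $S<1/2$, hence $o\,v_r>1/5$. So the construction survives once this step is repaired.

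\textbf{Comparison with the paper.} The paper evaluates $1+S=(1+e^{-2\beta})^m$ exactly via the binomial theorem. This shows $\beta=\tfrac12\ln(3m)$ already suffices, with scores bounded by $\tfrac m2\ln(3m)$, whereas your union bound needs $\beta=\Theta(m)$. Also, once the flag is dropped, coordinate $m+2$ is simply where the head writes its output.
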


The key factor driving this separation is that \CRASP and uniform attention aggregate information independently of the query. They must therefore encode enough information about the database to answer any of the $L$ possible queries, distinguishing all $2^L$ value assignments. 
This requires exponentially many prefix-count nodes or residual channels.
In contrast, content-dependent attention can use query-key comparisons to select the unique matching database item and retrieve its value.
As a result, 
content-dependent attention solves the task with linear width, whereas \CRASP and uniform attention require exponential size, even when allowed polynomial depth and precision.

%% file: appendices/c-rasp-separation/sections/c-rasp-bounds.tex
\subsection{\CRASP and Uniform Attention Bounds}
\label{subsec:c-rasp-uniform-attention-bounds}

We first prove the bounds for \CRASP and uniform attention, showing that they need size scaling exponentially in $m$ to solve the task.
Recall that, in \CRASP, the only cross-position primitive is unary prefix counting
\begin{equation}
    \#[j \leq i : P(j)].
\end{equation}
That is, the only way information can flow across positions is by counting predicates in the history.
All other operations are pointwise.
We measure the size of a \CRASP program by the number of distinct prefix-count nodes it has.

\begin{proposition}[\CRASP bounds]
\label{prop:c-rasp-retrieval-bounds}
Fix $m\geq1$ and let $L=2^m$.
Then:
\begin{enumerate}
    \item Every \CRASP program of any count depth that solves the query-dependent retrieval task has at least
    \begin{equation}
        \frac{L}{\log_2(L+1)} = \Omega\left(\frac{2^m}{m}\right)
    \end{equation}
    distinct prefix-count nodes.
    \item There is a depth-one \CRASP program that solves the task with $L = 2^m$ prefix-count nodes.
\end{enumerate}
\end{proposition}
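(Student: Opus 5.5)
For the lower bound in part 1, I would reuse the state-counting argument from the proof of \Cref{res:index-crasp-growing}. Let $C_1,\ldots,C_s$ be the distinct prefix-count nodes of a \CRASP program solving the task. After the database prefix $x(k_1,v_1),\ldots,x(k_L,v_L)$, record the vector
\[
\tau(\bm v):=(C_1(L),\ldots,C_s(L))\in\{0,\ldots,L\}^s.
\]
The computation at the final query position is then determined by $\tau(\bm v)$ and the query token. Pointwise nodes depend only on the current symbol and on earlier nodes at the same position. Each count satisfies $C_t(L+1)=C_t(L)+\mathbf 1[P_t(L+1)]$, so processing nodes in topological order fixes all values at position $L+1$, even when counts are nested to arbitrary depth.

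Now take two assignments $\bm v\neq\bm v'$. They differ at some $r$, and appending the common query $q(k_r)$ must produce different outputs, so $\tau(\bm v)\neq\tau(\bm v')$. This gives $(L+1)^s\geq 2^L$, i.e.\ $s\geq L/\log_2(L+1)=\Omega(2^m/m)$.

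One subtlety is that the input tokens are vectors rather than letters of a finite alphabet. I would treat each distinct vector $x(k,v)$ or $q(k)$ as an atomic symbol with its own predicate $Q_\sigma$. The argument above never uses alphabet size, so it goes through unchanged.

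For part 2, I would give an explicit depth-one program. For each $\ell\in[L]$, define
\[
C_\ell(i):=\#[j\leq i:Q_{x(k_\ell,+1)}(j)],
\]
which gives $L$ count nodes. At the final position, $C_\ell=1$ iff $v_\ell=+1$, because each key appears exactly once in the database. The output is then the pointwise Boolean
\[
\bigvee_{\ell}\bigl(Q_{q(k_\ell)}(i)\land (1\leq C_\ell(i))\bigr).
\]
Every count node has counting depth one, and the remaining operations are Boolean. If the model only permits primitives $Q_\sigma$ over coordinates, I would build the same predicates as finite conjunctions of coordinate tests, which adds no count nodes.

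The main obstacle I expect is making rigorous that the saved count vector really is a sufficient statistic. In particular, comparisons $C_1\le C_2$ and arithmetic on counts must be shown to introduce no hidden state beyond the primitive counts, and I would handle this by induction over the program's topological order at position $L+1$.
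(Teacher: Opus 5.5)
Your proposal is correct and matches the paper's proof essentially step for step. The lower bound uses the same sufficient-statistic argument on the count vector $(C_1(L),\ldots,C_s(L))$, with the update $C_t(L+1)=C_t(L)+\mathbf{1}[P_t(L+1)]$, injectivity, and $(L+1)^s\geq 2^L$; the upper bound is the same depth-one program with one count per key for value $+1$ and a disjunction that selects the queried key.
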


\begin{proof}
\textbf{Lower bound.}
Consider a correct \CRASP program with $s$ distinct prefix-count nodes, $C_1, \ldots, C_s$.
Order the program nodes topologically so that, in multi-layer programs, $C_j$ depends only on $\{C_i\}_{i < j}$.
For each value assignment $\bm{v}$, let $\tau(\bm{v})$ be the vector of all prefix-count values after the first $L$ positions, immediately before the query:
\begin{equation}
    \tau(\bm{v})
    =
    \left(
        C_1(L),\ldots,C_s(L)
    \right)
    \in
    \{0,\ldots,L\}^s.
\end{equation}
Note that the prefix-count vector $\tau(\bm{v})$ depends only on the database $[x(k_1, v_1), \ldots, x(k_L, v_L)]$ because the query occurs at position $L+1$.

Observe that the prefix-count vector $\tau(\bm{v})$ and the query token $q(k_r)$ determine the computation at the final position, since information about other positions can reach the query position only through counts.
Let $P_t$ be the predicate that $C_t$ counts, such that $C_t(i) = \#[j \leq i: P_t(j)]$.
Then, for each count node, its value at the final position is
\begin{equation}
    C_t(L+1) = C_t(L) + \mathbf{1}[P_t(L+1)].
\end{equation}
The first term $C_t(L)$ is contained in the prefix-count vector $\tau(\bm{v})$, while the second term $\mathbf{1}[P_t(L+1)]$ depends only on the query token and the values of the earlier program nodes at the final position.
In particular, $P_t$ may depend on previous count nodes in the program, and no restriction is made on the count depth of the program.

For the \CRASP program to correctly solve the task, the map $\bm{v} \mapsto \tau(\bm{v})$ must be injective.
Suppose that two distinct value assignments $\bm{v} \neq \bm{v}'$ produce the same prefix-count vector $\tau(\bm{v}) = \tau(\bm{v}')$.
Since $\bm{v} \neq \bm{v}'$, there must exist $r \in [L]$ such that $v_r \neq v'_r$.
Thus, on the query $q(k_r)$, the \CRASP program will return the same answer on $X(\bm{v}, r)$ and $X(\bm{v}', r)$.
This contradicts correctness. 

There are $2^L$ possible value assignments $\bm{v}$. The prefix-count vector $\tau(\bm{v})$ takes values in the space $\{0, \ldots, L\}^s$ since each of the $s$ count nodes has at most $L+1$ possible values.
Therefore, injectivity of $\bm{v} \mapsto \tau(\bm{v})$ requires that
\begin{equation}
    (L+1)^s
    \geq
    2^L.
\end{equation}
Taking the logarithm gives
\begin{equation}
    s
    \geq
    \frac{L}{\log_2(L+1)}.
\end{equation}

\textbf{Upper bound.} 
To establish the upper bound, we provide a \CRASP construction for the task. For each key $k_j \in \Km$, define the prefix count
\begin{equation}
    C_j(i) = \#[t \leq i: x_t = x(k_j, +1)],
\end{equation}
where $x_t$ is the token at position $t$ in the input $X(\bm{v}, r)$, so that $x_t = x(k_t, v_t)$ for $t \leq L$ and $x_{L+1} = q(k_r)$.
Given the structure of the task, each key $k_j$ occurs exactly once in the database items, and the query token does not equal $x(k_j, +1)$.
Thus, $C_j(L+1) = 1$ if $v_j = +1$ and $C_j(L+1) = 0$ if $v_j = -1$.
Convert these counts to Boolean predicates by setting $B_j(i) = \mathbf{1}[C_j(i) \geq 1]$. 

At the final query position, we compute the value at the matching key via
\begin{equation}
    \bigvee_{j=1}^L
    \left(
        \mathbf{1}[x_{L+1} = q(k_j)]
        \land
        B_j(L+1)
    \right).
\end{equation}
The condition $\mathbf{1}[x_{L+1} = q(k_j)]$ holds for exactly one $j$, namely $j = r$.
Thus, this gives $\mathtt{True}$ if $v_r = +1$ and $\mathtt{False}$ if $v_r = -1$. 
This construction uses exactly $L$ count nodes and has count depth one.
\end{proof}

This shows that \CRASP requires exponential size to simulate query-dependent information retrieval in this task.
When compiled to a transformer~\citep{yang2024counting,yang2025kneedeep}, this yields a transformer with exponential width, as each counting term requires dedicated residual dimensions.

We next prove a direct width lower bound for uniform-attention transformers.

We consider depth-$D$ causal transformers with bounded-precision activations.
For integers $p \geq 1$ and $f \geq 0$, let
\begin{equation}
\label{eq:fixed-point}
    \mathbb{F}_{p}
    =
    \left\{
        z \cdot 2^{-f}
        \;:\;
        z \in \mathbb{Z},\;
        -2^{p-1} \leq z < 2^{p-1}
    \right\}
\end{equation}
denote the set of $p$-bit fixed-point numbers with $f$ fractional bits.
We keep $f$ fixed throughout and omit it from the notation, since none of the bounds depend on it.
Every activation is assumed to lie in $\mathbb{F}_{p}$, so each coordinate takes one of $|\mathbb{F}_{p}| = 2^p$ values.
A rounding map is any function $\operatorname{round} : \reals \to \mathbb{F}_{p}$ that maps every element of $\mathbb{F}_{p}$ to itself.
Uniform attention means that $W_Q = W_K = 0$ in every attention head,
so every head attends uniformly over its causal prefix.
For head $h$ in layer $\ell$, let $v_i^{(\ell,h)} \in \mathbb{F}_{p}^{d_v^{(\ell,h)}}$ denote the value vector at position $i$, and let
\begin{equation}
    V
    =
    \sum_{\ell=1}^{D}
    \sum_h
    d_v^{(\ell,h)}
\end{equation}
denote the total value dimension across all layers and heads.
Under uniform attention with causal masking, the output of head $h$ in layer $\ell$ at position $i$ is
\begin{equation}
    a_i^{(\ell,h)}
    =
    \operatorname{round}
    \left(
        \frac{1}{i}
        \sum_{j=1}^i
        v_j^{(\ell,h)}
    \right),
\end{equation}
where the sum and the division are computed exactly before the result is rounded back into $\mathbb{F}_{p}$.
The lower bound holds for any rounding map, since the argument below uses only the structure of $\mathbb{F}_{p}$.

\begin{proposition}[Uniform-attention width lower bound]
\label{prop:uniform-attention-retrieval-lower-bound}
Fix $m \geq 1$ and let $L = 2^m$.
Consider a depth-$D$ causal transformer with uniform attention, causal masking, and activations in $\mathbb{F}_{p}$.
If it solves the query-dependent retrieval task, then its total value dimension satisfies
\begin{equation}
    V
    \geq
    \frac{L}{p+m}
    =
    \frac{2^m}{p+m}.
\end{equation}
In particular, if the total value dimension in each layer is at most $d$, then $V \leq Dd$ and
\begin{equation}
    d
    \geq
    \frac{2^m}{D(p+m)}.
\end{equation}
\end{proposition}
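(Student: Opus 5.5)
We mirror the counting argument of \Cref{prop:c-rasp-retrieval-bounds}: we isolate a ``state'' that summarizes the database, show that this state together with the query token determines the output, and then count how many values it can take. The natural candidate here is the tuple of running value sums. For each layer $\ell$ and head $h$, let
\[
S^{(\ell,h)}(i)=\sum_{j\leq i} v_j^{(\ell,h)}.
\]
Define $\tau(\bm v)$ as the collection of all these sums at position $L$, just before the query:
\[
\tau(\bm v)=\bigl(S^{(\ell,h)}(L)\bigr)_{\ell,h}.
\]
We use the exact integer-scaled sums rather than the rounded averages, since only the sums compose correctly when one more position is appended.

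The first step is to show that $\tau(\bm v)$ and the query token $q(k_r)$ determine the whole computation at position $L+1$. We process layers in order. The residual stream at position $L+1$ before layer $1$ depends only on $q(k_r)$ and the fixed positional information at index $L+1$. Suppose the stream entering layer $\ell$ at position $L+1$ is already determined. Then the value vector $v_{L+1}^{(\ell,h)}$ is determined, since it is a pointwise function of that stream. Hence
\[
S^{(\ell,h)}(L+1)=S^{(\ell,h)}(L)+v_{L+1}^{(\ell,h)}
\]
is determined, and so is the head output $a_{L+1}^{(\ell,h)}=\operatorname{round}\bigl(S^{(\ell,h)}(L+1)/(L+1)\bigr)$. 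Because the MLPs and residual updates are pointwise, the stream entering layer $\ell+1$ is determined as well. This is the same inductive argument as in the \CRASP lower bound, with exact value sums playing the role of counts. It needs no restriction on depth and holds for any rounding map.

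The second step is injectivity, argued exactly as before. If $\bm v\neq\bm v'$ but $\tau(\bm v)=\tau(\bm v')$, pick $r$ with $v_r\neq v'_r$. On query $q(k_r)$ the model produces identical outputs on both inputs, contradicting correctness. So $\bm v\mapsto\tau(\bm v)$ is injective on $2^L$ assignments.

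The third step, which I expect to be the main obstacle, is bounding the range of $\tau$; this is where precision must enter carefully. Each coordinate of $S^{(\ell,h)}(L)$ equals $2^{-f}$ times a sum of $L$ integers in $[-2^{p-1},2^{p-1})$. Such a sum is an integer in an interval of length at most $L\cdot 2^{p}$, so it takes at most $L2^p=2^{p+m}$ values. Across all $V$ value coordinates this gives at most $2^{(p+m)V}$ possible vectors $\tau$. Injectivity then forces
\[
2^{(p+m)V}\geq 2^L,
\]
that is, $V\geq L/(p+m)$. If a careful count yields $L2^p+1$ values per coordinate, that changes only constants, and it can be absorbed by using $p+m+1$ or by noting that sums of $p$-bit numbers fit in $p+m$ bits.

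Finally, if each layer has total value dimension at most $d$, summing over the $D$ layers gives $V\leq Dd$, and therefore $d\geq 2^m/(D(p+m))$.
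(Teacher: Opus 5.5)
Your proof is correct and follows the paper's argument essentially step for step. Both use query-independent sums of value vectors over the $L$ database positions, a layer-by-layer induction showing these sums plus the query token determine the output, injectivity on the $2^L$ assignments, and a count of the possible sums. The only loose spot is the per-coordinate count, and it needs no hedge: the sum lies in $2^{-f}\{-L2^{p-1},\ldots,L(2^{p-1}-1)\}$, which gives exactly $L(2^p-1)+1\le 2^{p+m}$ values, as the paper computes. Your fallback to $p+m+1$ would only yield the weaker bound $V\ge L/(p+m+1)$.
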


\begin{proof}
For each layer $\ell$ and head $h$, define the sum of its value vectors over the $L = 2^m$ database positions:
\begin{equation}
    S_{\ell,h}(\bm{v})
    =
    \sum_{j=1}^{L}
    v_j^{(\ell,h)}.
\end{equation}
By causality, the representation at each database position $j \leq L$ depends only on the first $j$ database items.
Hence $S_{\ell,h}(\bm{v})$ is a function of the value assignment $\bm{v}$ alone and does not depend on the query.
Define the prefix-sum vector $\rho(\bm{v})$ as the concatenation of these sums across all layers and heads:
\begin{equation}
    \rho(\bm{v})
    =
    \left(
        S_{\ell,h}(\bm{v})
    \right)_{\ell,h}.
\end{equation}
This vector has $V$ entries.

We claim that $\rho(\bm{v})$ and the query token determine the output at the final position $L+1$.
The proof is by induction on the layer index $\ell$.
The inductive hypothesis is that the representation at position $L+1$ entering layer $\ell$ is determined by $\rho(\bm{v})$ and the query token.
This holds at $\ell = 1$ because the input embedding at position $L+1$ is a function of the query token.
Assuming it at layer $\ell$, each head attends uniformly over all $L+1$ positions, so
\begin{equation}
    a_{L+1}^{(\ell,h)}
    =
    \operatorname{round}
    \left(
        \frac{
            S_{\ell,h}(\bm{v})
            +
            v_{L+1}^{(\ell,h)}
        }{L + 1}
    \right).
\end{equation}
The first term in the numerator is an entry of $\rho(\bm{v})$. The second is a function of the representation at position $L+1$ entering layer $\ell$, and is therefore determined by $\rho(\bm{v})$ and the query by the inductive hypothesis. Thus, $a_{L+1}^{(\ell,h)}$ is determined by $\rho(\bm{v})$ and the query.
The remaining operations of the layer are positionwise, so the representation at position $L+1$ entering layer $\ell+1$ is also determined by $\rho(\bm{v})$ and the query.

By the same argument as in Proposition~\ref{prop:c-rasp-retrieval-bounds}, the map $\bm{v} \mapsto \rho(\bm{v})$ must be injective for any uniform-attention transformer that solves this task.
Otherwise, suppose two distinct value assignments $\bm{v} \neq \bm{v}'$ have the same prefix-sum vector $\rho(\bm{v}) = \rho(\bm{v}')$.
Choose $r \in [L]$ such that $v_r \neq v'_r$.
The query $q(k_r)$ would produce the same output on $X(\bm{v},r)$ and $X(\bm{v}',r)$, contradicting correctness.

We now bound the number of possible values of $\rho(\bm{v})$.
Each entry is a sum of $L$ elements of $\mathbb{F}_{p}$ having the form $\sum_{j=1}^{L} 2^{-f} z_j$, where $z_j \in \{z \in \mathbb Z : -2^{p-1} \leq z < 2^{p-1}\}$.
Thus, this sum lies in $2^{-f}\{-L2^{p-1},\ldots,L(2^{p-1}-1)\}$, so each entry takes at most
\begin{equation}
    L(2^p-1)+1
    \leq
    2^{m+p}
\end{equation}
possible values.
Since $\rho(\bm{v})$ has $V$ entries, it takes at most
\begin{equation}
    2^{V(p+m)}
\end{equation}
possible values.

There are $2^L = 2^{2^m}$ possible value assignments.
Injectivity therefore requires
\begin{equation}
    2^{2^m}
    \leq
    2^{V(p+m)}.
\end{equation}
Taking logarithms gives $V \geq 2^m / (p+m)$.
If each layer has total value dimension at most $d$, then $V \leq Dd$, which gives the stated width bound.
\end{proof}

%% file: appendices/c-rasp-separation/sections/attention-upper-bound.tex
\subsection{Content-Dependent Attention Upper Bound}
\label{subsec:attention-upper-bound}

The previous results showed that \CRASP and uniform attention transformers require width scaling exponentially with the task parameter $m$.
At the heart of these lower bounds is the fact that \CRASP and uniform attention aggregate over the database in a way that cannot depend on the query, so the aggregate must retain enough information to answer all $L = 2^m$ possible queries.
In contrast, we will show in this section that content-dependent attention can dynamically route information in a query-dependent manner, yielding an exponentially more efficient representation. 

Specifically, we will show that a single attention head
solves the task with residual width $m+2$.

\begin{proposition}[Content-dependent attention upper bound]
\label{prop:attention-retrieval-upper-bound}
Fix $m \geq 1$ and let $L = 2^m$.
There is a one-layer, one-head causal softmax transformer with residual width $m+2$ that solves the query-dependent retrieval task for every value assignment $\bm{v} \in \{-1,+1\}^L$ and every query index $r \in [L]$.
The construction uses no feedforward layer, no positional encoding, and attention scores bounded in magnitude by $\tfrac{m}{2}\log(3m)$.
\end{proposition}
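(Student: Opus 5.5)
The plan is an explicit construction. We use a single causal softmax head whose query and key both read the $m$ key coordinates, with a temperature chosen so that the matching database item dominates the attention distribution. The residual stream has width $m+2$. Coordinates $1,\ldots,m$ carry the key $k$ (or the query key $k_r$ at the last position), coordinate $m+1$ carries the value $v$ (which is $0$ on the query token), and coordinate $m+2$ starts at $0$ and receives the head's output. The embedding is the identity padded with a zero, so no positional encoding is used.

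We take $W_Q = \beta\,[I_m\;0\;0]$ and $W_K = [I_m\;0\;0]$ with $\beta = \tfrac12\log(3m)$. The score between the query at position $L+1$ and a position holding key $k'$ is then $\beta\langle k_r,k'\rangle = \beta(m-2\,d_H(k_r,k'))$. Its magnitude is at most $\beta m = \tfrac m2\log(3m)$, which gives the stated bound. The value map $W_V$ copies coordinate $m+1$ and the output map writes it into coordinate $m+2$. The prediction is $\operatorname{sign}$ of coordinate $m+2$ at the last position, so no feedforward layer is needed.

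Correctness reduces to a sign argument. The attended sum at position $L+1$ equals $Z^{-1}\sum_j e^{s_j}v_j$, where $Z>0$ and the query token contributes value $0$. Its self-match score $\beta m$ only inflates $Z$, and $Z$ does not affect the sign. Dividing the numerator by $e^{\beta m}$, the matching item $j=r$ contributes exactly $v_r$. Every other database item has a distinct key $k_j$ at Hamming distance $d\ge1$ from $k_r$, and it contributes $v_j e^{-2\beta d} = v_j (3m)^{-d}$. Since each key appears exactly once, the interference is bounded by
\[
\sum_{d=1}^{m}\binom{m}{d}(3m)^{-d} = \Bigl(1+\tfrac{1}{3m}\Bigr)^m - 1 \le e^{1/3}-1 < 1 .
\]
Hence the numerator has the sign of $v_r$ for every $\bm v$ and every $r$.

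The step needing the most care is this interference bound. We must verify that the geometric temperature $\beta=\tfrac12\log(3m)$ beats the binomial growth of near-miss keys uniformly in $m$, and that the query token's self-attention, which ties the true match, does not break the argument. The latter is handled by the zero value coordinate on the query token. If exact real arithmetic is not assumed, one would also note the margin $1-(e^{1/3}-1)>0.6$, so small rounding errors do not change the sign.
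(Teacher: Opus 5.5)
Your proposal is correct and follows essentially the same construction and argument as the paper: the same $W_Q$, $W_K$, $W_V$ with $\beta = \tfrac12\log(3m)$, the query token neutralized by its zero value, and the same binomial-theorem bound $(1+\tfrac{1}{3m})^m - 1 \le e^{1/3}-1 < 1$ on the interference from non-matching keys. The only difference is cosmetic: you bound the unnormalized numerator, while the paper computes the normalized weights explicitly via $Z = e^{\beta m}(1+A)$ and gets the margin $v_r\, o \ge (2-A)/(1+A) > 1/4$.
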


\begin{proof}
\textbf{Construction.} We begin by presenting the construction, before proving its correctness.
The residual stream has width $m+2$.
Write the residual state at position $i$ as $h_i \in \reals^{m+2}$, and initialize it by embedding the input token into the first $m+1$ coordinates:
\begin{equation}
    h_i = (x_i, 0),
    \qquad
    x_i = (k_i, v_i) \in \reals^{m+1}.
\end{equation}
Coordinates $1$ through $m$ hold the key, coordinate $m+1$ holds the value, and coordinate $m+2$ is initially zero and receives the attention output.

Let $\kappa(h_i) \in \reals^m$ denote coordinates $1$ through $m$ of $h_i$ and let $\nu(h_i) \in \reals$ denote coordinate $m+1$.
For a database position $j \leq L$ we have $\kappa(h_j) = k_j$ and $\nu(h_j) = v_j$, and at the query position we have $\kappa(h_{L+1}) = k_r$ and $\nu(h_{L+1}) = 0$.
Define the single attention head by
\begin{equation}
    W_Q h_i = \beta \, \kappa(h_i),
    \qquad
    W_K h_j = \kappa(h_j),
    \qquad
    W_V h_j = \nu(h_j),
    \qquad
    \beta = \tfrac{1}{2}\log(3m).
\end{equation}
Note that the key dimension is $m$ and the value dimension is $1$.
The output matrix $W_O$ is chosen so that the head writes its output into coordinate $m+2$.
The model outputs the sign of coordinate $m+2$ at the final position.

\textbf{Attention weights at the query position.}
The unnormalized score of position $j$ at the final position is
\begin{equation}
    s_j
    =
    \beta \, k_r^\intercal \kappa(h_j).
\end{equation}
Each key is a vector in $\{-1,+1\}^m$, so $k_r^\intercal k_j = m - 2 t_j$, where $t_j$ is the Hamming distance between $k_r$ and $k_j$.
The database contains each key exactly once, so exactly $\binom{m}{t}$ of the database positions are at Hamming distance $t$ from $k_r$.
The query position has $\kappa(h_{L+1}) = k_r$ and therefore $s_{L+1} = \beta m$.
Writing $\epsilon = e^{-2\beta}$, the softmax normalizer is
\begin{equation}
    Z
    =
    \sum_{j=1}^{L+1}
    e^{s_j}
    =
    e^{\beta m}
    +
    \sum_{t=0}^{m}
    \binom{m}{t}
    e^{\beta(m - 2t)}
    =
    e^{\beta m}
    \left(
        1 + A
    \right),
    \qquad
    A = (1+\epsilon)^m .
\end{equation}
Here, we used the binomial theorem with $x = e^{-2 \beta}$: 
$\sum_{t=0}^{m}
    \binom{m}{t}
    e^{\beta(m - 2t)} 
= e^{\beta m} \sum_{t=0}^{m}
    \binom{m}{t}
    e^{- 2 \beta t}
= e^{\beta m} (1 + e^{-2 \beta})^m
$.
The attention weights are therefore
\begin{equation}
    \alpha_r = \frac{1}{1+A},
    \qquad
    \alpha_{L+1} = \frac{1}{1+A},
    \qquad
    \sum_{j \leq L,\; j \neq r} \alpha_j = \frac{A-1}{1+A}.
\end{equation}
They are large at the query position itself and at the matched key position, and small elsewhere.
The choice of $\beta$ controls $A$, and therefore the sharpness of the attention weights.

\textbf{Output.}
The head output at the final position is $o = \sum_{j=1}^{L+1} \alpha_j \nu(h_j)$. We want to lower bound $o v_r$ away from zero to guarantee that $o$ has the right sign. We compute $o \cdot v_r$ directly,
\begin{equation}
    \begin{aligned}
        o \cdot v_r &= \sum_{j=1}^{L+1} \alpha_j \nu(h_j) v_r = \alpha_r v_r^2 + \sum_{j \leq L, j \neq r} \alpha_j v_j v_r \\
        &\geq \alpha_r - \sum_{j \leq L, j \neq r} \alpha_j = \frac{1}{1+A} - \frac{A-1}{1+A}
    \end{aligned}
\end{equation}
Here, the second equality uses the fact that $\nu(h_{L+1}) = 0$, and the inequality uses the fact that $v_j v_r \geq -1$ and that $v_r^2 = (\pm 1)^2 = 1$.
This implies that $o \cdot v_r \geq (2 - A) / (1+A)$.
Next, we choose $\beta$ such that this is lower bounded away from zero.
The choice $\beta = \tfrac{1}{2}\log(3m)$ gives $\epsilon = 1/(3m)$, hence
\begin{equation}
    A
    =
    \left(1 + \frac{1}{3m}\right)^{m}
    \leq
    e^{1/3}
    <
    \frac{3}{2}.
\end{equation}
Therefore $v_r \, o \geq (2 - e^{1/3})/(1 + e^{1/3}) > 1/4$.
In particular $o$ is nonzero and its sign equals $v_r$, so the model outputs $v_r$ as required.
Finally, every score satisfies $|s_j| \leq \beta m = \tfrac{m}{2}\log(3m)$.
\end{proof}

\textbf{Remark (finite precision).}
The construction also works with fixed-point parameters and activations, with \(p=O(\log m)\).
Take \(\beta\) to be the smallest element of \(\mathbb{F}_p\) that is at least \(\tfrac{1}{2}\log(3m)\).
Since $A=(1+e^{-2\beta})^m$ decreases in $\beta$, the bound $v_r\,o>1/4$ still holds.
The parameters and activations $0$, $\pm1$, and $\pm\beta$ are then exactly representable.
As in \Cref{prop:uniform-attention-retrieval-lower-bound}, the head output is computed exactly and then rounded into \(\mathbb{F}_p\); with round-to-nearest, this changes \(o\) by at most \(2^{-f}\leq1/8\) when \(f\geq3\), which does not change its sign.
This shows that \(p=O(\log m)\) bits suffice.

Proposition~\ref{prop:attention-retrieval-upper-bound} completes the proof of Theorem~\ref{thm:c-rasp-retrieval-separation}.
Together with Proposition~\ref{prop:c-rasp-retrieval-bounds} and Proposition~\ref{prop:uniform-attention-retrieval-lower-bound}, it establishes the separation.
One content-dependent attention head with width $m+2$ solves the task, whereas \CRASP requires $\Omega(2^m/m)$ prefix-count nodes and uniform attention requires width $\Omega(2^m / [D(p+m)])$.
The gap is exponential in $m$ and persists for any depth and precision that are polynomial in $m$.

%% file: appendices/c-rasp-separation/sections/experiment.tex
\subsection{Experiments}
\label{subsec:c-rasp-separation-experiment}

In this section, we empirically test the theoretical separation between softmax attention and uniform attention transformers on the query-dependent retrieval task.
\Cref{thm:c-rasp-retrieval-separation} predicts that softmax attention solves the task with small width, while uniform attention needs width that grows exponentially in $m$.

\paragraph{Experimental setup.}
We use key dimensions $m \in \{2, 3, 4, 5, 6\}$, so the database has $L = 2^m \in \{4, \ldots, 64\}$ items and the input has $L + 1$ tokens.
We sample the values $v_1, \ldots, v_L$ independently and uniformly from $\{-1, +1\}$, and the query index $r$ uniformly from $[L]$.
Chance accuracy is therefore $50\%$.
We shuffle the database items with a uniformly random permutation, drawn independently for each input.
A learned linear layer maps each input vector $x(k, v)$ or $q(k)$ into the $d$-dimensional residual stream.

All models use pre-LayerNorm blocks with a GELU MLP of width $4d$, no positional encoding (NoPE), and a learned BOS token.
Softmax attention models learn the query and key projections $\WQ, \WK$, while uniform-attention models fix $\WQ = \WK = 0$, so each head averages uniformly over all earlier positions.
A linear readout at the query position produces two output logits.
We train two model sizes.
The ``minimal'' model has one layer, one head, and width $d = m + 2$ based on the construction in \cref{prop:attention-retrieval-upper-bound}.
Unlike the construction, it also has an MLP, LayerNorm, and a BOS token, to match real transformers more closely.
The ``standard'' model has four layers, four heads, and width $d = 256$.

We train with cross-entropy loss for $2{,}048$ steps of AdamW, with batch size $256$, a constant learning rate of $10^{-3}$, no weight decay, gradient clipping at $1.0$, and bf16 mixed precision.
We sample a fresh batch at every step, for $524{,}288$ training inputs in total.
We report accuracy after the final step on a separate sample of $2{,}048$ inputs.
At small $m$ the input space is small, so these inputs also appear in training.
We run 5 seeds for each combination of $m$, attention type, and model size, for $100$ runs in total.

\begin{figure}[h]
    \centering
    \includegraphics[width=\linewidth]{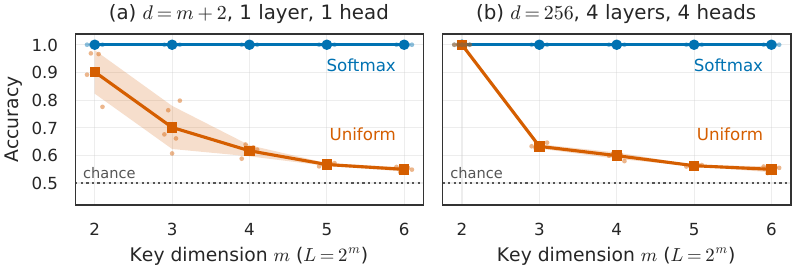}
    \caption{\textbf{Softmax versus uniform attention on query-dependent retrieval.}
    Accuracy against the key dimension $m$, for (a) width $d = m + 2$ with one layer and one head and (b) width 256 with four layers and four heads.
    Small markers are individual seeds.}
    \label{fig:c-rasp-separation-appendix}
\end{figure}

\paragraph{Results.}
We find that softmax attention models solve the task at every $m$ and at both model sizes, while uniform attention models quickly degrade as $m$ grows.
With uniform attention, the standard model solves the task at $m = 2$ and the minimal model reaches $90\%$, but both fall to about $55\%$ at $m = 6$, close to chance.
By contrast, softmax attention models reach $100\%$ accuracy in every seed, at every $m$, and at both model sizes.
These results agree with the predictions of \Cref{thm:c-rasp-retrieval-separation}.
The failure of uniform attention transformers may reflect not only the representational limitations of uniform attention, but also the difficulty of learning to emulate content-dependent retrieval with uniform attention.